\documentclass[11pt]{article}

\usepackage{arxiv-style}
\usepackage{amsmath}
\usepackage{amsmath}
\usepackage{amsfonts}
\usepackage{amssymb}
\usepackage{amsthm}
\usepackage{algorithm}
\usepackage{algpseudocode}
\usepackage{bm, bbm}
\usepackage{dsfont}
\usepackage{booktabs}           %
\usepackage{pifont}     %
\usepackage{multirow, multicol}
\usepackage{svg}
\usepackage{xcolor} 
\usepackage{colortbl}
\usepackage{subcaption}
\usepackage{centernot}
\usepackage{enumitem}
\usepackage{titletoc}

\definecolor{darkblue}{RGB}{0,0,139}
\definecolor{red}{RGB}{210,35,35}
\definecolor{alert}{RGB}{0,0,0}
\usepackage[
  colorlinks=true,
  linkcolor=darkblue,
  citecolor=darkblue,
  urlcolor=darkblue,
  filecolor=darkblue
]{hyperref}

\usepackage{aliascnt}

\newtheorem{theorem}{Theorem}

\newaliascnt{lemma}{theorem}
\newtheorem{lemma}[lemma]{Lemma}
\aliascntresetthe{lemma}

\newaliascnt{proposition}{theorem}

\aliascntresetthe{proposition}

\newaliascnt{corollary}{theorem}

\aliascntresetthe{corollary}

\newaliascnt{remark}{theorem}
\newtheorem{remark}[remark]{Remark}
\aliascntresetthe{remark}

\theoremstyle{definition}

\newaliascnt{definition}{theorem}

\aliascntresetthe{definition}

\newaliascnt{example}{theorem}

\aliascntresetthe{example}

\usepackage{cleveref}

\crefname{theorem}{theorem}{theorems}
\Crefname{theorem}{Theorem}{Theorems}

\crefname{lemma}{lemma}{lemmas}
\Crefname{lemma}{Lemma}{Lemmas}

\crefname{proposition}{proposition}{propositions}
\Crefname{proposition}{Proposition}{Propositions}

\crefname{corollary}{corollary}{corollaries}
\Crefname{corollary}{Corollary}{Corollaries}

\crefname{definition}{definition}{definitions}
\Crefname{definition}{Definition}{Definitions}

\crefname{assumption}{assumption}{assumptions}
\Crefname{assumption}{Assumption}{Assumptions}

\crefname{appendix}{appendix}{appendices}
\Crefname{appendix}{Appendix}{Appendices}

\newcommand{\yes}{\ding{51}}
\newcommand{\no}{\ding{55}}
\newcommand{\cbias}{c_{\text{bias}}}
\newcommand{\khit}{k_{\text{hit}}}

\newcommand{\Id}{\mathbf{I}}
\newcommand{\Ind}{\mathds{1}}

\DeclareMathOperator{\Regret}{Regret}
\DeclareMathOperator{\Violation}{Violation}
\DeclareMathOperator{\VioHard}{Violation_{Hard}}

\DeclareMathOperator*{\argmin}{arg\,min}
\newcommand{\KL}{D_{\mathrm{KL}}}

\newcommand{\piunif}{\pi_{\mathrm{unif}}}
\newcommand{\bLambda}{\bm{\Lambda}}
\newcommand{\bphi}{\bm{\phi}}
\newcommand{\bmu}{\bm{\mu}}
\newcommand{\btheta}{\bm{\theta}}

\newcommand{\st}{\;\mathrm{s.t.}\;}

\newcommand{\bigO}{\mathcal{O}}
\newcommand{\tbigO}{\widetilde{\mathcal{O}}}

\newcommand{\calA}{{\mathcal{A}}}

\newcommand{\calE}{{\mathcal{E}}}
\newcommand{\calF}{{\mathcal{F}}}

\newcommand{\calM}{{\mathcal{M}}}
\newcommand{\calN}{{\mathcal{N}}}

\newcommand{\calQ}{{\mathcal{Q}}}

\newcommand{\calS}{{\mathcal{S}}}

\newcommand{\calV}{{\mathcal{V}}}

\newcommand{\bbE}{\mathbb{E}}

\newcommand{\bbP}{\mathbb{P}}

\newcommand{\bbR}{\mathbb{R}}

\newcommand{\bfv}{\mathbf{v}}
\newcommand{\bfw}{\mathbf{w}}
\newcommand{\bfx}{\mathbf{x}}
\newcommand{\bfy}{\mathbf{y}}

\newcommand{\bfA}{\mathbf{A}}
\newcommand{\bfB}{\mathbf{B}}
\newcommand{\bfC}{\mathbf{C}}

\title{
Rate-Optimal Algorithm for Adversarial Linear CMDPs
}
\author{
Kihyun Yu$^{1}$ \quad Honghao Wei$^{2}$ \quad Dabeen Lee$^{3}$\\[0.3em]
$^{1}$KAIST \quad 
$^{2}$Washington State University \quad
$^{3}$Seoul National University\\
\texttt{khyu99@kaist.ac.kr}\quad
\texttt{honghao.wei@wsu.edu}\quad
\texttt{dabeenl@snu.ac.kr}
}

\date{}

\begin{document}
\maketitle

\begin{abstract}
    We study episodic adversarial linear constrained Markov decision processes (CMDPs) with unknown transitions, where both the loss and constraint functions may vary adversarially across episodes. The best previous algorithm achieves $\widetilde{\mathcal{O}}(K^{3/4})$ regret and cumulative constraint violation, leaving a gap to the optimal $\widetilde{\mathcal{O}}(\sqrt{K})$ dependence on the number of episodes $K$. We close this gap by proposing a new primal dual algorithm that achieves $\widetilde{\mathcal{O}}(\sqrt{K})$ regret and cumulative constraint violation without assuming Slater's condition. 
{\color{alert}We further extend the algorithm to achieve the same $\widetilde{\mathcal{O}}(\sqrt{K})$ guarantees for regret and hard constraint violation, which does not allow constraint violations to cancel across episodes. }
The main challenge is that learning linear CMDPs requires uniform concentration over a value function class with a controlled covering number, whereas standard techniques in constrained online learning, such as policy mixing, can make this class more complex. Our algorithm combines adaptive Follow the Regularized Leader (FTRL), contracted value estimation, and an exponential Lyapunov function. An adaptive dual regularizer offsets the dependence on the dual weights in the primal regret bound, removing the need for policy mixing. We further show that the normalization in the FTRL update bounds the policy parameters independently of the magnitudes of the dual weights, which explains why the resulting policy class remains compatible with uniform concentration. Under feature access, the computational complexity is independent of the size of the state space.

\end{abstract}


\section{Introduction}

Safe reinforcement learning (RL) studies sequential decision-making under safety constraints through interaction with an unknown environment. Many safety-critical real-world applications have been studied under the safe RL framework, including autonomous driving~\citep{driving-isele2018safe}, robotics~\citep{robotics-achiam2017constrained}, health care~\citep{healthcare-coronato2020reinforcement}, and large language models~\citep{dai2024safe}. A standard framework for modeling safe RL is the constrained Markov decision process (CMDP), where an agent aims to optimize an expected cumulative loss while satisfying expected constraints that capture safety requirements~\citep{altman2021constrained}. 

Recently, algorithms for learning CMDPs have been extensively studied along several important directions. One direction is to incorporate function approximation in order to handle environments with large state spaces. In particular, linear CMDPs assume that the transition kernel, loss functions, and constraint functions admit linear representations with respect to a known feature mapping. Under stochastic loss and constraint functions, where the functions are sampled from fixed distributions, several algorithms have been developed for linear CMDPs with sublinear regret and constraint violation guarantees~\citep{ghosh2022provably, ghosh2024towards, kitamura2025provably, peng2026towards}. However, these works focus only on fixed environments, in the sense that the underlying distributions of the functions remain fixed, and do not handle changing environments.

To capture changing environments, another important direction is to consider adversarial environments. In many real-world applications, loss and constraint functions may change arbitrarily over time rather than being sampled from fixed distributions. For example, safety requirements or environmental conditions may vary across episodes due to changes in operating conditions, user preferences, or external factors. To capture such scenarios, online adversarial CMDPs allow loss or constraint functions to be chosen adversarially across episodes. This setting has been extensively investigated for tabular CMDPs~\citep{zhu2025optimistic, stradi2024online, roy2026optimal}. While these works successfully handle adversarial environments, they are limited to the tabular setting, which cannot handle large state spaces.

A natural question is whether these two directions---linear function approximation and adversarially changing environments---can be handled simultaneously. Recently, \citet{yu2026near,yu2026primaldual} initiated the study of CMDPs with adversarial losses and stochastic constraints under linear function approximation. Specifically, while \citet{yu2026near} consider adversarial linear mixture CMDPs with rate-optimal guarantees of $\tbigO(\sqrt{K})$, where $K$ denotes the number of episodes, their algorithm becomes intractable when the state space is arbitrarily large. On the other hand, \citet{yu2026primaldual} propose a model-free algorithm for adversarial linear CMDPs that remains computationally efficient even when the state space is large, but their guarantees remain suboptimal in terms of $K$, i.e., $\tbigO(K^{3/4})$. In summary, existing algorithms for adversarial CMDPs under linear function approximation are either computationally inefficient or statistically suboptimal in terms of $K$.

To close this gap, we propose a model-free and rate-optimal algorithm for learning adversarial linear CMDPs, which is thus computationally efficient even when the state space is large and statistically optimal in terms of $K$. However, achieving this goal is nontrivial because learning linear CMDPs requires controlling the covering number of the value function class~\citep{jin2020provably}. In particular, while the covering number argument is an essential analytical tool for establishing concentration in linear MDPs, it also limits the direct use of other useful techniques from constrained online learning. For instance, \citet{yu2026primaldual} employ policy mixing, which is a common technique in the broad area of constrained online learning~\citep{wei2020online, qiu2020upper, yu2026near}, but is problematic for linear CMDPs due to the covering number issue. Indeed, this induces a trade-off, which is a main reason why their guarantees remain suboptimal.

{\bf Main Contributions.} As previously mentioned, learning adversarial linear CMDPs is particularly challenging due to the covering number issue, which does not appear in other constrained online learning problems. Nevertheless, we design a rate-optimal algorithm for learning adversarial linear CMDPs. We summarize our main contributions below and in \Cref{tab:comparison}.
\begin{itemize}[leftmargin=*]
    \item 
    {\color{alert}
    We propose algorithms for learning adversarial linear CMDPs, where both the loss and constraint functions are adversarially chosen in each episode under full-information feedback. In particular, \Cref{alg:main} achieves the rate-optimal guarantees of $\tbigO(\sqrt{K})$ for both regret and constraint violation. \Cref{alg:hard} then extends these guarantees to hard constraint violation, which provides a stronger notion of constraint satisfaction by preventing constraint violations from canceling across episodes.
    }
    
    \item Moreover, our algorithms do not require Slater's condition, which assumes the existence of a strictly feasible solution with some positive margin, called the Slater constant $\xi$. Specifically, our algorithms do not take $\xi$ as an input, and our guarantees do not depend on $\xi$; thus, they do not deteriorate even when $\xi$ becomes arbitrarily close to $0$. Recently, in the linear CMDP literature, several algorithms have been proposed that do not require this condition, but they are either computationally inefficient~\citep{ghosh2024towards} or limited to suboptimal guarantees~\citep{peng2026towards}. In contrast, our algorithms are computationally efficient and achieve rate-optimal guarantees without Slater's condition in the linear CMDP setting.

    \item The intuition behind our algorithm design is to develop a new drift analysis for adversarial linear CMDPs that avoids policy mixing, thereby avoiding the conflict with the covering number argument. To this end, we incorporate an adaptive regularizer into the dual update together with an exponential Lyapunov function, inspired by \citet{yu2026constrained}, where such a technique was originally developed for convex optimization. This adaptive regularization leads to a drift upper bound that can be controlled without policy mixing. In contrast, the analysis of \citet{yu2026primaldual} relies on policy mixing to control the KL divergence terms appearing in the drift analysis.
    
    \item While adopting the adaptive regularizer in the dual update allows us to avoid policy mixing, there are additional technical challenges that we must address. The first challenge is the unbounded diameter of the solution space. In particular, a direct extension of \citet{yu2026constrained} to the policy space would require a bounded KL-diameter, which does not hold for the probability simplex. To resolve this issue, we adopt adaptive Follow-the-Regularized-Leader (FTRL) for the policy update, whose regret analysis avoids such a diameter dependence. The second challenge is that a na\"ive bound on the dual variable can grow exponentially in $K$ due to the use of an exponential Lyapunov function. Since such a bound enters the covering number analysis, we clip the dual variable at a polynomial threshold and then show that this clipping does not affect the dual dynamics.
\end{itemize}

        
        
        

\begin{table}[t]
\caption{
Comparison of recent algorithms for CMDPs.
Here, Sto. and Adv. denote stochastic and adversarial, respectively.
The \emph{$|\calS|$-free} column indicates whether the computational complexity is independent of $|\calS|$.
\emph{Slater-free} indicates that neither the algorithm nor its guarantees require Slater's condition. 
{\color{alert}\emph{Hard} indicates whether a hard constraint violation guarantee is provided.}
}
    \label{tab:comparison}
    \centering
    \begin{tabular}{|c|c|c|c|c|c|c|c|}
        \hline
         Algorithm & MDP & Loss & Constr. & Rate & $|\calS|$-free & Slater-free & Hard\\
         \hline
        
        \citet{roy2026optimal} 
        & {\color{red}Tabular} 
        & Adv. 
        & Adv. 
        & $\sqrt{K}$ 
        & {\color{red}\no} 
        & \yes 
        & \yes 
        \\
        
        \citet{peng2026towards} 
        & Linear
        & {\color{red}Sto.} 
        & {\color{red}Sto.} 
        & $\sqrt{K}$ 
        & \yes 
        & {\color{red}\no} 
        & \yes 
        \\

        \citet{peng2026towards} 
        & Linear
        & {\color{red}Sto.} 
        & {\color{red}Sto.} 
        & {\color{red}$K^{3/4}$} 
        & \yes 
        & \yes 
        & \yes 
        \\
        
        \citet{yu2026near} 
        & Lin. Mixture
        & Adv. 
        & {\color{red}Sto.} 
        & $\sqrt{K}$ 
        & {\color{red}\no} 
        & {\color{red}\no} 
        & {\color{red}\no} 
        \\
        
        \citet{yu2026primaldual} 
        & Linear 
        & Adv. 
        & {\color{red}Sto.} 
        & {\color{red}$K^{3/4}$} 
        & \yes 
        & {\color{red}\no} 
        & {\color{red}\no} 
        \\

        \hline
        
        \rowcolor{gray!15}
        \textbf{\Cref{alg:main}} 
        & Linear 
        & Adv. 
        & Adv. 
        & $\sqrt{K}$ 
        & \yes 
        & \yes 
        & {\color{red}\no}
        \\

        \rowcolor{gray!15}
        \textbf{\Cref{alg:hard}} 
        & Linear 
        & Adv. 
        & Adv. 
        & $\sqrt{K}$ 
        & \yes 
        & \yes 
        & \yes 
        \\
        
        \hline
    \end{tabular}
    \vspace{-3ex}
\end{table}

\section{Problem Setting}\label{sec:problem setting}
{\bf Notation.} Let $[n] = \{1,\ldots, n\}$ for some positive integer $n$, let $[\cdot]_+ = \max\{0,\cdot\}$, and let $\Delta(\calA)$ denote the probability simplex over the set $\calA$. Given a positive integer $d$ and a vector $\bfx \in \bbR^d$, let $||\bfx||$ denote the $\ell_2$-norm, and let $||\bfx||_1, ||\bfx||_\infty$ denote the $\ell_1$- and $\ell_\infty$-norms, respectively. For a positive definite matrix $\bfA \in \bbR^{d\times d}$, let $||\bfA||$ denote the operator norm, let $||\bfx||_{\bfA} = \sqrt{\bfx^\top\bfA\bfx}$, and let $\lesssim$ denote the inequality that omits constants and polylogarithmic factors. The additional notation used throughout the paper is summarized in Appendix~\ref{appendix:notation}.

{\bf Adversarial CMDPs.} Let $\calM = (\calS, \calA, H, K, s_1, b, \{f^k,g^k\}_{k=1}^K, \{\bbP_h\}_{h=1}^H)$ denote a CMDP, where $\calS$ is a finite state space, $\calA$ is a finite action space, $H$ is a horizon length, $K$ is the number of episodes, $s_1 \in\calS$ is an initial state in each episode $k$, and $b\in[0,H]$ is a constraint threshold. While we assume a finite state space for simplicity, it can be extended to the arbitrarily large state case, as the statistical guarantees and computational complexity do not depend on $|\calS|$. Additionally, $\{f^k,g^k\}_{k=1}^K$ denotes the sequence of loss and constraint functions, where $f_h^k,g_h^k:\calS\times\calA \to [0,1]$ for each $(h,k)\in [H] \times [K]$, and $\bbP$ denotes a transition kernel, where $\bbP_h(s'|s,a)$ is the probability of transitioning from state $s$ to $s'$ by taking action $a$ at step $h\in [H]$.

The interaction between the agent and the environment proceeds as follows. For each episode $k$ and step $h$, after observing $s_h^k$, the agent takes an action $a_h^k \sim \pi_h^k(\cdot|s_h^k)$, where $\pi_h^k$ denotes the agent's policy. Then, the next state $s_{h+1}^k$ is sampled from $\bbP_h(\cdot|s_h^k,a_h^k)$. Once episode $k$ ends, the full-information feedback of $f^k,g^k$ is revealed, while the transition kernel $\{\bbP_h\}_{h=1}^H$ remains unknown.\footnote{While we consider adversarial constraints under full-information feedback, our algorithm can be applied to stochastic constraints under bandit feedback with a few modifications to the algorithm, maintaining the main results. This is possible because, if the constraint function satisfies the linear structure, the cost of estimating the constraint function does not change the dominant terms in the regret and violation bounds.}

Given a CMDP $\calM$, we define the value functions as follows. We define the value function as $V_{f^k,h}^\pi(s) = \bbE_{\bbP,\pi}[\sum_{h'=h}^H f_{h'}^k(s_{h'},a_{h'})|s_h=s]$ and the $Q$-function as $Q_{f^k,h}^\pi(s,a) = \bbE_{\bbP, \pi}[\sum_{h'=h}^H f_{h'}^k(s_{h'},a_{h'}) |s_h=s,a_h=a]$, where the expectations are taken over the trajectory $\{s_h,a_h\}_{h=1}^H$ given $\bbP,\pi$. We consider regret and constraint violation as performance metrics: 
\begin{align*}
    \Regret(K) = & \sum\nolimits_{k=1}^K (V_{f^k,1}^{\pi^k}(s_1) - V_{f^k,1}^{\pi^\star}(s_1)) \\
    \Violation(K) =& \sum\nolimits_{k=1}^K  (V_{g^k,1}^{\pi^k}(s_1) - b)
\end{align*}
where $\pi^\star \in \argmin_{\pi\in\Pi} \sum_{k=1}^K V_{f^k,1}^\pi(s_1) \ \st\ V_{g^k,1}^\pi(s_1) \leq b \ \forall k\in [K]$, under the assumption that the corresponding feasible policy region is non-empty. Here, $\Pi$ denotes the set of Markovian policies, i.e., $\Pi = \{\{\pi_h\}_{h=1}^H: \pi_h: \calS \to \Delta(\calA), \ \forall h \in [H]\}$.

\begin{remark}
{\rm
For adversarial losses and constraints, we may consider a different definition of $\pi^\star$, for example, $\argmin_{\pi\in\Pi} \sum_{k=1}^K V_{f^k,1}^\pi(s_1) \ \st\ \sum_{k=1}^K V_{g^k,1}^\pi(s_1) \leq Kb$. However, under this definition of $\pi^\star$, there is no algorithm that makes both $\Regret(K)$ and $\Violation(K)$ sublinear~\citep{mannor2009online}. Given this fact, our definition of $\pi^\star$ is standard in the constrained online learning literature~\citep{sinha2024optimal, zhu2025optimistic}.
}
\end{remark}

{
\color{alert}
Moreover, we consider the hard constraint violation, denoted by $\VioHard(K)$, which does not allow violations to cancel across episodes, unlike $\Violation(K)$. It is defined as
\begin{align*}
    \VioHard(K) = \sum\nolimits_{k=1}^K [V_{g^k,1}^{\pi^k}(s_1) - b]_+.
\end{align*}
This is a stronger notion of constraint violation than $\Violation(K)$: any upper bound on $\VioHard(K)$ immediately yields the same upper bound on $\Violation(K)$, whereas the converse does not generally hold.
}

{\bf Linear CMDPs.} Throughout the paper, we consider linear CMDPs. That is, given a dimension $d$ and a known feature mapping $\bphi:\calS\times\calA \to\bbR^d$, the loss and constraint functions admit the following linear representation: $f_h^k(s,a) = \bphi(s,a)^\top\btheta_{f,h}^k$ and $g_h^k(s,a) = \bphi(s,a)^\top\btheta_{g,h}^k$ for each $(s,a,h,k) \in\calS\times\calA\times [H]\times[K]$. Similarly, the transition kernel can be represented as $\bbP_h(s'|s,a) = \bphi(s,a)^\top\bmu_h(s')$ for some $\bmu_h:\calS\to\bbR^d$. We note that $\{\bmu_h\}_{h=1}^H$ is unknown during the learning process, while $\{\btheta_{f,h}^k, \btheta_{g,h}^k\}_{h=1}^H$ are revealed at the end of episode $k$. For each $(s,a,h,k)\in\calS\times\calA\times[H]\times [K]$, we assume that $||\btheta_{f,h}^k||,||\btheta_{g,h}^k||\leq \sqrt{d}$, $||\bphi(s,a)||\leq 1$, and $||\sum_{s'\in\calS}|\bmu_h(s')|||\leq \sqrt{d}$, where $|\bmu_h(s')|$ denotes the component-wise absolute value of a vector. Note that we follow the standard regularity assumptions for linear MDPs~\citep{cassel2024warm}.

\begin{remark}
{\rm
Unlike the previous work~\citep{yu2026primaldual}, our problem setting does not impose Slater's condition. Slater's condition requires the existence of a policy that satisfies the constraint with a strictly positive margin $\xi$, referred to as the Slater constant. The guarantees of \citet{yu2026primaldual} depend inversely on $\xi$ and therefore deteriorate as $\xi$ becomes small. In contrast, our guarantees are independent of $\xi$ and hold without assuming Slater's condition.
}
\end{remark}

\section{Difficulty and Our Approach}
The fundamental difficulty of learning adversarial linear CMDPs arises from the need to control the covering number. Here, the covering number argument allows us to establish concentration properties for linear CMDPs, constituting an essential analytical tool~\citep{jin2020provably}. However, this requirement often creates an obstacle to directly applying useful techniques from the constrained online learning literature. This is why the state-of-the-art algorithm for adversarial linear CMDPs remains suboptimal~\citep{yu2026primaldual}, while several optimal algorithms have been proposed in other related settings, such as constrained online convex optimization~\citep{sinha2024optimal}, tabular CMDPs~\citep{qiu2020upper}, and linear mixture CMDPs~\citep{yu2026near}, which do not require a covering number argument.

To elaborate on this difficulty, we investigate the bottleneck in the previous work toward achieving optimal guarantees. \citet{yu2026primaldual} proposed a primal-dual algorithm for linear CMDPs with adversarial losses, but their guarantees are limited to $\tbigO(K^{3/4})$, implying that there remains a gap from the optimal rate $\tbigO(\sqrt{K})$. Importantly, this gap stems from the policy mixing step in their algorithm, which slightly perturbs the policy before optimization. Indeed, this is a well-known technique in the broad area of constrained online learning, as it is helpful for the stability of the dual process by mitigating KL divergence terms that appear in the Lyapunov drift analysis~\citep{wei2020online, qiu2020upper, yu2026near}; see also \eqref{eq:drift kl simple}. However, in the linear CMDP setting, this causes additional problems, as it leads to a large covering number, resulting in looser concentration bounds. This is because the policy structure resulting from mixing is complex~\citep{yu2026primaldual}. 
In summary, policy mixing is required for their Lyapunov analysis, whereas it causes additional problems particularly in the linear CMDP setting, leading to the suboptimal guarantees.

Motivated by this investigation, the key intuition behind our approach is to develop a new Lyapunov drift analysis for adversarial linear CMDPs that does not rely on mixing. Specifically, its particular advantage in this setting is that the corresponding policy structure remains simple, leading to a small covering number. To achieve the desired drift analysis, we adopt an adaptive regularizer in the dual update together with an exponential Lyapunov function, inspired by \citet{yu2026constrained}. For comparison, in the analysis of \citet{yu2026primaldual}, the Lyapunov drift term is bounded by KL divergence terms, which can be expressed in the following simplified form, while the detailed version is provided in Appendix~\ref{appendix:discussion}: given the dual variable $Y_k$, a Slater constant $\xi$, and the corresponding policy $\bar\pi$,
\begin{equation}\label{eq:drift kl simple}
    Y_{k+1}^2 - Y_k^2 \lesssim -\xi Y_k + \KL(\bar\pi || \pi_h^k).
\end{equation}
From \eqref{eq:drift kl simple}, to ensure that the drift term is bounded, perturbing $\pi_h^k$ is essential for bounding the KL divergence term on the right-hand side. Without perturbation, the KL divergence could be unbounded over the policy space. On the other hand, our method yields the following drift bound. Let $R_k$ denote the adaptive regularizer, $\Phi$ denote the exponential Lyapunov function, and $\hat V_{g,1}^k(s_1)$ denote the constraint value function estimate, whose precise formulations are given in \Cref{sec:algorithm}. Then we have
\begin{equation}\label{eq:drift Rk}
    \Phi(Y_{k+1}) - \Phi(Y_k) \lesssim - R_k \Phi'(Y_k) + \Phi'(Y_k)(\hat V_{g,1}^k(s_1) - b).
\end{equation}
From \eqref{eq:drift Rk}, we observe that the right-hand side does not depend on such a KL divergence term. This suggests that our Lyapunov analysis does not require the mixing step to guarantee the stability of the dual process and is therefore more appropriate for the linear CMDP setting.
\begin{remark}
{\rm
Another important distinction between \eqref{eq:drift kl simple} and \eqref{eq:drift Rk} lies in the role of the Slater constant $\xi$. In particular, the negative drift term $-\xi Y_k$ in \eqref{eq:drift kl simple} relies on $\xi$, which plays a conceptual role in ensuring the stability of the dual process. On the other hand, the Lyapunov drift in \eqref{eq:drift Rk} is not controlled by $\xi$, implying that stability can be guaranteed even without the presence of $\xi$. That said, our algorithm and analysis still work without Slater's condition, while those of \citet{yu2026primaldual} do not.
}
\end{remark}
However, directly applying the approach of \citet{yu2026constrained}, which was originally proposed for convex optimization, does not lead to rate-optimal guarantees in the linear CMDP setting. In particular, we must address the following two technical challenges. The first challenge is the unbounded diameter. Specifically, they assume that the diameter of the primal space in terms of the $\ell_2$-norm is bounded, whereas in our case, the diameter of the policy space in terms of KL divergence is unbounded. The second challenge is that, since we adopt an exponential Lyapunov function $\Phi$, na\"ive bounds on $Y_k$ can grow exponentially in $K$. Under our analysis, such a na\"ive bound enters the covering number as in \eqref{eq:naive Yk} (see also \Cref{lem:covering number}), it could make the covering number large, leading to poor guarantees. That is, letting $C_Y$ be the uniform na\"ive upper bound on $Y_k$,
\begin{equation}\label{eq:naive Yk}
    \log \calN_\epsilon \lesssim d^2 \log(C_Y) \lesssim d^2 K.
\end{equation}
We address these technical challenges as follows. To resolve the first challenge, we apply adaptive FTRL~\citep{orabona2019modern} to the MDP setting.\footnote{While in the convex optimization literature, \citet{yu2026constrained} considers the Euclidean setting together with gradient descent algorithms, based on our observation, their work may be extended to the probability simplex case by adopting adaptive FTRL.} Specifically, Adaptive Online Mirror Descent (AdaOMD), which is the most natural extension of the Adaptive Gradient Descent used in \citet{yu2026constrained} to the MDP setting, typically yields the following regret analysis: given the diameter $D$ and the gradients $\nabla_k$,\footnote{For simplicity, the provided regret bounds are written for online convex optimization rather than MDPs.}
\begin{equation*}
    \Regret_{\mathrm{AdaOMD}}(K) \lesssim \underbrace{D}_{\substack{\text{unbounded} \\ \text{for MDPs}}}\sqrt{\sum_{k=1}^K ||\nabla_k||^2}.
\end{equation*}
Here, $\Regret_{\textrm{AdaOMD}}(K)$ requires the diameter parameter $D$, which is unbounded in the MDP setting. On the other hand, we observe that the regret bound of adaptive FTRL can be written as follows:
\begin{equation*}
\Regret_{\mathrm{AdaFTRL}}(K) \lesssim \log |\calA|\sqrt{\sum_{k=1}^K ||\nabla_k||^2}.
\end{equation*}
For comparison, $\Regret_{\mathrm{AdaFTRL}}$ depends on $\log|\calA|$ instead of $D$. We then show that adaptive FTRL can be successfully applied to the linear CMDP setting. A detailed analysis is provided in Appendix~\ref{appendix:adpative FTRL}.

To resolve the second challenge, we clip the dual variable at a polynomial in $K$ so that the na\"ive bound on $Y_k$ becomes polynomial in $K$. Given this, the log covering number in \eqref{eq:naive Yk} becomes $d^2\log K$ rather than $d^2K$. However, this modification leads to an additional issue: the clipping step may affect the analysis of constraint violation, since it may change the dual variable dynamics. We resolve this issue by showing that such clipping does not occur once we choose a sufficiently large clipping parameter (\Cref{lem:no clip}). This allows us to complete the violation analysis successfully.

\section{Algorithm}\label{sec:algorithm}
In this section, we present our rate-optimal algorithm for adversarial linear CMDPs (\Cref{alg:main}). We first describe the main components of the algorithm and explain their roles, and then state the resulting theoretical guarantees. Finally, we compare our approach with prior work in terms of both guarantees and algorithmic design.

\subsection{Description of Algorithm~\ref{alg:main}}
\begin{algorithm}[!t]
\caption{Rate-Optimal Algorithm for Adversarial Linear CMDPs}\label{alg:main}
\textbf{Input:} $\delta \in (0,1)$; $\beta_p=187dH\sqrt{\log(2HK/\delta)}$; $\beta_\phi=4\beta_p\log K$; $\beta_Q=2H$; $\beta_R = 2H^2(\log|\calA|+20)\sqrt{dH\log(2K)}$; $\gamma= \min\{1/(36H\sqrt{K}), 1/(6\beta_R)\}$\\
\textbf{Initialization:} $m\leftarrow 0$; $\Phi(x) = e^{\gamma x}-1$; $Z_1\leftarrow 0$; $Y_1 \leftarrow \Phi'(Z_1)$; $\bLambda_h^1 \leftarrow \Id \ \forall h\in [H]$; $\hat Q_{f,H+1}^k(\cdot,\cdot),\hat Q_{g,H+1}^k(\cdot,\cdot),\hat V_{f,H+1}^k(\cdot), \hat V_{g,H+1}^k(\cdot)=0$, $\forall k \in [K]$
\begin{algorithmic}[1]
    \For{$k=1, \ldots, K$}
        \If{$k=1$ or $\exists h' \in [H]$ such that $\det(\bLambda_{h'}^k) \geq 2\det(\bLambda_{h'}^{k(m)})$} \label{line:epoch start}
            \State $m \leftarrow m+1$ and $k(m) \leftarrow k$ 
            \State $\pi_h^{k(m)}(\cdot|s) \leftarrow \pi_{\text{unif}}(\cdot|s), \ \forall h \in [H]$ \label{line:policy reset}
            \State $\bar\bphi_h^m(\cdot,\cdot) = \bphi(\cdot,\cdot)\sigma(-\beta_\phi ||\bphi(\cdot,\cdot)||_{(\bLambda_h^{k(m)})^{-1}} + \log K), \ \forall h\in [H]$ \label{line:feature contraction}
        \EndIf
        \State Execute $\pi^k$ and observe $\{s_h^k,a_h^k,\btheta_{f,h}^k, \btheta_{g,h}^k\}_{h=1}^H$ \label{line:policy execute}
        \For{$h=H,\ldots,1$}
            \State $\bLambda_h^{k+1} \leftarrow \bLambda_h^k+ \bphi(s_h^k,a_h^k)\bphi(s_h^k,a_h^k)^\top$ \label{line:estimate start}
            \For{$\ell = f,g$}
                \State $\hat\bmu_h^k \hat V_{\ell,h+1}^k \leftarrow (\bLambda_h^k)^{-1}\sum_{\tau \in [k-1]}\bphi(s_h^\tau, a_h^\tau) \hat V_{\ell,h+1}^k(s_{h+1}^\tau)$ \label{eq:muV estimate}
                \State $\hat Q_{\ell,h}^k(\cdot,\cdot) \leftarrow \bar\bphi_h^m(\cdot,\cdot)^\top \left[ \btheta_{\ell,h}^k + \hat\bmu_h^k \hat V_{\ell,h+1}^k\right] - \beta_p ||\bar\bphi_h^m(\cdot,\cdot)||_{(\bLambda_h^{k(m)})^{-1}}$
                \State $\hat V_{\ell,h}^k (\cdot)\leftarrow \sum_{a\in \calA}\pi_h^k(a|\cdot) \hat Q_{\ell,h}^k(\cdot,a)$ \label{line:estimate end}
            \EndFor
        \EndFor
            \State Update the policy $\pi_h^{k+1}$ for all $h \in [H]$: \label{line:FTRL start}
            \[
                \pi_h^{k+1}(\cdot|s) \in \argmin_{\pi(\cdot|s)\in \Delta(\calA)}\ \psi_k(\pi(\cdot|s)) + \sum_{\tau=k(m)}^k (\hat Q_{f,h}^\tau(s,\cdot) + Y_\tau\hat Q_{g,h}^\tau(s,\cdot))^\top \pi(\cdot|s)
            \]
            \hspace{5mm}
            where $\psi_k(\pi(\cdot|s)) = \alpha_k\sum_{a\in\calA}\pi(a|s)\log\pi(a|s)$ and $\alpha_k = \frac{\beta_Q}{4}\sqrt{\sum_{\tau=k(m)}^{k}(1+Y_\tau)^2}.$\label{line:FTRL end}
        
        \State $R_k = 108\gamma H^2 + \beta_R \frac{Y_k}{\sqrt{\sum_{\tau=1}^{k}Y_\tau^2}}$ \label{line:Rk}
        \State $Z_{k+1} = Z_k + \hat V_{g,1}^k(s_1) - b - R_k$\label{line:Zk}
        \State $Y_{k+1} = \min\{\Phi'(Z_{k+1}), K^8\}$ \label{line:Yk}
    \EndFor
\end{algorithmic}
\end{algorithm}
From Lines~\ref{line:epoch start}-\ref{line:estimate end}, the algorithm proceeds through epoch-wise policy updates and value estimation. A new epoch begins when $\det(\bLambda_{h'}^k) \geq 2\det(\bLambda_{h'}^{k(m)})$ for some $h'$, where $k(m)$ denotes the initial episode of the epoch $m$. At the beginning of each epoch, the policy is reset to the uniform policy $\piunif$, and the contracted feature $\bar\bphi_h^m$ is updated using the sigmoid function $\sigma(z) = 1/(1+e^{-z})$ for any $z\in \bbR$. This contracted feature is then used to define the $Q$-function and value function estimates, ensuring their boundedness without explicit clipping such as $\max\{\cdot,0\}$. In each episode $k$, the agent executes the policy $\{\pi_h^k\}_{h=1}^H$, observes the trajectory $\{s_h^k,a_h^k\}_{h=1}^H$ and the full-information feedback $\{\btheta_{f,h}^k,\btheta_{g,h}^k\}_{h=1}^H$, and updates $\bLambda_h^{k+1}$ and the estimates $\hat Q_{f,h}^k,\hat Q_{g,h}^k,\hat V_{f,h}^k,\hat V_{g,h}^k$. Importantly, avoiding explicit clipping yields a simpler policy parameterization, which is crucial for the covering number analysis and is a standard technique in adversarial linear MDPs in both the unconstrained~\citep{cassel2024warm} and constrained~\citep{yu2026primaldual} settings. The precise parameterizations are provided in Appendix~\ref{appendix:param and cov}.

\textbf{Adaptive FTRL (Line~\ref{line:FTRL start}):}
To update the policy $\pi_h^{k+1}$, we use adaptive FTRL to accommodate the policy space. Here, adaptive FTRL minimizes the sum of the linearized losses $\sum_{\tau=k(m)}^k (\hat Q_{f,h}^\tau(s,\cdot) + Y_\tau \hat Q_{g,h}^\tau(s,\cdot))^\top \pi(\cdot|s)$ together with the adaptive regularizer $\psi_k$. This differs from the Euclidean setting of \citet{yu2026constrained}, whose analysis relies on a bounded $\ell_2$-diameter of the feasible primal domain. This is a crucial step because the adaptive FTRL update does not require bounding the diameter of the policy space with respect to KL divergence. We then show that it can be combined with the feature contraction technique used in our value function estimation.

\textbf{Adaptive Regularizer for Dual Update (Lines~\ref{line:Rk}-\ref{line:Zk}):}
To update the proxy dual variable $Z_{k+1}$, we incorporate the adaptive regularizer $R_k$ by subtracting it in the update of the proxy dual variable $Z_{k+1}$. Intuitively, this subtraction helps stabilize the dual process. For comparison, \citet{yu2026primaldual} also introduces a regularization term in the dual update, but their resulting Lyapunov drift analysis contains KL divergence terms and therefore requires policy mixing to control them. In contrast, with our choice of $R_k$, the resulting drift upper bound does not involve KL divergence terms. This allows the stability analysis to be carried out without relying on policy mixing, which is crucial in the linear CMDP setting.

\textbf{Clipping for Dual Update (Line~\ref{line:Yk}):}
To update the dual variable $Y_{k+1}$, we evaluate the derivative of the Lyapunov function $\Phi'$ at the proxy dual variable $Z_{k+1}$ and then clip the resulting value at $K^8$. Applying $\Phi'$ in the dual update is standard in the literature~\citep{sinha2024optimal}, while the clipping step is introduced to control the covering number in our linear CMDP setting. In our covering number analysis, a polynomial upper bound on $Y_{k+1}$ in terms of $K$ is necessary, whereas $\Phi'(Z_{k+1})$ can be exponentially large in $K$ because $\Phi$ is an exponential function. Clipping $\Phi'(Z_{k+1})$ at $K^8$ therefore prevents this na\"ive exponential bound from entering the covering number. Importantly, we later show that the clipping is not activated with high probability, which allows the violation analysis to proceed with the original, unclipped dual dynamics.

The computational complexity of \Cref{alg:main} is $\bigO(d^3HK + d^2|\calA|HK^2)$, which is comparable to that of \citet{yu2026primaldual} and importantly is independent of $|\calS|$. Specifically, computing the determinants requires $\bigO(d^3HK)$ operations, while the feature contraction and value function estimation require $\bigO(d^2|\calA|HK^2)$ operations. Moreover, since adaptive FTRL admits a closed-form update, we do not need to solve a convex optimization problem (Appendix~\ref{appendix:adpative FTRL}); its computational cost, as well as that of the dual update, is not dominant.

\subsection{Main Result of Algorithm~\ref{alg:main}}
We present the following theorem, which establishes the rate-optimal guarantees of \Cref{alg:main}.
\begin{theorem}\label{thm:main}
    Suppose that we run \Cref{alg:main}. Let $H\geq 2$, and let $\delta\in (0,1)$. With probability at least $1-\delta$, we have
    \begin{align*}
        \Regret(K),\ \Violation(K) = \tbigO\left(\sqrt{d^3H^4K}+ \sqrt{dH^5K} +d^3H^4\right).
    \end{align*}
    where $\tbigO(\cdot)$ hides polylogarithmic factors in $d,H,|\calA|,K,1/\delta$.
\end{theorem}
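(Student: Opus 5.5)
The proof works on a single high-probability event $\calE$. On $\calE$, the optimistic estimates concentrate uniformly over the value-function class generated by the contracted features and by FTRL policies with clipped duals $Y_\tau\le K^8$. \Cref{lem:covering number} gives the log-covering number: the FTRL normalization bounds the policy parameters by polynomials in $K$ independently of $Y_\tau$, so the log-covering number is $\tbigO(d^2)$. The standard self-normalized bound of \citet{jin2020provably} then gives, on $\calE$, optimism
\[
\hat Q_{\ell,h}^k\le \bphi^\top[\btheta_{\ell,h}^k+\bmu_h V_{\ell,h+1}^{\pi,k}]
\]
up to the sigmoid contraction error. It also gives the on-policy bound
\[
\sum_k \big(V_{\ell,1}^{\pi^k}(s_1)-\hat V_{\ell,1}^k(s_1)\big)\lesssim \beta_p\sum_{k,h}\|\bphi(s_h^k,a_h^k)\|_{(\bLambda_h^{k(m)})^{-1}}+\text{martingale}.
\]
By elliptical potential and the doubling-determinant epoch rule, this is $\tbigO(\sqrt{d^3H^4K}+d^3H^4)$; the number of epochs is $\bigO(dH\log K)$. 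The contraction with $\beta_\phi=4\beta_p\log K$ is handled as in \citet{cassel2024warm}: it costs only lower-order terms, since where $\sigma$ is far from $1$ the bonus already exceeds $H$.

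Next I combine the primal and dual analyses. Write the Lagrangian estimate $\hat Q^\tau_{f,h}+Y_\tau\hat Q^\tau_{g,h}$. The performance difference lemma with $\pi^\star$ and optimism gives
\[
\sum_k(\hat V_{f,1}^k - V_{f,1}^{\pi^\star}) + \sum_k Y_k(\hat V_{g,1}^k - V^{\pi^\star}_{g,1}) \lesssim \sum_{m}\sum_h \bbE_{s\sim \pi^\star}\Big[\textstyle\sum_{\tau\in\text{epoch }m}\langle \hat Q^\tau_{f,h}+Y_\tau\hat Q^\tau_{g,h},\pi_h^\tau-\pi^\star_h\rangle\Big].
\]
This uses that within an epoch the contracted features, and hence the estimates, are defined consistently. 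The adaptive FTRL lemma (Appendix~\ref{appendix:adpative FTRL}) bounds each epoch's inner sum by
\[
(\log|\calA|+\bigO(1))\,\beta_Q\sqrt{\textstyle\sum_\tau(1+Y_\tau)^2},
\]
with no diameter dependence. Since $V_{g,1}^{\pi^\star}\le b$, summing over the $\tbigO(dH)$ epochs gives
\[
\sum_k(\hat V_{f,1}^k-V_{f,1}^{\pi^\star})+\sum_k Y_k(\hat V_{g,1}^k-b)\lesssim H^2\sqrt{dH}\log|\calA|\Big(\sqrt K+\sqrt{\textstyle\sum_k Y_k^2}\Big).
\]
The term $\beta_R\sqrt{\sum Y_k^2}$ is exactly what the dual regularizer is calibrated to cancel: since $\sum_k Y_kR_k\ge \beta_R\sum_k Y_k^2/\sqrt{\sum_{\tau\le k}Y_\tau^2}\ge\beta_R\sqrt{\sum_k Y_k^2}$, this holds by the standard inequality $\sum a_k/\sqrt{\sum_{\tau\le k}a_\tau}\ge\sqrt{\sum a_k}$.

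For the drift, use convexity of $\Phi$ and the update of Line~\ref{line:Zk}, provisionally ignoring clipping so that $Y_k=\Phi'(Z_k)$. This gives
\[
\Phi(Z_{k+1})-\Phi(Z_k)\le Y_k(\hat V_{g,1}^k-b-R_k)+\tfrac12\Phi''(Z_{k+1})(\hat V_{g,1}^k-b-R_k)^2.
\]
The choices $\gamma\le 1/(36H\sqrt K)$ and $\gamma\le 1/(6\beta_R)$ keep $|Z_{k+1}-Z_k|\gamma\le 1$, so $\Phi''(Z_{k+1})\le e\gamma Y_k$ up to the $-1$ offset. The $108\gamma H^2$ part of $R_k$ then absorbs the quadratic term. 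Summing, adding the regret inequality above, and telescoping gives
\[
\Phi(Z_{K+1})+\Regret(K)\lesssim \text{(estimation)}+\beta_R\sqrt K .
\]
Since $\Regret(K)\ge -HK$, this yields $\Phi(Z_{K+1})\le \mathrm{poly}(K)$ and hence $Z_{K+1}\lesssim \gamma^{-1}\log K=\tbigO(H\sqrt K+\beta_R)$. Dropping $\Phi\ge0$ gives the regret bound. For violation, use
\[
\sum_k(\hat V^k_{g,1}-b)=Z_{K+1}+\sum_kR_k,
\]
with
\[
\sum_kR_k\le 108\gamma H^2K+\beta_R\sum_k\frac{Y_k}{\sqrt{\sum_{\tau\le k}Y_\tau^2}}.
\]
The last sum is $\tbigO(\sqrt K)$ since $Y_k\ge\gamma$. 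Adding the estimation error gives the stated $\Violation(K)$ bound.

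The main obstacle is justifying the removal of clipping, i.e.\ \Cref{lem:no clip}. I would prove it by induction on $k$ on $\calE$. If no clipping occurred up to $k$, the drift argument applied to the prefix $[1,k]$ shows $\Phi(Z_{k+1})\le \mathrm{poly}(K)$, using $\Regret\ge -HK$ on the prefix and $Y_\tau\le K^8$ inside the covering argument. Hence $\Phi'(Z_{k+1})=\gamma(\Phi(Z_{k+1})+1)<K^8$, so clipping is not triggered at $k+1$. The delicate part is that the concentration event must be fixed beforehand for the whole clipped policy class, which is why the $K^8$ cap and the $Y$-independent policy parameter bound are essential. A secondary concern is the cross-epoch policy resets: the FTRL bound must be applied per epoch, with $\sqrt{\sum_\tau(1+Y_\tau)^2}$ summed over epochs via Cauchy–Schwarz, costing the $\sqrt{\#\text{epochs}}$ factor reflected in the $\sqrt{dH^5K}$ term.
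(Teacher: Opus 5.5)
Your outline follows the paper's proof closely: concentration over the clipped policy class, per-epoch adaptive FTRL combined across epochs by Cauchy--Schwarz, cancellation of $\sqrt{\sum_k Y_k^2}$ through $\sum_k Y_kR_k$, the exponential drift bound, the inductive no-clip argument, and $\sum_k(\hat V_{g,1}^k(s_1)-b)=Z_{K+1}+\sum_kR_k$. One step fails as written: the claim that the $108\gamma H^2$ part of $R_k$ absorbs the quadratic remainder of the drift. Write $\Delta_k=\hat V_{g,1}^k(s_1)-b-R_k$ and $r_k=Y_k/\sqrt{\sum_{\tau\le k}Y_\tau^2}$. The remainder is of order $\gamma Y_k\Delta_k^2$, and $\Delta_k^2$ contains $R_k^2\ge\beta_R^2r_k^2$. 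Since $\beta_R\gg H$, the term $\gamma\beta_R^2r_k^2$ is much larger than $108\gamma H^2$ unless $r_k$ is tiny. It must instead be paid for by the $\beta_R$-part of $R_k$, using $\gamma\le 1/(6\beta_R)$: $3\gamma\beta_R^2r_k^2\le\tfrac12\beta_Rr_k$. This is why \Cref{lem:drift} keeps only $R_k/2$, and why $\beta_R$ is exactly twice the FTRL coefficient $H^2(\log|\calA|+20)\sqrt{dH\log(2K)}$.

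So only $\tfrac12\sum_kY_kR_k\ge\tfrac{\beta_R}{2}\sqrt{\sum_kY_k^2}$ is available, not $\beta_R\sqrt{\sum_kY_k^2}$, and with the paper's bound on the FTRL term the cancellation is exact. Your curvature estimate $\tfrac12\Phi''\le\tfrac e2\gamma Y_k$ is too crude for this: it spends a fraction $e/4>\tfrac12$ of the $\beta_R$-term. The leftover multiple of $\sqrt{\sum_kY_k^2}$, fed back through $Y_k=\gamma(\Phi(Z_k)+1)$, only gives a bound on $\sum_kY_k^2$ that is exponential in $K$. Then neither the polynomial bound on $\Phi(Z_{k+1})$ nor the no-clip induction closes. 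You need a remainder constant of at most $1$. Either use $e^z\le 1+z+z^2$ for $z\le1$ as the paper does, or place the Taylor point $\xi_k$ between $Z_k$ and $Z_{k+1}$. Then $\Phi''(\xi_k)\le\Phi''(Z_k)=\gamma Y_k$ if $\Delta_k\le0$, and $\Phi''(\xi_k)\le e^{2\gamma H}\gamma Y_k$ if $\Delta_k>0$, since $\Delta_k\le 2H$. Note also that your $\Phi''(Z_{k+1})$ is not a valid remainder bound when $Z_{k+1}<Z_k$.

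Smaller points follow. $Y_k\ge\gamma$ is false for $k\ge2$, because $Z_k$ can become negative. The logarithmic bound on $\sum_kY_k/\sqrt{\sum_{\tau\le k}Y_\tau^2}$ needs only $Y_1=\gamma$ together with $Y_k\le K^8$. Likewise $\Phi\ge-1$, not $\Phi\ge0$, which costs an additive $1$ in the regret. The no-clip step needs a bound of the form $\mathrm{poly}(d,H,\log|\calA|,\log(1/\delta))K^{7/2}<K^8$, and the covering argument needs $\beta_\phi\le K$; both require $K$ above a polylogarithmic threshold. The complementary regime must be handled separately by the trivial bound $HK$, which is absorbed into $d^3H^4$. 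Finally, optimism holds exactly, with no contraction error, only in the epoch-wise contracted MDP $\bar\bbP^m$, combined with $\bar V_{f^k,1}^{\pi^\star}\le V_{f^k,1}^{\pi^\star}$ and likewise for $g$. So the FTRL term must be taken in expectation under $(\bar\bbP^m,\pi^\star)$. An additive contraction error accrued under $\pi^\star$'s occupancy in the true MDP could not be controlled by the elliptical potential on the learner's own data.
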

Note that the regret and constraint violation guarantees in \Cref{thm:main} have comparable orders in terms of $d,H,K$ to those of \citet{cassel2024warm}, which studied adversarial linear (unconstrained) MDPs. Indeed, these orders are the best-known guarantees without applying variance-related techniques. However, even in the unconstrained setting, no algorithm for adversarial linear MDPs has been proposed that achieves near-optimal guarantees, i.e., $\tbigO(d\sqrt{H^3 K})$~\citep{he2023nearly}. Therefore, we leave achieving optimal dependence on $d$ and $H$ as future work in our setting.

\paragraph{Comparison}
From the perspectives of both results and algorithms, we compare our algorithm with~\citet{yu2026primaldual}, which is the previous state-of-the-art algorithm in a similar setting, i.e., adversarial losses and stochastic constraints. From the perspective of results, the improvements of \Cref{alg:main} can be summarized as follows: (i) improved guarantees of $\tbigO(\sqrt{K})$, (ii) no Slater condition is required, and (iii) adversarial constraints can be handled. On the other hand, their algorithm can only attain $\tbigO(K^{3/4})$, requires Slater's condition, and can only handle stochastic constraints.

From the algorithmic perspective, the differences can be summarized as follows: (i) we adopt adaptive FTRL, while the policy update of \citet{yu2026primaldual} can be viewed as FTRL with a constant step size, which is easier to handle than in our case; (ii) we adopt an adaptive regularizer for the dual update, which plays a key role in deriving a KL divergence-free drift analysis; and (iii) we adopt upper clipping for the dual variable, while they do not. Hence, in our case, the violation analysis is more challenging, since it requires additional techniques to show that the clipping step does not affect the analysis.

\section{Analysis of Algorithm~\ref{alg:main}}\label{sec:analysis}
This section presents the analysis of \Cref{alg:main} for establishing the rate-optimal guarantees in \Cref{thm:main}. The overview of our analysis is as follows. We first introduce the key relation~\eqref{eq:Regret + Phi}, inspired by \citet{yu2026constrained}, which bounds the sum of regret and $\Phi(Z_{K+1})$. To establish this relation, we provide two key ingredients, \eqref{eq:ingredient 1} and \eqref{eq:ingredient 2}, and then explain how they can be obtained in the linear CMDP setting. The details of the proofs are provided in Appendix~\ref{appendix:proofs}.

Before presenting the main analysis, we introduce the good event $\calE_g$. Intuitively, this event captures the accuracy and boundedness of the value function estimates and holds with high probability. Although establishing such a good event is a standard step in the online MDP literature, our dual variable clipping is useful here, as it ensures a polynomial upper bound on the dual variable and thereby keeps the resulting covering number under control. The formal definition and the proofs that $\calE_g$ holds with high probability are given in Appendix~\ref{appendix:good event}. 

In our analysis, the key relation we need to establish is as follows: the sum of $\Regret(K)$ and the Lyapunov evaluation of $Z_{K+1}$ is bounded by $\tbigO(\sqrt{K})$, i.e.,
\begin{align}\label{eq:Regret + Phi}
\Regret(K) + \Phi(Z_{K+1}) \lesssim \sqrt{K}.
\end{align}
This relation is useful because the regret and violation upper bounds naturally follow from it. Specifically, since we take $\Phi(x) = e^{\gamma x}-1\geq -1$ for any $x$, it directly gives us that $\Regret(K) = \tbigO(\sqrt{K})$. To obtain an upper bound on the violation, we observe that
\begin{align*}
    \Violation(K) &= \sum_{k=1}^K (V_{g^k,1}^{\pi^k}(s_1) - \hat V_{g,1}^k(s_1)) + \sum_{k=1}^K (\hat V_{g,1}^k(s_1)-b)\\
    &= \sum_{k=1}^K (V_{g^k,1}^{\pi^k}(s_1) - \hat V_{g,1}^k(s_1)) + Z_{K+1}+ \sum_{k=1}^K R_k.
\end{align*}
The first sum, which consists of the bias terms, can be bounded using a standard argument. On the other hand, an upper bound on $Z_{K+1}$ can be obtained from \eqref{eq:Regret + Phi}, while an upper bound on the sum of $R_k$ can be obtained from the following lemma.
\begin{lemma}\label{lem:R sum}
    Assume that the good event $\calE_g$ holds. Let $H\geq 2$, and let $K \geq c dH(\log|\calA|+20)\log(2H/\delta)$ for some absolute constant $c$. We have $\sum_{k=1}^K R_k = \tbigO(\sqrt{K}).$
\end{lemma}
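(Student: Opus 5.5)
The plan is to split $R_k = 108\gamma H^2 + \beta_R Y_k/\sqrt{\sum_{\tau=1}^k Y_\tau^2}$ and bound the two sums separately.

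\textbf{First term.} Since $\gamma \le 1/(36H\sqrt{K})$, we get
\[
\sum_{k=1}^K 108\gamma H^2 \le 3H\sqrt{K} = \tbigO(\sqrt{K}).
\]

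\textbf{Second term.} Here I use the standard adaptive-step-size inequality: for nonnegative reals $a_1,\dots,a_K$ (with the convention $0/0=0$),
\[
\sum_{k=1}^K \frac{a_k}{\sqrt{\sum_{\tau\le k} a_\tau^2}} \le \sqrt{K}\,\sqrt{\sum_{k=1}^K \frac{a_k^2}{\sum_{\tau\le k} a_\tau^2}} .
\]
This follows from Cauchy--Schwarz. The remaining sum is controlled by the logarithmic bound $\sum_k a_k^2/\sum_{\tau\le k}a_\tau^2 \le 1+\log\bigl(\sum_k a_k^2/a_{1}^2\bigr)$. That bound is obtained from $x \le \log(1/(1-x))$ applied to $x=a_k^2/S_k$, where $S_k=\sum_{\tau\le k}a_\tau^2$, followed by telescoping $\log(S_k/S_{k-1})$.

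We apply this with $a_k=Y_k$. Note $Y_1=\Phi'(0)=\gamma>0$, so the first ratio equals $1$ and no denominator vanishes. Moreover $Y_k\le K^8$ by the clipping in Line~\ref{line:Yk}, so $\sum_k Y_k^2\le K^{17}$. Hence the logarithm is at most
\[
\log\bigl(K^{17}/\gamma^2\bigr) = \bigO\bigl(\log K + \log(H\beta_R)\bigr),
\]
which is polylogarithmic. Therefore
\[
\sum_k \beta_R\,\frac{Y_k}{\sqrt{\sum_{\tau\le k}Y_\tau^2}} \lesssim \beta_R\sqrt{K\log(KH\beta_R)} .
\]
Since $\beta_R=\tbigO(H^{2.5}\sqrt d)$, this is $\tbigO(\sqrt{K})$.

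\textbf{Where the hypotheses enter.} This is the only place the hypotheses matter. The lower bound $Y_1=\gamma$ is what makes the logarithm finite. If $Y_k$ were unclipped, the ratio $\sum_k Y_k^2/Y_1^2$ could be exponential in $K$, so the clipping is what keeps the bound polylogarithmic. The condition on $K$ (and $\calE_g$) I expect is only used to ensure $\gamma=1/(36H\sqrt K)$ is the active branch of the minimum, or to simplify constants; otherwise $1/\gamma^2 \le 36\beta_R^2$, which is still polynomial. So the argument goes through in either branch, and the main obstacle is merely bookkeeping the $\log(1/Y_1)$ term.

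Possible refinement: I might avoid the log entirely via the sharper $\sum_k a_k/\sqrt{S_k}\le 2\sqrt{S_K}/\min_k a_k$-type bounds. That route fails when $Y_k$ is small, however, so I will use the Cauchy--Schwarz-plus-log route above.
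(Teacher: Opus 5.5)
Your proof is correct and matches the paper's argument step for step. It uses $108K\gamma H^2\le 3H\sqrt K$, then Cauchy--Schwarz with $\sum_k Y_k^2/\sum_{\tau\le k}Y_\tau^2\le 1+\log\bigl(\gamma^{-2}\sum_k Y_k^2\bigr)$ via $Y_1=\gamma$ and $Y_k\le K^8$, then a polynomial bound on $1/\gamma$. As you suspected, the paper's proof of this lemma uses neither $\calE_g$ nor the lower bound on $K$.
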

So far, we have shown why establishing \eqref{eq:Regret + Phi} is crucial in our analysis. Next, to explain how to obtain this relation, we introduce two ingredients. The first is a bound on the sum of regret and the optimistic constraint violation weighted by the dual variable as follows:
\begin{align}\label{eq:ingredient 1}
    \sum_{k=1}^K \left(V_{f^k,1}^{\pi^k}(s_1) - V_{f^k,1}^{\pi^\star}(s_1)\right) + \sum_{k=1}^K Y_k \left(\hat V_{g,1}^{k}(s_1) - b\right) \lesssim \sqrt{K} + \sqrt{\sum_{k=1}^K Y_k^2}.
\end{align}
The second is an upper bound on the Lyapunov drift as follows:
\begin{align}\label{eq:ingredient 2}
    \Phi(Z_{k+1}) - \Phi(Z_k) \lesssim Y_k\left(\hat V_{g,1}^k(s_1)-b-R_k\right).
\end{align}
Indeed, \eqref{eq:ingredient 1} and \eqref{eq:ingredient 2} can be combined to derive \eqref{eq:Regret + Phi} as follows. Summing \eqref{eq:ingredient 2} over $k$ gives $\Phi(Z_{K+1}) \lesssim \sum_{k=1}^K Y_k (\hat V_{g,1}^k(s_1) - b) - \sum_{k=1}^K Y_k R_k$. Combining this with \eqref{eq:ingredient 1} then yields \eqref{eq:Regret + Phi}. This is because the choice of $R_k$ allows us to show that $\sum_{k=1}^K Y_k R_k \approx \sqrt{\sum_{k=1}^K Y_k^2}$, and then the two corresponding terms then cancel each other out.

Now, we explain how to derive \eqref{eq:ingredient 1} and \eqref{eq:ingredient 2} in the linear CMDP setting. To show \eqref{eq:ingredient 1}, we prove the adaptive FTRL lemma for the policy space as below. Importantly, we observe that the regret bound for adaptive FTRL does not require a uniform bound on the feasible set of the primal solution. This observation is essential, since the corresponding term in our case is the diameter of the policy space in terms of KL divergence, which is unbounded. 
\begin{lemma}[Informal\footnote{For simplicity, we focus on the formulation while omitting expectations with respect to the trajectory on the left-hand side. For the precise statement, we refer to \Cref{lem:FTRL lemma policy appendix}.}]\label{lem:FTRL lemma policy}
    Assume that the good event $\calE_g$ holds. Then we have for all $k \in [K]$,
    \begin{align*}
        \sum_{k=1}^K\sum_{h=1}^H \left(\hat Q_{f,h}^k(s_h^k,\cdot) + Y_k\hat Q_{g,h}^k(s_h^k,\cdot)\right)^\top(\pi_h^k(\cdot|s_h^k) - \pi_h^\star(\cdot|s_h^k)) \lesssim \sqrt{K} + \sqrt{\sum_{k=1}^K Y_k^2}.
    \end{align*}
\end{lemma}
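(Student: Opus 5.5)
The plan is to split the sum over $k$ into epochs, apply a standard adaptive-FTRL regret bound with entropic regularizer state by state inside each epoch, and then recombine the per-epoch bounds using the epoch count and Cauchy--Schwarz.

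Fix $h$ and an epoch $m$ with episodes $k(m),\dots,k(m+1)-1$. Inside the epoch the policy is exactly the FTRL iterate with losses $\ell_\tau(s,\cdot)=\hat Q_{f,h}^\tau(s,\cdot)+Y_\tau\hat Q_{g,h}^\tau(s,\cdot)$ and regularizer $\psi_{k}=\alpha_k\sum_a\pi\log\pi$. The first iterate $\pi^{k(m)}=\piunif$ is the minimizer of $\psi$ alone, and $\alpha_k$ is nondecreasing in $k$. For every fixed state $s$ and every comparator $u\in\Delta(\calA)$, I would invoke the standard adaptive FTRL lemma \citep{orabona2019modern} for nondecreasing regularizers:
\[
\sum_{\tau=k(m)}^{k(m+1)-1}\ell_\tau(s,\cdot)^\top(\pi_h^\tau(\cdot|s)-u)\le \alpha_{\mathrm{last}}\log|\calA| + \sum_{\tau}\frac{\|\ell_\tau(s,\cdot)\|_\infty^2}{2\alpha_{\tau-1}} .
\]
The first term comes from $\psi(u)-\min\psi\le \alpha\log|\calA|$. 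This is where the KL-diameter issue disappears: only $\log|\calA|$ enters, never $\KL(u\|\pi)$. The second term uses the $1/\alpha$-strong convexity of negative entropy w.r.t.\ $\|\cdot\|_1$.

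On $\calE_g$ the contracted estimates satisfy $|\hat Q_{\ell,h}^\tau|\le \beta_Q/2=H$ (roughly), so $\|\ell_\tau\|_\infty\le H(1+Y_\tau)$. With $\alpha_k=\frac{\beta_Q}{4}\sqrt{\sum_{\tau=k(m)}^k(1+Y_\tau)^2}$, the standard lemma $\sum_\tau a_\tau/\sqrt{\sum_{j\le\tau}a_j}\le 2\sqrt{\sum a_\tau}$ bounds the stability term by $\bigO(H\sqrt{\sum_{\tau\in m}(1+Y_\tau)^2})$. To do this I must handle the index shift $\alpha_{\tau-1}$ versus $\alpha_\tau$. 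The cleanest route is the version of FTRL where $\psi_\tau$ already includes the current $(1+Y_\tau)^2$. That is how the algorithm defines $\alpha_k$ (the sum up to $k$, with $\pi^{k+1}$ using $\psi_k$), so the loss at $\tau$ is paired with $\alpha_{\tau-1}$. I would therefore need $(1+Y_\tau)^2\lesssim\sum_{j<\tau}(1+Y_j)^2$ plus an additive constant. Alternatively I would use the bound $\min\{\|\ell\|_\infty,\ \|\ell\|^2/\alpha\}$ and absorb the first summand per epoch. I expect this step to be the main technical nuisance, since $Y_\tau$ can jump. I would try first to bound the dual increment: $|Z_{k+1}-Z_k|\le H+b+R_k=\bigO(H+\beta_R)$, and $\gamma\le 1/(6\beta_R),1/(36H\sqrt K)$, so $Y_{k+1}\le e^{\bigO(1)}Y_k$ (ratio bounded), giving $(1+Y_\tau)^2\le C(1+Y_{\tau-1})^2\le C\alpha_{\tau-1}^2\cdot 16/\beta_Q^2$. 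Clipping only decreases $Y$, which preserves this ratio bound since $Y_{k}$ itself is also at most $K^8$.

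Summing over the epoch, the per-epoch regret is $\bigO(H\log|\calA|\sqrt{\sum_{\tau\in m}(1+Y_\tau)^2})$. Because the bound holds for every $s$, I can take $s=s_h^\tau$ and $u=\pi_h^\star(\cdot|s)$. More precisely, the formal version takes expectation over the trajectory law of $\pi^\star$ and exchanges sums, which is legitimate because the inequality is pointwise in $s$ with a right-hand side independent of $s$. Summing over $h$ gives a factor $H$.

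Finally, I sum over epochs. The determinant-doubling rule gives at most $M=\bigO(dH\log K)$ epochs. By Cauchy--Schwarz, $\sum_m\sqrt{S_m}\le\sqrt{M\sum_m S_m}$ with $\sum_m S_m=\sum_k(1+Y_k)^2\le 2K+2\sum_kY_k^2$. This yields $\lesssim H^2\log|\calA|\sqrt{dH\log K}\,(\sqrt K+\sqrt{\sum_kY_k^2})$, which matches the constant $\beta_R$ in the algorithm and proves \Cref{lem:FTRL lemma policy}.
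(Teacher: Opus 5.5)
Your proposal is correct and follows essentially the same route as the paper. The paper runs adaptive FTRL per state and per epoch with the entropic regularizer, which gives an $\alpha\log|\calA|$ penalty plus stability terms. It then sums over $h$, takes the expectation under $(\bar\bbP^m,\pi^\star)$ with a nonnegative, state-independent right-hand side, and applies Cauchy--Schwarz over the $\bigO(dH\log K)$ epochs.

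For the index shift, the paper uses your second option. It bounds each stability term by $\min\{\|\nabla_k\|_\infty^2/(2\alpha_{k-1}),\,2\|\nabla_k\|_\infty\}\le 2\sqrt{2}L_k^2/\sqrt{\sum_{\tau\le k}L_\tau^2}$, which also covers the first iterate of each epoch. Your bounded-ratio argument for $Y_{k+1}/Y_k$ is also valid (since $\gamma(Z_{k+1}-Z_k)\le 2H\gamma\le 1/18$ and clipping only preserves the upper ratio), but it is not needed.
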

For comparison, a na\"ive extension of the previous work requires additional techniques to bound the KL divergence, e.g., policy mixing, which is not a promising method in the linear CMDP setting, due to the covering number issue. Roughly, combining this lemma with the standard value difference lemma (\Cref{lem:extended value diff}) then yields \eqref{eq:ingredient 1}.

To show \eqref{eq:ingredient 2}, we may start with the following drift bound, which follows from basic algebraic properties of the exponential function together with the choice of $R_k$.
\begin{lemma}
\label{lem:drift}
Assume that the good event $\calE_g$ holds. Then, 
\begin{align*}
    \Phi(Z_{k+1}) - \Phi(Z_k)
    \leq
    \Phi'(Z_k)\left(\hat V_{g,1}^k(s_1)-b-R_k/2\right).
\end{align*}
\end{lemma}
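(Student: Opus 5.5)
Write $\Delta_k := \hat V_{g,1}^k(s_1) - b$ and $u_k := \Delta_k - R_k$, so that Line~\ref{line:Zk} reads $Z_{k+1} = Z_k + u_k$. The plan is a direct second-order expansion of the exponential. Since $\Phi(x) = e^{\gamma x}-1$ and $\Phi'(x) = \gamma e^{\gamma x}$,
\begin{align*}
\Phi(Z_{k+1}) - \Phi(Z_k) = e^{\gamma Z_k}\bigl(e^{\gamma u_k} - 1\bigr).
\end{align*}
I will use the elementary inequality $e^x \le 1 + x + x^2$, valid for $x \le 1$. This gives $e^{\gamma u_k}-1 \le \gamma u_k + \gamma^2 u_k^2$, and hence
\begin{align*}
\Phi(Z_{k+1}) - \Phi(Z_k) \le \Phi'(Z_k)\bigl(u_k + \gamma u_k^2\bigr).
\end{align*}
It therefore suffices to check two things: that $\gamma u_k \le 1$, and that $\gamma u_k^2 \le R_k/2$. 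The second is what turns $-R_k$ into $-R_k/2$. Note that $\Phi'(Z_k) > 0$, so the sign of the bracket is the only thing that matters.

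\textbf{Step 1: a priori bounds.} On $\calE_g$ the estimates $\hat V_{g,1}^k(s_1)$ are bounded; the contracted features make them bounded without clipping. I will use this boundedness in the form $|\Delta_k| \le 3H$, adjusting constants if the good event gives a slightly different range.

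For $R_k$, the ratio satisfies $Y_k/\sqrt{\sum_{\tau\le k} Y_\tau^2} \in [0,1]$ because $Y_\tau \ge 0$. Hence
\begin{align*}
108\gamma H^2 \le R_k \le 108\gamma H^2 + \beta_R .
\end{align*}
Combining this with $\gamma \le 1/(6\beta_R)$ and $\gamma \le 1/(36H\sqrt K)$ gives
\begin{align*}
\gamma R_k \le \tfrac{108 H^2}{36^2 H^2 K} + \tfrac16 \le \tfrac14,
\qquad
\gamma|\Delta_k| \le \tfrac{3H}{36H\sqrt K} \le \tfrac1{12}.
\end{align*}
In particular $\gamma|u_k| \le 1$, so the exponential inequality applies.

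\textbf{Step 2: absorbing the quadratic term.} Bound $u_k^2 \le (|\Delta_k| + R_k)^2 = \Delta_k^2 + 2|\Delta_k| R_k + R_k^2$ and treat the three pieces separately:
\begin{itemize}
\item $\gamma\Delta_k^2 \le 9\gamma H^2 \le R_k/12$, using the lower bound $R_k \ge 108\gamma H^2$.
\item $2\gamma|\Delta_k| R_k \le R_k/6$, using $\gamma|\Delta_k| \le 1/12$.
\item $\gamma R_k^2 = (\gamma R_k)R_k \le R_k/4$, using $\gamma R_k \le 1/4$.
\end{itemize}
Summing gives $\gamma u_k^2 \le (1/12 + 1/6 + 1/4)R_k = R_k/2$. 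Therefore $u_k + \gamma u_k^2 \le \Delta_k - R_k/2$, which is the claim after multiplying by $\Phi'(Z_k)$.

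\textbf{Main obstacle.} The only delicate point is the constant bookkeeping. The term $108\gamma H^2$ in $R_k$ is designed to absorb $\gamma\Delta_k^2$, and the condition $\gamma \le 1/(6\beta_R)$ is designed to keep $\gamma R_k$ small enough to absorb $\gamma R_k^2$. The argument depends on the exact range that $\calE_g$ gives for $\hat V_{g,1}^k(s_1)$. If that range is wider than $3H$, I will redistribute the slack among the three pieces, for instance using Young's inequality on the cross term, rather than change the argument.
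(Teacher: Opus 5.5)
Your proposal is correct and follows essentially the same route as the paper: the second-order bound $e^x \le 1+x+x^2$ for $x \le 1$, followed by absorbing $\gamma u_k^2$ into $R_k/2$. The range $|\hat V_{g,1}^k(s_1)-b| \le 3H$ that you assumed is exactly what $\calE_g$ gives, since $\|\hat Q_{g,h}^k\|_\infty \le 2H$ and $b \in [0,H]$.

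The only difference is bookkeeping. The paper splits $u_k^2$ with $(x+y+z)^2 \le 3(x^2+y^2+z^2)$ over $\Delta_k$, $108\gamma H^2$, and the $\beta_R$ term. You instead expand $(|\Delta_k|+R_k)^2$ and use $R_k \ge 108\gamma H^2$ together with $\gamma R_k \le 1/4$. Your constants check out, since $1/12+1/6+1/4=1/2$.
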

Although this lemma shares a similar formulation with \eqref{eq:ingredient 2}, there is an important distinction due to the discrepancy between $Y_k$ and $\Phi'(Z_k)$. In particular, since $Y_k$ is obtained by clipping $\Phi'(Z_k)$ at $K^8$, \Cref{lem:drift} does not directly imply \eqref{eq:ingredient 2}. Recall that clipping is introduced to bound the covering number in the linear CMDP setting, whereas it does not appear in other constrained online learning settings. To address this issue, we introduce the following lemma, which shows that such clipping does not occur. Consequently, by combining Lemmas~\ref{lem:drift} and~\ref{lem:no clip}, we deduce \eqref{eq:ingredient 2}.
\begin{lemma}\label{lem:no clip}
Assume that the good event $\calE_g$ holds. Let $H\geq 2$, and let $K \geq c dH(\log|\calA|+20)\log(2H/\delta)$ for some absolute constant $c$. Then, for all $k\in [K+1]$, we have
\begin{equation*}
    \Phi'(Z_k) = Y_k.
\end{equation*}
\end{lemma}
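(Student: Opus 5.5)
Since $Y_k=\min\{\Phi'(Z_k),K^8\}$ and $\Phi'(Z_k)=\gamma e^{\gamma Z_k}$, the claim is equivalent to $\Phi'(Z_k)\le K^8$ for every $k\in[K+1]$, i.e. $\gamma Z_k \le \log(K^8/\gamma)$. I will prove this by strong induction on $k$. The base case is immediate, since $Z_1=0$ gives $\Phi'(Z_1)=\gamma\le K^8$.

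For the inductive step, suppose $\Phi'(Z_j)=Y_j$ for all $j\le k$; we must show $\Phi'(Z_{k+1})\le K^8$. Under this hypothesis, \Cref{lem:drift} holds with $\Phi'(Z_j)=Y_j$ for every $j\le k$, so \eqref{eq:ingredient 2} is valid up to episode $k$. Summing it gives
\[
\Phi(Z_{k+1})\le \sum_{j=1}^k Y_j(\hat V_{g,1}^j(s_1)-b)-\tfrac12\sum_{j=1}^k Y_jR_j .
\]
The first sum is controlled by \eqref{eq:ingredient 1} truncated at $k$. The truncation is legitimate because the adaptive FTRL bound (\Cref{lem:FTRL lemma policy}) and the value difference lemma hold for any prefix of episodes. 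This yields
\[
\sum_{j\le k}Y_j(\hat V_{g,1}^j-b)\lesssim \sqrt{k}+\sqrt{\textstyle\sum_{j\le k}Y_j^2}-\sum_{j\le k}\bigl(V_{f^j,1}^{\pi^j}-V_{f^j,1}^{\pi^\star}\bigr).
\]
The negative regret term is at least $-kH$. For the regularizer, $\sum_{j\le k}Y_jR_j\ge \beta_R\sum_{j\le k}Y_j^2/\sqrt{\sum_{\tau\le j}Y_\tau^2}\ge \beta_R\sqrt{\sum_{j\le k}Y_j^2}$. Since $\beta_R$ is chosen larger than the constant multiplying $\sqrt{\sum Y_j^2}$ in the FTRL bound, this term absorbs it. We therefore get the crude bound $\Phi(Z_{k+1})\le \mathrm{poly}(d,H,\log|\calA|,\log(1/\delta))\cdot K$.

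Converting back, $\Phi'(Z_{k+1})=\gamma(\Phi(Z_{k+1})+1)\le\gamma\,\mathrm{poly}\cdot K$. Because $\gamma\le 1/(36H\sqrt K)$, this is at most $\mathrm{poly}(d,\log|\calA|,\log(1/\delta))\sqrt K$. The assumption $K\ge c\,dH(\log|\calA|+20)\log(2H/\delta)$ lets the polynomial prefactors be dominated by powers of $K$, so the right-hand side is at most $K^8$. This closes the induction.

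The main obstacle is the circularity. \Cref{lem:drift}, the FTRL bound and the good event must all be applied only on the prefix where no clipping has yet occurred, and the good event itself was established with $Y_j\le K^8$ enforced by clipping. The latter causes no problem, since $\calE_g$ holds for the clipped algorithm regardless. Care is also needed so that the coefficient matching between $\beta_R$ and the FTRL constant actually holds, using $Y_j\le K^8$ for the FTRL regularizer $\alpha_k$. If exact absorption fails, I would instead bound $\sqrt{\sum Y_j^2}\le \sqrt{k}\max_j Y_j$. Combined with $\Phi'(Z_{k+1})\le \Phi'(Z_k)e^{\gamma(H+\dots)}$ from a single-step increment, this still keeps the growth polynomial in $K$.
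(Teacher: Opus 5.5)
Your proposal is correct and follows essentially the same route as the paper. The paper argues by contradiction at the first index $\khit$ with $\Phi'(Z_{\khit})\neq Y_{\khit}$, which is equivalent to your strong induction. It sums \Cref{lem:drift} and the prefix surrogate-regret bound up to $\khit-1$, and absorbs $\sqrt{\sum_\tau Y_\tau^2}$ via $\frac12\sum_\tau \Phi'(Z_\tau)R_\tau\ge \frac{\beta_R}{2}\sqrt{\sum_\tau Y_\tau^2}$. It then lower-bounds the regret by $-H(\khit-1)$ and concludes $\Phi'(Z_{\khit})\le\gamma(\Phi(Z_{\khit})+1)\le \frac19\cbias d^3H^3K^{7/2}(\log|\calA|+20)\log^4(2H/\delta)<K^8$.

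The absorption is exact: $\beta_R/2=H^2(\log|\calA|+20)\sqrt{dH\log(2K)}$ is precisely the coefficient of $\sqrt{\sum_\tau Y_\tau^2}$ in the FTRL bound. Your fallback is therefore never needed, which is fortunate, since on its own it would not give a polynomial bound.
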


{
\color{alert}
\section{Extension to Hard Constraint Violation}
\label{sec:extension}
In this section, we extend \Cref{alg:main} to obtain a hard constraint violation guarantee. Recall that \Cref{alg:main}, together with the analysis in \Cref{sec:analysis}, provides guarantees for $\Regret(K)$ and $\Violation(K)$, where $\Violation(K)$ allows constraint violations to cancel across episodes. Here, we introduce a few modifications to \Cref{alg:main} and establish a guarantee for hard constraint violation, which does not permit such cancellation. The full algorithm, \Cref{alg:hard}, and its analysis are provided in Appendix~\ref{appendix:extension}. The following theorem states the resulting guarantees.
\begin{theorem}\label{thm:hard}
    Suppose that we run \Cref{alg:hard}. Let $H\geq 2$, and let $\delta\in (0,1)$. With probability at least $1-\delta$, we have
    \begin{align*}
        \Regret(K),\ \VioHard(K) = \tbigO\left(\sqrt{d^3H^4K}+ \sqrt{dH^5K} +d^3H^4\right).
    \end{align*}
    where $\tbigO(\cdot)$ hides polylogarithmic factors in $d,H,|\calA|,K,1/\delta$.
\end{theorem}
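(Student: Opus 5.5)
The plan is to modify \Cref{alg:main} so that the dual process responds to the \emph{clipped} per-episode violation rather than the signed one. Concretely, in Line~\ref{line:Zk} we would replace $\hat V_{g,1}^k(s_1)-b$ by $[\hat V_{g,1}^k(s_1)-b]_+$. The primal FTRL step, the feature contraction, and the epoch schedule stay unchanged. The regularizer $R_k$, the Lyapunov function $\Phi$, and the clipping threshold $K^8$ would be retained, with constants possibly rescaled. With this change the proxy $Z_k$ can only increase through positive violations.

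The accounting mirrors \Cref{sec:analysis}. We would write
\[
\VioHard(K)\le \sum_{k=1}^K [V_{g^k,1}^{\pi^k}(s_1)-\hat V_{g,1}^k(s_1)]_+ + \sum_{k=1}^K[\hat V_{g,1}^k(s_1)-b]_+ .
\]
The second sum then equals $Z_{K+1}+\sum_k R_k$. The first sum is a bias term. Optimism on $\calE_g$ makes $\hat V_{g,1}^k$ a lower estimate up to a martingale term, so the positive part is controlled by the same elliptical-potential and Azuma argument used in \Cref{thm:main}, giving $\tbigO(\sqrt{d^3H^4K}+\sqrt{dH^5K})$. The bound $\sum_k R_k=\tbigO(\sqrt K)$ is exactly \Cref{lem:R sum}.

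It therefore remains to establish the analogue of \eqref{eq:Regret + Phi} for the modified dynamics, and this is the key step.

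\textbf{Drift and no-clipping.} The drift bound of \Cref{lem:drift} is re-derived with $[\cdot]_+$ in place of the signed term. It only uses $|\hat V_{g,1}^k-b|\lesssim H$ and the exponential algebra, so it is unchanged. The no-clipping statement of \Cref{lem:no clip} is proved by the same argument, since increments are still bounded by $O(H)$ and the argument only needed an upper bound on $\Phi(Z_{K+1})$.

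\textbf{Ingredient \eqref{eq:ingredient 1}.} We need
\[
\sum_k\big(V^{\pi^k}_{f^k,1}-V^{\pi^\star}_{f^k,1}\big)+\sum_k Y_k[\hat V_{g,1}^k-b]_+ \lesssim \sqrt K+\sqrt{\textstyle\sum_k Y_k^2}.
\]
\Cref{lem:FTRL lemma policy} together with the value difference lemma controls the left side with $\hat V_{g,1}^k-b$ replaced by $\hat V_{g,1}^k - \hat V^{\pi^\star}_{g,1}$-type terms. Using $V^{\pi^\star}_{g^k,1}\le b$ for every $k$ (the per-episode feasibility in the definition of $\pi^\star$) and $Y_k\ge0$, we get the signed version.

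\textbf{Main obstacle.} The signed version is not enough, because $[x]_+\ge x$ goes the wrong way. I would first try to fix this by changing the primal loss to $\hat Q_f+Y_k\hat Q_g\Ind\{\hat V_{g,1}^k>b\}$, i.e.\ a hinge surrogate $Y_k[\hat V_{g,1}^k(\pi)-b]_+$. This is convex in the occupancy measure, and its linearization at $\pi^k$ is $Y_k\Ind\{\cdot\}\hat Q_g$. Because $\pi^\star$ makes the hinge zero, the comparator term vanishes. The FTRL analysis of \Cref{lem:FTRL lemma policy} then goes through with weights $Y_k\Ind\{\cdot\}\le Y_k$, so $\alpha_k$ and the covering number are unaffected, since the indicator only rescales the dual weight. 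The delicate point is that the indicator depends on $\hat V^k_{g,1}$ of the current policy, which must be computed before the update. I would handle this by using the estimate at $\pi^k$ as the subgradient point. Given this, summing the drift bound and cancelling $\sum_k Y_kR_k\approx\sqrt{\sum_kY_k^2}$ against the FTRL term gives $\Regret(K)+\Phi(Z_{K+1})\lesssim\sqrt K$. Combined with the decomposition above, this yields \Cref{thm:hard}.
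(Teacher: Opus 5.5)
Your proposal is correct and follows the paper's proof. \Cref{alg:hard} makes exactly your two modifications: the hinge $[\hat V_{g,1}^k(s_1)-b]_+$ in the $Z$-update, and the gated weight $\bar Y_k=Y_k\Ind\{\hat V_{g,1}^k(s_1)\ge b\}$ in the FTRL losses (also used inside $\alpha_k$). It derives \eqref{eq:ingredient 1 hard} from the identity $\bar Y_k(\hat V_{g,1}^k(s_1)-b)=Y_k[\hat V_{g,1}^k(s_1)-b]_+$ together with $\bar Y_k\le Y_k$, and then reuses the drift, no-clipping, and $\sum_k R_k$ steps verbatim.

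One small correction: the optimism step for the bias sum needs no martingale slack. The paper proves the exact bound $\hat V_{g,1}^k(s_1)\le V_{g^k,1}^{\pi^k}(s_1)$ by backward induction (\Cref{lem:hat V < V}), so the positive part equals the signed difference and \Cref{lem:bias} applies directly.
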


To describe the algorithmic modifications, we first identify the key analytical ingredients required to establish a hard constraint violation guarantee. In particular, we observe that $\VioHard(K) \leq \sum_{k=1}^K [V_{g^k,1}^{\pi^k}(s_1) - \hat V_{g,1}^{k}(s_1)]_+ + \sum_{k=1}^K [\hat V_{g,1}^{k}(s_1) -b]_+$, where the first summation on the right-hand side can be bounded using standard arguments. Thus, it remains to control the second summation. To this end, we establish the following two inequalities, which are analogues of \eqref{eq:ingredient 1} and \eqref{eq:ingredient 2}:
\begin{align}
    &\sum_{k=1}^K \left(V_{f^k,1}^{\pi^k}(s_1) - V_{f^k,1}^{\pi^\star}(s_1)\right) + \sum_{k=1}^K Y_k {\color{red}\left[\hat V_{g,1}^{k}(s_1) - b\right]_+} \lesssim \sqrt{K} + \sqrt{\sum_{k=1}^K Y_k^2}, \label{eq:ingredient 1 hard}\\
    &\Phi(Z_{k+1}) - \Phi(Z_k) \lesssim Y_k({\color{red}[\hat V_{g,1}^k(s_1)-b]_+}-R_k). \label{eq:ingredient 2 hard}
\end{align}

To establish \eqref{eq:ingredient 1 hard} and \eqref{eq:ingredient 2 hard}, we next explain two algorithmic modifications---(i) the policy update and (ii) the proxy dual variable update. The main obstacle in deriving \eqref{eq:ingredient 1 hard} is $[\cdot]_+$ on the left-hand side, which prevents a direct application of the value difference lemma. To address this issue, we define $\bar Y_k = Y_k\cdot \Ind\{\hat V_{g,1}^k(s_1) - b \geq 0\}$. Then,
\begin{align}\label{eq:hard observe}
\begin{aligned}
    Y_k [\hat V_{g,1}^{k}(s_1) - b]_+ 
    &= \bar Y_k (\hat V_{g,1}^{k}(s_1) - b).
\end{aligned}
\end{align}
This identity allows us to replace the positive-part term $Y_k [\hat V_{g,1}^{k}(s_1) - b]_+$ in \eqref{eq:ingredient 1 hard} by the linear term $\bar Y_k (\hat V_{g,1}^{k}(s_1) - b)$, thereby recovering a form compatible with the value difference lemma. Accordingly, we use $\bar Y_k$ instead of $Y_k$ in the policy update. Specifically, letting $\alpha_k = (\beta_Q/4)\sqrt{\sum_{\tau=k(m)}^{k}(1+{\color{red}\bar Y_\tau})^2}$, the policy optimization then becomes
\[
    \pi_h^{k+1}(\cdot|s) \in \argmin_{\pi(\cdot|s)\in \Delta(\calA)}\ \psi_k(\pi(\cdot|s)) + \sum_{\tau=k(m)}^k (\hat Q_{f,h}^\tau(s,\cdot) + {\color{red}\bar Y_\tau}\hat Q_{g,h}^\tau(s,\cdot))^\top \pi(\cdot|s).
\]

Next, to obtain \eqref{eq:ingredient 2 hard}, we modify the proxy dual update so that it accumulates only positive estimated constraint violations:
\begin{align*}
    Z_{k+1} = Z_k + {\color{red}[\hat V_{g,1}^k(s_1)-b]_+} -R_k.
\end{align*}
}

\section{Conclusion}

This paper studies adversarial linear CMDPs, where both loss and constraint functions are adversarially chosen under full-information feedback. We propose a computationally efficient primal-dual algorithm that achieve rate-optimal $\tbigO(\sqrt{K})$ regret and constraint violation guarantees without requiring Slater's condition. The key idea is to develop a new drift analysis that avoids policy mixing, thereby resolving the conflict between stabilizing the dual process and controlling the covering number in linear CMDPs. To make this approach applicable to the linear CMDP setting, we further employ adaptive FTRL for the policy update and a clipping mechanism for the dual variable.
{\color{alert}
We further extend the algorithm to achieve the same $\tbigO(\sqrt{K})$ guarantees for regret and hard constraint violation, which does not allow cancellation across episodes.
One limitation of this work is that we consider full-information feedback rather than bandit feedback. Developing computationally efficient algorithms for adversarial linear CMDPs under bandit feedback is an important direction for future work.
}

\bibliographystyle{plainnat}
\bibliography{ref}

\newpage
\appendix
\newpage
\startcontents[appendix]
\section*{Appendix Contents}
\printcontents[appendix]{}{1}{}

\newpage
\section{Notation}\label{appendix:notation}
We summarize the notation used throughout the paper.
\begin{table}[H]
    \centering
    \caption{Summary of Notation}
    \label{tab:notation}
    \begin{tabular}{c|l}
        $\bmu_h$ & the parameters for transition kernel \\
        $\btheta_{f,h}^k,\btheta_{g,h}^k$ & the parameters for loss and constraint functions \\
        $\bphi$ & the known feature map \\
        $\bar\bphi_h^m$ & the contracted feature \\
        $V_{f^k,h}^{\pi}, \hat V_{f,h}^k$ & the value function and its estimate \\
        $Q_{f^k,h}^{\pi}, \hat Q_{f,h}^k$ & the $Q$-function and its estimate \\
        $\bar\bbP_h^m$ & the transition kernel of a contracted CMDP in epoch $m \in E$\\
        $\bar f_h^k, \bar g_h^k$ & the loss and constraint functions of a contracted CMDP \\ 
        $E$ & the set of all epochs \\
        $k(m)$ & the first episode of epoch $m\in E$ \\
        $K(m)$ & the set of all episodes of epoch $m \in E$, i.e., $K(m)=\{k(m), \ldots, k(m+1)-1\}$ \\
        $E(k)$ & the set of epochs up to and including episode $k\in [K]$, \\
                & i.e., $E(k) =\{m \in E: k(m) \leq k\}$ \\
        $\gamma$ & the parameter for $\Phi$ \\
        $\Phi$ & the Lyapunov function $\Phi(x) = e^{\gamma x}-1$ \\
        $\alpha_k$ & the adaptive coefficient for $\psi_k$ \\
        $\psi_k$ & the negative entropy regularizer for FTRL with the adaptive coefficient $\alpha_k$ \\
        $R_k$ & the adaptive regularizer for the dual update \\
        $Z_k$ & the proxy dual variable\\
        $Y_k$ & the dual variable\\
        $\cbias$ & the absolute constant given by \Cref{lem:bias} \\
        $\calF_{k,h}$ & the $\sigma$-algebra generated by information up to $k,h$, \\
        & i.e., $\calF_{k,h} = \begin{cases}
            \sigma(\{s_1^1,a_1^1, \ldots,s_h^k,a_h^k\} \cup \{\btheta_{f,1}^1, \btheta_{g,1}^1, \ldots, \btheta_{f,H}^{k-1},\btheta_{g,H}^{k-1}\}), & h < H, \\
            \sigma(\{s_1^1,a_1^1, \ldots,s_H^k,a_H^k\} \cup \{\btheta_{f,1}^1, \btheta_{g,1}^1, \ldots, \btheta_{f,H}^{k},\btheta_{g,H}^{k}\}), & h=H.
        \end{cases}$ \\
    \end{tabular}
\end{table}

\section{Related Work}\label{appendix:related work}

\paragraph{Tabular CMDPs}
Extensive work has studied learning in tabular CMDPs, in which the state and action spaces are finite. \citet{efroni2020exploration} study exploration--exploitation in unknown tabular CMDPs using optimistic planning and primal--dual methods. With a known strictly safe baseline policy, \citet{liu2021learning, bura2022dope, yu2025improved} obtain sublinear regret while ensuring that every deployed policy satisfies its expected cumulative-cost constraint with high probability. \citet{muller2024truly, stradi2025optimal} study strong regret and strong constraint violation, which disallow cancellation across episodes. Model-free learning is studied by \citet{wei2022triple} in finite-horizon CMDPs, while \citet{wei2022provably, chen2022learning} consider the infinite-horizon average-reward setting. For adversarial objectives, \citet{qiu2020upper} study adversarial losses and stochastic constraints. \citet{stradi2024online, stradi2025policy} allow both objectives and constraints to be adversarial under full-information and bandit feedback, respectively, with adversarial-constraint guarantees relative to a fraction of the optimal reward. \citet{stradi2025learning} study adversarial losses and stochastic constraints under bandit feedback, obtaining cancellation-free violation guarantees. More recently, \citet{zhu2025optimistic, roy2026optimal} establish sublinear regret and strong constraint violation for adversarial constraints without Slater's condition. However, these works are limited to the tabular setting, and their statistical guarantees typically depend on $|\calS|$, making them inappropriate for problems with large state spaces.

\paragraph{Stochastic CMDPs with Linear Function Approximation}
Several works have studied CMDPs with linear function approximation under stochastic settings. \citet{ding2021provably} develop primal--dual policy optimization for linear mixture CMDPs, while \citet{ghosh2022provably} propose a model-free primal--dual algorithm for linear CMDPs. For the latter model, \citet{ghosh2024towards} achieve $\widetilde{O}(\sqrt{K})$ regret and hard constraint violation without Slater's condition. \citet{peng2026towards} provide a computationally efficient algorithm achieving $\widetilde{O}(\sqrt{K})$ strong regret and strong constraint violation under Slater's condition, and $\widetilde{O}(K^{3/4})$ bounds without it. Under a known strictly safe baseline policy, \citet{kitamura2025provably} achieve sublinear regret and zero episode-wise constraint violation. Instantaneous safety, which constrains individual decisions rather than the expected total cost of a policy, is investigated for linear MDPs by \citet{amani2021safe, roknilamouki2025provably} and for linear mixture MDPs by \citet{shi2023near}. These works pursue zero instantaneous violation under suitable safety and structural assumptions. In contrast, \citet{wei2024safe} allow sublinear cumulative hard violation without requiring a known safe action at every state. Beyond episodic problems, \citet{ghosh2023achieving, yu2026learning} develop algorithms for average-reward linear CMDPs. Under a different interaction model, \citet{tian2024confident} study local planning in $q_\pi$-realizable CMDPs with local simulator access, providing sample-complexity rather than online-regret guarantees.

\paragraph{Adversarial and Non-Stationary CMDPs with Linear Function Approximation}
In contrast to stochastic settings, several works consider CMDPs whose components can vary over time. In particular, a related line of work studies non-stationary CMDPs under linear mixture and linear MDP assumptions~\citep{ding2023provably, wei2023provably}. These works allow changes in transitions, objectives, and constraints, and provide dynamic-regret and constraint-violation guarantees that depend on variation budgets. This differs from unrestricted adversarial objectives under fixed dynamics, where performance is compared against a fixed feasible policy. More closely related to our setting, \citet{yu2026near} study linear mixture CMDPs with adversarial rewards under full-information feedback and a fixed, known constraint function, achieving $\widetilde{O}(\sqrt{K})$ regret and constraint violation through regularized dual updates and variance-aware estimation. \citet{yu2026primaldual} study linear CMDPs with unknown transitions, adversarial losses under full-information feedback, and stochastic constraints under bandit feedback. Their primal--dual policy-optimization algorithm combines periodic policy mixing with a regularized dual update, achieving $\widetilde{O}(K^{3/4})$ regret and constraint violation. We emphasize that the algorithm of \citet{yu2026near} can become computationally intractable when the state space is large, as its computational complexity depends on $|\calS|$, while \citet{yu2026primaldual} is statistically suboptimal in terms of $K$. In contrast, our algorithm is model-free, its computational complexity is independent of $|\calS|$, and it achieves optimal guarantees in terms of $K$, i.e.,  $\tbigO(\sqrt{K})$.

\section{Additional Discussions}\label{appendix:discussion}

\paragraph{Why is policy mixing inevitable in \citet{yu2026primaldual}?}
Previously, we described policy mixing as a bottleneck in \citet{yu2026primaldual} that leads to suboptimal guarantees. Here, we investigate why policy mixing is inevitable under their analysis. The main reason is that the uniform upper bound on $Y_k$ across $k$ relies on bounding the KL divergence between policies, which can only be guaranteed when mixing is applied.

To elaborate, we first see how policy mixing works. Specifically, the policy mixing used in their work can be expressed as follows: for some small constant $\theta\in (0,1)$,
\begin{align*}
\pi_h^k(\cdot|s) \leftarrow (1-\theta)\pi_h^k(\cdot|s) + \theta\piunif
\quad\text{and then}\quad
\pi_h^{k+1}(\cdot|s) \propto \pi_h^k (\cdot|s)\exp(-\alpha Q_{h}^k(s,\cdot)).
\end{align*}
This step is useful for ensuring that the KL divergence with respect to $\pi_h^k$ is small, which is essential for stabilizing the dual process. Indeed, the following simplified drift analysis supports this:
\begin{align}\label{eq:drift kl}
Y_{k+1}^2 - Y_k^2 \lesssim -\eta\gamma Y_k + \frac{\eta}{\alpha}\bbE\left[\sum_{h=1}^H\left(\KL(\bar\pi || \pi_h^k) - \KL(\bar\pi||\pi_h^{k+1}) \right)\right] + C,
\end{align}
where $\eta$ denotes the step size for the dual update, $\gamma$ and $\bar\pi$ denote the Slater constant and Slater policy, respectively, $\KL(\cdot||\cdot)$ denotes the KL divergence, and $C$ absorbs additional positive terms.

We emphasize that the mixing step provides a uniform bound on the KL divergence terms in \eqref{eq:drift kl}, as $\KL(\bar\pi || \pi_h^k) \leq \log(|\calA|/\theta)$, leading to a uniform upper bound on $Y_{k+1}^2 - Y_k^2$. This is a key step in their analysis for obtaining sublinear $\Violation(K)$. On the other hand, without the policy mixing step, the KL divergence term in \eqref{eq:drift kl} can become arbitrarily large, making the dual process unstable and leading to a linear $\Violation(K)$ bound.

Some readers may wonder whether we can utilize the telescoping structure between the two KL divergence terms in \eqref{eq:drift kl}. In particular, if we sum \eqref{eq:drift kl} over $k$, then the KL divergence terms take the form $\KL(\bar\pi||\pi_h^1) - \KL(\bar\pi||\pi_h^{K+1})$, which is small since we set $\pi_h^1$ to the uniform policy and KL divergence is nonnegative. Then, we may obtain a compact bound on $Y_{K+1}$. While this approach sounds reasonable, as $\Violation(K)$ highly depends on $Y_{K+1}$, a uniform bound on $Y_k$ over $k$ is still required, e.g., in Lemma 5 of \citet{yu2026primaldual}, which bounds an online mirror descent (OMD) regret term. Specifically, if we use the na\"ive upper bound $Y_k=\tbigO(\sqrt{K})$, then the OMD regret term becomes $\tbigO(K)$, implying that such a modified approach still yields suboptimal guarantees. For these reasons, the policy mixing step is unavoidable within their analysis.

\section{Additional Preliminaries}
\subsection{Contracted MDP}\label{appendix:contracted mdp}

Let $\bar\calM = (\calS, \calA, H, \{\bar \ell_h\}_{h=1}^H,s_1,\{\bar\bbP_h\}_{h=1}^H, \rho)$ denote a contracted MDP. Basically, while it shares a similar definition with an MDP introduced in \Cref{sec:problem setting}, the key difference is $\rho$. Specifically, $\rho: \calS \times\calA\times[H]\to [0,1]$ controls how much an MDP is contracted due to the following definition:
\begin{align*}
    \bar \ell_h(s,a) = \rho(s,a,h) \ell_h(s,a),\quad \bar \bbP_h(s'\mid s,a) = \rho(s,a,h) \bbP_h(s'\mid s,a)
\end{align*}
where $\bar\ell_h(s,a)$ and $\bar\bbP_h$ denote the contracted loss and transition kernel. Note that since we may have $\sum_{s'}\bar\bbP_h(s'|s,a) \leq 1$, we call it a sub-probability measure. Nevertheless, under these definitions, we have the following useful lemmas.
\begin{lemma} [Lemma 2 of \citet{cassel2024warm}] \label{lem:cassel 2}
    For any $\rho:\calS \times \calA\times[H] \to [0,1]$, $\pi\in\Pi$, $h\in[H]$, $s\in\calS$, and $\ell:\calS\times\calA \times[H] \to[0,1]$, we have $\bar V_{\ell,h}^\pi(s;\rho) \leq V_{\ell,h}^\pi(s)$.
\end{lemma}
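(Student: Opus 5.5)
We prove the claim by backward induction on $h$, from $h=H+1$ down to $h=1$, for a fixed $\rho$, policy $\pi$, and loss $\ell$ with values in $[0,1]$. First we write down the Bellman recursions of the contracted MDP. For each $h$,
\begin{align*}
\bar Q_{\ell,h}^\pi(s,a;\rho) &= \bar\ell_h(s,a) + \sum_{s'}\bar\bbP_h(s'\mid s,a)\,\bar V_{\ell,h+1}^\pi(s';\rho),\\
\bar V_{\ell,h}^\pi(s;\rho) &= \sum_a \pi_h(a\mid s)\,\bar Q_{\ell,h}^\pi(s,a;\rho),
\end{align*}
with terminal condition $\bar V_{\ell,H+1}^\pi \equiv 0$. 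The true MDP satisfies the same recursions with $\ell_h$ and $\bbP_h$ in place of $\bar\ell_h$ and $\bar\bbP_h$. Substituting the definitions of $\bar\ell_h$ and $\bar\bbP_h$ gives
\[
\bar Q_{\ell,h}^\pi(s,a;\rho) = \rho(s,a,h)\Big(\ell_h(s,a) + \sum_{s'}\bbP_h(s'\mid s,a)\,\bar V_{\ell,h+1}^\pi(s';\rho)\Big).
\]

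The induction hypothesis at level $h+1$ is the two-sided bound $0 \le \bar V_{\ell,h+1}^\pi(\cdot;\rho) \le V_{\ell,h+1}^\pi(\cdot)$. It holds trivially at $h=H+1$.

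For the inductive step, fix $(s,a)$. Nonnegativity follows because $\ell_h\ge 0$, $\bbP_h\ge 0$, $\rho\ge 0$, and $\bar V_{\ell,h+1}^\pi\ge 0$ by hypothesis, so $\bar Q_{\ell,h}^\pi(s,a;\rho)\ge 0$. For the upper bound, the term in parentheses is nonnegative and $\rho(s,a,h)\le 1$, so multiplying by $\rho$ can only decrease it. Then monotonicity of expectation under $\bbP_h(\cdot\mid s,a)$, together with the hypothesis $\bar V_{\ell,h+1}^\pi \le V_{\ell,h+1}^\pi$, gives
\[
\bar Q_{\ell,h}^\pi(s,a;\rho) \le \ell_h(s,a) + \sum_{s'}\bbP_h(s'\mid s,a)\,V_{\ell,h+1}^\pi(s') = Q_{\ell,h}^\pi(s,a).
\]
Averaging over $a\sim\pi_h(\cdot\mid s)$ preserves both inequalities, so $0 \le \bar V_{\ell,h}^\pi(s;\rho) \le V_{\ell,h}^\pi(s)$, which closes the induction.

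The only delicate point is the step where the factor $\rho\le1$ is dropped. This is valid only when the quantity it multiplies is nonnegative. That is why nonnegativity of $\bar V_{\ell,h+1}^\pi$ has to be carried through the induction together with the upper bound, and why the assumption $\ell\ge 0$ is essential.
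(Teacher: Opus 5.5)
Your backward induction over the Bellman recursion is correct. The paper does not reprove this lemma; it imports it from \citet{cassel2024warm}, and your argument is the standard one for it.

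One small simplification: you do not need to carry $\bar V_{\ell,h+1}^\pi \ge 0$ through the induction. First apply the hypothesis, using $\rho \ge 0$, to get $\bar Q_{\ell,h}^\pi(s,a;\rho) \le \rho(s,a,h)\, Q_{\ell,h}^\pi(s,a)$. Then drop the factor $\rho \le 1$ against the true $Q_{\ell,h}^\pi \ge 0$, which is nonnegative because $\ell \ge 0$.
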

\begin{lemma}[Lemma 1 of \citet{shani2020optimistic} and Lemma 14 of \citet{cassel2024warm}]\label{lem:extended value diff}
Let $\pi, \hat\pi$ be two policies, and let $\mathcal{M} = (\mathcal{S}, \mathcal{A}, H, s_1, \{\ell_h\}_{h=1}^H, \{\bbP_h\}_{h=1}^H)$ be a (possibly sub) MDP. 
For all $h\in [H]$, let $\hat{Q}_{\ell, h}:\calS\times\calA \rightarrow \bbR$ be an arbitrary function, and let
$\hat{V}_{\ell,h}(s) = \hat{Q}_{\ell,h}(s, \cdot)^\top \hat\pi_h(\cdot \mid s)$ for all $s\in\calS$.
Then,
\begin{align*}
V_{\ell,1}^{\pi}(s_1) - \hat{V}_{\ell,1}(s_1)
&=  \bbE_{\bbP,\pi} \left[ \sum_{h=1}^H
  \hat{Q}_{\ell,h}(s_h, \cdot)^\top(\pi_h(\cdot \mid s_h) - \hat\pi_h(\cdot \mid s_h)) \,\middle|\, s_1\right] \\
&\quad+ \bbE_{\bbP,\pi} \left[ \sum_{h=1}^H
\ell_h(s_h, a_h) + \bbP_h\hat{V}_{\ell,h+1}(s_h, a_h) - \hat{Q}_{\ell,h}(s_h, a_h) 
\,\middle|\, s_1\right],
\end{align*}
where $V_{\ell,1}^\pi$ is the value function of $\pi$, and $\bbP_h\hat{V}_{\ell,h+1}(s, a) = \sum_{s'\in\calS} \bbP_h(s'\mid s,a)\hat V_{\ell,h+1}(s')$.
\end{lemma}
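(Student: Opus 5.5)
The plan is a backward induction on $h$ (equivalently, a telescoping argument) that uses only the Bellman recursion of $V^\pi_{\ell,h}$ and the definition $\hat V_{\ell,h}(s)=\hat Q_{\ell,h}(s,\cdot)^\top\hat\pi_h(\cdot\mid s)$. Throughout, I adopt the convention $V^\pi_{\ell,H+1}\equiv\hat V_{\ell,H+1}\equiv 0$, which matches the initialization in \Cref{alg:main}.

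For a possibly sub-probability kernel, I interpret $\bbE_{\bbP,\pi}[\cdot\mid s_1]$ as integration against the (sub-)measure on trajectory prefixes. This measure is generated by $\pi_h(a\mid s)$ and $\bbP_h(s'\mid s,a)$, and mass that leaves the kernel simply contributes nothing. Under this reading, the value function still satisfies the linear recursion
\[
Q^\pi_{\ell,h}(s,a)=\ell_h(s,a)+\bbP_h V^\pi_{\ell,h+1}(s,a), \qquad V^\pi_{\ell,h}(s)=Q^\pi_{\ell,h}(s,\cdot)^\top\pi_h(\cdot\mid s).
\]
Every step below uses only this recursion and linearity of the expectation operator, so positivity and normalization of $\bbP_h$ are never needed. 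That is why the lemma holds for contracted MDPs as well.

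\textbf{Step 1: one-step decomposition.} Fix $h$ and $s$. Adding and subtracting $\hat Q_{\ell,h}(s,\cdot)^\top\pi_h(\cdot\mid s)$ gives
\[
V^\pi_{\ell,h}(s)-\hat V_{\ell,h}(s)=\hat Q_{\ell,h}(s,\cdot)^\top\bigl(\pi_h(\cdot\mid s)-\hat\pi_h(\cdot\mid s)\bigr)+\bigl(Q^\pi_{\ell,h}(s,\cdot)-\hat Q_{\ell,h}(s,\cdot)\bigr)^\top\pi_h(\cdot\mid s).
\]
Substituting the Bellman recursion into the second term and adding and subtracting $\bbP_h\hat V_{\ell,h+1}$ yields
\[
Q^\pi_{\ell,h}-\hat Q_{\ell,h}=\bigl(\ell_h+\bbP_h\hat V_{\ell,h+1}-\hat Q_{\ell,h}\bigr)+\bbP_h\bigl(V^\pi_{\ell,h+1}-\hat V_{\ell,h+1}\bigr).
\]

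\textbf{Step 2: unroll.} Taking the expectation over $a\sim\pi_h(\cdot\mid s)$ and $s'\sim\bbP_h(\cdot\mid s,a)$, the difference $V^\pi_{\ell,h}-\hat V_{\ell,h}$ at step $h$ equals two step-$h$ terms plus the expected difference $V^\pi_{\ell,h+1}-\hat V_{\ell,h+1}$ at step $h+1$. The first step-$h$ term is the policy-difference term; the second is the Bellman-residual term. I then apply Step 1 inductively from $h=1$ up to $H$, using the tower property of the trajectory measure under $(\bbP,\pi)$. The recursion terminates at $h=H+1$, where the remaining difference vanishes by the convention above. Summing the collected terms gives exactly the claimed identity.

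\textbf{Main obstacle.} The only delicate point is making the sub-MDP case rigorous: the expectations must be defined consistently when $\sum_{s'}\bbP_h(s'\mid s,a)<1$. I would handle this by writing every expectation as an explicit finite sum over state--action sequences weighted by the products $\prod_{h'}\pi_{h'}(a_{h'}\mid s_{h'})\,\bbP_{h'}(s_{h'+1}\mid s_{h'},a_{h'})$. With this definition, the tower property is just an exchange of finite sums, and the induction goes through verbatim.
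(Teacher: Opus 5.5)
Your proposal is correct and is essentially the standard argument behind this lemma. The paper gives no proof of its own and imports the result from the cited works, whose argument is the same: split $V^\pi_{\ell,h}-\hat V_{\ell,h}$ into a policy-difference term and a Bellman-residual term, then unroll over $h$ using only the Bellman recursion and linearity, so the sub-MDP case goes through unchanged. You are also right that the convention $\hat V_{\ell,H+1}\equiv 0$ is implicitly required for the identity to hold.
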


\subsection{Adaptive FTRL}\label{appendix:adpative FTRL}
In this subsection, we review adaptive FTRL for online convex optimization when the solution space is the probability simplex $\Delta_{n-1} = \{\bfx \in \bbR^n: \bfx_i \geq 0 \text{ for all $i$ and } \sum_{i=1}^n \bfx_i =1\}$. While these results are standard in the literature, we include them for completeness. Most of the proofs in this section are taken from Chapter 7 in \citet{orabona2019modern} with a few modifications. We consider the following algorithm.
\begin{algorithm}[!t]
\caption{Adaptive FTRL for online convex optimization with the probability simplex}\label{alg:ada FTRL}
\textbf{Initialize:} $\bfx_1 = (1/n)\bm 1$
\begin{algorithmic}
    \For{$k=1,\ldots, K$}
        \State Take $\bfx_k$ and observe $\ell_k(\bfx) = \nabla_k^\top \bfx$
        \State Update $\psi_k(\bfx) = \alpha_k\sum_{i=1}^n \bfx_i \log\bfx_i$ and $\alpha_k$ is defined as follows:
        \[
            \alpha_k = \frac{1}{4}\sqrt{\sum_{\tau=1}^k L_\tau^2} \quad\text{where}\quad ||\nabla_\tau||_\infty \leq L_\tau \; \forall \tau \in [k]
        \]
        \State Update $\bfx_{k+1}:$
        \begin{equation}\label{eq:FTRL}
            \bfx_{k+1} \in \argmin_{\bfx\in\Delta_{n-1}}\ \psi_k(\bfx) + \sum_{\tau=1}^k \nabla_\tau^\top \bfx
        \end{equation}
    \EndFor
\end{algorithmic}
\end{algorithm}

Note that the $\bfx_{k+1}$ defined by \eqref{eq:FTRL} is uniquely determined and its closed form is written as follows (see Example 28.3 in \citet{lattimore2020bandit}): for each $i\in [n]$,
\begin{align}\label{eq:closed form}
    \bfx_{k+1,i} = \frac{\exp\left(-\frac{1}{\alpha_k}\sum_{\tau=1}^k \nabla_{\tau,i}\right)}{\sum_{j=1}^n\exp\left(-\frac{1}{\alpha_k} \sum_{\tau=1}^k \nabla_{\tau,j}\right)}.
\end{align}
The following lemma establishes a regret bound under \Cref{alg:ada FTRL}. 
\begin{lemma}\label{lem: FTRL regret}
    Suppose that we run \Cref{alg:ada FTRL} and $L_1 > 0$. For any $\bfx^\star \in \Delta_{n-1}$ and $k \in [K]$, we have
    \begin{align}\label{eq:regret AdaFTRL}
        \Regret_{\textrm{AdaFTRL}}(k) \leq \left(\frac{\log n}{4} + 4\sqrt{2}\right)\sqrt{\sum_{\tau=1}^k L_\tau^2}.
    \end{align}
\end{lemma}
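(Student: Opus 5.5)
The plan is to follow the generic FTRL regret decomposition (Lemma 7.1 of \citet{orabona2019modern}) specialized to the negative-entropy regularizer. Set $\psi_0 := \psi_1$ so that $\bfx_1 = (1/n)\bm 1$ is the FTRL iterate for round $1$; this is consistent because the uniform vector minimizes any positive multiple of negative entropy. Write $F_k(\bfx) = \psi_{k-1}(\bfx) + \sum_{\tau<k}\nabla_\tau^\top\bfx$, so that $\bfx_k = \argmin F_k$. For any $\bfx^\star\in\Delta_{n-1}$ the standard identity gives
\begin{align*}
\sum_{\tau=1}^k \nabla_\tau^\top(\bfx_\tau - \bfx^\star) \le \psi_{k}(\bfx^\star) - \min_{\bfx}\psi_0(\bfx) + \sum_{\tau=1}^k\big[F_\tau(\bfx_\tau) - F_{\tau+1}(\bfx_{\tau+1}) + \nabla_\tau^\top\bfx_\tau\big].
\end{align*}
This step uses that $\alpha_\tau$ is nondecreasing and that negative entropy is nonpositive on the simplex. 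Together these give $\psi_{\tau-1}(\bfx)\ge\psi_\tau(\bfx)$, which is the direction needed to drop the regularizer-change terms.

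The first (penalty) term is handled directly. We have $\psi_k(\bfx^\star)\le 0$ and $-\min\psi_0 = \alpha_1\log n\le\alpha_k\log n$. Hence the penalty term is at most $\frac{\log n}{4}\sqrt{\sum_{\tau\le k}L_\tau^2}$, which is exactly the first summand in \eqref{eq:regret AdaFTRL}. The uniqueness and closed form \eqref{eq:closed form} ensure the argmins are well defined, with all coordinates strictly positive.

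For the stability terms, use $F_{\tau+1}(\bfx_{\tau+1}) \ge F_{\tau+1}'(\bfx_{\tau+1})$, where $F'_{\tau+1}$ is $F_{\tau+1}$ with $\psi_\tau$ replaced by $\psi_{\tau-1}$; again by monotonicity of $\alpha$ and the sign of entropy, this costs nothing. The $\tau$-th stability term then becomes the one-step progress of the fixed objective $F_\tau + \nabla_\tau^\top\bfx$. By the $\alpha_{\tau-1}$-strong convexity of $\psi_{\tau-1}$ with respect to $\|\cdot\|_1$ (Pinsker), together with Hölder, this term is bounded by
\begin{align*}
\frac{\|\nabla_\tau\|_\infty^2}{2\alpha_{\tau-1}} \le \frac{2L_\tau^2}{\sqrt{\sum_{s<\tau}L_s^2}}.
\end{align*}
The same term is also trivially at most $\nabla_\tau^\top(\bfx_\tau-\bfx_{\tau+1})\le \|\nabla_\tau\|_\infty\|\bfx_\tau-\bfx_{\tau+1}\|_1\le 2L_\tau$. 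So each term is at most $\min\{2L_\tau,\, 2L_\tau^2/\sqrt{\sum_{s<\tau}L_s^2}\}$, where the $\tau=1$ term is covered by the $2L_1$ branch, using $L_1>0$ so that $\alpha_\tau>0$ for $\tau\ge 1$.

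The main obstacle is this last summation. The regularizer lags by one round, so a single large $L_\tau$ can make $L_\tau^2/\sqrt{\sum_{s<\tau}L_s^2}$ huge; the $\min$ with $2L_\tau$ is exactly what rescues it. I would apply the elementary inequality (Lemma 4.13 in \citet{orabona2019modern}):
\begin{align*}
\min\Big\{a_\tau,\frac{a_\tau^2}{\sqrt{\sum_{s<\tau}a_s^2}}\Big\}\le \sqrt{2}\,\frac{a_\tau^2}{\sqrt{\sum_{s\le\tau}a_s^2}}.
\end{align*}
This holds because $\min\{x,y\}\le 2/(1/x+1/y)$ and $a_\tau^2+\sum_{s<\tau}a_s^2=\sum_{s\le\tau}a_s^2$. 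It is then combined with $\sum_\tau a_\tau^2/\sqrt{\sum_{s\le\tau}a_s^2}\le 2\sqrt{\sum_{\tau\le k}a_\tau^2}$. Tracking the factor $2$ from the two branches gives $4\sqrt2\sqrt{\sum_{\tau\le k}L_\tau^2}$. Adding this to the penalty bound yields \eqref{eq:regret AdaFTRL}.
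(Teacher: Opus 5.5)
\textbf{There is a fixable sign error.} Your skeleton matches the paper's: the Lemma 7.1 decomposition, the $\|\nabla_\tau\|_\infty^2/(2\alpha_{\tau-1})$ and $2\|\nabla_\tau\|_\infty$ branches, and the harmonic-mean trick. The problem is the step where you drop the regularizer change, which has the inequality backwards. You use the unshifted $\psi_\tau(\bfx)=\alpha_\tau\sum_i\bfx_i\log\bfx_i\le 0$ with $\alpha_\tau$ nondecreasing, so $\psi_\tau\le\psi_{\tau-1}$. Hence
\[
F_{\tau+1}(\bfx_{\tau+1}) = F_\tau(\bfx_{\tau+1})+\nabla_\tau^\top\bfx_{\tau+1}+(\alpha_\tau-\alpha_{\tau-1})\sum_i\bfx_{\tau+1,i}\log\bfx_{\tau+1,i}\le F'_{\tau+1}(\bfx_{\tau+1}),
\]
not $\ge$. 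You may discard the last term only when the regularizers are pointwise \emph{nondecreasing}, $\psi_\tau\ge\psi_{\tau-1}$. That is the direction the stability argument needs, and $\psi_{\tau-1}\ge\psi_\tau$ is the opposite one.

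Consequently, for $\tau\ge2$ neither of your bounds on the stability term is justified as written. The $2L_\tau$ branch inherits the problem because you derive it from the same one-step-progress reduction. By contrast, the decomposition identity itself is exact and needs no monotonicity, so your remark attaching the monotonicity assumption to that step is misplaced.

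\textbf{How to repair it.} The paper avoids the issue by shifting the regularizer: $\bar\psi_\tau(\bfx)=\alpha_\tau(\sum_i\bfx_i\log\bfx_i+\log n)\ge 0$. It has the same minimizers and is nondecreasing in $\tau$. With $F_1\equiv 0$, its penalty is $\bar\psi_k(\bfx^\star)\le\alpha_k\log n$.

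Equivalently, you can stay in your own setup and keep the term you dropped:
\begin{itemize}
\item It adds at most $(\alpha_\tau-\alpha_{\tau-1})\log n$ to the $\tau$-th stability term.
\item With $\alpha_0=\alpha_1$, these contributions telescope to $(\alpha_k-\alpha_1)\log n$.
\item Adding your exact penalty $\alpha_1\log n$ gives $\alpha_k\log n=\frac{\log n}{4}\sqrt{\sum_{\tau\le k}L_\tau^2}$.
\end{itemize}
Your relaxation $\alpha_1\log n\le\alpha_k\log n$ had exactly the slack this missing term consumes. After this correction, the rest of your argument goes through and yields the stated constant, including the $\sqrt2$ harmonic-mean bound applied to squared quantities and the final $4\sqrt2$.
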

\paragraph{Comparison with AdaOMD} For comparison, we briefly introduce the analysis of AdaOMD, which is the most natural extension of Adaptive Gradient Descent to the probability simplex. In particular, the update of AdaOMD can be written as follows:
\begin{align}\label{eq:AdaOMD}
    \bfx_{k+1} \in \argmin_{\bfx\in\Delta_{n-1}}\quad \frac{1}{\eta_k}\KL(\bfx||\bfx_k) + \nabla_k^\top \bfx.
\end{align}
where $\eta_k \asymp 1/\sqrt{\sum_{\tau=1}^k ||\nabla_\tau||_\infty^2}$. Its regret bound is given as follows:
\begin{align}\label{eq:regret AdaOMD}
    \Regret_{\textrm{AdaOMD}}(K) \lesssim \sup_{\bfx,\bfy\in\Delta_{n-1}}\KL(\bfx||\bfy)\sqrt{\sum_{k=1}^K ||\nabla_k||_\infty^2}.
\end{align}
Note that since $\sup_{\bfx,\bfy\in\Delta_{n-1}}\KL(\bfx||\bfy)$ is unbounded, the regert bound of AdaOMD with over the probability simplex is unbounded. That said, while the right-hand side of \eqref{eq:regret AdaOMD} is indeed invalid, we show it for comparison with \eqref{eq:regret AdaFTRL}. The regret bound of AdaFTRL as in \eqref{eq:regret AdaFTRL} takes $\log n$ instead of $\sup_{\bfx,\bfy\in\Delta_{n-1}}\KL(\bfx||\bfy)$. Therefore, AdaFTRL is appropriate for MDPs in which we must handle the policy space.

\subsubsection{Proof of Lemma~\ref{lem: FTRL regret}}
Before providing the regret analysis of \Cref{alg:ada FTRL}, we first introduce several notations and lemmas. Throughout the section, for $k=1,\ldots, K$, we define
\begin{align*}
    &\bar\psi_k(\bfx) = \alpha_k \left(\sum_{i=1}^n \bfx_i \log \bfx_i + \log n\right),\\
    & F_{k+1}(\bfx) = \bar\psi_k(\bfx) + \sum_{\tau=1}^k \nabla_\tau^\top \bfx \quad\text{and} \quad F_1(\bfx) = 0
\end{align*}
Note that the minimum of $\sum_{i=1}^n \bfx_i\log \bfx_i$ is $-\log n$ when $\bfx = (1/n)\bm 1$, and thus $\bar\psi_k$ is nonnegative. Under this definition, we have the following useful relation, which follows from $\{\alpha_k\}_{k=1}^K$ is a nondecreasing sequence:
\[
    \bar\psi_{k-1}(\bfx) - \bar\psi_k(\bfx) \leq 0, \quad \forall \bfx\in\Delta_{n-1},\ k=2,\ldots, K
\]
Moreover, since the only difference between $\bar\psi_k$ and $\psi_k$ is the term $\alpha_k \log n$, which is independent of $\bfx$, we have $\bfx_{k+1} \in \argmin_{\bfx\in \Delta_{n-1}} F_{k+1}(\bfx)$ even when $F_{k+1}$ is defined in $\bar\psi_k$. Now we introduce the following lemma as a first step of regret analysis for \Cref{alg:ada FTRL}.

\begin{lemma}\label{lem 7.1}
    Suppose that we run \Cref{alg:ada FTRL} and $L_1 > 0$. For any $\bfx^\star \in \Delta_{n-1}$ and $k\in [K]$, we have
    \begin{align*}
        \sum_{\tau=1}^k \nabla_\tau^\top(\bfx_\tau -\bfx^\star)
        &\leq \bar\psi_k(\bfx^\star) + \sum_{\tau=1}^k \left(F_\tau(\bfx_\tau)-F_{\tau+1}(\bfx_{\tau+1})+\nabla_\tau^\top\bfx_\tau\right) 
    \end{align*}
\end{lemma}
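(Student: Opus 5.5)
The inequality is the standard FTRL "be-the-leader" decomposition. I will prove it by a telescoping identity together with the optimality of $\bfx_{k+1}$ for $F_{k+1}$.

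First, write the left-hand side as
\[
\sum_{\tau=1}^k \nabla_\tau^\top(\bfx_\tau-\bfx^\star) = \sum_{\tau=1}^k \nabla_\tau^\top\bfx_\tau - \sum_{\tau=1}^k\nabla_\tau^\top\bfx^\star .
\]
Then add and subtract the telescoping sum
\[
\sum_{\tau=1}^k \bigl(F_\tau(\bfx_\tau)-F_{\tau+1}(\bfx_{\tau+1})\bigr) = F_1(\bfx_1) - F_{k+1}(\bfx_{k+1}) = -F_{k+1}(\bfx_{k+1}),
\]
which uses the convention $F_1\equiv 0$. This gives the exact identity
\[
\sum_{\tau=1}^k \nabla_\tau^\top(\bfx_\tau-\bfx^\star) = \sum_{\tau=1}^k\bigl(F_\tau(\bfx_\tau)-F_{\tau+1}(\bfx_{\tau+1})+\nabla_\tau^\top\bfx_\tau\bigr) + F_{k+1}(\bfx_{k+1}) - \sum_{\tau=1}^k\nabla_\tau^\top\bfx^\star .
\]

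It then remains to show $F_{k+1}(\bfx_{k+1}) - \sum_{\tau=1}^k\nabla_\tau^\top\bfx^\star \le \bar\psi_k(\bfx^\star)$. As noted before the lemma, $\bfx_{k+1}$ minimizes $F_{k+1}$ over $\Delta_{n-1}$, because $\bar\psi_k$ and $\psi_k$ differ only by the constant $\alpha_k\log n$. Since $\bfx^\star\in\Delta_{n-1}$, we get
\[
F_{k+1}(\bfx_{k+1})\le F_{k+1}(\bfx^\star) = \bar\psi_k(\bfx^\star)+\sum_{\tau=1}^k\nabla_\tau^\top\bfx^\star .
\]
Substituting this into the identity yields the claim.

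The only point needing care is that the minimizer exists and that $\alpha_k>0$. The hypothesis $L_1>0$ gives $\alpha_k\ge L_1/4>0$ for every $k$. So $F_{k+1}$ is a continuous, strictly convex function on the compact simplex, with $0\log 0=0$, and the closed form \eqref{eq:closed form} is its unique minimizer. No other obstacle arises. The lemma does not yet use the monotonicity $\bar\psi_{k-1}\le\bar\psi_k$; that property is only needed later, when bounding the stability terms $F_\tau(\bfx_\tau)-F_{\tau+1}(\bfx_{\tau+1})+\nabla_\tau^\top\bfx_\tau$.
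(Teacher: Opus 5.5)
Your proposal is correct and follows essentially the same argument as the paper: the telescoping identity with $F_1\equiv 0$ gives an exact decomposition, and the optimality of $\bfx_{k+1}$ for $F_{k+1}$ over $\Delta_{n-1}$ gives $F_{k+1}(\bfx_{k+1})-F_{k+1}(\bfx^\star)\le 0$, which is the step the paper's proof uses. Your added remark that $\alpha_k\ge L_1/4>0$, which guarantees that the minimizer exists and is unique, is a harmless and slightly more careful addition.
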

\begin{proof}
    By the definition of $F_{k+1}$,
    \[
        \sum_{\tau=1}^k\nabla_\tau^\top \bfx^\star = F_{k+1}(\bfx^\star) -\bar\psi_k(\bfx^\star)
    \]
    Moreover, by telescoping, we have
    \[
        0=-F_1(\bfx_1) + \sum_{\tau=1}^k (F_\tau(\bfx_\tau) - F_{\tau+1}(\bfx_{\tau+1})) + F_{k+1}(\bfx_{k+1}).
    \]
    Then it follows that
    \begin{align*}
        &\sum_{\tau=1}^k \left(\nabla_\tau^\top \bfx_\tau - \nabla_\tau^\top \bfx^\star\right) \\
        &= \bar\psi_k(\bfx^\star) - F_1(\bfx_1) +
        \sum_{\tau=1}^k \left(F_\tau(\bfx_\tau)-F_{\tau+1}(\bfx_{\tau+1})+\nabla_\tau^\top\bfx_\tau\right) \\
        &\quad+F_{k+1}(\bfx_{k+1})-F_{k+1}(\bfx^\star)\\
        &\leq \bar\psi_k(\bfx^\star) +
        \sum_{\tau=1}^k \left(F_\tau(\bfx_\tau)-F_{\tau+1}(\bfx_{\tau+1})+\nabla_\tau^\top\bfx_\tau\right)
    \end{align*}
    where the inequality follows from the fact that $\bfx_{k+1} \in \argmin F_{k+1}(\bfx)$ and $F_1(\bfx_1) = 0$.
\end{proof}

\begin{lemma}\label{lem 7.5}
Suppose that we run \Cref{alg:ada FTRL} and $L_1 > 0$. We have
\begin{enumerate}
    \item For $k=2,\ldots, K$, we have
    \[
        F_k(\bfx_k) - F_{k+1}(\bfx_{k+1}) + \nabla_k^\top \bfx_k \leq \frac{||\nabla_k||_\infty^2}{2\alpha_{k-1}}.
    \]
    \item For $k=1,\ldots, K$, we have
    \[
        F_k(\bfx_k) - F_{k+1}(\bfx_{k+1}) + \nabla_k^\top \bfx_k \leq 2||\nabla_k||_\infty
    \]
    \item For $k=1,\ldots, K$, we have
    \[
        F_k(\bfx_k) - F_{k+1}(\bfx_{k+1}) + \nabla_k^\top \bfx_k \leq \frac{2\sqrt{2}L_k^2}{\sqrt{\sum_{\tau=1}^kL_\tau^2}}.
    \]
\end{enumerate}
\end{lemma}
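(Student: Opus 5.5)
The plan is to prove the three items in order, since item 3 follows from items 1 and 2, and items 1 and 2 are separate one-step stability estimates. Throughout I use the closed form \eqref{eq:closed form}: $F_{k+1}(\bfx_{k+1}) = \min_{\bfx\in\Delta_{n-1}} F_{k+1}(\bfx) = \alpha_k\log n - \alpha_k \log\sum_{j}\exp(-\frac{1}{\alpha_k}\sum_{\tau\le k}\nabla_{\tau,j})$. The last term is the conjugate of negative entropy scaled by $\alpha_k$.

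\textbf{Item 1} ($k\ge 2$). Since $\alpha_{k-1}\le\alpha_k$, we have $\bar\psi_{k-1}\le\bar\psi_k$ pointwise. Hence $F_{k+1}(\bfx) \ge G(\bfx) := \bar\psi_{k-1}(\bfx)+\sum_{\tau\le k}\nabla_\tau^\top\bfx$, and so $F_{k+1}(\bfx_{k+1})\ge\min G$. It therefore suffices to show
\[
F_k(\bfx_k)-\min_{\bfx} G(\bfx)+\nabla_k^\top\bfx_k \le \frac{\|\nabla_k\|_\infty^2}{2\alpha_{k-1}}.
\]
Note that $G = F_k + \nabla_k^\top\bfx$ with the same regularizer coefficient $\alpha_{k-1}$. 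Using the log-partition form, the left-hand side equals
\[
\alpha_{k-1}\log \bbE_{i\sim\bfx_k}\exp(-\nabla_{k,i}/\alpha_{k-1}) + \nabla_k^\top\bfx_k .
\]
By Hoeffding's lemma, for a random variable with range in $[-\|\nabla_k\|_\infty,\|\nabla_k\|_\infty]$, this is at most $\alpha_{k-1}\cdot\frac{(2\|\nabla_k\|_\infty/\alpha_{k-1})^2}{8} = \frac{\|\nabla_k\|_\infty^2}{2\alpha_{k-1}}$. Equivalently, one can use the $1/\alpha$-strong convexity of entropy in $\ell_1$ together with the standard FTRL stability lemma, Lemma 7.8 in \citet{orabona2019modern}.

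\textbf{Item 2}. Write $F_{k+1}(\bfx_{k+1}) = \bar\psi_k(\bfx_{k+1})+\sum_{\tau\le k}\nabla_\tau^\top\bfx_{k+1}$. Then
\[
F_k(\bfx_k)-F_{k+1}(\bfx_{k+1})+\nabla_k^\top\bfx_k
= \bigl[F_k(\bfx_k)-F_k(\bfx_{k+1})\bigr] + \bigl[\bar\psi_{k-1}(\bfx_{k+1})-\bar\psi_k(\bfx_{k+1})\bigr] + \nabla_k^\top(\bfx_k-\bfx_{k+1}),
\]
with $\bar\psi_0:=0$ and $F_1=0$ for the case $k=1$. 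The first bracket is $\le 0$ because $\bfx_k$ minimizes $F_k$ (and trivially when $k=1$). The second bracket is $\le 0$ because $\alpha$ is nondecreasing and $\bar\psi\ge 0$. The last term is at most $\|\nabla_k\|_\infty\|\bfx_k-\bfx_{k+1}\|_1\le 2\|\nabla_k\|_\infty$.

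\textbf{Item 3}. Let $S_k=\sum_{\tau\le k}L_\tau^2$, so $\alpha_{k-1}=\frac14\sqrt{S_{k-1}}$, and note $\|\nabla_k\|_\infty\le L_k$. Split into two cases.
\begin{itemize}
\item If $S_{k-1}\ge S_k/2$ (with $k\ge2$), item 1 gives $\frac{2L_k^2}{\sqrt{S_{k-1}}}\le \frac{2\sqrt2 L_k^2}{\sqrt{S_k}}$.
\item Otherwise $L_k^2> S_k/2$, which covers $k=1$ since $L_1>0$. Then item 2 gives $2L_k = \frac{2L_k^2}{L_k}\le \frac{2\sqrt2 L_k^2}{\sqrt{S_k}}$.
\end{itemize}

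The main obstacle is item 1. The delicate points there are handling the changing regularizer coefficient (dropping from $\alpha_k$ to $\alpha_{k-1}$ in the right direction) and obtaining the exact constant $1/2$. I will rely on the Hoeffding/log-sum-exp computation rather than a generic strong-convexity argument.
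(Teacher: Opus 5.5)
Your proof is correct. Item 2 is essentially the paper's argument; your convention $\bar\psi_0:=0$ simply merges the paper's separate $k=1$ case. Items 1 and 3, however, take a different route.

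\textbf{Item 1.} You and the paper discard the nonnegative regularizer increment $(\bar\psi_k-\bar\psi_{k-1})(\bfx_{k+1})$ in the same way; your passage to $\min G$ is that step. You then diverge.
\begin{itemize}
\item The paper never uses the closed form. It sets $h=F_k+\nabla_k^\top(\cdot)+\Ind_{\Delta_{n-1}}$. First-order optimality of $\bfx_k$ gives $\nabla_k\in\partial h(\bfx_k)$. It then combines the $\alpha_{k-1}$-strong convexity of $\bar\psi_{k-1}$ in $\ell_1$ (Pinsker), H\"older, and $bx-\frac{a}{2}x^2\le b^2/(2a)$. Note the modulus is $\alpha_{k-1}$, not $1/\alpha_{k-1}$ as your aside says.
\item You compute both minima exactly through the log-partition function. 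This turns the stability term into the centered log-MGF $\alpha_{k-1}\log\bbE_{i\sim\bfx_k}\exp\bigl(-(\nabla_{k,i}-\nabla_k^\top\bfx_k)/\alpha_{k-1}\bigr)$. Hoeffding's lemma then gives the same $\|\nabla_k\|_\infty^2/(2\alpha_{k-1})$.
\end{itemize}
The paper's argument is the regularizer-agnostic textbook FTRL stability lemma, so it transfers to any strongly convex regularizer. Yours is specific to entropy on the simplex. In exchange it is fully explicit, skips the subgradient/optimality-condition step, and is the natural starting point for sharper local-norm (variance-type) bounds.

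\textbf{Item 3.} The paper combines items 1 and 2 through the harmonic-mean inequality $\min\{a,b\}\le 2/(1/a+1/b)$ and handles $L_k=0$ and $k=1$ separately. Your case split on $S_{k-1}\ge S_k/2$ reaches the same $2\sqrt{2}L_k^2/\sqrt{S_k}$ and absorbs both edge cases automatically.
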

\begin{proof}
    \textbf{Proof of the first statement: }
    Note that 
    \begin{align*}
        &F_{k+1}(\bfx_{k+1}) = \bar\psi_k(\bfx_{k+1}) + \sum_{\tau=1}^k\nabla_\tau^\top\bfx_{k+1},\\
        &F_k(\bfx_{k+1}) = \bar\psi_{k-1}(\bfx_{k+1}) + \sum_{\tau=1}^{k-1}\nabla_\tau^\top \bfx_{k+1}.
    \end{align*}
    This implies that for $k=2,\ldots, K$,  
    \[  
        F_{k+1}(\bfx_{k+1}) - F_k(\bfx_{k+1}) = \bar\psi_k(\bfx_{k+1}) - \bar\psi_{k-1}(\bfx_{k+1}) + \nabla_k^\top \bfx_{k+1}.
    \]
    Equivalently, we have $F_{k+1}(\bfx_{k+1}) = F_k(\bfx_{k+1}) + \bar\psi_k(\bfx_{k+1}) - \bar\psi_{k-1}(\bfx_{k+1}) + \nabla_k^\top \bfx_{k+1}$. Then it follows that for $k=2,\ldots, K$,
    \begin{align*}
        &F_k(\bfx_k) - F_{k+1}(\bfx_{k+1}) + \nabla_k^\top \bfx_k \\
        &= F_k(\bfx_k) +\nabla_k^\top\bfx_k - (F_k(\bfx_{k+1})+ \nabla_k^\top \bfx_{k+1}) + \bar\psi_{k-1}(\bfx_{k+1}) - \bar\psi_k(\bfx_{k+1})\\
        &\leq F_k(\bfx_k) +\nabla_k^\top\bfx_k - (F_k(\bfx_{k+1})+ \nabla_k^\top \bfx_{k+1})
    \end{align*}
    where the inequality follows from the fact that $\{\alpha_k\}_{k=1}^K$ is a nondecreasing sequence, implying that $\bar\psi_{k-1}(\bfx) - \bar\psi_k(\bfx) \leq 0$ for any $\bfx$ and $k=2,\ldots, K$.

    To further deduce, recall that $\bar\psi_{k-1}$ is $\alpha_{k-1}$-strongly convex. This implies that $F_k$ is $\alpha_{k-1}$-strongly convex and thus $F_k(\bfx) + \nabla_k^\top\bfx + \Ind_{\Delta_{n-1}}(\bfx)$ is also $\alpha_{k-1}$-strongly convex, where $\Ind_{\Delta_{n-1}}$ denotes the indicator function with respect to $\Delta_{n-1}$. Moreover, as $\bfx_k \in \argmin_{\bfx\in\Delta_{n-1}} F_k(\bfx)$, we have $\bm 0 \in \partial (F_k + \Ind_{\Delta_{n-1}})(\bfx_k)$. Then this implies that $\nabla_k$ is a subgradient of $F_k(\bfx) + \nabla_k^\top \bfx + \Ind_{\Delta_{n-1}}(\bfx)$ at $\bfx_k$. It follows that
    \begin{align*}
        &F_k(\bfx_k)+\nabla_k^\top\bfx_k- \left(F_k(\bfx_{k+1}) + \nabla_k^\top\bfx_{k+1}\right)\\
        &=F_k(\bfx_k)+\nabla_k^\top\bfx_k+\Ind_{\Delta_{n-1}}(\bfx_k)- \left(F_k(\bfx_{k+1}) + \nabla_k^\top\bfx_{k+1}+\Ind_{\Delta_{n-1}}(\bfx_{k+1})\right) \\
        &\leq \nabla_k^\top(\bfx_k-\bfx_{k+1}) - \frac{\alpha_{k-1}}{2}||\bfx_{k+1}-\bfx_k||_1^2 \\
        &\leq ||\nabla_k||_\infty||\bfx_k - \bfx_{k+1}||_1 - \frac{\alpha_{k-1}}{2}||\bfx_{k+1}-\bfx_k||_1^2\\
        &\leq \frac{||\nabla_k||_\infty^2}{2\alpha_{k-1}}
    \end{align*}
    where the equality follows from the fact that $\bfx_k, \bfx_{k+1}\in\Delta_{n-1}$, the first inequality follows from the first-order property of $\alpha_{k-1}$-strongly convex function, the second inequality is due to H\"older's inequality, and the last inequality follows from the fact that $bx-ax^2 \leq b^2/(4a)$.
    We deduce that for $k=2,\ldots, K$,
    \begin{align*}
        F_k(\bfx_k) - F_{k+1}(\bfx_{k+1}) + \nabla_k^\top \bfx_k \leq \frac{||\nabla_k||_\infty^2}{2\alpha_{k-1}}.
    \end{align*}

    \textbf{Proof of the second statement: }
    Since $\bfx_k \in \argmin_{\bfx\in\Delta_{n-1}} F_k(\bfx)$, we have $F_k(\bfx_k) \leq F_k(\bfx_{k+1})$. It follows that for $k=2,\ldots, K$,
    \begin{align*}
        F_k(\bfx_k) - F_{k+1}(\bfx_{k+1}) + \nabla_k^\top \bfx_k &\leq F_k(\bfx_{k+1}) - F_{k+1}(\bfx_{k+1}) + \nabla_k^\top\bfx_k \\
        &= \bar\psi_{k-1}(\bfx_{k+1}) - \bar\psi_k(\bfx_{k+1}) - \nabla_k^\top \bfx_{k+1} + \nabla_k^\top\bfx_k\\
        &\leq - \nabla_k^\top \bfx_{k+1} + \nabla_k^\top\bfx_k\\
        &\leq 2||\nabla_k||_\infty
    \end{align*}
    where the second inequality follows from the fact that $ \alpha_k$ is nondecreasing, and the last inequality follows from H\"older's inequality and the fact that $||\bfx_{k+1} - \bfx_k||_1 \leq2$.

    Next, we consider $k=1$. In this case, 
    \begin{align*}
        F_1(\bfx_1) - F_2(\bfx_2) + \nabla_1^\top \bfx_1 
        &= -\bar\psi_1(\bfx_2) + \nabla_1^\top(\bfx_1-\bfx_2) \\
        &\leq \nabla_1^\top(\bfx_1-\bfx_2) \\
        &\leq 2||\nabla_1||_\infty
    \end{align*}
    where the first inequality follows from the fact that $\bar\psi_1(\bfx_2) \geq 0$, and the second inequality follows from H\"older's inequality and the fact that $||\bfx_{1} - \bfx_2||_1 \leq2$. Hence, the second statement holds for all $k=1,\ldots, K$.

    \textbf{Proof of the third statement: } By the first and second statements, for $k=2,\ldots, K$, we have
    \begin{align*}
        F_k(\bfx_k) - F_{k+1}(\bfx_{k+1}) + \nabla_k^\top \bfx_k 
        &\leq \min\left\{\frac{||\nabla_k||_\infty^2}{2\alpha_{k-1}}, 2||\nabla_k||_\infty\right\}\\
        &= \min\left\{\frac{2||\nabla_k||_\infty^2}{\sqrt{\sum_{\tau=1}^{k-1}L_\tau^2}}, 2||\nabla_k||_\infty\right\}\\
        &\leq \min\left\{\frac{2L_k^2}{\sqrt{\sum_{\tau=1}^{k-1}L_\tau^2}}, 2L_k\right\}\\
        &= 2\sqrt{\min\left\{\frac{L_k^4}{\sum_{\tau=1}^{k-1}L_\tau^2}, L_k^2\right\}}\\
        &\leq 2\sqrt{\frac{2}{\frac{\sum_{\tau=1}^{k-1}L_\tau^2}{L_k^4} + \frac{1}{L_k^2}}}\\
        &= \frac{2\sqrt{2}L_k^2}{\sqrt{\sum_{\tau=1}^kL_\tau^2}}
    \end{align*}
    where the second inequality follows from $||\nabla_k||_\infty\leq L_k$, and the third inequality follows from the fact that $\min\{a,b\} \leq \frac{2}{(1/a)+(1/b)}$, assuming $L_k > 0$. Note that if $L_k=0$, then the desired relation holds trivially. For $k=1$, by the second statement, we have
    \begin{align*}
        F_1(\bfx_1) - F_2(\bfx_2) + \nabla_1^\top \bfx_1  \leq 2||\nabla_1||_\infty \leq 2L_1 \leq  \frac{2\sqrt{2}L_1^2}{\sqrt{\sum_{\tau=1}^1 L_\tau^2}}.
    \end{align*}
    Hence, the third statement holds for all $k=1,\ldots, K$.
\end{proof}

\begin{proof}[\textbf{Proof of \Cref{lem: FTRL regret}}]
    By \Cref{lem 7.1} and the third statement of~\Cref{lem 7.5}, we have
    \begin{align*}
        \sum_{\tau=1}^k \nabla_\tau^\top(\bfx_\tau -\bfx^\star) 
        &\leq \bar\psi_k(\bfx^\star) + \sum_{\tau=1}^k\frac{2\sqrt{2}L_\tau^2}{\sqrt{\sum_{\tau'=1}^\tau L_{\tau'}^2}} \\
        &= \alpha_k\left(\sum_{i=1}^n \bfx_i^\star \log\bfx_i^\star + \log n\right) + \sum_{\tau=1}^k\frac{2\sqrt{2}L_\tau^2}{\sqrt{\sum_{\tau'=1}^\tau L_{\tau'}^2}}\\
        &\leq \alpha_k\log n + \sum_{\tau=1}^k\frac{2\sqrt{2}L_\tau^2}{\sqrt{\sum_{\tau'=1}^\tau L_{\tau'}^2}}\\
        &\leq \alpha_k\log n + 4\sqrt{2}\sqrt{\sum_{\tau=1}^k L_\tau^2}\\
        &= \frac{\log n}{4}\sqrt{\sum_{\tau=1}^k L_\tau^2} + 4\sqrt{2}\sqrt{\sum_{\tau=1}^k L_\tau^2}
    \end{align*}
    where the second inequality follows from the fact that $\sum_{i=1}^n\bfx_i^\star\log\bfx_i^\star \leq 0$, and the third inequality follows from $\sum_{\tau=1}^k y_\tau / \sqrt{\sum_{\tau'=1}^\tau y_{\tau'}} \leq 2\sqrt{\sum_{\tau=1}^k y_\tau}$ for any nonnegative sequence $\{y_\tau\}_{\tau\geq1}$ with $y_1>0$. This completes the proof.
\end{proof}

\subsubsection{Proof Overview of Eq. (\ref{eq:regret AdaOMD})}
    Under the update of \eqref{eq:AdaOMD}, we can use the following standard fact (e.g., Lemma 14 in \citet{wei2020online}):
    \begin{align*}
        \nabla_k^\top \bfx_{k+1} + \frac{1}{\eta_k}\KL(\bfx_{k+1}||\bfx_k) \leq \nabla_k^\top \bfx^\star + \frac{1}{\eta_k}\KL(\bfx^\star||\bfx_k) - \frac{1}{\eta_k}\KL(\bfx^\star || \bfx_{k+1}).
    \end{align*}
    It follows that
    \begin{align*}
        &\nabla_k^\top (\bfx_k - \bfx^\star) \\
        &\leq \nabla_k^\top(\bfx_k - \bfx_{k+1}) + \frac{1}{\eta_k}\KL(\bfx^\star||\bfx_k) - \frac{1}{\eta_k}\KL(\bfx^\star || \bfx_{k+1}) - \frac{1}{\eta_k}\KL(\bfx_{k+1}||\bfx_k)\\
        &\leq ||\nabla_k||_\infty||\bfx_k-\bfx_{k+1}||_1 - \frac{1}{2\eta_k}||\bfx_{k+1}-\bfx_k||_1^2 + \frac{1}{\eta_k}\KL(\bfx^\star||\bfx_k) - \frac{1}{\eta_k}\KL(\bfx^\star || \bfx_{k+1})\\
        &\leq 8||\nabla_k||_\infty^2\eta_k + \frac{1}{\eta_k}\KL(\bfx^\star||\bfx_k) - \frac{1}{\eta_k}\KL(\bfx^\star || \bfx_{k+1}).
    \end{align*}
    where the second inequality follows from H\"older's inequality and Pinsker's inequality, and the second inequality follows from the fact that $bx-ax^2 \leq 4b^2/a$. By summing over $k=1,\ldots, K$, we have
    {\allowdisplaybreaks
    \begin{align*}
        \Regret_{\textrm{AdaOMD}}(K) 
        &= \sum_{k=1}^K \nabla_k^\top (\bfx_k - \bfx^\star) \\
        &\lesssim \sum_{k=1}^K 8||\nabla_k||_\infty^2\eta_k + \sum_{k=1}^K \left(\frac{1}{\eta_k}\KL(\bfx^\star||\bfx_k) - \frac{1}{\eta_k}\KL(\bfx^\star || \bfx_{k+1})\right) \\
        &\lesssim \sum_{k=1}^K 8||\nabla_k||_\infty^2\eta_k + \sum_{k=1}^{K-1} \left(\frac{1}{\eta_{k+1}} - \frac{1}{\eta_{k}}\right) \KL(\bfx^\star||\bfx_{k+1}) \\
        &\lesssim \sqrt{\sum_{k=1}^K ||\nabla_k||_\infty^2} + \sup_{\bfx,\bfy\in\Delta_{n-1}}\KL(\bfx||\bfy)\sum_{k=1}^{K-1} \left(\frac{1}{\eta_{k+1}} - \frac{1}{\eta_{k}}\right) \\
        &\lesssim \sqrt{\sum_{k=1}^K ||\nabla_k||_\infty^2} + \frac{\sup_{\bfx,\bfy\in\Delta_{n-1}}\KL(\bfx||\bfy)}{\eta_K}\\
        &\lesssim \sup_{\bfx,\bfy\in\Delta_{n-1}}\KL(\bfx||\bfy)\sqrt{\sum_{k=1}^K ||\nabla_k||_\infty^2}
    \end{align*}}
    where the third inequality follows from that fact that $\sum_{k=1}^K y_k / \sqrt{\sum_{\tau=1}^k y_k} \leq 2\sqrt{\sum_{k=1}^K y_k}$, and the fourth inequality is due to the telescoping sum.
    \qed

\section{Parameterization and Covering Number}\label{appendix:param and cov}
In this subsection, we introduce parameterizations for $Q$-function estimates $\hat Q$, policies $\pi$, and value function estimates $\hat V$, which are useful for analyzing covering number~\citep{cassel2024warm, yu2026primaldual}. For the $Q$-function estimate, given $\beta\in \bbR$, $\bfw \in \bbR^d$, and $\bLambda \in \bbR^{d\times d}$, we define $\hat Q(\cdot,\cdot; \beta,\bfw, \bLambda)$ as
\[
    \hat Q(\cdot,\cdot; \beta,\bfw,\bLambda) = (\bphi(\cdot,\cdot)^\top \bfw - \beta ||\bphi(\cdot,\cdot)||_{\bLambda}) \cdot \sigma(-\beta_\phi||\bphi(\cdot,\cdot)||_{\bLambda} + \log K).
\]
Note that $\hat Q(\cdot,\cdot;\beta,\bfw,\bLambda)$ captures $\{\hat Q_{f,h}^k, \hat Q_{g,h}^k\}_{h\in[H],k\in[K]}$ in \Cref{alg:main}, since these can be written as for each $\ell=f,g$ and $(h,k)\in [H]\times[K]$,
\begin{align*}
    \hat Q_{\ell,h}^k(\cdot,\cdot) 
    &= \bar\bphi_h^m(\cdot,\cdot)^\top \left[\btheta_{\ell,h}^k + \hat\bmu_h^k \hat V_{\ell,h+1}^k\right] - \beta_p ||\bar\bphi_h^m(\cdot,\cdot)||_{(\bLambda_h^{k(m)})^{-1}} \\
    &= \left(\bphi(\cdot,\cdot)^\top\left[\btheta_{\ell,h}^k + \hat\bmu_h^k \hat V_{\ell,h+1}^k\right] - \beta_p||\bphi(\cdot,\cdot)||_{(\bLambda_h^{k(m)})^{-1}}\right)\cdot \sigma(-\beta_\phi||\bphi(\cdot,\cdot)||_{(\bLambda_h^{k(m)})^{-1}}+\log K).
\end{align*}
Given some nonnegative boundedness constants $C_\beta, C_\bfw, C_Q$, we define the function class $\calQ(C_\beta,C_\bfw, C_Q)$ as
\begin{equation*}
    \calQ(C_\beta, C_\bfw, C_Q) = \left\{\hat Q(\cdot,\cdot;\beta,\bfw,\bLambda): 
    \begin{array}{l}
    |\beta|\leq C_\beta,\ ||\bfw|| \leq C_\bfw,\ (2K)^{-1}\Id \preceq \bLambda \preceq \Id, \\
    ||\hat Q(\cdot,\cdot;\beta, \bfw, \bLambda)||_\infty \leq C_Q
    \end{array}
    \right\}.
\end{equation*}
Next, we define the parameterization for $\pi$, which is defined, for each $(s,a)\in\calS\times \calA$,
\[
    \pi(a|s;\beta,\bfw, \bLambda) = \frac{\exp(\hat Q(s,a;\beta,\bfw, \bLambda))}{\sum_{a'\in\calA} \exp(\hat Q(s,a';\beta,\bfw, \bLambda))}.
\]
By \eqref{eq:closed form}, we know that $\pi(\cdot|\cdot;\beta,\bfw,\bLambda)$ captures $\{\pi_h^{k+1}\}_{h\in[H],k\in[K]}$ in \Cref{alg:main}. Subsequently, we define the policy class $\Pi(C_\beta,C_\bfw, C_Q, C_Y)$ as
\[
    \Pi(C_\beta,C_\bfw, C_Q, C_Y) = \left\{\pi(\cdot|\cdot;\beta,\bfw,\bLambda): \begin{array}{l}
        |\beta|\leq (1+C_Y)C_\beta,\ ||\bfw||\leq (1+C_Y)C_\bfw, \\
        (2K)^{-1}\Id \preceq \bLambda \preceq \Id,\ \\
        ||\hat Q(\cdot,\cdot;\beta,\bfw,\bLambda)||_\infty \leq (1+C_Y)C_Q
    \end{array}
    \right\}.
\]
Finally, we define the function class of value function estimates $\calV(C_\beta, C_\bfw, C_Q, C_Y)$ as follows.
\[
    \calV(C_\beta,C_\bfw, C_Q, C_Y) = \left\{ \hat V(\cdot): 
        \hat V(\cdot) = \sum_{a\in\calA}\pi(a|\cdot)\hat Q(\cdot,a),\
        \begin{array}{l}
             \hat Q \in \calQ(C_\beta,C_\bfw, C_Q),\\
             \pi \in \Pi(KC_\beta,KC_\bfw, KC_Q, C_Y)
        \end{array}
    \right\}.
\]
\begin{lemma}[Lemma 10 in \citet{yu2026primaldual}] Let $1\leq \beta_\phi, C_\bfw, C_\beta$. For any $\hat Q(\cdot,\cdot;\beta_1, \bfw_1, \bLambda_1)$, $\hat Q(\cdot,\cdot;\beta_2, \bfw_2, \bLambda_2) \in \calQ(C_\beta,C_\bfw, C_Q)$, we have
\begin{align*}
    &\left\|\hat Q(\cdot,\cdot; \beta_1,\bfw_1,\bLambda_1) - \hat Q(\cdot,\cdot;\beta_2,\bfw_2,\bLambda_2)\right\|_\infty \\
    &\leq 4\sqrt{K}\beta_\phi\max\{C_\bfw, C_\beta\} \left\|(\beta_1,\bfw_1,\bLambda_1) - (\beta_2, \bfw_2, \bLambda_2)\right\|
\end{align*}
where $\left\|(\beta_1,\bfw_1,\bLambda_1) - (\beta_2, \bfw_2, \bLambda_2)\right\| = \sqrt{(\beta_1 - \beta_2)^2 + ||\bfw_1-\bfw_2||^2 + ||\bLambda_1 - \bLambda_2||_F^2}$.
\end{lemma}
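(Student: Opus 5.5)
We need to prove the Lipschitz bound of $\hat Q(\cdot,\cdot;\beta,\bfw,\bLambda)$ in its parameters. The plan is to fix $(s,a)$, write $x=\bphi(s,a)$ with $\|x\|\le 1$, and view
\[
q(\beta,\bfw,\bLambda) = \bigl(x^\top\bfw - \beta\|x\|_{\bLambda}\bigr)\,\sigma\bigl(-\beta_\phi\|x\|_{\bLambda}+\log K\bigr)
\]
as a function of the parameters. We then bound the difference by telescoping through intermediate parameters, changing one parameter at a time:
\[
q(\beta_1,\bfw_1,\bLambda_1)-q(\beta_2,\bfw_2,\bLambda_2) = [q(\beta_1,\bfw_1,\bLambda_1)-q(\beta_1,\bfw_1,\bLambda_2)] + [q(\beta_1,\bfw_1,\bLambda_2)-q(\beta_2,\bfw_2,\bLambda_2)].
\]
The second bracket is easy because $\bLambda_2$ is fixed and the sigmoid factor lies in $(0,1)$. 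It is at most $|x^\top(\bfw_1-\bfw_2)| + |\beta_1-\beta_2|\,\|x\|_{\bLambda_2}$. Since $\bLambda_2\preceq\Id$, we have $\|x\|_{\bLambda_2}\le 1$, so this term is at most $\|\bfw_1-\bfw_2\|+|\beta_1-\beta_2|$.

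For the first bracket, set $u_i=\|x\|_{\bLambda_i}$ and $g(u)=(x^\top\bfw_1-\beta_1 u)\,\sigma(-\beta_\phi u+\log K)$. The first step is to bound $|g'(u)|$:
\[
|g'(u)| \le |\beta_1|\,\sigma(\cdot) + |x^\top\bfw_1-\beta_1 u|\,\beta_\phi\,\sigma'(\cdot) \le C_\beta + (C_\bfw+C_\beta)\beta_\phi \le 3\beta_\phi\max\{C_\bfw,C_\beta\}.
\]
This uses $\sigma\le 1$, $\sigma'\le 1/4$, $u\le 1$, and $\beta_\phi\ge 1$. Note that the constraint $\|\hat Q\|_\infty\le C_Q$ is not needed here.

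The next step is to bound $|u_1-u_2|$ in terms of $\bLambda_1-\bLambda_2$. We write
\[
|u_1-u_2| = \frac{|x^\top(\bLambda_1-\bLambda_2)x|}{u_1+u_2}.
\]
The lower bound $\bLambda_i\succeq(2K)^{-1}\Id$ is what controls the denominator. If $x=0$ the difference vanishes. Otherwise $u_i\ge \|x\|/\sqrt{2K}$, so $u_1+u_2\ge 2\|x\|/\sqrt{2K}$. The numerator is at most $\|x\|^2\|\bLambda_1-\bLambda_2\|_F$. Combining these,
\[
|u_1-u_2| \le \sqrt{K/2}\,\|x\|\,\|\bLambda_1-\bLambda_2\|_F \le \sqrt{K}\,\|\bLambda_1-\bLambda_2\|_F.
\]
This is the only delicate step, because the square-root map is not Lipschitz near zero, and it is exactly where the $\sqrt{K}$ factor and the eigenvalue lower bound enter.

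By the mean value theorem the first bracket is then at most $3\sqrt{K}\beta_\phi\max\{C_\bfw,C_\beta\}\,\|\bLambda_1-\bLambda_2\|_F$. Adding the second bracket, whose coefficient $1$ is at most $\sqrt{K}\beta_\phi\max\{C_\bfw,C_\beta\}$ because all of $K,\beta_\phi,C_\bfw,C_\beta$ are at least $1$, gives a total bounded by
\[
\sqrt{K}\beta_\phi\max\{C_\bfw,C_\beta\}\bigl(3\|\Delta\bLambda\|_F + |\Delta\beta| + \|\Delta\bfw\|\bigr).
\]
Finally, Cauchy–Schwarz gives $3a+b+c\le\sqrt{11}\sqrt{a^2+b^2+c^2}\le 4\sqrt{a^2+b^2+c^2}$. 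Taking the supremum over $(s,a)$ yields the claimed bound.
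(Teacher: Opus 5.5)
Your proof is correct. The block-wise telescoping works, as does the derivative bound for $g$ on $[0,1]$. The estimate $|u_1-u_2|\le\sqrt{K}\|\bLambda_1-\bLambda_2\|_F$, obtained from $u_1-u_2=x^\top(\bLambda_1-\bLambda_2)x/(u_1+u_2)$ together with $u_i\ge\|x\|/\sqrt{2K}$, is right. The closing Cauchy--Schwarz step also goes through and in fact gives the constant $\sqrt{11}<4$.

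The paper does not prove this statement itself; it quotes it from \citet{yu2026primaldual}. So there is no in-paper route to compare with, and your argument serves as a self-contained proof.

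The key step is the one you identify. The lower bound $\bLambda\succeq(2K)^{-1}\Id$ keeps $\|x\|_{\bLambda}$ comparable to $\|x\|$, and this is what makes the square root Lipschitz in $\bLambda$ and produces the $\sqrt{K}$ factor. The cruder estimate $|\sqrt{a}-\sqrt{b}|\le\sqrt{|a-b|}$, as used for bonus terms in \citet{jin2020provably}, would only give a H\"older-$1/2$ dependence on $\bLambda$, not the stated Lipschitz form.

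You are also right that the constraint $\|\hat Q\|_\infty\le C_Q$ plays no role. Only the convex constraints $|\beta|\le C_\beta$, $\|\bfw\|\le C_\bfw$, and $(2K)^{-1}\Id\preceq\bLambda\preceq\Id$ are used. Consequently the intermediate point $(\beta_1,\bfw_1,\bLambda_2)$ in your telescoping need not belong to $\calQ$.
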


The following lemmas are modifications of Lemmas 11 and 12 in \citet{yu2026primaldual}. These are obtained by taking $n=0$ for their lemmas. Here, $n$ denotes the number of mixing steps that have been applied to the policy, where $n=0$ in our case, as we do not apply the mixing step. For further details on the mixing step, we refer the reader to \citet{yu2026primaldual}.
\begin{lemma}[Lemma 11 in \citet{yu2026primaldual}]
    Let $1\leq \beta_\phi, C_\bfw, C_\beta$, and $0\leq C_Y$. For any $\pi(\cdot|\cdot;\beta_1, \bfw_1, \bLambda_1)$, $\pi(\cdot|\cdot;\beta_2, \bfw_2, \bLambda_2) \in \Pi(C_\beta,C_\bfw, C_Q,C_Y)$, we have for any $s\in\calS$,
    \begin{align*}
        &||\pi(\cdot|s;\beta_1,\bfw_1,\bLambda_1)-\pi(\cdot|s;\beta_2,\bfw_2,\bLambda_2)||_1\\
        &\leq 32\sqrt{K}\beta_\phi(1+C_Y)\max\{C_\bfw,C_\beta\}||(\beta_1,\bfw_1,\bLambda_1)-(\beta_2,\bfw_2,\bLambda_2)||
    \end{align*}
\end{lemma}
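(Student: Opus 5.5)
The plan is to write the policy as a softmax of a $Q$-function in the class $\calQ$ with enlarged constants, and then combine the Lipschitz bound for $\hat Q$ (Lemma 10) with a Lipschitz bound for the softmax map. By definition, $\pi(\cdot|s;\beta,\bfw,\bLambda)$ is the softmax of the vector $\hat Q(s,\cdot;\beta,\bfw,\bLambda)\in\bbR^{|\calA|}$. Membership in $\Pi(C_\beta,C_\bfw,C_Q,C_Y)$ means exactly that $\hat Q(\cdot,\cdot;\beta_i,\bfw_i,\bLambda_i)\in\calQ\big((1+C_Y)C_\beta,(1+C_Y)C_\bfw,(1+C_Y)C_Q\big)$ for $i=1,2$.

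\textbf{Step 1 ($Q$-level bound).} I would apply Lemma 10 to the class $\calQ\big((1+C_Y)C_\beta,(1+C_Y)C_\bfw,(1+C_Y)C_Q\big)$. Its hypotheses hold because $C_Y\ge 0$ gives $(1+C_Y)C_\bfw\ge C_\bfw\ge 1$ and $(1+C_Y)C_\beta\ge 1$. Since $\max\{(1+C_Y)C_\bfw,(1+C_Y)C_\beta\}=(1+C_Y)\max\{C_\bfw,C_\beta\}$, this yields, uniformly in $(s,a)$,
\[
\big|\hat Q(s,a;\beta_1,\bfw_1,\bLambda_1)-\hat Q(s,a;\beta_2,\bfw_2,\bLambda_2)\big|\le 4\sqrt{K}\beta_\phi(1+C_Y)\max\{C_\bfw,C_\beta\}\,\|(\beta_1,\bfw_1,\bLambda_1)-(\beta_2,\bfw_2,\bLambda_2)\|.
\]

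\textbf{Step 2 (softmax Lipschitzness).} I would show that for $x,y\in\bbR^{n}$, $\|\mathrm{softmax}(x)-\mathrm{softmax}(y)\|_1\le 2\|x-y\|_\infty$. Set $z_t=y+t(x-y)$ and $p_t=\mathrm{softmax}(z_t)$. The derivative along the segment is $\frac{d}{dt}p_t=(\mathrm{diag}(p_t)-p_tp_t^\top)(x-y)$. Its $i$-th entry is $p_{t,i}\big((x-y)_i-p_t^\top(x-y)\big)$, which in absolute value is at most $p_{t,i}\cdot 2\|x-y\|_\infty$. Summing over $i$ gives an $\ell_1$-norm of at most $2\|x-y\|_\infty$, and integrating over $t\in[0,1]$ gives the claim.

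\textbf{Step 3 (combine).} Applying Step 2 at each fixed $s$ with $x=\hat Q(s,\cdot;\beta_1,\bfw_1,\bLambda_1)$ and $y=\hat Q(s,\cdot;\beta_2,\bfw_2,\bLambda_2)$, and then Step 1, gives the bound with constant $8$. This is at most the stated $32$, so the lemma follows with slack.

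There is no real obstacle here; the only points to check are that the enlarged constants still satisfy Lemma 10's hypothesis $\ge 1$ and that the $\ell_1$ softmax Lipschitz constant is measured against the $\ell_\infty$ norm of the logit difference. The slack to $32$ could also absorb a cruder softmax estimate, for instance one that bounds the change in the normalizer separately.
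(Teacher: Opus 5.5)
Your argument is correct, and it is more self-contained than what the paper does. The paper gives no derivation of this statement. It imports Lemma 11 of \citet{yu2026primaldual}, which covers policies that may have undergone $n$ policy-mixing steps, so that in general the policy there is not a pure softmax of a single $\hat Q$. The paper then simply specializes that lemma to $n=0$.

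You instead use the fact that, without mixing, $\Pi(C_\beta,C_\bfw,C_Q,C_Y)$ is literally the softmax image of parameter tuples satisfying the constraints of $\calQ((1+C_Y)C_\beta,(1+C_Y)C_\bfw,(1+C_Y)C_Q)$. The $\hat Q$-Lipschitz lemma (Lemma 10 of \citet{yu2026primaldual}) then applies verbatim with the enlarged radii; your check that these radii remain at least $1$ is exactly what its hypotheses require. The only extra ingredient is the Jacobian bound $\|(\mathrm{diag}(p)-pp^\top)v\|_1\le 2\|v\|_\infty$, integrated along the segment.

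Your route gives the constant $8$ instead of $32$ and shows exactly where the factor $(1+C_Y)$ comes from. The citation route buys generality to mixed policies, which mattered in the earlier work but not here. The sharper constant is harmless but immaterial downstream: it only enters the covering-number bound of \Cref{lem:covering number} through $C_1$, which sits inside a logarithm.
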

\begin{lemma}[Lemma 12 in \citet{yu2026primaldual}]\label{lem:covering number}
    Let $1\leq \beta_\phi, C_\bfw, C_\beta, C_Q$, and $0\leq C_Y$. Given $\epsilon > 0$, let $\calN_\epsilon(\calV(C_\beta,C_\bfw, C_Q,C_Y))$ denote the $\epsilon$-covering number of $\calV(C_\beta,C_\bfw, C_Q,C_Y)$ with respect to $\ell_\infty$-norm. Then we have
    \[
        \log \calN_\epsilon(\calV(C_\beta,C_\bfw, C_Q, C_Y)) \leq 12d^2 \log\left(1 + \frac{2C_1C_2}{\epsilon}\right)
    \]
    where $C_1 = 33\sqrt{K^3}\beta_\phi \max\{C_\bfw, C_\beta\}C_Q(1+C_Y)$, $C_2 = 2(1+C_Y)K(C_\beta+C_\bfw)+2\sqrt{d}$.
\end{lemma}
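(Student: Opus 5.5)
The plan is to reduce the covering of $\calV(C_\beta,C_\bfw,C_Q,C_Y)$ to a covering of a Euclidean parameter set, using the two Lipschitz lemmas stated just above (Lemmas 10 and 11 of \citet{yu2026primaldual}).

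\textbf{Parameterization.} Every $\hat V\in\calV(C_\beta,C_\bfw,C_Q,C_Y)$ has the form $\hat V(\cdot)=\sum_a \pi(a|\cdot;\beta',\bfw',\bLambda')\hat Q(\cdot,a;\beta,\bfw,\bLambda)$, where
\begin{itemize}
\item $(\beta,\bfw,\bLambda)$ ranges over the admissible parameters of $\calQ(C_\beta,C_\bfw,C_Q)$;
\item $(\beta',\bfw',\bLambda')$ ranges over those of $\Pi(KC_\beta,KC_\bfw,KC_Q,C_Y)$.
\end{itemize}
So $\hat V$ is indexed by a point of a subset $\Theta\subset\bbR^{p}$ with $p=2(1+d+d^2)\le 6d^2$, using $d\ge1$.

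\textbf{Lipschitz bound.} For two such value functions I would write
\begin{align*}
|\hat V_1(s)-\hat V_2(s)| \le \|\hat Q_1-\hat Q_2\|_\infty + \|\hat Q_2\|_\infty\,\|\pi_1(\cdot|s)-\pi_2(\cdot|s)\|_1 .
\end{align*}
Each term is then bounded by an earlier lemma.
\begin{itemize}
\item Lemma 10 gives $\|\hat Q_1-\hat Q_2\|_\infty\le 4\sqrt K\beta_\phi\max\{C_\bfw,C_\beta\}\|\theta_1-\theta_2\|$.
\item $\|\hat Q_2\|_\infty\le C_Q$ by membership in $\calQ(C_\beta,C_\bfw,C_Q)$.
\item Lemma 11, applied with the inflated constants $KC_\beta,KC_\bfw$, gives $\|\pi_1-\pi_2\|_1\le 32\sqrt K\beta_\phi(1+C_Y)K\max\{C_\bfw,C_\beta\}\|\theta_1'-\theta_2'\|$.
\end{itemize}
Since $C_Q,K\ge1$ and $C_Y\ge 0$, the sum of the two constants is at most $C_1=33\sqrt{K^3}\beta_\phi\max\{C_\bfw,C_\beta\}C_Q(1+C_Y)$. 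Hence the map $\Theta\to\calV$ is $C_1$-Lipschitz from the Euclidean norm on $\bbR^p$ to $\ell_\infty$.

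\textbf{Radius of the parameter set.} For each parameter block, $\|(\beta,\bfw,\bLambda)\|\le|\beta|+\|\bfw\|+\|\bLambda\|_F$. The constraint $\bLambda\preceq\Id$ gives $\|\bLambda\|_F\le\sqrt d$. The $\Pi$-parameters satisfy $|\beta'|\le(1+C_Y)KC_\beta$ and $\|\bfw'\|\le(1+C_Y)KC_\bfw$, which dominate the $\calQ$-parameters. Thus every block lies in a ball of radius at most $(1+C_Y)K(C_\beta+C_\bfw)+\sqrt d = C_2/2$.

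\textbf{Covering count.} I would take an internal $\epsilon/C_1$-net of $\Theta$, with centers chosen inside $\Theta$. This matters because Lemmas 10 and 11 apply only to members of the classes, including the constraints $\|\hat Q\|_\infty\le C_Q$ and $(2K)^{-1}\Id\preceq\bLambda\preceq\Id$. An internal net of a subset of a ball of radius $R$ in $\bbR^{q}$ has cardinality at most $(1+2R/\epsilon')^{q}$, via a maximal packing argument. Applying this to each block with $q=1+d+d^2\le 3d^2$, $R=C_2/2$ and $\epsilon'=\epsilon/C_1$ gives
\begin{align*}
\log\calN_\epsilon \le 2\cdot 3d^2\log\left(1+\frac{C_1C_2}{\epsilon}\right)\le 12d^2\log\left(1+\frac{2C_1C_2}{\epsilon}\right).
\end{align*}

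\textbf{Main obstacle.} The delicate step is the bookkeeping of the inflated constants. The $\Pi$-class inside $\calV$ uses $KC_\beta,KC_\bfw,KC_Q$, and Lemma 11 multiplies by a further factor $1+C_Y$. Both factors must be tracked to land exactly on $\sqrt{K^3}$ in $C_1$ and on $K(1+C_Y)$ in $C_2$. A secondary point is ensuring the net stays inside the class, so that the Lipschitz lemmas legitimately apply to the centers.
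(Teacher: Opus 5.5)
Your proposal is correct and follows the standard argument behind the cited Lemma 12: Lipschitz continuity in the parameters via Lemmas 10 and 11, then a volumetric count of internal nets on the two parameter blocks (the paper gives no proof of its own and imports the lemma from \citet{yu2026primaldual} with zero mixing steps), and keeping the net centers inside the constrained parameter set is the right way to make those two lemmas applicable. The only slip is a constant: $4\sqrt{K}+32K^{3/2}(1+C_Y)C_Q\le 33K^{3/2}(1+C_Y)C_Q$ needs $K(1+C_Y)C_Q\ge 4$, so in general your blockwise Lipschitz constant is only bounded by $\tfrac{36}{33}C_1\le 2C_1$; taking the blockwise nets at resolution $\epsilon/(2C_1)$ instead gives $\log\calN_\epsilon\le 6d^2\log(1+2C_1C_2/\epsilon)$, which is still within the stated bound.
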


\section{Good Event}\label{appendix:good event}
In this section, we introduce the good event $\calE_g$. The definition of $\calE_g$ is similar to that of \citet{cassel2024warm, yu2026primaldual}. The main differences stem from its analysis, since our algorithm considers the adaptive FTRL for $\pi_h^{k+1}$, whereas those works mainly focus on the OMD with a fixed step size, which is equivalent to the non-adaptive FTRL. Moreover, we also have to consider the sophisticated update rule for $Y_{k+1}$.

Now, we introduce the formal definition of $\calE_g$ as follows. We define $\calE_g = \calE_1 \cap \calE_2$, where 
\begin{align*}
    \calE_1 &=\left\{\forall(k,h)\in [K]\times [H]: 
    \begin{array}{c}
         ||(\bmu_h - \hat\bmu_h^k)\hat V_{f,h+1}^k||_{\bLambda_h^k} \leq \beta_p, \ ||\hat Q_{f,h}^k||_\infty \leq \beta_Q, \\
         ||(\bmu_h - \hat\bmu_h^k)\hat V_{g,h+1}^k||_{\bLambda_h^k} \leq \beta_p, \ ||\hat Q_{g,h}^k||_\infty \leq \beta_Q
    \end{array}
    \right\},
    \\
    \calE_2 &=\left\{\sum_{\tau=1}^k \bbE_{\bbP,\pi^\tau}[W_\tau] \leq 2\sum_{\tau=1}^k W_\tau + 4H(3\beta_p + 8\beta_Q\beta_\phi^2)\log\frac{4K^2}{\delta}, \quad \forall k \in [K]\right\}
\end{align*}
Here, we define $\beta_p, \beta_Q, \beta_\phi,$ and $ W_k$ as follows:
\begin{align*}
    &\beta_p = 187dH\sqrt{\log(2HK/\delta)}, \\
    &\beta_Q = 2H, \\
    & \beta_\phi = 4\beta_p \log K, \\
    & W_\tau = \sum_{h=1}^H \left(3\beta_p||\bphi(s_h^\tau,a_h^\tau)||_{(\bLambda_h^\tau)^{-1}} + 8\beta_Q\beta_\phi^2||\bphi(s_h^\tau,a_h^\tau)||_{(\bLambda_h^\tau)^{-1}}^2\right).
\end{align*}

Before proving that $\calE_g$ holds with high probability, we first define the proxy good event $\bar\calE_g = \bar\calE_1 \cap\calE_2$, where
\begin{align*}
    \bar\calE_1 &=\left\{\forall (k,h,\hat V) \in [K]\times[H]\times \calV(\beta_p, \sqrt{d}+K\beta_Q, \beta_Q, K^8): ||(\bmu_h - \hat \bmu_h^k)\hat V||_{\bLambda_h^k} \leq \beta_p\right\}.
\end{align*}
We first prove that the proxy good event $\bar\calE_g$ holds with high probability. 
\begin{lemma}\label{lem:proxy good event}
    Let $1\leq \beta_\phi \leq K$. Then $\Pr(\bar\calE_g) \geq 1-\delta$ for any $\delta\in (0,1)$.
\end{lemma}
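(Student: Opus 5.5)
The plan is to bound $\Pr(\bar\calE_1^c)$ and $\Pr(\calE_2^c)$ by $\delta/2$ each and conclude with a union bound. $\calE_2$ is a statement about a fixed sequence of random variables, so it needs no covering argument. $\bar\calE_1$ needs a uniform self-normalized concentration over the class $\calV := \calV(\beta_p,\sqrt d+K\beta_Q,\beta_Q,K^8)$, handled through its $\epsilon$-covering number from \Cref{lem:covering number}.

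\textbf{Step 1 ($\bar\calE_1$).} Fix $h$ and write $\bmu_h \hat V - \hat\bmu_h^k\hat V$ using the ridge form of $\hat\bmu_h^k$:
\[
(\bLambda_h^k)^{-1}\sum_{\tau<k}\bphi(s_h^\tau,a_h^\tau)\bigl(\bbP_h\hat V(s_h^\tau,a_h^\tau)-\hat V(s_{h+1}^\tau)\bigr) + (\bLambda_h^k)^{-1}\bmu_h\hat V .
\]
The regularization term contributes at most $\|\bmu_h\hat V\|\le \sqrt d\,\|\hat V\|_\infty$ in the $\bLambda_h^k$-norm.

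\begin{itemize}
\item Elements of $\calV$ are bounded by the $Q$-bound, $\|\hat V\|_\infty\le \beta_Q=2H$, so this term is $O(\sqrt d H)$.
\item For the martingale term, apply the uniform self-normalized bound over a function class, i.e.\ Lemma D.4 of Jin et al.\ (2020) as used in \citet{cassel2024warm}. With probability $1-\delta/(2H)$, for all $k$ and all $\hat V\in\calV$,
\[
\Bigl\|\sum_{\tau<k}\bphi_\tau\bigl(\bbP_h\hat V-\hat V(s_{h+1}^\tau)\bigr)\Bigr\|_{(\bLambda_h^k)^{-1}}^2 \le 4H^2\Bigl(\tfrac d2\log(K+1)+\log\tfrac{\calN_\epsilon}{\delta/2H}\Bigr)+8K^2\epsilon^2 .
\]
\item Take $\epsilon=1/K$ (or $H/K$). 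The theorem's requirement $\beta_\phi\le K$ is what makes $\beta_\phi$ enter only logarithmically.
\item \Cref{lem:covering number} with $C_Y=K^8$ gives $\log\calN_\epsilon\le 12d^2\log(1+2C_1C_2K)$. Here $C_1,C_2$ are polynomial in $K,d,H$ (with $\beta_\phi\le K$ and $\beta_p$ polylog), so $\log\calN_\epsilon=O(d^2\log(dHK/\delta))$.
\end{itemize}

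The result is a bound of order $dH\sqrt{\log(dHK/\delta)}$, which is at most $\beta_p=187dH\sqrt{\log(2HK/\delta)}$ after tracking constants; the log of polynomial factors in $d$ can be absorbed or handled by adjusting constants. A union bound over $h\in[H]$ finishes this step.

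\textbf{Step 2 ($\calE_2$).} The variables $W_\tau$ are nonnegative and bounded by $W_{\max}:=H(3\beta_p+8\beta_Q\beta_\phi^2)$, since $\|\bphi\|_{(\bLambda_h^\tau)^{-1}}\le 1$. Here $\bbE_{\bbP,\pi^\tau}[W_\tau]$ is the conditional expectation given $\calF_{\tau-1,H}$, because $\pi^\tau$ and $\bLambda_h^\tau$ are determined before episode $\tau$.

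Apply the standard multiplicative (Freedman-type) concentration for bounded nonnegative martingale differences, e.g.\ Lemma E.2 of \citet{cassel2024warm}. It gives, with probability $1-\delta'$, $\sum\bbE[W_\tau]\le 2\sum W_\tau+4W_{\max}\log(1/\delta')$. Take $\delta'=\delta/(4K^2)$ per $k$, or use its anytime form, and union bound over $k\in[K]$; this matches the constant $4H(3\beta_p+8\beta_Q\beta_\phi^2)\log(4K^2/\delta)$.

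\textbf{Main obstacle.} The delicate point is Step 1: checking that each realized $\hat V_{\ell,h+1}^k$ lies in $\calV$ with the stated constants is not needed here, since $\bar\calE_1$ quantifies over $\calV$ directly. What is needed is that the covering bound stays polynomial despite $C_Y=K^8$. The clipping keeps this a $\log K$ factor rather than a $K$ factor, and one must verify that the resulting numeric constant fits under $187$.
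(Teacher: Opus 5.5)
Your proposal is correct and follows the paper's proof. It feeds a covering bound for $\calV(\beta_p,\sqrt{d}+K\beta_Q,\beta_Q,K^8)$ from \Cref{lem:covering number} into a uniform self-normalized bound for $\bar\calE_1$, uses the multiplicative concentration of \Cref{lem:rosen d4} with a union bound over $k$ for $\calE_2$, and finishes with a union bound; the only difference is that you unpack \Cref{lem:cassel 22} into the ridge-bias term $\sqrt{d}\,\|\hat V\|_\infty$ plus Jin et al.'s self-normalized lemma rather than citing it.

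Two small corrections. Since $\|\hat V\|_\infty\le\beta_Q=2H$, the martingale term carries $4\beta_Q^2=16H^2$ rather than $4H^2$; the total still fits under $187\,dH\sqrt{\log(2HK/\delta)}$. Also, the stray $\log d$ cannot be handled by adjusting constants, because $\beta_p$ is fixed; it is absorbed because $\beta_\phi\le K$ gives $\sqrt{d}\le\beta_p\le\beta_\phi\le K$, which is the real role of that hypothesis.
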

\begin{proof}
    We closely follow the proof of Lemma 14 in \citet{yu2026primaldual}. We first prove that $\Pr(\bar\calE_1)\geq 1-\delta/2$ and $\Pr(\calE_2)\geq 1-\delta/2$ separately, and then apply a union bound to show $\Pr(\bar\calE_g)\geq 1-\delta$. 
    
    To show that $\Pr(\bar\calE_1)\geq 1-\delta/2$, we bound the $\epsilon$-covering number of the value function class $\calV(\beta_p, \sqrt{d} + K\beta_Q, \beta_Q, K^8)$ for some $\epsilon > 0$, where $\epsilon$ will be specified later. Note that $1\leq \beta_\phi,\ \sqrt{d} + K\beta_Q,\ \beta_p$, and $0\leq K^8$, meaning that the conditions of \Cref{lem:covering number} are satisfied. Then, since we have $||\hat V||_\infty \leq \beta_Q$ for any $\hat V \in \calV(\beta_p, \sqrt{d} + K\beta_Q, \beta_Q, K^8)$, by \Cref{lem:covering number}, we have
    \begin{align*}
        \log \calN_\epsilon\left(\calV(\beta_p, \sqrt{d} + K\beta_Q, \beta_Q, K^8)\right) \leq 12 d^2 \log \left(1+\frac{2C_1C_2}{\epsilon}\right).
    \end{align*}
    Here, $C_1,C_2$ can be bounded as follows. Since $\sqrt{d}\leq\beta_p \leq \beta_\phi \leq K$ and $\beta_Q = 2H$, we have
    \begin{align*}
        C_1 &= 33\sqrt{K^3}\beta_\phi\max\{\sqrt{d} + K\beta_Q, \beta_p\}\beta_Q(1+K^8)\leq 528H^2K^{11.5},\\
        C_2 &= 2(1+K^8)K(\beta_p + \sqrt{d} + K\beta_Q) + 2\sqrt{d} \leq 22\sqrt{d}HK^{10}.
    \end{align*}
    Then it follows that
    \begin{align*}
        \log \calN_\epsilon\left(\calV(\beta_p, \sqrt{d} + K\beta_Q, \beta_Q, K^8)\right) 
        &\leq 12d^2\log\left(1 + \frac{23232\sqrt{d}H^3K^{21.5}}{\epsilon}\right)\\
        &= 12d^2\log\left(1 + 23232H^2K^{22.5}\right) \\
        &\leq 270d^2\log(2HK).
    \end{align*}
    where the equality follows by taking $\epsilon = \beta_Q\sqrt{d}/(2K) =H\sqrt{d}/K$. Then by \Cref{lem:cassel 22}, we have with probability at least $1-\delta/2$, for all $h\in [H], k\in [K], \hat V \in \calV(\beta_p, \sqrt{d} + K\beta_Q, \beta_Q, K^8)$,
    \begin{align*}
        ||(\bmu_h - \hat\bmu_h^k)\hat V||_{\bLambda_h^k} 
        &\leq 4\beta_Q \sqrt{d\log(K+1) + 2\log \calN_\epsilon\left(\calV(\beta_p, \sqrt{d} + K\beta_Q, \beta_Q, K^8)\right) + 2\log(2H^2/\delta)}\\
        &\leq 4\beta_Q\sqrt{d\log(K+1) + 540d^2\log(2HK) + 2\log(2H^2/\delta)}\\
        &\leq 187dH\sqrt{\log(2HK/\delta)}.
    \end{align*}
    Since we set $\beta_p = 187dH\sqrt{\log(2HK/\delta)}$, this implies that $\Pr(\bar\calE_1)\geq 1-\delta/2$.

    Next, we prove that $\calE_2$ holds with probability at least $1-\delta/2$. Note that for any $(s,a)\in\calS\times\calA$,
    \begin{align*}
        \sum_{h=1}^H\left(3\beta_p||\bphi(s,a)||_{(\bLambda_h^\tau)^{-1}} + 8\beta_Q\beta_\phi^2||\bphi(s,a)||_{(\bLambda_h^\tau)^{-1}}^2\right) \leq H(3\beta_p + 8\beta_Q\beta_\phi^2).
    \end{align*}
    This implies that $W_\tau \leq H(3\beta_p + 8\beta_Q\beta_\phi^2)$ almost surely. By the algorithm, the trajectory $\{(s_h^\tau,a_h^\tau)\}_{h\in[H]}$ are generated under $\bbP$ and $\pi^\tau$ for all $\tau\in [k]$. Moreover, since $\pi^\tau$ is $\calF_{\tau-1,H}$-measurable, $\{W_\tau\}_{\tau\geq1}$ is adapted to the filtration $\{\calF_{\tau,H}\}_{\tau\geq 1}$. Then by \Cref{lem:rosen d4} with probability at least $1-\delta/(2K)$, we have
    \begin{align*}
        \sum_{\tau=1}^k \bbE_{\bbP,\pi^\tau}[W_\tau] 
        &= \sum_{\tau=1}^k \bbE\left[\sum_{h=1}^H \left(3\beta_p||\bphi(s_h^\tau,a_h^\tau)||_{(\bLambda_h^\tau)^{-1}} + 8\beta_Q\beta_\phi^2||\bphi(s_h^\tau,a_h^\tau)||_{(\bLambda_h^\tau)^{-1}}^2\right)\mid \calF_{\tau-1,H}\right]\\
        &\leq 2\sum_{\tau=1}^k W_\tau + 4H(3\beta_p+8\beta_Q\beta_\phi^2)\log\frac{4K^2}{\delta}.
    \end{align*}
    By taking a union bound over $k$, this implies that $\Pr(\calE_2) \geq 1-\delta/2$. Moreover, since $\Pr(\bar\calE_1)\geq 1-\delta/2$ and $\Pr(\calE_2)\geq 1-\delta/2$, by a union bound, we have $\Pr(\bar\calE_g) \geq 1-\delta$, which concludes the proof.
\end{proof}

\begin{lemma}\label{lem:proxy good event 2}
    Suppose that $\bar\calE_g$ holds. If $\pi_h^k \in \Pi(K\beta_p, K\sqrt{d}+K^2\beta_Q, K\beta_{Q,h}, K^8)$ for all $h\in [H]$, then we have $\hat Q_{f,h}^k,\hat Q_{g,h}^k \in \calQ(\beta_p, \sqrt{d} + K\beta_Q, \beta_{Q,h})$ and $\hat V_{f,h}^k, \hat V_{g,h}^k \in \calV(\beta_p, \sqrt{d} + K\beta_Q, \beta_{Q,h}, K^8)$ for all $h \in [H+1]$, where $\beta_{Q,h} = 2(H-h+1)$.
\end{lemma}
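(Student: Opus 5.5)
We proceed by backward induction on $h$, from $h=H+1$ down to $h=1$, for a fixed episode $k$ with epoch $m$ such that $k\in K(m)$. The base case $h=H+1$ is immediate: $\hat Q_{\ell,H+1}^k=0$ and $\hat V_{\ell,H+1}^k=0$. The zero function lies in $\calQ(\beta_p,\sqrt{d}+K\beta_Q,0)$ (take $\beta=0$, $\bfw=0$, $\bLambda=\Id$). It also lies in $\calV(\cdot)$ with any admissible policy, for instance the uniform policy, which is $\pi(\cdot|\cdot;0,0,\Id)$. Here $\beta_{Q,H+1}=0$.

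For the inductive step, assume $\hat V_{\ell,h+1}^k\in\calV(\beta_p,\sqrt{d}+K\beta_Q,\beta_{Q,h+1},K^8)$. Since $\beta_{Q,h+1}\le\beta_Q$, this class is contained in the class used in $\bar\calE_1$. On $\bar\calE_1$ we then get $\|(\bmu_h-\hat\bmu_h^k)\hat V_{\ell,h+1}^k\|_{\bLambda_h^k}\le\beta_p$.

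We write $\hat Q_{\ell,h}^k=\hat Q(\cdot,\cdot;\beta_p,\bfw,\bLambda)$ with $\bfw=\btheta_{\ell,h}^k+\hat\bmu_h^k\hat V_{\ell,h+1}^k$ and $\bLambda=(\bLambda_h^{k(m)})^{-1}$, which satisfies $(2K)^{-1}\Id\preceq\bLambda\preceq\Id$ because $\Id\preceq\bLambda_h^{k(m)}\preceq (1+K)\Id$. To bound $\|\bfw\|$, use $\|\btheta\|\le\sqrt{d}$ and the standard ridge bound $\|\hat\bmu_h^k\hat V\|\le \|\hat V\|_\infty\sqrt{k}\cdot\sqrt{\text{stuff}}\le K\beta_Q$. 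This follows from $\|(\bLambda_h^k)^{-1}\sum_\tau\bphi_\tau v_\tau\|\le \beta_Q\sum_\tau\|\bphi_\tau\|_{(\bLambda_h^k)^{-1}}\cdot\|(\bLambda_h^k)^{-1/2}\|\le\beta_Q K$, or more crudely from $\|\bphi\|\le1$ and $\bLambda\succeq\Id$.

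The main obstacle is the sup-norm bound $\|\hat Q_{\ell,h}^k\|_\infty\le\beta_{Q,h}=2(H-h+1)$, which must hold without explicit clipping. For the upper direction, write $\bar\bphi^\top\hat\bmu\hat V=\bar\bphi^\top\bmu\hat V+\bar\bphi^\top(\hat\bmu-\bmu)\hat V$. Bound the first term by $\sigma(\cdot)\|\hat V_{h+1}\|_\infty\le\beta_{Q,h+1}$ using $\sum_{s'}\bbP\le1$, and bound $\bar\bphi^\top\btheta\le1$. The error term is at most $\sigma(\cdot)\|\bphi\|_{\bLambda^{-1}}\beta_p$, where the covariance mismatch between $\bLambda_h^k$ and $\bLambda_h^{k(m)}$ is handled by the determinant doubling rule, giving $\|x\|_{(\bLambda_h^k)^{-1}}\le\sqrt2\|x\|_{(\bLambda_h^{k(m)})^{-1}}$. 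This error is partially cancelled by the bonus $-\beta_p\|\bar\bphi\|$. The leftover is at most $c\,\beta_p\|\bphi\|\sigma(-\beta_\phi\|\bphi\|+\log K)$, which is $\le 1$ by the choice $\beta_\phi=4\beta_p\log K$: either $\|\bphi\|$ is small, or the sigmoid is $\le e^{-\beta_\phi\|\bphi\|}K$, which is tiny. Altogether $\hat Q\le 2+\beta_{Q,h+1}=\beta_{Q,h}$.

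For the lower direction, the bonus is at most $\beta_p\|\bphi\|\sigma(\cdot)$, which by the same sigmoid argument is $\le 1$. The remaining terms are $\ge-\beta_{Q,h+1}-1$, so $\hat Q\ge-\beta_{Q,h}$. Finally, $\hat V_{\ell,h}^k=\sum_a\pi_h^k(a|\cdot)\hat Q_{\ell,h}^k(\cdot,a)$ with $\hat Q_{\ell,h}^k\in\calQ(\beta_p,\sqrt{d}+K\beta_Q,\beta_{Q,h})$. The hypothesis $\pi_h^k\in\Pi(K\beta_p,K\sqrt d+K^2\beta_Q,K\beta_{Q,h},K^8)$ is exactly the policy class required by the definition of $\calV$, so $\hat V_{\ell,h}^k\in\calV(\beta_p,\sqrt d+K\beta_Q,\beta_{Q,h},K^8)$. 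This closes the induction.
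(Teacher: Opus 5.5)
Your proof is correct and follows essentially the same route as the paper. It uses backward induction on $h$, the Cauchy--Schwarz split of $\hat Q_{\ell,h}^k$ with the $\bar\calE_1$ concentration bound, the sigmoid-contraction estimate $z\,\sigma(-\beta_\phi z+\log K)\le 2\log K/\beta_\phi$ (the paper cites \Cref{lem:cassel 18}) to make the bonus and error terms at most $1$, and the crude bound $\|(\bLambda_h^k)^{-1}\|\le 1$ for $\|\bfw\|$. One small correction: the comparison $\|x\|_{(\bLambda_h^k)^{-1}}\le\|x\|_{(\bLambda_h^{k(m)})^{-1}}$ that you need holds with constant $1$ directly from $\bLambda_h^{k(m)}\preceq\bLambda_h^k$, as in the paper, whereas the determinant-doubling rule gives the reverse inequality $\|x\|_{(\bLambda_h^{k(m)})^{-1}}\le\sqrt2\,\|x\|_{(\bLambda_h^k)^{-1}}$, so your conclusion stands but the cited justification is the wrong one.
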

\begin{proof}
    We closely follow the proof of Lemma 15 in \citet{yu2026primaldual}. We prove the lemma by backward induction on $h = H+1, \ldots, 1$. For the base case, since \Cref{alg:main} sets $\hat Q_{f,H+1}^k(\cdot,\cdot),\hat Q_{g,H+1}^k(\cdot,\cdot) = 0$ and $\hat V_{f,H+1}^k(\cdot), \hat V_{g,H+1}^k(\cdot)=0$, it is trivial that $\hat Q_{f,H+1}^k,\hat Q_{g,H+1}^k \in \calQ(\beta_p, \sqrt{d} + K\beta_Q, \beta_{Q,H+1})$ and $\hat V_{f,H+1}^k, \hat V_{g,H+1}^k \in \calV(\beta_p, \sqrt{d} + K\beta_Q, \beta_{Q,H+1}, K^8)$. 
    
    As an induction hypothesis, we assume that $\hat Q_{f,h+1}^k,\hat Q_{g,h+1}^k \in \calQ(\beta_p, \sqrt{d} + K\beta_Q, \beta_{Q,h+1})$ and $\hat V_{f,h+1}^k, \hat V_{g,h+1}^k \in \calV(\beta_p, \sqrt{d} + K\beta_Q, \beta_{Q,h+1}, K^8)$. Given this, we have to show that $\hat Q_{f,h}^k, \hat Q_{g,h}^k \in \calQ(\beta_p, \sqrt{d} + K\beta_Q, \beta_{Q,h})$ and $\hat V_{f,h}^k, \hat V_{g,h}^k \in \calV(\beta_p, \sqrt{d} + K\beta_Q, \beta_{Q,h}, K^8)$. First, for the case of $f$, we have
    \begin{align*}
        |\hat Q_{f,h}^k(s,a)| 
        &= \left|\bar\bphi_h^m(s,a)^\top (\btheta_{f,h}^k + \hat\bmu_h^k\hat V_{f,h+1}^k) - \beta_p\left\|\bar\bphi_h^m(s,a)\right\|_{(\bLambda_h^{k(m)})^{-1}}\right| \\
        &\leq \left|\bar\bphi_h^m(s,a)^\top (\btheta_{f,h}^k + \bmu_h\hat V_{f,h+1}^k)\right|+ \left|\bar\bphi_h^m(s,a)^\top \left(\hat\bmu_h^k\hat V_{f,h+1}^k-\bmu_h \hat V_{f,h+1}^k\right)\right|\\
        &\quad+\beta_p\left\|\bar\bphi_h^m(s,a)\right\|_{(\bLambda_h^{k(m)})^{-1}} \\
        &\leq \left|\bar\bphi_h^m(s,a)^\top (\btheta_{f,h}^k + \bmu_h\hat V_{f,h+1}^k)\right| + \left\|(\hat\bmu_h^k-\bmu_h)\hat V_{f,h+1}^k\right\|_{\bLambda_h^{k(m)}}\left\|\bar\bphi_h^m(s,a)\right\|_{(\bLambda_h^{k(m)})^{-1}} \\
        &\quad+ \beta_p\left\|\bar\bphi_h^m(s,a)\right\|_{(\bLambda_h^{k(m)})^{-1}}
    \end{align*}
    where the first inequality follows from the triangle inequality, and the second inequality follows from the Cauchy-Schwarz inequality. Note that $\hat V_{f,h+1} \in \calV(\beta_p, \sqrt{d} + K\beta_Q, \beta_{Q,h+1}, K^8)$ by the induction hypothesis. Furthermore, we have
    \[
        ||(\hat\bmu_h^k-\bmu_h)\hat V_{f,h+1}^k||_{\bLambda_h^{k(m)}} \leq ||(\hat\bmu_h^k-\bmu_h)\hat V_{f,h+1}^k||_{\bLambda_h^{k}} \leq \beta_p
    \]
    where the first inequality follows from the fact that $\bLambda_h^{k(m)}\preceq \bLambda_h^k$ and $\bar\calE_1$. It leads to
    \begin{align*}
        &|\hat Q_{f,h}^k(s,a)|\\
        &\leq \left|\bar\bphi_h^m(s,a)^\top (\btheta_{f,h}^k + \bmu_h\hat V_{f,h+1}^k)\right| + 2\beta_p\left\|\bar\bphi_h^m\right\|_{(\bLambda_h^{k(m)})^{-1}} \\
        &=\sigma\left(-\beta_\phi||\bphi(s,a)||_{(\bLambda_h^{k(m)})^{-1}}+\log K\right)\left|f_h^k(s,a) + \bbP_h \hat V_{f,h+1}^k\right| + 2\beta_p\left\|\bar\bphi_h^m\right\|_{(\bLambda_h^{k(m)})^{-1}} \\
        &\leq\left|f_h^k(s,a) + \bbP_h \hat V_{f,h+1}^k(s,a)\right| + 2\beta_p\left\|\bar\bphi_h^m\right\|_{(\bLambda_h^{k(m)})^{-1}} \\
        &\leq 1 + ||\hat V_{f,h+1}^k||_\infty + 2\beta_p\left\|\bar\bphi_h^m\right\|_{(\bLambda_h^{k(m)})^{-1}}
    \end{align*}
    where the second inequality follows from the fact that $\sigma(z) \leq 1$ for any $z\in\bbR$, the last inequality follows from the fact that $|\bbP_h \hat V_{f,h+1}^k| = |\sum_{s'}\bbP_h(s'|s,a)\hat V_{f,h+1}^k(s')| \leq ||\hat V_{f,h+1}^k||_\infty$. Moreover, we have
    \begin{align*}
        2\beta_p\left\|\bar\bphi_h^m\right\|_{(\bLambda_h^{k(m)})^{-1}} 
        &= 2\beta_p \sigma\left(-\beta_\phi||\bphi(s,a)||_{(\bLambda_h^{k(m)})^{-1}}+\log K\right)\left\|\bphi(s,a)\right\|_{(\bLambda_h^{k(m)})^{-1}}\\
        &\leq 2\beta_p\cdot\frac{2\log K}{\beta_\phi}\\
        &\leq 1
    \end{align*}
    where the first inequality follows from \Cref{lem:cassel 18}, and the second inequality is true due to $\beta_\phi = 4\beta_p\log K$. Thus, we have
    \begin{align}\label{eq:lem:proxy2:1}
        |\hat Q_{f,h}^k(s,a)| \leq 2 + ||\hat V_{f,h+1}^k||_\infty \leq 2(H-h+1)= \beta_{Q,h}.
    \end{align}
    Moreover, note that 
    \begin{align}\label{eq:lem:proxy2:2}
    \begin{aligned}
        \left\|\btheta_{f,h}^k + \hat\bmu_h^k \hat V_{f,h+1}^k\right\| 
        &\leq \sqrt{d} + \left\|\hat\bmu_h^k \hat V_{f,h+1}^k\right\|\\
        &=\sqrt{d} + \left\|(\bLambda_h^k)^{-1}\sum_{\tau=1}^{k-1}\bphi(s_h^\tau, a_h^\tau)\hat V_{f,h+1}^k(s_{h+1}^\tau) \right\|\\
        &\leq\sqrt{d} + \left\|(\bLambda_h^k)^{-1}\right\|\left\|\sum_{\tau=1}^{k-1}\bphi(s_h^\tau, a_h^\tau)\hat V_{f,h+1}^k(s_{h+1}^\tau) \right\|\\
        &\leq \sqrt{d} + K\beta_Q
    \end{aligned}
    \end{align}
    where the first inequality is due to $||\btheta_{f,h}^k||\leq \sqrt{d}$ followed from the linear CMDP assumptions, and the last inequality is true because $||(\bLambda_h^k)^{-1}|| \leq 1$ and $||\sum_{\tau=1}^{k-1}\bphi(s_h^\tau,a_h^\tau)\hat V_{f,h+1}^k(s_{h+1}^\tau)|| \leq K\beta_Q$. Hence, by \eqref{eq:lem:proxy2:1} and \eqref{eq:lem:proxy2:2}, together with $(2K)^{-1}\Id\preceq (\bLambda_h^{k(m)})^{-1}\preceq \Id$, we have $\hat Q_{f,h}^k \in \calQ(\beta_p, \sqrt{d}+K\beta_Q, \beta_{Q,h})$. Moreover, we have $\hat V_{f,h}^k \in \calV(\beta_p, \sqrt{d} + K\beta_Q, \beta_{Q,h}, K^8)$, since $\pi_h^k \in \Pi(K\beta_p, K\sqrt{d}+K^2\beta_Q, K\beta_{Q,h}, K^8)$ and $\hat V_{f,h}^k$ is given by the inner product of $\pi_h^k$ and $\hat Q_{f,h}^k$. For the case of $g$, by applying the same argument, we have $\hat Q_{g,h}^k \in \calQ(\beta_p, \sqrt{d}+K\beta_Q, \beta_{Q,h})$ and $\hat V_{g,h}^k \in \calV(\beta_p, \sqrt{d} + K\beta_Q, \beta_{Q,h}, K^8)$. This completes the induction.
\end{proof}

\begin{lemma}\label{lem:good event}
    Let $1 \leq \beta_\phi\leq K$, and let $H\geq 2$. Then $\Pr(\calE_g) \geq 1-\delta$ for any $\delta \in (0,1)$.
\end{lemma}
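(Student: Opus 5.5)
We show that $\bar\calE_g \subseteq \calE_g$; since $\Pr(\bar\calE_g)\geq 1-\delta$ by \Cref{lem:proxy good event}, this gives the claim. As $\calE_2$ is common to both events, it suffices to show that on $\bar\calE_g$ the conditions of $\calE_1$ hold for every $(k,h)$. The bound $\|(\bmu_h-\hat\bmu_h^k)\hat V_{\ell,h+1}^k\|_{\bLambda_h^k}\leq\beta_p$ follows from $\bar\calE_1$ once $\hat V_{\ell,h+1}^k\in\calV(\beta_p,\sqrt d+K\beta_Q,\beta_{Q,h+1},K^8)\subseteq\calV(\beta_p,\sqrt d+K\beta_Q,\beta_Q,K^8)$. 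The bound $\|\hat Q_{\ell,h}^k\|_\infty\leq\beta_{Q,h}\leq 2H=\beta_Q$ then comes from \Cref{lem:proxy good event 2}. So everything reduces to verifying that lemma's hypothesis, namely $\pi_h^k\in\Pi(K\beta_p,K\sqrt d+K^2\beta_Q,K\beta_{Q,h},K^8)$ for all $h$.

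I would establish this by forward induction on $k$. For $k=k(m)$, the start of an epoch, the policy is uniform, which is the member of the class with $\beta=0$ and $\bfw=0$. Now suppose that on $\bar\calE_g$ the hypothesis holds for all episodes $\tau\leq k$ in the current epoch. Then \Cref{lem:proxy good event 2} gives, for every such $\tau$, $\hat Q_{\ell,h}^\tau=\hat Q(\cdot,\cdot;\beta_\ell^\tau,\bfw_\ell^\tau,(\bLambda_h^{k(m)})^{-1})$ with $|\beta|\leq\beta_p$, $\|\bfw\|\leq\sqrt d+K\beta_Q$ and $\|\hat Q\|_\infty\leq\beta_{Q,h}$. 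All of these share the same $\bLambda=(\bLambda_h^{k(m)})^{-1}$ within the epoch, and this is exactly why the policy reset and the frozen contraction per epoch matter.

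By the closed form \eqref{eq:closed form}, $\pi_h^{k+1}\propto\exp(-\frac1{\alpha_k}\sum_{\tau=k(m)}^k(\hat Q_{f,h}^\tau+Y_\tau\hat Q_{g,h}^\tau))$. Because $\hat Q$ is linear in $(\beta,\bfw)$ for fixed $\bLambda$, this sum equals $\hat Q(\cdot,\cdot;\beta,\bfw,\bLambda)$ with
\[
\beta=-\frac{1}{\alpha_k}\sum_{\tau}(\beta_f^\tau+Y_\tau\beta_g^\tau),\qquad \bfw=-\frac{1}{\alpha_k}\sum_{\tau}(\bfw_f^\tau+Y_\tau\bfw_g^\tau).
\]

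The main step, which I expect to be the obstacle, is bounding these parameters independently of the size of the $Y_\tau$; this is the normalization point mentioned in the abstract. Since $\alpha_k=\frac{\beta_Q}{4}\sqrt{\sum_\tau(1+Y_\tau)^2}$, the Cauchy--Schwarz inequality gives
\[
\frac{\sum_\tau(1+Y_\tau)}{\alpha_k}\leq\frac{4\sqrt{K}}{\beta_Q}\leq K.
\]
Here I use $\beta_Q=2H\geq4$ (from $H\geq2$), and I would double-check the constant. Hence $|\beta|\leq K\beta_p\leq(1+K^8)K\beta_p$, $\|\bfw\|\leq K(\sqrt d+K\beta_Q)$, and $\|\hat Q\|_\infty\leq K\beta_{Q,h}$. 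These are well within the class bounds, which allow a factor $1+C_Y$ with $C_Y=K^8$; the clipping $Y_\tau\leq K^8$ gives extra slack but is not even needed for this step. This completes the induction step. A care point is the order of quantifiers: the induction must run jointly over $(k,h)$ and use the backward induction of \Cref{lem:proxy good event 2} inside each $k$, with only the policies at episode $k$ required. Finally, $\bLambda$ lies in $[(2K)^{-1}\Id,\Id]$ since $\Id\preceq\bLambda_h^{k(m)}\preceq(1+K)\Id$.
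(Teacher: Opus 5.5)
Your proof is correct and has the same skeleton as the paper's, with one genuinely different key step. The shared steps are: reduce $\bar\calE_g\subseteq\calE_g$ to showing $\pi_h^k\in\Pi(K\beta_p,K\sqrt d+K^2\beta_Q,K\beta_{Q,h},K^8)$; prove this by induction within each epoch from the uniform reset; use the closed form together with linearity in $(\beta,\bfw)$ under the frozen $(\bLambda_h^{k(m)})^{-1}$; then invoke \Cref{lem:proxy good event 2} and $\bar\calE_1$.

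The difference is how you bound the aggregated parameters. The paper uses only the crude facts $\alpha_{k-1}\geq 1$ (from $\beta_Q/4=H/2\geq1$ and $Y_\tau\geq0$) and the clipping $Y_\tau\leq K^8$. This gives $|\beta_h^k|\leq(1+K^8)K\beta_p$, $\|\bfw_h^k\|\leq(1+K^8)(K\sqrt d+K^2\beta_Q)$ and $\|\hat Q\|_\infty\leq(1+K^8)K\beta_{Q,h}$. That is exactly why the factor $1+C_Y$ with $C_Y=K^8$ is built into $\Pi$.

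You instead apply Cauchy--Schwarz against the specific adaptive normalization: $\sum_{\tau}(1+Y_\tau)\leq\sqrt{K}\sqrt{\sum_\tau(1+Y_\tau)^2}$, so $\sum_\tau(1+Y_\tau)/\alpha_k\leq 4\sqrt K/\beta_Q\leq\sqrt K$. This gives bounds independent of the dual weights, and your constant is fine. You also correctly use $Y_\tau\geq 0$ for the triangle-inequality step on $\bfw$.

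What each route buys:
\begin{itemize}
\item The paper's argument would work for any normalization with $\alpha_k\geq1$, but it genuinely needs the clipping.
\item Yours uses the exact form of $\alpha_k$. It shows the policies already lie in the class with $C_Y=0$, so the dual clipping plays no role in this lemma. This is the mechanism the abstract alludes to when it says the FTRL normalization bounds the policy parameters independently of the dual magnitudes, although the appendix proof does not use it.
\end{itemize}
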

\begin{proof}
    Suppose that $\bar\calE_g = \bar\calE_1 \cap \calE_2$ holds, which occurs with probability at least $1-\delta$ by \Cref{lem:proxy good event}. To show that $\Pr(\calE_g)\geq 1-\delta$, it is sufficient to show that $\bar\calE_g$ implies $\calE_1$. 
    
    Before showing this implication, we first show that $\pi_h^k \in \Pi(K\beta_p, K\sqrt{d}+K^2\beta_Q, K\beta_{Q,h}, K^8)$, where $\beta_{Q,h} = 2(H-h+1)$ for all $(h,k)\in [H] \times [K]$. We prove this by induction on $k\in K(m)$ for each fixed $m\in E$. For the base case, we have $\pi_h^{k(m)} \in \Pi(K\beta_p, K\sqrt{d}+K^2\beta_Q, K\beta_{Q,h}, K^8)$ for all $h\in [H]$, since we set $\pi_h^{k(m)} = \piunif$ for all $h$ at the beginning of each epoch $m$. 

    Then the induction can be written as follows. Given $k\in K(m)$, we assume that $\pi_h^{k'} \in \Pi(K\beta_p, K\sqrt{d}+K^2\beta_Q, K\beta_{Q,h}, K^8)$ for all $h\in [H]$ and $k'$ such that $k(m) \leq k' < k$. To complete the induction, we show that $\pi_h^{k} \in \Pi(K\beta_p, K\sqrt{d}+K^2\beta_Q, K\beta_{Q,h}, K^8)$.

    For all $k(m)\leq k'<k$, we observe that
    \begin{align*}
        &\hat Q_{\ell,h}^{k'}(\cdot,\cdot)\\
        &= \bar\bphi_h^m(\cdot,\cdot)^\top \left[ \btheta_{\ell,h}^{k'} + \hat\bmu_h^{k'} \hat V_{\ell,h+1}^{k'}\right] - \beta_p ||\bar\bphi_h^m(\cdot,\cdot)||_{(\bLambda_h^{k(m)})^{-1}}\\
        &=\left(\bphi(\cdot,\cdot)^\top\bfw_{\ell,h}^{k'} - \beta_p||\bphi(\cdot,\cdot)||_{(\bLambda_h^{k(m)})^{-1}}\right)\cdot\sigma\left(-\beta_\phi||\bphi(\cdot,\cdot)||_{(\bLambda_h^{k(m)})^{-1}}+\log K\right)
    \end{align*}
    where
    \begin{align*}
        \bfw_{\ell,h}^{k'} = \btheta_{\ell,h}^{k'} + \hat\bmu_h^{k'}\hat V_{\ell,h+1}^{k'}.
    \end{align*}
    Moreover, since $\pi_h^{k'} \in \Pi(K\beta_p, K\sqrt{d}+K^2\beta_Q, K\beta_{Q,h}, K^8)$ by the induction hypothesis, then by \Cref{lem:proxy good event 2}, we have $\hat Q_{f,h}^{k'},\hat Q_{g,h}^{k'} \in \calQ(\beta_p, \sqrt{d}+K\beta_Q, \beta_{Q,h})$ and $\hat V_{f,h}^{k'}, \hat V_{g,h}^{k'} \in \calV(\beta_p, \sqrt{d}+K\beta_Q, \beta_{Q,h}, K^8)$. This implies that for each $\ell \in \{f,g\}$,
    \begin{align}\label{eq:lem:good event:w Q bound}
        ||\bfw_{\ell,h}^{k'}|| \leq \sqrt{d} + K\beta_Q, \quad ||\hat Q_{\ell,h}^{k'}(\cdot,\cdot)||_\infty \leq \beta_{Q,h}.
    \end{align}
    
    Recall that
    \begin{align*}
        &\pi_h^{k}(\cdot|s) \\
        &\propto \exp\left(-\frac{1}{\alpha_{k-1}}\sum_{\tau=k(m)}^{k-1}\left(\hat Q_{f,h}^\tau(s,\cdot)+Y_\tau \hat Q_{g,h}^\tau(s,\cdot)\right)\right)\\
        &\propto \exp\left(\hat Q(s,\cdot; \beta_h^k, \bfw_h^k,(\bLambda_h^{k(m)})^{-1})\right)
    \end{align*}
    where
    \begin{align*}
        \beta_h^k = - \frac{1}{\alpha_{k-1}}\sum_{\tau=k(m)}^{k-1}(1+Y_\tau)\beta_p, \quad \bfw_h^k = -\frac{1}{\alpha_{k-1}}\sum_{\tau=k(m)}^{k-1}(\bfw_{f,h}^{\tau}+Y_\tau\bfw_{g,h}^{\tau}).
    \end{align*}
    To show that $\pi_h^{k}(\cdot|s)\propto \exp(\hat Q(s,\cdot; \beta_h^k, \bfw_h^k,(\bLambda_h^{k(m)})^{-1})) \in \Pi(K\beta_p, K\sqrt{d}+K^2\beta_Q, K\beta_{Q,h}, K^8)$, we observe that
    \begin{align*}
        |\beta_h^k| = \left|\sum_{\tau=k(m)}^{k-1}\frac{(1+Y_\tau)\beta_p}{\alpha_{k-1}}\right|\leq \left|\sum_{\tau=k(m)}^{k-1}(1+Y_\tau)\beta_p\right| \leq (1+K^8)K\beta_p
    \end{align*}
    where the first inequality follows from $\alpha_{k-1} = H/2\sqrt{\sum_{\tau=k(m)}^{k-1}(1+Y_\tau)^2} \geq 1$, as $H\geq 2$ and $Y_\tau\geq 0$, and the second inequality is due to $Y_\tau \leq K^8$. Moreover,
    \begin{align*}
        ||\bfw_h^k||
        &=\left\|\sum_{\tau=k(m)}^{k-1}\frac{\bfw_{f,h}^{\tau}+Y_\tau\bfw_{g,h}^{\tau}}{\alpha_{k-1}}\right\| \\
        &\leq \left\|\sum_{\tau=k(m)}^{k-1}(\bfw_{f,h}^{\tau}+Y_\tau\bfw_{g,h}^{\tau})\right\|\\
        &\leq (1+K^8)(K\sqrt{d}+K^2\beta_Q)
    \end{align*}
    where the first inequality follows from $\alpha_{k-1} \geq 1$, and the second inequality follows from the fact that $Y_\tau \leq K^8$ and \eqref{eq:lem:good event:w Q bound}. Again, by \eqref{eq:lem:good event:w Q bound}, we have
    \begin{align*}
        \left\|\hat Q(\cdot,\cdot; \beta_h^k, \bfw_h^k,(\bLambda_h^{k(m)})^{-1})\right\|_\infty 
        &= \left\|\frac{1}{\alpha_{k-1}}\sum_{\tau=k(m)}^{k-1}\left(\hat Q_{f,h}^\tau(\cdot,\cdot)+Y_\tau \hat Q_{g,h}^\tau(\cdot,\cdot)\right)\right\|_\infty\\
        &\leq K(1+K^8)\beta_{Q,h}.
    \end{align*}
    Finally, under the induction hypothesis, we show that $\pi_h^{k}(\cdot|s)\propto \exp(\hat Q(s,\cdot; \beta_h^k, \bfw_h^k,(\bLambda_h^{k(m)})^{-1}))$, where $|\beta_h^k|\leq (1+K^8)K\beta_p$, $||\bfw_h^k||\leq (1+K^8)(K\sqrt{d}+K^2\beta_Q)$, and $||\hat Q(\cdot,\cdot; \beta_h^k, \bfw_h^k,(\bLambda_h^{k(m)})^{-1})||_\infty\leq K(1+K^8)\beta_{Q,h}$. Moreover, it is clear that $(2K)^{-1}\Id\preceq (\bLambda_h^{k(m)})^{-1} \preceq \Id$. These imply that $\pi_h^{k}(\cdot|s)\propto \exp(\hat Q(s,\cdot; \beta_h^k, \bfw_h^k,(\bLambda_h^{k(m)})^{-1})) \in \Pi(K\beta_p, K\sqrt{d}+K^2\beta_Q, K\beta_{Q,h}, K^8)$ as desired. This concludes the induction, showing that $\pi_h^{k} \in \Pi(K\beta_p, K\sqrt{d}+K^2\beta_Q, K\beta_{Q,h}, K^8)$. 

    We conclude the proof as follows. Since we have shown that $\pi_h^{k}(\cdot|s) \in \Pi(K\beta_p, K\sqrt{d}+K^2\beta_Q, K\beta_{Q,h}, K^8)$ for all $(h,k)\in [H]\times[K]$, by \Cref{lem:proxy good event 2}, we have $\hat Q_{f,h}^k,\hat Q_{g,h}^k \in \calQ(\beta_p, \sqrt{d} + K\beta_Q, \beta_{Q,h})$ and $\hat V_{f,h}^k, \hat V_{g,h}^k \in \calV(\beta_p, \sqrt{d} + K\beta_Q, \beta_{Q,h}, K^8)$ for all $(h,k)$. These imply that
    \begin{align*}
        ||\hat Q_{f,h}^k||_\infty, ||\hat Q_{g,h}^k||_\infty \leq \beta_{Q}.
    \end{align*}
    Moreover, since $\bar\calE_1$ is assumed, we have 
    \begin{align*}
        ||(\bmu_h-\hat\bmu_h^k)\hat V_{f,h+1}^k||_{\bLambda_h^k}, \ ||(\bmu_h-\hat\bmu_h^k)\hat V_{g,h+1}^k||_{\bLambda_h^k}\leq \beta_p.
    \end{align*}
    These imply that $\calE_1$ holds. As a consequence, we showed that if $\bar\calE_g$ holds, then $\calE_1$ holds; thus $\calE_g$ holds. Again, since $\Pr(\bar\calE_g)\geq  1-\delta$, we have $\Pr(\calE_g)\geq 1-\delta$.
\end{proof}

\section{Deferred Proofs for Section~\ref{sec:analysis} and Theorem~\ref{thm:main}}
\label{appendix:proofs}
\subsection{Proof of Lemma~\ref{lem:R sum}}
\begin{lemma}[Restatement of \Cref{lem:R sum}]\label{lem:R sum appendix}
    Assume that the good event $\calE_g$ holds. Let $H\geq 2$, and let $K \geq \cbias dH(\log|\calA|+20)\log(2H/\delta)$, where $\cbias$ is the absolute constant given by \Cref{lem:bias}. We have
    \begin{align*}
        \sum_{k=1}^K R_k \leq 
        12\sqrt{dH^5K}(\log|\calA|+20)\log(3dH(\log|\calA|+20)K).
    \end{align*}
\end{lemma}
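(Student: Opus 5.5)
The plan is to split $R_k$ from Line~\ref{line:Rk} of \Cref{alg:main} into its two parts and bound each sum directly:
\[
\sum_{k=1}^K R_k = 108\gamma H^2 K + \beta_R\sum_{k=1}^K \frac{Y_k}{\sqrt{\sum_{\tau=1}^k Y_\tau^2}}.
\]
The argument uses only three facts about the dual sequence: $Y_1 = \Phi'(0) = \gamma > 0$, each $Y_k \geq 0$ (since $\Phi' > 0$), and each $Y_k \leq K^8$ by the clipping in Line~\ref{line:Yk}. The good event $\calE_g$ is not really needed beyond these properties, which hold deterministically.

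\textbf{First term.} Since $\gamma \leq 1/(36H\sqrt{K})$, we get
\[
108\gamma H^2 K \leq 3H\sqrt{K}.
\]
This is lower order than the claimed bound.

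\textbf{Second term.} Write $S_k = \sum_{\tau=1}^k Y_\tau^2$. Note $S_1 = \gamma^2 > 0$, so no denominator vanishes. By Cauchy--Schwarz,
\[
\sum_{k=1}^K \frac{Y_k}{\sqrt{S_k}} \leq \sqrt{K}\sqrt{\sum_{k=1}^K \frac{Y_k^2}{S_k}}.
\]
Next use the standard inequality $\frac{S_k - S_{k-1}}{S_k} \leq \log\frac{S_k}{S_{k-1}}$ for $k \geq 2$, together with the $k=1$ term being equal to $1$. This gives
\[
\sum_{k=1}^K \frac{Y_k^2}{S_k} \leq 1 + \log\frac{S_K}{S_1} \leq 1 + \log\frac{K\cdot K^{16}}{\gamma^2}.
\]
The clipping $Y_k \leq K^8$ is exactly what keeps this ratio polynomial in $K$. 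Without it, $Y_k = \Phi'(Z_k)$ could be exponentially large and the logarithm would be linear in $K$.

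\textbf{Controlling $\gamma$ and collecting terms.} From $1/\gamma = \max\{36H\sqrt{K}, 6\beta_R\}$ and the explicit $\beta_R = 2H^2(\log|\calA|+20)\sqrt{dH\log(2K)}$, the quantity $1/\gamma^2$ is polynomial in $d$, $H$, $\log|\calA|$ and $K$. So the logarithm is $\bigO(\log(dH(\log|\calA|+20)K))$. Multiplying by $\beta_R$ gives a contribution of order
\[
H^2(\log|\calA|+20)\sqrt{dH\log(2K)}\sqrt{K\log(dH(\log|\calA|+20)K)},
\]
which is $\sqrt{dH^5K}(\log|\calA|+20)\log(\cdot)$ as claimed.

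\textbf{Main obstacle: constant bookkeeping.} The only delicate step is matching the explicit constant $12$ and the exact argument $3dH(\log|\calA|+20)K$ inside the logarithm. This requires three things:
\begin{enumerate}
\item bounding $\log(2K)$ by that logarithm;
\item bounding $1 + 17\log K + 2\log\max\{36H\sqrt{K}, 6\beta_R\}$ by a constant multiple of it;
\item absorbing the first term $3H\sqrt{K}$.
\end{enumerate}
The lower bound $K \geq \cbias dH(\log|\calA|+20)\log(2H/\delta)$ is where I expect to use the hypothesis. It ensures $K$ dominates $d$, $H$ and $\log|\calA|$ in the logarithm, so that $\beta_R$ and $H\sqrt{K}$ are at most polynomial in $K$ times $(\log|\calA|+20)$. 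All terms then fit under the single $\log(3dH(\log|\calA|+20)K)$ factor.
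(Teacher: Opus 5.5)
Your proposal is correct and follows the paper's proof essentially step for step. The shared steps are the split of $R_k$, the bound $108\gamma H^2K\le 3H\sqrt{K}$, Cauchy--Schwarz, and $\sum_k Y_k^2/S_k\le 1+\log(S_K/Y_1^2)$ with $Y_1=\gamma$ and $Y_k\le K^8$. The paper then bounds $1/\gamma\le 12H^2(\log|\calA|+20)\sqrt{dHK}$ via $\log(2K)\le K$ and $1+\log(144dH^5(\log|\calA|+20)^2K^{18})\le 18\log(3dH(\log|\calA|+20)K)$, which yields the constant $12$. You are right that $\calE_g$ plays no role, and the lower bound on $K$ is not needed either, because the argument of the logarithm already contains $d$, $H$ and $\log|\calA|+20$.
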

\begin{proof}
    Note that
    \begin{align*}
        \sum_{k=1}^K R_k 
        &= \sum_{k=1}^K \left(108\gamma H^2+\beta_R \frac{Y_k}{\sqrt{\sum_{\tau=1}^k Y_\tau^2}}\right) \\
        &= 108K\gamma H^2 + \beta_R\sum_{k=1}^K \frac{Y_k}{\sqrt{\sum_{\tau=1}^k Y_\tau^2}}\\
        &\leq 3H\sqrt{K} + \beta_R\sum_{k=1}^K \frac{Y_k}{\sqrt{\sum_{\tau=1}^k Y_\tau^2}}
    \end{align*}
    where the inequality follows from the fact that $108K\gamma H^2 \leq 3H\sqrt{K}$, as $\gamma \leq 1/(36H\sqrt{K})$. Moreover, we have
    \begin{align*}
        \sum_{k=1}^K \frac{Y_k}{\sqrt{\sum_{\tau=1}^k Y_\tau^2}} 
        &\leq \sqrt{K}\sqrt{\sum_{k=1}^K\frac{Y_k^2}{\sum_{\tau=1}^k Y_\tau^2}} \\
        &\leq \sqrt{K}\sqrt{1+\log\left(\frac{1}{\gamma^2}\sum_{k=1}^K Y_k^2\right)}\\
        &\leq \sqrt{K}\sqrt{1+\log\frac{K^{17}}{\gamma^2}}\\
        &\leq \sqrt{K}\sqrt{1+\log(144dH^5(\log|\calA|+20)^2K^{18})}\\
        &\leq \sqrt{18K\log(3dH(\log|\calA|+20)K)}
    \end{align*}
    where the first inequality follows from the Cauchy-Schwarz inequality, the second inequality follows from the fact that $\sum_{k=1}^K y_k/(\sum_{\tau=1}^ky_\tau) \leq 1+\log(\sum_{k=1}^Ky_k / y_1)$ for any positive sequence $\{y_k\}_{k\geq 1}$ and that $Y_1 = \gamma$, and the third inequality follows from $Y_k \leq K^8$, and the fourth inequality is due to 
    \begin{align*}
        \frac{1}{\gamma} &= \max\{36H\sqrt{K}, 12H^2(\log|\calA|+20)\sqrt{dH\log(2K)}\}\\
        &\leq12H^2(\log|\calA|+20)\sqrt{dHK}.
    \end{align*}
    Finally, we deduce that
    \begin{align*}
        \sum_{k=1}^K R_k  
        &\leq 3H\sqrt{K} + 2H^2(\log|\calA|+20)\sqrt{dH\log(2K)}\sqrt{18K\log(3dH(\log|\calA|+20)K)}\\
        &\leq 12\sqrt{dH^5}(\log|\calA|+20)\sqrt{K}\log(3dH(\log|\calA|+20)K).
    \end{align*}
    This completes the proof.
\end{proof}

\subsection{Proofs of Lemma~\ref{lem:FTRL lemma policy} and Eq. (\ref{eq:ingredient 1})}
In this subsection, we first prove \Cref{lem:FTRL lemma policy}, which establishes the regret bound for adaptive FTRL over the policy space. We then prove the lemmas that bound the bias and optimism terms, following standard arguments in the online MDP literature. Finally, by combining these results with the value difference lemma (\Cref{lem:extended value diff}), we establish \eqref{eq:ingredient 1}.

\begin{lemma}[Restatement of Lemma~\ref{lem:FTRL lemma policy}]\label{lem:FTRL lemma policy appendix}
    Assume that the good event $\calE_g$ holds. Then we have for any $k \in [K]$,
    \begin{align*}
        &\sum_{m\in E(k)}\sum_{\tau\in K(m), \tau\leq k} \bbE_{\bar\bbP^m,\pi^\star}\left[\sum_{h=1}^H \left(\hat Q_{f,h}^\tau(s_h^\tau,\cdot) + Y_\tau\hat Q_{g,h}^\tau(s_h^\tau,\cdot)\right)^\top(\pi_h^\tau(\cdot|s_h^\tau) - \pi_h^\star(\cdot|s_h^\tau))\right] \\
        &\leq H^2\left(\log|\calA|+20\right)\sqrt{dH\log(2K)}\sqrt{k + \sum_{\tau=1}^k Y_\tau^2}
    \end{align*}
\end{lemma}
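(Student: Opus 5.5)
The plan is to apply \Cref{lem: FTRL regret} statewise within each epoch, interchange the sum over episodes with the expectation over the fixed trajectory distribution of $\pi^\star$ under $\bar\bbP^m$, and then add up the epochs.

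\textbf{Step 1: reduce to a deterministic per-state bound.} Within an epoch $m$ the contracted kernel $\bar\bbP^m$ and the comparator $\pi^\star$ are both fixed. Hence the state-occupancy measure $q_h^m(s)$ of $\pi^\star$ under $\bar\bbP^m$ does not depend on $\tau$. I will write the left-hand side for epoch $m$ as
\[
\sum_{h=1}^H\sum_{s} q_h^m(s)\sum_{\tau\in K(m),\tau\le k}\big(\hat Q_{f,h}^\tau(s,\cdot)+Y_\tau\hat Q_{g,h}^\tau(s,\cdot)\big)^\top\big(\pi_h^\tau(\cdot|s)-\pi_h^\star(\cdot|s)\big).
\]
Since $\bar\bbP^m$ is a sub-probability kernel, $\sum_s q_h^m(s)\le 1$. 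It therefore suffices to bound the inner sum uniformly over $s$ and $h$ by a quantity that is nonnegative.

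\textbf{Step 2: adaptive FTRL at a fixed state.} Fix $(s,h,m)$. The update in Line~\ref{line:FTRL end} is exactly \Cref{alg:ada FTRL} restarted at $k(m)$ from $\piunif$, with $n=|\calA|$ and $\nabla_\tau=\hat Q_{f,h}^\tau(s,\cdot)+Y_\tau\hat Q_{g,h}^\tau(s,\cdot)$. Under $\calE_g$ we have $\|\hat Q_{\ell,h}^\tau\|_\infty\le\beta_Q$ and $Y_\tau\ge 0$, so we may take $L_\tau=\beta_Q(1+Y_\tau)>0$. With this choice $\alpha_k=\tfrac14\sqrt{\sum L_\tau^2}$ matches the algorithm's $\alpha_k=\tfrac{\beta_Q}{4}\sqrt{\sum(1+Y_\tau)^2}$.

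\Cref{lem: FTRL regret} then gives, for the comparator $\pi_h^\star(\cdot|s)$,
\[
\text{inner sum}\le \Big(\tfrac{\log|\calA|}{4}+4\sqrt2\Big)\,2H\sqrt{\textstyle\sum_{\tau\in K(m),\tau\le k}(1+Y_\tau)^2}.
\]
This bound is nonnegative and independent of $s$, so multiplying by $q_h^m(s)$ and summing over $s$ and $h$ gives at most $H$ times it.

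\textbf{Step 3: sum over epochs.} Applying Cauchy--Schwarz across epochs,
\[
\sum_{m\in E(k)}\sqrt{S_m}\le\sqrt{|E(k)|}\sqrt{\textstyle\sum_m S_m},\qquad \textstyle\sum_m S_m=\sum_{\tau\le k}(1+Y_\tau)^2\le 2\big(k+\sum_{\tau\le k}Y_\tau^2\big).
\]
The number of epochs is controlled by the determinant-doubling rule. For each $h$, $\det\bLambda_h^k\le(1+k/d)^d$ and it starts at $1$, so each $h$ can trigger at most $d\log_2(1+K/d)$ restarts. Hence $|E(k)|\le 1+dH\log_2(1+K)\lesssim dH\log(2K)$.

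Combining, the bound is $H\cdot 2H(\tfrac{\log|\calA|}{4}+4\sqrt2)\sqrt{2\,|E(k)|}\sqrt{k+\sum Y_\tau^2}$. Checking that this fits under $H^2(\log|\calA|+20)\sqrt{dH\log(2K)}\sqrt{k+\sum Y_\tau^2}$ is routine constant bookkeeping.

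\textbf{Main obstacle.} The delicate step is Step 1: making sure the expectation is over a $\tau$-independent distribution, so that the FTRL sum can be pushed inside. The statement writes $s_h^\tau$ inside $\bbE_{\bar\bbP^m,\pi^\star}$, which should be read as a generic state drawn under $(\bar\bbP^m,\pi^\star)$, with $\bar\bbP^m$ fixed per epoch. The other point needing care is that \Cref{lem: FTRL regret} requires $\pi_h^{k(m)}=\piunif$ to coincide with its initialization $\bfx_1$, which the epoch reset in Line~\ref{line:policy reset} guarantees. It also uses $\alpha$ with cumulative sums starting at $k(m)$ rather than at $1$, which matches the lemma applied to the epoch as a fresh run.
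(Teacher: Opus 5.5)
Your proposal is correct and follows essentially the same route as the paper. Both arguments apply \Cref{lem: FTRL regret} statewise within each epoch with $L_\tau=\beta_Q(1+Y_\tau)$, pick up a factor $H$ from summing over steps, apply Cauchy--Schwarz over epochs, and use $(1+Y_\tau)^2\le 2(1+Y_\tau^2)$. Your explicit use of the $\tau$-independent sub-probability occupancy measure of $\pi^\star$ under $\bar\bbP^m$ simply makes rigorous the interchange of the $\tau$-sum and the expectation, which the paper performs silently.

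For the explicit constant, invoke \Cref{lem:num epoch} ($|E|\le\tfrac{3}{2}dH\log(2K)$) as the paper does. The bookkeeping then closes via $\tfrac{\sqrt{3}}{2}\le 1$ and $8\sqrt{6}\le 20$. Your own count $1+dH\log_2(1+K)$ is slightly too loose to certify the constant $20$ in degenerate cases such as $K=1$, $dH=2$.
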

\begin{proof}
    Recall that 
    \begin{align*}
        \pi_h^{k+1}(\cdot|s) \in \argmin_{\pi(\cdot|s)\in \Delta(\calA)}\ \psi_k(\pi(\cdot|s)) + \sum_{\tau=k(m)}^k \left(\hat Q_{f,h}^\tau(s,\cdot) + Y_\tau\hat Q_{g,h}^\tau(s,\cdot)\right)^\top \pi(\cdot|s)
    \end{align*}
    where $\psi_k(\pi(\cdot|s)) = \alpha_k\sum_{a\in\calA}\pi(a|s)\log\pi(a|s)$ and $\alpha_k = (\beta_Q/4)\sqrt{\sum_{\tau=k(m)}^k (1+Y_\tau)^2}$. Note that under the good event $\calE_g$, we have $||\hat Q_{f,h}^k(s,\cdot)||_\infty,||\hat Q_{g,h}^k(s,\cdot)||_\infty\leq \beta_Q=2H$. Now we apply \Cref{lem: FTRL regret}. In particular, we have $n\leftarrow |\calA|$, $\nabla_k \leftarrow \hat Q_{f,h}^k(s,\cdot) + Y_k\hat Q_{g,h}^k(s,\cdot)$, $\bfx_1 \leftarrow \pi_h^{k(m)}(\cdot|s)$, $\bfx_k \leftarrow \pi_h^k(\cdot|s)$, $L_k\leftarrow \beta_Q(1+Y_k)$ $\bfx^\star \leftarrow \pi_h^\star(\cdot|s)$. Moreover, note that $L_k>0$ for all $k\in [K]$, as $\beta_Q(1+Y_k) >0$. Then by \Cref{lem: FTRL regret}, we have
    \begin{align*}
        &\sum_{\tau \in k(m), \tau\leq k}\left(\hat Q_{f,h}^\tau(s,\cdot) + Y_\tau\hat Q_{g,h}^\tau(s,\cdot)\right)^\top(\pi_h^\tau(\cdot|s) - \pi_h^\star(\cdot|s)) \\
        &\leq \left(\frac{\log|\calA|}{4}+4\sqrt{2}\right)\beta_Q\sqrt{\sum_{\tau\in k(m), \tau\leq k}(1+Y_\tau)^2}.
    \end{align*}
    
    By summing over $h=1,\ldots,H$ and taking $\bbE_{\bar\bbP^m,\pi^\star}$ on both sides, we have
    \begin{align*}
        &\sum_{\tau\in k(m), \tau\leq k} \bbE_{\bar\bbP^m,\pi^\star}\left[\sum_{h=1}^H\left(\hat Q_{f,h}^\tau(s_h^\tau,\cdot) + Y_\tau\hat Q_{g,h}^\tau(s_h^\tau,\cdot)\right)^\top(\pi_h^\tau(\cdot|s_h^\tau) - \pi_h^\star(\cdot|s_h^\tau))\right] \\
        &\leq H\left(\frac{\log|\calA|}{4}+4\sqrt{2}\right)\beta_Q\sqrt{\sum_{\tau\in k(m),\tau\leq k}(1+Y_\tau)^2}.
    \end{align*}
    Next, we take the summation over $m\in E(k)$. Then it follows that
    \begin{align*}
        &\sum_{m\in E(k)}\sum_{\tau\in k(m),\tau\leq k} \bbE_{\bar\bbP^m,\pi^\star}\left[\sum_{h=1}^H\left(\hat Q_{f,h}^\tau(s_h^\tau,\cdot) + Y_\tau\hat Q_{g,h}^\tau(s_h^\tau,\cdot)\right)^\top(\pi_h^\tau(\cdot|s_h^\tau) - \pi_h^\star(\cdot|s_h^\tau))\right] \\
        &\leq H\left(\frac{\log|\calA|}{4}+4\sqrt{2}\right)\beta_Q \sum_{m\in E(k)}\sqrt{\sum_{\tau\in k(m),\tau\leq k}(1+Y_\tau)^2}\\
        &\leq H\left(\frac{\log|\calA|}{4}+4\sqrt{2}\right)\beta_Q\sqrt{|E|}\sqrt{\sum_{\tau=1}^k (1+Y_\tau)^2}\\
        &\leq H\left(\frac{\sqrt{2}\log|\calA|}{4}+8\right)\beta_Q\sqrt{|E|}\sqrt{k + \sum_{\tau=1}^k Y_\tau^2}
    \end{align*}
    where the second inequality follows from the Cauchy-Schwarz inequality and $|E(k)| \leq |E|$, and the last inequality follows from $(a+b)^2 \leq 2(a^2+b^2)$. Note that $|E|\leq (3/2)dH\log(2K)$ by \Cref{lem:num epoch} and $\beta_Q = 2H$. It leads to 
    \begin{align*}
        &\sum_{m\in E(k)}\sum_{\tau\in k(m), \tau\leq k} \bbE_{\bar\bbP^m,\pi^\star}\left[\sum_{h=1}^H\left(\hat Q_{f,h}^\tau(s_h^\tau,\cdot) + Y_\tau\hat Q_{g,h}^\tau(s_h^\tau,\cdot)\right)^\top(\pi_h^\tau(\cdot|s_h^\tau) - \pi_h^\star(\cdot|s_h^\tau))\right]\\
        &\leq H^2\left(\frac{\sqrt{2}\log|\calA|}{2}+16\right)\sqrt{(3/2)dH\log(2K)}\sqrt{k + \sum_{\tau=1}^k Y_\tau^2}\\
        &\leq H^2\left(\log|\calA|+20\right)\sqrt{dH\log(2K)}\sqrt{k + \sum_{\tau=1}^k Y_\tau^2}
    \end{align*}
    where the last inequality follows from $\beta_Q = 2H$, $(\sqrt{2}/2)\sqrt{3/2} \leq 1$, and $16\sqrt{3/2}\leq 20$.
\end{proof}

\begin{lemma}\label{lem:optimism}
    Assume that the good event $\calE_g$ holds. For all $(s,a,h,k,\ell) \in \calS \times \calA \times [H]\times [K] \times \{f,g\}$, we have
    \begin{align*}
        \hat Q_{\ell,h}^k(s,a) - \bar\bphi_h^m(s,a)^\top(\btheta_{\ell,h}^k + \bmu_h\hat V_{\ell,h+1}^k) \leq  0
    \end{align*}
\end{lemma}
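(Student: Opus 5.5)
The plan is a direct unrolling of the definition of $\hat Q_{\ell,h}^k$ followed by Cauchy--Schwarz, using the first component of $\calE_1$. Fix $(s,a,h,k,\ell)$ and let $m$ be the epoch containing $k$. By the update in \Cref{alg:main},
\begin{align*}
\hat Q_{\ell,h}^k(s,a) - \bar\bphi_h^m(s,a)^\top(\btheta_{\ell,h}^k + \bmu_h\hat V_{\ell,h+1}^k)
= \bar\bphi_h^m(s,a)^\top(\hat\bmu_h^k - \bmu_h)\hat V_{\ell,h+1}^k - \beta_p\|\bar\bphi_h^m(s,a)\|_{(\bLambda_h^{k(m)})^{-1}}.
\end{align*}
So the claim reduces to showing that the estimation-error term is dominated by the bonus.

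Next, I bound the first term by Cauchy--Schwarz in the $\bLambda_h^{k(m)}$-geometry:
\[
\bar\bphi_h^m(s,a)^\top(\hat\bmu_h^k - \bmu_h)\hat V_{\ell,h+1}^k \le \|\bar\bphi_h^m(s,a)\|_{(\bLambda_h^{k(m)})^{-1}}\,\|(\hat\bmu_h^k - \bmu_h)\hat V_{\ell,h+1}^k\|_{\bLambda_h^{k(m)}}.
\]
Since $k(m)\le k$ and the Gram matrices only grow, we have $\bLambda_h^{k(m)}\preceq\bLambda_h^k$. Hence the second factor is at most $\|(\hat\bmu_h^k - \bmu_h)\hat V_{\ell,h+1}^k\|_{\bLambda_h^k}$, which is at most $\beta_p$ on $\calE_1$. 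The same monotonicity argument already appears in the proof of \Cref{lem:proxy good event 2}. Substituting this bound shows the difference is at most $\beta_p\|\bar\bphi_h^m\|-\beta_p\|\bar\bphi_h^m\| = 0$.

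The only point needing care is that the Cauchy--Schwarz step must be taken with the epoch-start matrix $\bLambda_h^{k(m)}$, which is the one used in the bonus, rather than with $\bLambda_h^k$. This is handled by the Loewner ordering above. Note also that the contraction factor $\sigma(\cdot)$ multiplies both terms identically through $\bar\bphi_h^m$, so no separate treatment of it is needed.
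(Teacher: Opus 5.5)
Your proof is correct and is essentially the paper's argument: expand $\hat Q_{\ell,h}^k$, apply Cauchy--Schwarz, use $\calE_1$, and invoke $\bLambda_h^{k(m)}\preceq\bLambda_h^k$. The only cosmetic difference is where the ordering is used. The paper applies Cauchy--Schwarz in the $\bLambda_h^k$ geometry and then uses $(\bLambda_h^k)^{-1}\preceq(\bLambda_h^{k(m)})^{-1}$ on the feature norm, so your remark that the step \emph{must} be taken with $\bLambda_h^{k(m)}$ is not needed---either placement works.
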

\begin{proof}
    For any $(s,a,h,k)\in\calS \times \calA \times [H] \times [K]$, we have
    \begin{align*}
        &\hat Q_{f,h}^k(s,a) - \bar\bphi_h^m(s,a)^\top (\btheta_{f,h}^k + \bmu_h \hat V_{f,h+1}^k) \\
        &= \bar\bphi_h^m(s,a)^\top \left((\hat \bmu_h^k-\bmu_h) \hat V_{f,h+1}^k\right) - \beta_p||\bar\bphi_h^m(s,a)||_{(\bLambda_h^{k(m)})^{-1}} \\
        &\leq ||\bar\bphi_h^m(s,a)||_{(\bLambda_h^{k})^{-1}}\left\|(\hat \bmu_h^k-\bmu_h)\hat V_{f,h+1}\right\|_{\bLambda_h^k} - \beta_p||\bar\bphi_h^m(s,a)||_{(\bLambda_h^{k(m)})^{-1}}\\
        &\leq \beta_p||\bar\bphi_h^m(s,a)||_{(\bLambda_h^{k})^{-1}} - \beta_p||\bar\bphi_h^m(s,a)||_{(\bLambda_h^{k(m)})^{-1}}\\
        &\leq 0
    \end{align*}
    
    where the first inequality follows from the Cauchy-Schwarz inequality, the second inequality follows from $\calE_g$, and the last inequality follows from the fact that $\bLambda_h^{k(m)} \preceq \bLambda_h^k$. Moreover, the same argument applies to the case of $g$. 
\end{proof}

\begin{lemma}\label{lem:bias}
    Assume that the good event $\calE_g$ holds. For all $(k,\ell) \in [K] \times \{f,g\}$, there exists an absolute constant $\cbias$ such that
    \begin{align*}
        \sum_{\tau=1}^k(V_{\ell^\tau,1}^{\pi^\tau}(s_1) - \hat V_{\ell,1}^\tau(s_1)) \leq\cbias\left(d^{3/2}H^2\sqrt{K}\log(2HK/\delta)+d^3H^4 \log^4(2HK/\delta)\right)
    \end{align*}
\end{lemma}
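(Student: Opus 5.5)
We prove \Cref{lem:bias} with the standard bias decomposition for contracted linear MDPs \citep{cassel2024warm, yu2026primaldual}. Fix $\ell\in\{f,g\}$ and an episode $\tau$ in epoch $m$. Apply the value difference lemma (\Cref{lem:extended value diff}) with the true MDP, the executed policy $\pi^\tau$ for both $\pi$ and $\hat\pi$, and the estimates $\hat Q_{\ell,h}^\tau$. Since $\hat V_{\ell,h}^\tau(s)=\hat Q_{\ell,h}^\tau(s,\cdot)^\top\pi_h^\tau(\cdot|s)$, the policy-difference term vanishes. We are left with
\[
V_{\ell^\tau,1}^{\pi^\tau}(s_1)-\hat V_{\ell,1}^\tau(s_1)=\bbE_{\bbP,\pi^\tau}\Big[\sum_{h=1}^H \ell_h^\tau(s_h,a_h)+\bbP_h\hat V_{\ell,h+1}^\tau(s_h,a_h)-\hat Q_{\ell,h}^\tau(s_h,a_h)\Big].
\]

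We split each summand as
\[
\bphi^\top(\btheta+\bmu_h\hat V)-\hat Q
=(1-\sigma_h^m)\,\bphi^\top(\btheta+\bmu_h\hat V)+\big[\bar\bphi_h^{m\top}(\btheta+\bmu_h\hat V)-\hat Q\big],
\]
where $\sigma_h^m(s,a)=\sigma(-\beta_\phi\|\bphi(s,a)\|_{(\bLambda_h^{k(m)})^{-1}}+\log K)$.

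\textbf{Bracketed term.} By the computation in \Cref{lem:optimism}, it equals $\bar\bphi^\top(\bmu_h-\hat\bmu_h^\tau)\hat V+\beta_p\|\bar\bphi\|_{(\bLambda_h^{k(m)})^{-1}}$. Under $\calE_1$ and Cauchy--Schwarz, this is at most $2\beta_p\|\bphi\|_{(\bLambda_h^{k(m)})^{-1}}$. The epoch rule ($\det\bLambda_h^\tau<2\det\bLambda_h^{k(m)}$) gives $\bLambda_h^\tau\preceq 2\bLambda_h^{k(m)}$ in the determinant sense, so $\|\bphi\|_{(\bLambda_h^{k(m)})^{-1}}\le\sqrt2\|\bphi\|_{(\bLambda_h^\tau)^{-1}}$. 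Hence the bracketed term is at most $3\beta_p\|\bphi\|_{(\bLambda_h^\tau)^{-1}}$.

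\textbf{Contraction term.} Since $|\bphi^\top(\btheta+\bmu_h\hat V)|\le 1+\beta_Q$, we need $1-\sigma(-x\beta_\phi+\log K)$. This is at most $1/K+\min\{1,2\beta_\phi x\}$-type terms; as in Lemma 18 of \citet{cassel2024warm}, it can be bounded by $1/K+4\beta_\phi^2x^2$. So the contraction term is at most $\beta_Q/K+8\beta_Q\beta_\phi^2\|\bphi\|^2_{(\bLambda_h^\tau)^{-1}}$, up to the constant conventions used in $W_\tau$.

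Summing over $h$, each episode's bias is at most $\bbE_{\bbP,\pi^\tau}[W_\tau]+H\beta_Q/K$. This is exactly the quantity for which $W_\tau$ was defined in $\calE_2$.

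\textbf{Concentration and elliptical potential.} Sum over $\tau\le k$ and use $\calE_2$ to replace the expectations with realized $W_\tau$, at an additive cost of $4H(3\beta_p+8\beta_Q\beta_\phi^2)\log(4K^2/\delta)=\tbigO(d^2H^4)$. Then apply the elliptical potential lemma at each $h$. Cauchy--Schwarz gives $\sum_\tau\|\bphi(s_h^\tau,a_h^\tau)\|_{(\bLambda_h^\tau)^{-1}}\le\sqrt{2dK\log(2K)}$, and $\sum_\tau\|\bphi\|^2_{(\bLambda_h^\tau)^{-1}}\le 2d\log(2K)$. The resulting terms are:
\begin{itemize}
\item $3\beta_pH\sqrt{2dK\log 2K}=\bigO(d^{3/2}H^2\sqrt K\log(2HK/\delta))$;
\item $8\beta_Q\beta_\phi^2H\cdot 2d\log 2K=\bigO(d^3H^4\log^4(2HK/\delta))$, using $\beta_\phi^2=16\beta_p^2\log^2K$;
\item $H\beta_Q$ from the $1/K$ pieces.
\end{itemize}
Collecting these gives the stated form with an absolute constant $\cbias$.

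\textbf{Main obstacle.} The delicate step is the sigmoid-contraction bound: showing $1-\sigma$ is controlled by $\beta_\phi^2\|\bphi\|^2_{(\bLambda^{k(m)})^{-1}}$ plus $1/K$, and transferring norms from $\bLambda^{k(m)}$ to $\bLambda^\tau$ via the determinant doubling rule. For the latter I would invoke the standard fact that $\det$ doubling bounds $x^\top A^{-1}x\le 2x^\top B^{-1}x$ when $A\preceq B$ and $\det B\le 2\det A$.
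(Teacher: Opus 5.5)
Your proposal is correct and follows the paper's proof essentially step for step. The shared steps are: the value difference lemma with $\pi=\hat\pi=\pi^\tau$; the split into a contraction term, an estimation-error term and the bonus; Cauchy--Schwarz under $\calE_1$; the determinant-doubling transfer from $\bLambda_h^{k(m)}$ to $\bLambda_h^\tau$; and then $\calE_2$ together with the elliptical potential lemma. The differences are only cosmetic: you derive the sigmoid bound $\sigma(y-\log K)\le K^{-1}+4y^2$ yourself, which is the content of Lemmas~19 and~3 of \citet{cassel2024warm} rather than Lemma~18 as you cite, and your $1+\beta_Q$ constants versus the paper's $4H$ are absorbed into $\cbias$.
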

\begin{proof}
    Consider the case of $f$. By \Cref{lem:extended value diff}, we have
    \begin{align}\label{eq:bias 0}
    \begin{aligned}
        &\sum_{\tau=1}^k(V_{f^\tau,1}^{\pi^\tau}(s_1) - \hat V_{f,1}^\tau(s_1))\\
        &= \sum_{\tau=1}^k\bbE_{\bbP,\pi^\tau}\left[\sum_{h=1}^H \left(f_h^\tau(s_h,a_h) + \bbP_h \hat V_{f,h+1}^\tau(s_h,a_h) - \hat Q_{f,h}^\tau(s_h,a_h)\right)\mid s_1\right]\\
        &= \sum_{\tau=1}^k\bbE_{\bbP,\pi^\tau}\left[\sum_{h=1}^H \left(\bphi(s_h,a_h)^\top(\btheta_{f,h}^\tau + \bmu_h \hat V_{f,h+1}^\tau) - \hat Q_{f,h}^\tau(s_h,a_h)\right)\mid s_1\right].
    \end{aligned}
    \end{align}
    Note that
    \begin{align}\label{eq:bias 1}
    \begin{aligned}
        &\bphi(s,a)^\top(\btheta_{f,h}^\tau + \bmu_h \hat V_{f,h+1}^\tau) - \hat Q_{f,h}^\tau(s,a) \\
        &=\bphi(s,a)^\top(\btheta_{f,h}^\tau + \bmu_h \hat V_{f,h+1}^\tau) - \bar\bphi_h^m(s,a)^\top(\btheta_{f,h}^\tau + \hat\bmu_h^\tau \hat V_{f,h+1}^\tau) + \beta_p||\bar\bphi_h^m(s,a)||_{(\bLambda_h^{k(m)})^{-1}}\\
        &=\bphi(s,a)^\top(\btheta_{f,h}^\tau + \bmu_h \hat V_{f,h+1}^\tau) - \bar\bphi_h^m(s,a)^\top(\btheta_{f,h}^\tau + \bmu_h \hat V_{f,h+1}^\tau) \\
        &\quad+ \bar\bphi_h^m(s,a)^\top \left((\bmu_h-\hat\bmu_h^\tau)\hat V_{f,h+1}^\tau\right) + \beta_p||\bar\bphi_h^m(s,a)||_{(\bLambda_h^{k(m)})^{-1}}\\
        &\leq\bphi(s,a)^\top(\btheta_{f,h}^\tau + \bmu_h \hat V_{f,h+1}^\tau) - \bar\bphi_h^m(s,a)^\top(\btheta_{f,h}^\tau + \bmu_h \hat V_{f,h+1}^\tau) \\
        &\quad+ ||\bar\bphi_h^m(s,a)||_{(\bLambda_h^\tau)^{-1}} \left\|(\bmu_h-\hat\bmu_h^\tau)\hat V_{f,h+1}^\tau\right\|_{\bLambda_h^\tau} + \beta_p||\bar\bphi_h^m(s,a)||_{(\bLambda_h^{k(m)})^{-1}}\\
        &\leq\left(\bphi(s,a)-\bar\bphi_h^m(s,a)\right)^\top(\btheta_{f,h}^\tau + \bmu_h \hat V_{f,h+1}^\tau) + \beta_p||\bphi(s,a)||_{(\bLambda_h^\tau)^{-1}} +  \beta_p||\bphi(s,a)||_{(\bLambda_h^{k(m)})^{-1}}
    \end{aligned}
    \end{align}
    where the last inequality follows from $\calE_g$ and that $\sigma(z) \leq 1$ for all $z\in\bbR$. By \Cref{lem:cassel 3}, we have
    \begin{align}\label{eq:bias 2}
    \begin{aligned}
        &\left(\bphi(s,a)-\bar\bphi_h^m(s,a)\right)^\top(\btheta_{f,h}^\tau + \bmu_h \hat V_{f,h+1}^\tau) \\
        &\leq \left(4\beta_\phi^2||\bphi(s,a)||_{(\bLambda_h^\tau)^{-1}}^2 + 2K^{-1}\right)|\bphi(s,a)^\top (\btheta_{f,h}^\tau + \bmu_h \hat V_{f,h+1}^\tau)|\\
        &=\left(4\beta_\phi^2||\bphi(s,a)||_{(\bLambda_h^\tau)^{-1}}^2 + 2K^{-1}\right)|f_h^\tau(s,a) + \bbP_h\hat V_{f,h+1}^\tau(s,a)|\\
        &\leq 16H\beta_\phi^2||\bphi(s,a)||_{(\bLambda_h^\tau)^{-1}}^2 + 8HK^{-1}
    \end{aligned}
    \end{align}
    where the second inequality follows from $|f_h^\tau(s,a) + \bbP_h\hat V_{f,h+1}^\tau(s,a)|\leq 4H$ under $\calE_g$. Moreover, we have
    \begin{align}\label{eq:bias 3}
    \begin{aligned}
        \beta_p||\bphi(s,a)||_{(\bLambda_h^{k(m)})^{-1}} 
        &\leq \beta_p\sqrt{\frac{\det(\bLambda_h^\tau)}{\det(\bLambda_h^{k(m)})}}||\bphi(s,a)||_{(\bLambda_h^\tau)^{-1}}\\
        &\leq 2\beta_p||\bphi(s,a)||_{(\bLambda_h^\tau)^{-1}}
    \end{aligned}
    \end{align}
    where the first inequality follows from \Cref{lem:abbasi 12}, and the second inequality follows from  $\det(\bLambda_h^\tau)\leq 2\det(\bLambda_h^{k(m)})$. Applying \eqref{eq:bias 2} and \eqref{eq:bias 3} to \eqref{eq:bias 1} leads to 
    \begin{align*}
        &\bphi(s,a)^\top(\btheta_{f,h}^\tau + \bmu_h \hat V_{f,h+1}^\tau) - \hat Q_{f,h}^\tau(s,a) \\
        &\leq 16H\beta_\phi^2||\bphi(s,a)||_{(\bLambda_h^\tau)^{-1}}^2 + 8HK^{-1} + 3\beta_p||\bphi(s,a)||_{(\bLambda_h^\tau)^{-1}}.
    \end{align*}
    By summing over $h$, taking $\bbE_{\bbP,\pi^\tau}$, and summing over $\tau$, we have
    \begin{align*}
        &\sum_{\tau=1}^k\bbE_{\bbP,\pi^\tau}\left[\sum_{h=1}^H\left(\bphi(s_h,a_h)^\top(\btheta_{f,h}^\tau + \bmu_h \hat V_{f,h+1}^\tau) - \hat Q_{f,h}^\tau(s_h,a_h)\right)\mid s_1\right] \\
        &\leq \sum_{\tau=1}^k\bbE_{\bbP,\pi^\tau}\left[\sum_{h=1}^H\left(16H\beta_\phi^2||\bphi(s_h,a_h)||_{(\bLambda_h^\tau)^{-1}}^2 + 8HK^{-1} + 3\beta_p||\bphi(s_h,a_h)||_{(\bLambda_h^\tau)^{-1}}\right) \mid s_1\right]\\
        &\leq 2\sum_{\tau=1}^k\sum_{h=1}^H\left(3\beta_p||\bphi(s_h^\tau,a_h^\tau)||_{(\bLambda_h^\tau)^{-1}} + 8\beta_Q\beta_\phi^2||\bphi(s_h^\tau,a_h^\tau)||_{(\bLambda_h^\tau)^{-1}}^2\right) \\
        &\quad+ 4H(3\beta_p + 8\beta_Q\beta_\phi^2)\log\frac{4K^2}{\delta} + 8H^2
    \end{align*}
    where the second inequality follows from $\calE_g$. Applying this to \eqref{eq:bias 0} yields
    \begin{align}\label{eq:bias 4}
    \begin{aligned}
        &\sum_{\tau=1}^k(V_{f^\tau,1}^{\pi^\tau}(s_1) - \hat V_{f,1}^\tau(s_1)) \\
        &\leq 2\sum_{\tau=1}^k\sum_{h=1}^H\left(3\beta_p||\bphi(s_h^\tau,a_h^\tau)||_{(\bLambda_h^\tau)^{-1}} + 8\beta_Q\beta_\phi^2||\bphi(s_h^\tau,a_h^\tau)||_{(\bLambda_h^\tau)^{-1}}^2\right) \\
        &\quad+ 4H(3\beta_p + 8\beta_Q\beta_\phi^2)\log\frac{4K^2}{\delta} + 8H^2.
    \end{aligned}
    \end{align}
    Note that for each $h\in [H]$,
    \begin{align*}
        \sum_{\tau=1}^k||\bphi(s_h^\tau,a_h^\tau)||_{(\bLambda_h^\tau)^{-1}}^2 \leq 2\log\det(\bLambda_h^{k+1}) \leq 2d\log(k+1)
    \end{align*}
    where the first inequality is due to \Cref{lem: jin d2}. This implies that
    \begin{align*}
        \sum_{\tau=1}^k\sum_{h=1}^H||\bphi(s_h^\tau,a_h^\tau)||_{(\bLambda_h^\tau)^{-1}} 
        &\leq \sqrt{kH}\sqrt{\sum_{\tau=1}^k\sum_{h=1}^H||\bphi(s_h^\tau,a_h^\tau)||_{(\bLambda_h^\tau)^{-1}}^2}\\
        &\leq \sqrt{2dH^2k\log (k+1)}
    \end{align*}
    where the first inequality follows from the Cauchy-Schwarz inequality. Note that $k\leq K$. Applying this to \eqref{eq:bias 4} leads to the following. For all $k\in [K]$,
    \begin{align*}
        \sum_{\tau=1}^k(V_{f^\tau,1}^{\pi^\tau}(s_1) - \hat V_{f,1}^\tau(s_1)) \leq\cbias\left(d^{3/2}H^2\sqrt{K}\log(2HK/\delta)+d^3H^4 \log^4(2HK/\delta)\right)
    \end{align*}
    where $\cbias$ is an absolute constant. Note that the same argument applies to the case of $g$. This completes the proof.
\end{proof}

\begin{lemma}[Formal statement of \eqref{eq:ingredient 1}]\label{lem:surrogate regret}
Assume that the good event $\calE_g$ holds. For all $k \in [K]$, we have 
\begin{align*}
    &\sum_{\tau=1}^k \left(V_{f^\tau,1}^{\pi^\tau}(s_1) - V_{f^\tau,1}^{\pi^\star}(s_1)\right) + \sum_{\tau=1}^k Y_\tau \left(\hat V_{g,1}^{\tau}(s_1) - b\right) \\
    &\leq H^2\left(\log|\calA|+20\right)\sqrt{dH\log(2K)}\sqrt{k + \sum_{\tau=1}^k Y_\tau^2} \\
    &\quad+\cbias d^{3/2}H^2\sqrt{K}\log(2HK/\delta)+\cbias d^3H^4 \log^4(2HK/\delta)
\end{align*}
where $\cbias$ is the absolute constant given by \Cref{lem:bias}.
\end{lemma}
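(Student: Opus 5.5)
Fix $\tau\le k$ and let $m$ be the epoch containing $\tau$. We split the regret term as
\[
V_{f^\tau,1}^{\pi^\tau}(s_1)-V_{f^\tau,1}^{\pi^\star}(s_1)=\bigl(V_{f^\tau,1}^{\pi^\tau}(s_1)-\hat V_{f,1}^\tau(s_1)\bigr)+\bigl(\hat V_{f,1}^\tau(s_1)-V_{f^\tau,1}^{\pi^\star}(s_1)\bigr).
\]
Summed over $\tau$, the first bracket is controlled directly by \Cref{lem:bias}. This gives the two $\cbias$ terms in the statement.

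For the second bracket, we use $b\ge V_{g^\tau,1}^{\pi^\star}(s_1)$ (feasibility of $\pi^\star$) and $Y_\tau\ge0$. Together these give
\[
\hat V_{f,1}^\tau(s_1)-V_{f^\tau,1}^{\pi^\star}(s_1)+Y_\tau(\hat V_{g,1}^\tau(s_1)-b)\le \sum_{\ell\in\{f,g\}} w_\ell\bigl(\hat V_{\ell,1}^\tau(s_1)-V_{\ell^\tau,1}^{\pi^\star}(s_1)\bigr),
\]
with weights $w_f=1$ and $w_g=Y_\tau$.

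We cannot apply the value difference lemma in the true MDP, because optimism holds only with respect to the contracted features. So we pass to the contracted MDP $\bar\calM^m$. It has $\rho(s,a,h)=\sigma(-\beta_\phi\|\bphi(s,a)\|_{(\bLambda_h^{k(m)})^{-1}}+\log K)$, so that $\bar\bphi_h^m=\rho\bphi$, $\bar\ell_h=\rho\ell_h$ and $\bar\bbP_h^m=\rho\bbP_h$. By \Cref{lem:cassel 2}, $V_{\ell^\tau,1}^{\pi^\star}(s_1)\ge\bar V_{\ell^\tau,1}^{\pi^\star}(s_1;\rho)$, and the loss and constraint functions are nonnegative. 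Hence it suffices to bound $\hat V_{\ell,1}^\tau(s_1)-\bar V_{\ell^\tau,1}^{\pi^\star}(s_1)$.

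Next we apply \Cref{lem:extended value diff} in $\bar\calM^m$, with $\pi=\pi^\star$ and $\hat\pi=\pi^\tau$, and negate. This expresses the difference as the sum of two expectations under $\bar\bbP^m,\pi^\star$:
\begin{itemize}
\item the policy-difference term $\sum_h\hat Q_{\ell,h}^\tau(s_h,\cdot)^\top(\pi_h^\tau-\pi_h^\star)(\cdot|s_h)$;
\item the term $\sum_h\bigl(\hat Q_{\ell,h}^\tau-\bar\ell_h^\tau-\bar\bbP_h^m\hat V_{\ell,h+1}^\tau\bigr)(s_h,a_h)$.
\end{itemize}
By linearity, $\bar\ell_h^\tau+\bar\bbP_h^m\hat V_{\ell,h+1}^\tau=\bar\bphi_h^m(s,a)^\top(\btheta_{\ell,h}^\tau+\bmu_h\hat V_{\ell,h+1}^\tau)$. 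So \Cref{lem:optimism} shows the second term is nonpositive for both $\ell=f$ and $\ell=g$.

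Multiplying the $g$-part by $Y_\tau\ge0$ and adding the $f$-part, what remains is
\[
\bbE_{\bar\bbP^m,\pi^\star}\Bigl[\sum_h(\hat Q_{f,h}^\tau+Y_\tau\hat Q_{g,h}^\tau)(s_h,\cdot)^\top(\pi_h^\tau-\pi_h^\star)(\cdot|s_h)\Bigr].
\]
Summing this over $\tau\in K(m)$ with $\tau\le k$, and then over $m\in E(k)$, yields exactly the left-hand side of \Cref{lem:FTRL lemma policy appendix}. That lemma bounds it by $H^2(\log|\calA|+20)\sqrt{dH\log(2K)}\sqrt{k+\sum_{\tau\le k}Y_\tau^2}$, and adding the bias bound completes the statement.

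The main obstacle is making the contracted-MDP step consistent. The contraction $\rho$ depends on the epoch $m$, so the comparison must be done epoch by epoch. The inequality $\bar V\le V$ from \Cref{lem:cassel 2} needs nonnegative losses and is used only on the $\pi^\star$ side. On the $\pi^\tau$ side, the comparison is handled entirely through $\hat V$ and \Cref{lem:bias}. We also need that $\hat V_{\ell,1}^\tau=\hat Q_{\ell,1}^\tau(s_1,\cdot)^\top\pi_1^\tau(\cdot|s_1)$ holds at every step, which is exactly the form \Cref{lem:extended value diff} requires; the algorithm defines $\hat V$ this way.
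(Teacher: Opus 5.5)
Your proposal is correct and follows essentially the same route as the paper's proof. It uses the same split into the bias term (handled by \Cref{lem:bias}) and the optimistic term, and the same replacement of $b$ by $V_{g^\tau,1}^{\pi^\star}(s_1)$ via feasibility and $Y_\tau\ge 0$; it then passes to the epoch-wise contracted MDP via \Cref{lem:cassel 2} and applies \Cref{lem:extended value diff}, \Cref{lem:optimism}, and \Cref{lem:FTRL lemma policy appendix} exactly as the paper does.
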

\begin{proof}
For a fixed $k\in [K]$, we first decompose the desired term. Note that
\begin{align*}
    &\sum_{\tau=1}^k \left(V_{f^\tau,1}^{\pi^\tau}(s_1) - V_{f^\tau,1}^{\pi^\star}(s_1)\right) + \sum_{\tau=1}^k Y_\tau \left(\hat V_{g,1}^\tau(s_1) - b\right)\\
    &= \sum_{\tau=1}^k \left(\hat V_{f,1}^{\tau}(s_1) - V_{f^\tau,1}^{\pi^\star}(s_1)\right) + \sum_{\tau=1}^k\left(V_{f^\tau,1}^{\pi^\tau}(s_1)-\hat V_{f,1}^{\tau}(s_1)\right) + \sum_{\tau=1}^k Y_\tau \left(\hat V_{g,1}^{\tau}(s_1) - b\right)\\
    &\leq \sum_{\tau=1}^k \left(\hat V_{f,1}^{\tau}(s_1) - V_{f^\tau,1}^{\pi^\star}(s_1)\right) + \sum_{\tau=1}^k Y_\tau \left(\hat V_{g,1}^{\tau}(s_1) - V_{g^\tau,1}^{\pi^\star}(s_1)\right) \\
    &\quad+ \sum_{\tau=1}^k\left(V_{f^\tau,1}^{\pi^\tau}(s_1)-\hat V_{f,1}^{\tau}(s_1)\right)
\end{align*}
where the inequality follows from the fact that $V_{g^\tau,1}^{\pi^\star}(s_1)\leq b$ and that $Y_\tau$ is nonnegative for all $\tau\geq 1$. To bound the first two sums, recall that $\hat V_{f,h}^{\tau}(s) + Y_\tau\hat V_{g,h}^{\tau}(s) = \sum_{a\in\calA} \pi_h^\tau(a|s)(\hat Q_{f,h}^\tau(s,a) + Y_\tau\hat Q_{g,h}^\tau(s,a))$ for all $(s,h) \in \calS \times [H]$. Let $E(k)=\{m\in E: k(m) \leq k\}$, i.e., the set of epochs up to and including the epoch containing episode $k$. Then we have
\begin{align*}
    &\sum_{\tau=1}^k \left(\hat V_{f,1}^{\tau}(s_1) - V_{f^\tau,1}^{\pi^\star}(s_1)\right) + \sum_{\tau=1}^k Y_\tau \left(\hat V_{g,1}^{\tau}(s_1) - V_{g^\tau,1}^{\pi^\star}(s_1)\right) \\
    &\leq\sum_{\tau=1}^k \left(\hat V_{f,1}^{\tau}(s_1) - \bar V_{f^\tau,1}^{\pi^\star}(s_1)\right) + \sum_{\tau=1}^k Y_\tau \left(\hat V_{g,1}^{\tau}(s_1) - \bar V_{g^\tau,1}^{\pi^\star}(s_1)\right) \\
    &= \sum_{m\in E(k)}\sum_{\tau\in k(m), \tau\leq k} \bbE_{\bar\bbP^m,\pi^\star}\left[\sum_{h=1}^H \left(\hat Q_{f,h}^\tau(s_h^\tau,\cdot) + Y_\tau\hat Q_{g,h}^\tau(s_h^\tau,\cdot)\right)^\top(\pi_h^\tau(\cdot|s_h^\tau) - \pi_h^\star(\cdot|s_h^\tau))\right]\\
    &\quad+\sum_{m\in E(k)}\sum_{\tau\in k(m), \tau\leq k}\bbE_{\bar\bbP^m,\pi^\star}\left[\sum_{h=1}^H \left(\hat Q_{f,h}^\tau(s_h^\tau,a_h^\tau)-\bar f_h^\tau(s_h^\tau,a_h^\tau) - \sum_{s'\in\calS}\bar\bbP_h^m(s'|s_h^\tau,a_h^\tau)\hat V_{f,h+1}^\tau(s')\right)\right]\\
    &\quad+\sum_{m\in E(k)}\sum_{\tau\in k(m), \tau\leq k}\bbE_{\bar\bbP^m,\pi^\star}\left[\sum_{h=1}^H Y_\tau\left(\hat Q_{g,h}^\tau(s_h^\tau,a_h^\tau)-\bar g_h(s_h^\tau,a_h^\tau) - \sum_{s'\in\calS}\bar\bbP_h^m(s'|s_h^\tau,a_h^\tau)\hat V_{g,h+1}^\tau(s')\right)\right]
\end{align*}
where the inequality follows from \Cref{lem:cassel 2}, and the equality is due to \Cref{lem:extended value diff}. Moreover, by the definition of contracted MDPs, we have for $\ell =f,g$,
\begin{align*}
    &\hat Q_{\ell,h}^\tau(s_h^\tau,a_h^\tau)-\bar \ell_h^\tau(s_h^\tau,a_h^\tau) - \sum_{s'\in\calS}\bar\bbP_h^m(s'|s_h^\tau,a_h^\tau)\hat V_{\ell,h+1}^\tau(s') \\
    &= \hat Q_{\ell,h}^\tau(s,a) - \bar\bphi_h^m(s,a)^\top(\btheta_{\ell,h}^\tau + \bmu_h\hat V_{\ell,h+1}^\tau).
\end{align*}
This leads to
\begin{align*}
    &\sum_{\tau=1}^k \left(V_{f^\tau,1}^{\pi^\tau}(s_1) - V_{f^\tau,1}^{\pi^\star}(s_1)\right) + \sum_{\tau=1}^k Y_\tau \left(\hat V_{g,1}^{\tau}(s_1) - b\right) \\
    &\leq
    \underbrace{\sum_{m\in E(k)}\sum_{\tau\in k(m), \tau\leq k} \bbE_{\bar\bbP^m,\pi^\star}\left[\sum_{h=1}^H \left(\hat Q_{f,h}^\tau(s_h^\tau,\cdot) + Y_\tau\hat Q_{g,h}^\tau(s_h^\tau,\cdot)\right)^\top(\pi_h^\tau(\cdot|s_h^\tau) - \pi_h^\star(\cdot|s_h^\tau))\right]}_{\text{(I)}}\\
    &\quad+\underbrace{\sum_{m\in E(k)}\sum_{\tau\in k(m), \tau\leq k}\bbE_{\bar\bbP^m,\pi^\star}\left[\sum_{h=1}^H \left(\hat Q_{f,h}^\tau(s_h^\tau,a_h^\tau) - \bar\bphi_h^m(s_h^\tau,a_h^\tau)^\top(\btheta_{f,h}^\tau + \bmu_h\hat V_{f,h+1}^\tau)\right)\right]}_{\text{(II)}}\\
    &\quad+\underbrace{\sum_{m\in E(k)}\sum_{\tau\in k(m), \tau\leq k}\bbE_{\bar\bbP^m,\pi^\star}\left[\sum_{h=1}^H Y_\tau\left(\hat Q_{g,h}^\tau(s_h^\tau,a_h^\tau) - \bar\bphi_h^m(s_h^\tau,a_h^\tau)^\top(\btheta_{g,h}^\tau + \bmu_h\hat V_{g,h+1}^\tau)\right)\right]}_{\text{(III)}}\\
    &\quad+ \underbrace{\sum_{\tau=1}^k\left(V_{f^\tau,1}^{\pi^\tau}(s_1)-\hat V_{f,1}^{\tau}(s_1)\right)}_{\text{(IV)}}.
\end{align*}
By \Cref{lem:FTRL lemma policy appendix}, we have
\begin{align*}
    \text{(I)} \leq H^2\left(\log|\calA|+20\right)\sqrt{dH\log(2K)}\sqrt{k + \sum_{\tau=1}^k Y_\tau^2}.
\end{align*}
By \Cref{lem:optimism}, we have
\begin{align*}
    \text{(II)}, \text{(III)} \leq 0.
\end{align*}
By \Cref{lem:bias}, we have
\begin{align*}
    \text{(IV)} \leq \cbias\left(d^{3/2}H^2\sqrt{K}\log(2HK/\delta)+d^3H^4 \log^4(2HK/\delta)\right)
\end{align*}
where $\cbias$ is the absolute constant given by \Cref{lem:bias}. Combining these yields
\begin{align*}
    &\sum_{\tau=1}^k \left(V_{f^\tau,1}^{\pi^\tau}(s_1) - V_{f^\tau,1}^{\pi^\star}(s_1)\right) + \sum_{\tau=1}^k Y_\tau \left(\hat V_{g,1}^{\tau}(s_1) - b\right) \\
    &\leq H^2\left(\log|\calA|+20\right)\sqrt{dH\log(2K)}\sqrt{k + \sum_{\tau=1}^k Y_\tau^2} \\
    &\quad+\cbias d^{3/2}H^2\sqrt{K}\log(2HK/\delta)+\cbias d^3H^4 \log^4(2HK/\delta).
\end{align*}
\end{proof}

\subsection{Proofs of Lemmas~\ref{lem:drift} and~\ref{lem:no clip}}

\begin{lemma}[Restatement of \Cref{lem:drift}]\label{lem:drift appendix}
Assume that the good event $\calE_g$ holds. Then we have
    \begin{align*}
        \Phi(Z_{k+1}) - \Phi(Z_k) \leq \Phi'(Z_k)\left(\hat V_{g,1}^k(s_1)-b-\frac{1}{2}R_k\right).
    \end{align*}
\end{lemma}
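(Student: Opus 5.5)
The plan is a direct second-order expansion of the exponential Lyapunov function, with the adaptive regularizer $R_k$ absorbing the quadratic remainder. Write $u_k = Z_{k+1}-Z_k = \hat V_{g,1}^k(s_1) - b - R_k$ (Line~\ref{line:Zk}). Since $\Phi(x)=e^{\gamma x}-1$ and $\Phi'(x)=\gamma e^{\gamma x}$, we have
\begin{align*}
\Phi(Z_{k+1})-\Phi(Z_k) = e^{\gamma Z_k}\bigl(e^{\gamma u_k}-1\bigr).
\end{align*}
I will use the elementary inequality $e^x \le 1 + x + c_0 x^2$ for $|x|\le 1$, with $c_0 = e-2 < 3/4$. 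If $|\gamma u_k|\le 1$, this gives
\begin{align*}
\Phi(Z_{k+1})-\Phi(Z_k) \le \Phi'(Z_k)\bigl(u_k + c_0\gamma u_k^2\bigr).
\end{align*}
It then suffices to show $c_0\gamma u_k^2 \le R_k/2$, since then $u_k + c_0\gamma u_k^2 \le \hat V_{g,1}^k(s_1)-b-R_k/2$ and $\Phi'(Z_k)>0$.

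\textbf{Step 1: bound $|u_k|$ and $R_k$.} Under $\calE_g$ we have $\|\hat Q_{g,h}^k\|_\infty\le\beta_Q=2H$, so $|\hat V_{g,1}^k(s_1)|\le 2H$. Together with $b\in[0,H]$ this gives $|\hat V_{g,1}^k(s_1)-b|\le 3H$, and hence $|u_k|\le 3H+R_k$. For $R_k$, note that $Y_\tau\ge 0$ and $Y_1=\Phi'(0)=\gamma>0$, so the ratio $Y_k/\sqrt{\sum_{\tau\le k}Y_\tau^2}\in[0,1]$ is well defined. Therefore
\begin{align*}
108\gamma H^2 \le R_k \le 108\gamma H^2+\beta_R.
\end{align*}

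\textbf{Step 2: check $|\gamma u_k|\le 1$.} Using $\gamma\le 1/(36H\sqrt K)$ and $\gamma\le 1/(6\beta_R)$, we get $3\gamma H\le 1/12$ and $\gamma\beta_R\le 1/6$. Also $108\gamma^2H^2\le 108/(1296K)\le 1/12$. Hence $\gamma R_k\le 1/4$, and so $|\gamma u_k|\le 1/3$.

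\textbf{Step 3: the quadratic term (main obstacle).} The difficulty is making the constants close. By $(a+b)^2\le 2a^2+2b^2$,
\begin{align*}
c_0\gamma u_k^2 \le 18c_0\gamma H^2 + 2c_0\gamma R_k^2.
\end{align*}
For the first term, the lower bound $R_k\ge 108\gamma H^2$ gives $18c_0\gamma H^2\le c_0R_k/6$. For the second term, Step 2 gives $2c_0\gamma R_k^2 = 2c_0(\gamma R_k)R_k\le c_0R_k/2$. Summing, $c_0\gamma u_k^2\le \tfrac{2c_0}{3}R_k\le R_k/2$, since $c_0<3/4$.

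If the constant turns out slightly too tight, I would instead exploit $|\gamma u_k|\le 1/3$ to use the sharper bound $e^x\le 1+x+0.6x^2$ on that range, which leaves ample slack. This is exactly where the specific constants $108$ in $R_k$ and $1/(6\beta_R)$ in $\gamma$ are used.
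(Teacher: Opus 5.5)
Your proof is correct and takes essentially the same approach as the paper. Both bound $e^{\gamma u_k}-1$ to second order and then absorb the quadratic remainder into $R_k/2$ using $\gamma\le 1/(36H\sqrt{K})$ and $\gamma\le 1/(6\beta_R)$. The differences are only bookkeeping. The paper uses $e^z\le 1+z+z^2$, which needs only $z\le 1$. It then splits $u_k^2\le 3(\hat V_{g,1}^k(s_1)-b)^2+3(108\gamma H^2)^2+3\beta_R^2Y_k^2/\sum_{\tau\le k}Y_\tau^2$ and matches these terms against $54\gamma H^2$ and $\frac{1}{2}\beta_R Y_k/\sqrt{\sum_{\tau\le k}Y_\tau^2}$. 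Your coarser split $(3H+R_k)^2\le 18H^2+2R_k^2$ is instead compensated by the sharper constant $e-2<3/4$ and the self-bound $\gamma R_k\le 1/4$.
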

\begin{proof}
    We closely follow the proof of Lemma 7 in \citet{yu2026constrained}. Note that
    \begin{align}\label{eq:drift 0}
    \begin{aligned}
        \Phi(Z_{k+1}) - \Phi(Z_k) 
        &= e^{\gamma Z_{k+1}} - e^{\gamma Z_k}\\
        &=e^{\gamma Z_k}\left(e^{\gamma (Z_{k+1} - Z_k)} - 1\right)\\
        &=e^{\gamma Z_k}\left(e^{\gamma(\hat V_{g,1}^k(s_1) - b - R_k) } - 1\right)\\
        &\leq e^{\gamma Z_k}\left(\gamma(\hat V_{g,1}^k(s_1) - b - R_k) + \gamma^2(\hat V_{g,1}^k(s_1) - b - R_k)^2\right)\\
        &= \Phi'(Z_k)\left(\hat V_{g,1}^k(s_1) - b - R_k + \gamma(\hat V_{g,1}^k(s_1) - b - R_k)^2\right)
    \end{aligned}
    \end{align}
    where the inequality follows from the fact that $e^z \leq 1 + z + z^2$ for all $z\leq 1$ and that $\gamma(\hat V_{g,1}^k(s_1)-b-R_k) \leq \gamma (\hat V_{g,1}^k(s_1) - b) \leq 2\gamma H \leq 1$. This is true because $\gamma,R_k,b\geq0$, $\hat V_{g,1}^k(s_1) \leq 2H$ under $\calE_g$, and $\gamma \leq 1/(2H)$. The last equality follows from $e^{\gamma Z_k} = \Phi'(Z_k)/\gamma$. Moreover, we have
    \begin{align}\label{eq:drift 1}
    \begin{aligned}
        \gamma (\hat V_{g,1}^k(s_1) - b -R_k)^2 
        &\leq 3\gamma (\hat V_{g,1}^k(s_1) - b)^2 + 3\cdot108^2\cdot\gamma^3 H^4 + 3\gamma\beta_R^2\frac{Y_k^2}{\sum_{\tau=1}^k Y_\tau^2}\\
        &\leq 27\gamma H^2 + 3\cdot108^2\cdot\gamma^3 H^4 + 3\gamma\beta_R^2\frac{Y_k^2}{\sum_{\tau=1}^k Y_\tau^2}\\
        &\leq 27\gamma H^2 + 27\gamma H^2 + 3\gamma\beta_R^2\frac{Y_k^2}{\sum_{\tau=1}^k Y_\tau^2}\\
        &\leq 54\gamma H^2 + \frac{1}{2}\beta_R\frac{Y_k}{\sqrt{\sum_{\tau=1}^k Y_\tau^2}}\\
        &= \frac{1}{2}R_k
    \end{aligned}
    \end{align}
    where the first inequality follows from $(x+y+z)^2\leq 3(x^2+y^2+z^2)$ for any $x,y,z$, the second inequality follows from $-3H\leq \hat V_{g,1}^k(s_1) -b \leq 2H$ under $\calE_g$, the third inequality follows from $\gamma \leq 1/(36H)$, and the last inequality follows from $\gamma\leq 1/(6\beta_R)$ and that $Y_k /\sqrt{\sum_{\tau=1}^k Y_\tau^2}\leq 1$. Since $\Phi'(Z_k)\geq0$, by applying \eqref{eq:drift 1} to \eqref{eq:drift 0}, we have
    \begin{align*}
        \Phi(Z_{k+1}) - \Phi(Z_k) \leq \Phi'(Z_k)\left(\hat V_{g,1}^k(s_1)-b-\frac{1}{2}R_k\right).
    \end{align*}
    This concludes the proof.
\end{proof}

\begin{lemma}[Restatement of \Cref{lem:no clip}]\label{lem:no clip appendix}
    Assume that the good event $\calE_g$ holds. Let $H\geq 2$, and let $K \geq \cbias dH(\log|\calA|+20)\log(2H/\delta)$, where $\cbias$ is the absolute constant given by \Cref{lem:bias}. Then we have 
    \[
        \Phi'(Z_k) = Y_k,\quad \forall k\in [K+1]
    \]
\end{lemma}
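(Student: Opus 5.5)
We argue by strong induction on $k$. The inductive claim is that $\Phi'(Z_\tau)=Y_\tau$ for all $\tau\le k$, and we show it propagates to $k+1$. The base case $k=1$ holds because $Z_1=0$ and $Y_1=\Phi'(0)=\gamma\le K^8$. For the step, fix $k\le K$ and assume $\Phi'(Z_\tau)=Y_\tau$ for all $\tau\le k$. Since $Y_{k+1}=\min\{\Phi'(Z_{k+1}),K^8\}$, it suffices to show $\Phi'(Z_{k+1})\le K^8$. Because $\Phi'(Z_{k+1})=\gamma(\Phi(Z_{k+1})+1)$ and $\gamma\le 1$, it is enough to prove $\Phi(Z_{k+1})\le K^8-1$. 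In fact we will show $\Phi(Z_{k+1})$ is polynomially small in $K$, which is far below $K^8$.

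\textbf{Bounding $\Phi(Z_{k+1})$.} Under the hypothesis, \Cref{lem:drift appendix} gives, for every $\tau\le k$,
\[
\Phi(Z_{\tau+1})-\Phi(Z_\tau)\le Y_\tau\bigl(\hat V_{g,1}^\tau(s_1)-b-R_\tau/2\bigr).
\]
Summing these inequalities telescopes, and using $\Phi(Z_1)=0$ we obtain
\[
\Phi(Z_{k+1})\le \sum_{\tau=1}^k Y_\tau(\hat V_{g,1}^\tau(s_1)-b)-\tfrac12\sum_{\tau=1}^k Y_\tau R_\tau .
\]
We bound the first sum with \Cref{lem:surrogate regret} at index $k$. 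This requires lower bounding the regret prefix $\sum_{\tau\le k}(V_{f^\tau,1}^{\pi^\tau}(s_1)-V_{f^\tau,1}^{\pi^\star}(s_1))$, which is at least $-Hk\ge -HK$. The result is
\[
\sum_{\tau\le k}Y_\tau(\hat V_{g,1}^\tau-b)\le HK+\beta_R'\sqrt{k+\textstyle\sum_{\tau\le k}Y_\tau^2}+B,
\]
where $\beta_R'=H^2(\log|\calA|+20)\sqrt{dH\log(2K)}=\beta_R/2$ and $B$ denotes the bias term of \Cref{lem:bias}.

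\textbf{Cancellation.} For the second sum, note that $R_\tau\ge \beta_R Y_\tau/\sqrt{\sum_{\tau'\le\tau}Y_{\tau'}^2}\ge \beta_R Y_\tau/\sqrt{\sum_{\tau'\le k}Y_{\tau'}^2}$. Therefore
\[
\tfrac12\sum_{\tau\le k}Y_\tau R_\tau\ge \tfrac{\beta_R}{2}\sqrt{\textstyle\sum_{\tau\le k}Y_\tau^2}=\beta_R'\sqrt{\textstyle\sum_{\tau\le k}Y_\tau^2}.
\]
Using $\sqrt{k+x}\le\sqrt k+\sqrt x$, the $\sqrt{\sum Y^2}$ terms cancel exactly. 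This exact cancellation is why $\beta_R$ was chosen as twice the FTRL constant. What remains is
\[
\Phi(Z_{k+1})\le HK+\beta_R'\sqrt K+B ,
\]
which is $\mathrm{poly}(d,H,\log|\calA|,\log(1/\delta))\cdot K$.

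\textbf{Main obstacle.} The hard step is converting this bound into one that is at most $K^8$ uniformly in $d,H,|\calA|,\delta$, with $K$ itself not polynomially tied to those quantities. The lower bound assumed in the lemma, $K\ge \cbias dH(\log|\calA|+20)\log(2H/\delta)$, is what handles this. It gives $d,H,\log|\calA|,\log(1/\delta)\le K$ up to constants, so each factor can be absorbed. Concretely, $B\lesssim d^{3/2}H^2\sqrt K\log(\cdot)+d^3H^4\log^4(\cdot)$ becomes at most a fixed polynomial such as $K^{7.5}$, and similarly $HK\le K^2$. Care is needed with $\log(2HK/\delta)$, which is not bounded by $K$ directly. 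We split it as $\log(2H/\delta)+\log K$: the first part is at most $K$ by the assumption, and the second is at most $K$ trivially. The same splitting handles the $\sqrt{\log(2K)}$ factor in $\beta_R'$. Gathering constants (and absorbing $\cbias$ into the assumed lower bound) gives $\Phi(Z_{k+1})+1\le K^8/\gamma$. In fact we get the stronger $\Phi(Z_{k+1})+1\le K^8$, which suffices since $\gamma\le1$. Hence $\Phi'(Z_{k+1})\le K^8$, so no clipping occurs at step $k+1$ and $Y_{k+1}=\Phi'(Z_{k+1})$. This closes the induction for all $k\in[K+1]$.
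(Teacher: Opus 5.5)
Your argument has the same skeleton as the paper's. Strong induction on $k$ is equivalent to the paper's contradiction at the first clipping time $\khit$. The chain is also identical: telescope \Cref{lem:drift appendix} under $\Phi'(Z_\tau)=Y_\tau$, apply \Cref{lem:surrogate regret}, lower bound the regret prefix by $-HK$, and cancel $\tfrac{\beta_R}{2}\sqrt{\sum_\tau Y_\tau^2}$ against $\tfrac12\sum_\tau Y_\tau R_\tau$. Your cancellation uses the monotone denominator $\sqrt{\sum_{\tau'\le\tau}Y_{\tau'}^2}\le\sqrt{\sum_{\tau'\le k}Y_{\tau'}^2}$. This is a slightly shorter variant of the paper's bound by $\sqrt{S_\tau}-\sqrt{S_{\tau-1}}$, and both are correct.

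The step that fails as written is the final absorption, which you rightly call the crux. You use the hypothesis only as ``$d,H,\log|\calA|,\log(1/\delta)\le K$ up to constants'' and absorb factors one at a time. Then the second-order bias term $\cbias d^3H^4\log^4(2HK/\delta)$ is bounded only by about $16\cbias K^3\cdot K^4\cdot K^4$, i.e.\ order $K^{11}$. Even the factor $\gamma\le 1/(36H\sqrt K)$ removes only one power of $H$ and a $\sqrt K$, leaving order $K^{9.5}$. So the claimed $K^{7.5}$ does not follow from your reasoning.

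What you need is the \emph{product} form of the assumption. Set $P=dH\log(2H/\delta)$; the hypothesis gives $P\le K/(20\cbias)$. Then $d^3H^4\log^4(2H/\delta)=P^3\cdot H\log(2H/\delta)\le P^4$. After your split of $\log(2HK/\delta)$, you also get $d^3H^4\log^4K\le(dH)^4\log^4K\le P^4\log^4K$, since $\log(2H/\delta)\ge\log 4>1$. Hence $B$ is at most a constant times $K^4\log^4K$, and your stronger claim $\Phi(Z_{k+1})+1\le K^8$ then holds even without using $\gamma$.

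The paper handles this more crudely but also relies on the product form:
\begin{itemize}
\item It uses $\log(2HK/\delta)\le K\log(2H/\delta)$ to get $\Phi(Z_{\khit})+1\le 4\cbias d^3H^4K^4(\log|\calA|+20)\log^4(2H/\delta)$.
\item It multiplies by $\gamma\le 1/(36H\sqrt K)$ to get $\Phi'(Z_{\khit})\le\frac19\cbias d^3H^3K^{7/2}(\log|\calA|+20)\log^4(2H/\delta)$.
\item Only then does it apply the bound on the product $\cbias dH(\log|\calA|+20)\log(2H/\delta)\le K$ to conclude this is below $K^8$.
\end{itemize}
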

\begin{proof}
    We prove the statement by contradiction. Suppose that there exists some $k$ such that $\Phi'(Z_k) \neq Y_k$, i.e., $\{k\in[K+1]: \Phi'(Z_k)\neq Y_k\}\neq \varnothing$. Given this, let $\khit$ denote the first episode in which this event occurs, i.e., $\khit = \min\{k\in [K+1]: \Phi'(Z_k) \neq Y_k\}$. To conclude the proof, it is sufficient to show that $\Phi'(Z_{\khit}) = Y_{\khit}$, which contradicts the assumption.
    
    Note that $\khit-1 \leq K$. Then by \Cref{lem:surrogate regret}, we have
    \begin{align*}
    \begin{aligned}
        &\sum_{\tau=1}^{\khit-1} \left(V_{f^\tau,1}^{\pi^\tau}(s_1) - V_{f^\tau,1}^{\pi^\star}(s_1)\right) + \sum_{\tau=1}^{\khit-1} Y_\tau \left(\hat V_{g,1}^{\tau}(s_1) - b\right)\\
        &\leq H^2\left(\log|\calA|+20\right)\sqrt{dH\log(2K)}\sqrt{\khit-1 + \sum_{\tau=1}^{\khit-1} Y_\tau^2} \\
    &\quad+\cbias d^{3/2}H^2\sqrt{K}\log(2HK/\delta)+\cbias d^3H^4 \log^4(2HK/\delta).
    \end{aligned}
    \end{align*}
    Note that for each $\tau \leq \khit-1$, we assumed that $\Phi'(Z_\tau) = Y_\tau$. Then the above inequality can be rewritten as
    \begin{align}\label{eq:no clip 1}
    \begin{aligned}
        &\sum_{\tau=1}^{\khit-1} \left(V_{f^\tau,1}^{\pi^\tau}(s_1) - V_{f^\tau,1}^{\pi^\star}(s_1)\right) + \sum_{\tau=1}^{\khit-1} \Phi'(Z_\tau) \left(\hat V_{g,1}^{\tau}(s_1) - b\right)\\
        &\leq H^2\left(\log|\calA|+20\right)\sqrt{dH\log(2K)}\sqrt{\khit-1 + \sum_{\tau=1}^{\khit-1} Y_\tau^2} \\
        &\quad+\cbias d^{3/2}H^2\sqrt{K}\log(2HK/\delta)+\cbias d^3H^4 \log^4(2HK/\delta).
    \end{aligned}
    \end{align}
    Moreover, by summing the result of \Cref{lem:drift} over $\tau=1,\ldots,\khit-1$, we have 
    \begin{align}\label{eq:no clip 2}
        \sum_{\tau=1}^{\khit-1}\left(\Phi(Z_{\tau+1}) - \Phi(Z_\tau)\right) \leq \sum_{\tau=1}^{\khit-1}\Phi'(Z_\tau)\left(\hat V_{g,1}^\tau(s_1) - b -\frac{1}{2}R_\tau\right).
    \end{align}
    Since the left-hand side is a telescoping sum and $\Phi(Z_1) = 0$, it can be written as $\Phi(Z_{\khit})$. Then combining \eqref{eq:no clip 1} and \eqref{eq:no clip 2} yields 
    \begin{align}\label{eq:no clip 3}
    \begin{aligned}
        &\sum_{\tau=1}^{\khit-1} \left(V_{f^\tau,1}^{\pi^\tau}(s_1) - V_{f^\tau,1}^{\pi^\star}(s_1)\right) + \Phi(Z_{\khit})\\
        &\leq H^2\left(\log|\calA|+20\right)\sqrt{dH\log(2K)}\sqrt{\khit-1 + \sum_{\tau=1}^{\khit-1} Y_\tau^2} \\
        &\quad+\cbias d^{3/2}H^2\sqrt{K}\log(2HK/\delta)+\cbias d^3H^4 \log^4(2HK/\delta)\\
        &\quad - \sum_{\tau=1}^{\khit-1}\frac{\Phi'(Z_\tau)}{2}R_\tau\\
        &\leq H^2\left(\log|\calA|+20\right)\sqrt{dH\log(2K)}\sqrt{\khit-1} \\
        &\quad+H^2\left(\log|\calA|+20\right)\sqrt{dH\log(2K)}\sqrt{\sum_{\tau=1}^{\khit-1} Y_\tau^2} \\
        &\quad+\cbias d^{3/2}H^2\sqrt{K}\log(2HK/\delta)+\cbias d^3H^4 \log^4(2HK/\delta)\\
        &\quad - \sum_{\tau=1}^{\khit-1}\frac{\Phi'(Z_\tau)}{2}R_\tau
    \end{aligned}
    \end{align}
    where the second inequality follows from the fact that $\sqrt{x+y}\leq \sqrt{x}+\sqrt{y}$. Again, since $\Phi'(Z_\tau) = Y_\tau$ for all $\tau \leq \khit-1$, we have
    \begin{align}\label{eq:no clip 4}
    \begin{aligned}
    \sum_{\tau=1}^{\khit-1}\frac{\Phi'(Z_\tau)}{2}R_\tau 
        &= \sum_{\tau=1}^{\khit-1}\frac{\Phi'(Z_\tau)}{2}\left(108\gamma H^2 + \beta_R\frac{\Phi'(Z_\tau)}{\sqrt{\sum_{\tau'=1}^\tau\Phi'(Z_{\tau'})^2}}\right)\\
        &\geq \frac{\beta_R}{2}\sum_{\tau=1}^{\khit-1}\frac{\Phi'(Z_\tau)^2}{\sqrt{\sum_{\tau'=1}^\tau\Phi'(Z_{\tau'})^2}}\\
        &= \frac{\beta_R}{2}\sum_{\tau=1}^{\khit-1}\frac{\sum_{\tau'=1}^\tau\Phi'(Z_{\tau'})^2 - \sum_{\tau'=1}^{\tau-1}\Phi'(Z_{\tau'})^2}{\sqrt{\sum_{\tau'=1}^\tau\Phi'(Z_{\tau'})^2}}\\
        &\geq \frac{\beta_R}{2}\sum_{\tau=1}^{\khit-1}\left(\sqrt{\sum_{\tau'=1}^\tau\Phi'(Z_{\tau'})^2}- \sqrt{\sum_{\tau'=1}^{\tau-1}\Phi'(Z_{\tau'})^2}\right)\\
        &\geq \frac{\beta_R}{2}\sqrt{\sum_{\tau=1}^{\khit-1}\Phi'(Z_\tau)^2}\\
        &=H^2(\log|\calA| + 20)\sqrt{dH\log(2K)}\sqrt{\sum_{\tau=1}^{\khit-1}\Phi'(Z_\tau)^2}.
    \end{aligned}
    \end{align}
    By combining \eqref{eq:no clip 3} and \eqref{eq:no clip 4}, we have
    \begin{align}\label{eq:no clip 5}
    \begin{aligned}
        &\sum_{\tau=1}^{\khit-1} \left(V_{f^\tau,1}^{\pi^\tau}(s_1) - V_{f^\tau,1}^{\pi^\star}(s_1)\right) + \Phi(Z_{\khit}) \\
        &\leq
        H^2\left(\log|\calA|+20\right)\sqrt{dH\log(2K)}\sqrt{\khit-1} \\
        &\quad+\cbias d^{3/2}H^2\sqrt{K}\log(2HK/\delta)+\cbias d^3H^4 \log^4(2HK/\delta).
    \end{aligned}
    \end{align}
    Since $-H \leq V_{f^\tau,1}^{\pi^\tau}(s_1) - V_{f^\tau,1}^{\pi^\star}(s_1)$, it leads to
    \begin{align*}
        \Phi(Z_{\khit}) + 1
        &\leq H(\khit-1)+H^2\left(\log|\calA|+20\right)\sqrt{dH\log(2K)}\sqrt{\khit-1} \\
        &\quad+\cbias d^{3/2}H^2\sqrt{K}\log(2HK/\delta)+\cbias d^3H^4 \log^4(2HK/\delta)+1.
    \end{align*}
    To simplify the above relation, we use the following algebraic facts: $\khit-1\leq K$; $\log K \leq\log(2K)\leq K$; $\log(2HK/\delta) \leq K\log(2H/\delta)$. Then it results in
    \begin{align*}
        \Phi(Z_{\khit})+1\leq4\cbias d^3H^4K^4\left(\log|\calA|+20\right)\log^4\left(2H/\delta\right).
    \end{align*}
    Moreover, by the choice of $\gamma$, we have $\gamma \leq 1/(36H\sqrt{K})$. Then it follows that
        \begin{align*}
        \Phi'(Z_{\khit})
        &\leq
        \gamma(\Phi(Z_{\khit})+1)\\
        &\leq \frac{1}{9}\cbias d^3H^3K^{7/2}(\log|\calA|+20)\log^4(2H/\delta)\\
        &< K^8
        \end{align*}
    where the last inequality follows from $K \geq \cbias dH(\log|\calA|+20)\log(2H/\delta)$. Note that this implies that $Y_{\khit} = \min\{\Phi'(Z_{\khit}),K^8\}=\Phi'(Z_{\khit})$, which contradicts the assumption that $\khit$ is the first episode $k$ that $\Phi'(Z_k) \neq Y_k$. Hence, this implies that $\{k\in[K+1]: \Phi'(Z_k)\neq Y_k\}=\varnothing$, which is equivalent to $\Phi'(Z_k)=Y_k$ for all $k\in [K+1]$. This concludes the proof.
\end{proof}

\subsection{Proof of Theorem~\ref{thm:main}}

In this section, we provide the proof of \Cref{thm:main}. Before presenting the proof, we first establish the following lemma, which bounds $\Regret(k)$, $Y_{k+1}$, and $Z_{k+1}$ for all $k\in[K]$ by combining the results obtained so far.
\begin{lemma}\label{lem:Reg Y Z}
    Assume that the good event $\calE_g$ holds. Let $H\geq 2$, and let $K \geq \cbias dH(\log|\calA|+20)\log(2H/\delta)$, where $\cbias$ is the absolute constant given by \Cref{lem:bias}. Then the following statements hold.
    \begin{enumerate}
        \item We have
        \begin{align*}
            \sum_{k=1}^K (V_{f^k,1}^{\pi^k}(s_1) - V_{f^k,1}^{\pi^\star}(s_1)) 
            &\leq H^2\left(\log|\calA|+20\right)\sqrt{dH\log(2K)}\sqrt{K} + 1\\
            &\quad+\cbias d^{3/2}H^2\sqrt{K}\log(2HK/\delta)+\cbias d^3H^4 \log^4(2HK/\delta).
        \end{align*}
        \item For any $k\in[K]$, 
        \begin{align*}
            Y_{k+1} \leq \frac{1}{9}\cbias d^3 H^3K^{7/2}(\log|\calA|+20)\log^4(2H/\delta).
        \end{align*}
        \item For any $k\in[K]$, 
        \begin{align*}
            Z_{k+1} \leq 12H^2(\log|\calA|+20)\sqrt{dHK}\log\left(4\cbias d^3H^4K^4(\log|A|+20)\log^4(2H/\delta)\right).
        \end{align*}
    \end{enumerate}
\end{lemma}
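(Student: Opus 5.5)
The plan is to rerun the argument from the proof of \Cref{lem:no clip appendix}, now with the full horizon. \Cref{lem:no clip appendix} gives $\Phi'(Z_k)=Y_k$ for all $k\in[K+1]$, so no clipping ever happens.

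For any $k\in[K]$, I would sum \Cref{lem:drift appendix} over $\tau=1,\ldots,k$. Since the left side telescopes and $\Phi(Z_1)=0$, this gives
\[
\Phi(Z_{k+1})\le \sum_{\tau=1}^k Y_\tau\bigl(\hat V_{g,1}^\tau(s_1)-b\bigr)-\tfrac12\sum_{\tau=1}^k Y_\tau R_\tau .
\]
Next I would add \Cref{lem:surrogate regret} at index $k$ and split $\sqrt{k+\sum Y_\tau^2}\le\sqrt{k}+\sqrt{\sum Y_\tau^2}$. The telescoping estimate \eqref{eq:no clip 4} gives $\tfrac12\sum_{\tau\le k} Y_\tau R_\tau\ge H^2(\log|\calA|+20)\sqrt{dH\log(2K)}\sqrt{\sum_{\tau\le k}Y_\tau^2}$, which exactly cancels the $\sqrt{\sum Y_\tau^2}$ term. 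The result is
\[
\sum_{\tau=1}^k \bigl(V_{f^\tau,1}^{\pi^\tau}(s_1)-V_{f^\tau,1}^{\pi^\star}(s_1)\bigr)+\Phi(Z_{k+1})\le H^2(\log|\calA|+20)\sqrt{dH\log(2K)}\sqrt{k}+\cbias d^{3/2}H^2\sqrt K\log(2HK/\delta)+\cbias d^3H^4\log^4(2HK/\delta),
\]
which is \eqref{eq:no clip 5} with $\khit-1$ replaced by $k$.

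For statement 1, I would take $k=K$ and use $\Phi(Z_{K+1})\ge -1$. Moving the $-1$ to the right-hand side produces the extra $+1$.

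For statement 2, I would use that each per-episode regret term is at least $-H$. This gives $\Phi(Z_{k+1})+1\le Hk+(\text{right-hand side})+1$, and the same crude simplifications as in \Cref{lem:no clip appendix} bound this by $4\cbias d^3H^4K^4(\log|\calA|+20)\log^4(2H/\delta)$. Then $Y_{k+1}=\Phi'(Z_{k+1})=\gamma(\Phi(Z_{k+1})+1)$, and with $\gamma\le 1/(36H\sqrt K)$ this yields the stated bound with constant $1/9$.

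For statement 3, I would use $\gamma Z_{k+1}=\log(\Phi(Z_{k+1})+1)$, so
\[
Z_{k+1}=\tfrac1\gamma\log(\Phi(Z_{k+1})+1)\le \tfrac1\gamma\log\bigl(4\cbias d^3H^4K^4(\log|\calA|+20)\log^4(2H/\delta)\bigr).
\]
Then I would insert $1/\gamma\le 12H^2(\log|\calA|+20)\sqrt{dHK}$, which was already computed in the proof of \Cref{lem:R sum appendix}. If the logarithm is negative, the bound holds trivially.

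The main obstacle is the cancellation step, which needs $\Phi'(Z_\tau)=Y_\tau$ for every $\tau\le k$. That is exactly what \Cref{lem:no clip appendix} supplies, so the rest is bookkeeping of constants.
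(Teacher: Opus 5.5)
Your proposal is correct and follows the paper's proof: you rerun the derivation of \eqref{eq:no clip 5} with $\khit-1$ replaced by an arbitrary $k\in[K]$, using $\Phi'(Z_k)=Y_k$ from \Cref{lem:no clip appendix}, and then read off the three statements from $\Phi\ge -1$, from the per-episode lower bound $-H$ combined with $Y_{k+1}=\gamma(\Phi(Z_{k+1})+1)$, and from $Z_{k+1}=\gamma^{-1}\log(\Phi(Z_{k+1})+1)$ together with the bound on $1/\gamma$. One small correction: your aside that a negative logarithm makes statement 3 hold trivially is wrong, because multiplying a negative logarithm by an upper bound on $1/\gamma$ goes the wrong way; the case never arises, though, since the argument of the logarithm upper-bounds $Hk+(\text{positive terms})+1\ge 3$.
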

\begin{proof}
    By \Cref{lem:no clip appendix}, we know that $\Phi'(Z_k) = Y_k$ for all $k\in [K+1]$. By applying the same argument for deriving \eqref{eq:no clip 5} in \Cref{lem:no clip appendix} but for any $k \in [K]$, instead of $\khit-1$, we have for each $k$,
    \begin{align}
    \begin{aligned}
        &\sum_{\tau=1}^{k} \left(V_{f^\tau,1}^{\pi^\tau}(s_1) - V_{f^\tau,1}^{\pi^\star}(s_1)\right) + \Phi(Z_{k+1}) \\
        &\leq
        H^2\left(\log|\calA|+20\right)\sqrt{dH\log(2K)}\sqrt{K} \\
        &\quad+\cbias d^{3/2}H^2\sqrt{K}\log(2HK/\delta)+\cbias d^3H^4 \log^4(2HK/\delta).
    \end{aligned}
    \end{align}
    Since $\Phi(Z_{k+1})\geq -1$, we have the first statement. 
    
    Moreover, to derive the second statement, since $\sum_{\tau=1}^{k} \left(V_{f^\tau,1}^{\pi^\tau}(s_1) - V_{f^\tau,1}^{\pi^\star}(s_1)\right) \geq -HK$, we have
    \begin{align}\label{eq:Reg Y Z:1}
    \begin{aligned}
        \Phi(Z_{k+1})+1 
        &\leq HK+H^2\left(\log|\calA|+20\right)\sqrt{dH\log(2K)}\sqrt{K} \\
        &\quad+\cbias d^{3/2}H^2\sqrt{K}\log(2HK/\delta)+\cbias d^3H^4 \log^4(2HK/\delta)\\
        &\leq 4\cbias d^3H^4K^4(\log|\calA|+20)\log^4(2H/\delta)
    \end{aligned}
    \end{align}
    Since $\gamma\leq 1/(36H\sqrt{K})$, we have
    \begin{align*}
        Y_{k+1} = \Phi'(Z_{k+1}) 
        &= \gamma(\Phi(Z_{k+1})+1)\\
        &\leq \frac{1}{9}\cbias d^3 H^3K^{7/2}(\log|\calA|+20)\log^4(2H/\delta).
    \end{align*}
    Finally, the third statement can be obtained as follows. Again, from \eqref{eq:Reg Y Z:1}, we derive that
    \begin{align*}
        Z_{k+1}
        &\leq \frac{1}{\gamma}\log\left(4\cbias d^3H^4K^4(\log|A|+20)\log^4(2H/\delta)\right)\\
        &\leq 12H^2(\log|\calA|+20)\sqrt{dHK}\log\left(4\cbias d^3H^4K^4(\log|A|+20)\log^4(2H/\delta)\right).
    \end{align*}
\end{proof}

\begin{proof}[\textbf{Proof of Theorem~\ref{thm:main}}]
    To prove the theorem, we consider the following two cases: (1) $K<\beta_\phi$ or $K < \cbias dH(\log|\calA|+20)\log(2H/\delta)$ and (2) $\beta_\phi\leq K$ and $\cbias dH(\log|\calA|+20)\log(2H/\delta)\leq K$.

    \textbf{Case 1:} Suppose that $K<\beta_\phi$ or $K < \cbias dH(\log|\calA|+20)\log(2H/\delta)$. In this case, we have
    \begin{align*}
        K &< \max\{\beta_\phi, \cbias dH(\log|\calA|+20)\log(2H/\delta)\}\\
        &\leq \beta_\phi+ \cbias dH(\log|\calA|+20)\log(2H/\delta)\\
        &\leq 748dH\sqrt{\log(2HK/\delta)}\log K + \cbias dH(\log|\calA|+20)\log(2H/\delta).
    \end{align*}
    Then the trivial bounds for $\Regret(K)$ and $\Violation(K)$, i.e., $HK$, satisfy
    \begin{align*}
        \Regret(K),\Violation(K) &\leq HK=\tbigO\left(dH^3\right).
    \end{align*}

    \textbf{Case 2:} 
    Suppose that $\beta_\phi\leq K$ and $\cbias dH(\log|\calA|+20)\log(2H/\delta)\leq K$. It is clear that $1\leq \beta_\phi$. In this case, we can assume $\calE_g$, which occurs with probability at least $1-\delta$ by \Cref{lem:good event}. Moreover, Lemmas~\ref{lem:no clip appendix},~\ref{lem:Reg Y Z}, and~\ref{lem:R sum appendix} hold.

    To bound $\Regret(K)$, by the first statement of \Cref{lem:Reg Y Z}, we have
    \begin{align*}
        \Regret(K) &= \sum_{k=1}^K (V_{f^k,1}^{\pi^k}(s_1) - V_{f^k,1}^{\pi^\star}(s_1)) \\
        &\leq H^2\left(\log|\calA|+20\right)\sqrt{dH\log(2K)}\sqrt{K} + 1\\
        &\quad+\cbias d^{3/2}H^2\sqrt{K}\log(2HK/\delta)+\cbias d^3H^4 \log^4(2HK/\delta)\\
        &=\tbigO\left(\sqrt{d^3H^4K} +\sqrt{dH^5 K} + d^3H^4\right).
    \end{align*}

    To bound $\Violation(K)$, we observe that
    \begin{align*}
        \Violation(K) 
        &= \sum_{k=1}^K \left(V_{g^k,1}^{\pi^k}(s_1)-b\right)\\
        &= \sum_{k=1}^K \left(V_{g^k,1}^{\pi^k}(s_1)-\hat V_{g,1}^k(s_1)\right) + \sum_{k=1}^K \left(\hat V_{g,1}^{k}(s_1)-b\right)\\
        &=\sum_{k=1}^K \left(V_{g^k,1}^{\pi^k}(s_1)-\hat V_{g,1}^k(s_1)\right) + Z_{K+1} + \sum_{k=1}^K R_k
    \end{align*}
    where the last equality can be obtained by summing $\hat V_{g,1}^k(s_1) -b = Z_{k+1}-Z_k + R_k$ over $k$, and the fact that $Z_1 = 0$. Then by \Cref{lem:bias}, the third statement of \Cref{lem:Reg Y Z}, and \Cref{lem:R sum appendix}, we have
    \begin{align*}
        &\Violation(K) \\
        &\leq\cbias\left(d^{3/2}H^2\sqrt{K}\log(2HK/\delta)+d^3H^4 \log^4(2HK/\delta)\right) \\
        &\quad+12H^2(\log|\calA|+20)\sqrt{dHK}\log\left(4\cbias d^3H^4K^4(\log|A|+20)\log^4(2H/\delta)\right)\\
        &\quad+ 12\sqrt{dH^5}(\log|\calA|+20)\sqrt{K}\log(3dH(\log|\calA|+20)K) \\
        &=\tbigO\left(\sqrt{d^3H^4K}+ \sqrt{dH^5K} +d^3H^4\right).
    \end{align*}
\end{proof}

{\color{alert}
\newpage
\section{Deferred Algorithm and Proofs for Section~\ref{sec:extension}}
\label{appendix:extension}

\begin{algorithm}[!t]
\caption{Algorithm for Hard Constraint Violation}\label{alg:hard}
\textbf{Input:} $\delta \in (0,1)$; $\beta_p=187dH\sqrt{\log(2HK/\delta)}$; $\beta_\phi=4\beta_p\log K$; $\beta_Q=2H$; $\beta_R = 2H^2(\log|\calA|+20)\sqrt{dH\log(2K)}$; $\gamma= \min\{1/(36H\sqrt{K}), 1/(6\beta_R)\}$\\
\textbf{Initialization:} $m\leftarrow 0$; $\Phi(x) = e^{\gamma x}-1$; $Z_1\leftarrow 0$; $Y_1 \leftarrow \Phi'(Z_1)$; $\bLambda_h^1 \leftarrow \Id \ \forall h\in [H]$; $\hat Q_{f,H+1}^k(s,a),\hat Q_{g,H+1}^k(s,a),\hat V_{f,H+1}^k(\cdot), \hat V_{g,H+1}^k(\cdot)=0$, $\forall k \in [K]$
\begin{algorithmic}[1]
    \For{$k=1, \ldots, K$}
        \If{$k=1$ or $\exists h' \in [H]$ such that $\det(\bLambda_{h'}^k) \geq 2\det(\bLambda_{h'}^{k(m)})$}
            \State $m \leftarrow m+1$ and $k(m) \leftarrow k$ 
            \State $\pi_h^{k(m)}(\cdot|s) \leftarrow \pi_{\text{unif}}(\cdot|s), \ \forall h \in [H]$ 
            \State $\bar\bphi_h^m(s,a) = \bphi(s,a)\sigma(-\beta_\phi ||\bphi(s,a)||_{(\bLambda_h^{k(m)})^{-1}} + \log K), \ \forall h\in [H]$ 
        \EndIf
        \State Execute $\pi^k$ and observe $\{s_h^k,a_h^k,\btheta_{f,h}^k, \btheta_{g,h}^k\}_{h=1}^H$
        \For{$h=H,\ldots,1$}
            \State $\bLambda_h^{k+1} \leftarrow \bLambda_h^k+ \bphi(s_h^k,a_h^k)\bphi(s_h^k,a_h^k)^\top$ 
            \For{$\ell = f,g$}
                \State $\hat\bmu_h^k \hat V_{\ell,h+1}^k \leftarrow (\bLambda_h^k)^{-1}\sum_{\tau \in [k-1]}\bphi(s_h^\tau, a_h^\tau) \hat V_{\ell,h+1}^k(s_{h+1}^\tau)$
                \State $\hat Q_{\ell,h}^k(s,a) \leftarrow \bar\bphi_h^m(s,a)^\top \left[ \btheta_{\ell,h}^k + \hat\bmu_h^k \hat V_{\ell,h+1}^k\right] - \beta_p ||\bar\bphi_h^m(s,a)||_{(\bLambda_h^{k(m)})^{-1}}$
                \State $\hat V_{\ell,h}^k (\cdot)\leftarrow \sum_{a\in \calA}\pi_h^k(a|\cdot) \hat Q_{\ell,h}^k(\cdot,a)$ 
            \EndFor
        \EndFor
            \State ${\color{red} \bar Y_k = Y_k\cdot \Ind\{\hat V_{g,1}^k(s_1) \geq b\}}$
            \State Update the policy $\pi_h^{k+1}$ for all $h \in [H]$: 
            \[
                \pi_h^{k+1}(\cdot|s) \in \argmin_{\pi(\cdot|s)\in \Delta(\calA)}\ \psi_k(\pi(\cdot|s)) + \sum_{\tau=k(m)}^k (\hat Q_{f,h}^\tau(s,\cdot) + {\color{red}\bar Y_\tau}\hat Q_{g,h}^\tau(s,\cdot))^\top \pi(\cdot|s)
            \]
            \hspace{5mm}
            where $\psi_k(\pi(\cdot|s)) = \alpha_k\sum_{a\in\calA}\pi(a|s)\log\pi(a|s)$ and $\alpha_k = \frac{\beta_Q}{4}\sqrt{\sum_{\tau=k(m)}^{k}(1+{\color{red}\bar Y_\tau})^2}.$
        
        \State $R_k = 108\gamma H^2 + \beta_R \frac{Y_k}{\sqrt{\sum_{\tau=1}^{k}Y_\tau^2}}$ 
        \State $Z_{k+1} = Z_k + {\color{red}\left[\hat V_{g,1}^k(s_1) - b\right]_+} - R_k$
        \State $Y_{k+1} = \min\{\Phi'(Z_{k+1}), K^8\}$
    \EndFor
\end{algorithmic}
\end{algorithm}

\begin{lemma}\label{lem:good event hard}
    Let $1 \leq \beta_\phi\leq K$, and let $H\geq 2$. Then $\Pr(\calE_g) \geq 1-\delta$ for any $\delta \in (0,1)$.
\end{lemma}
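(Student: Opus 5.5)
The plan is to reuse the proof of \Cref{lem:good event} almost verbatim. The event $\calE_g=\calE_1\cap\calE_2$ and the proxy event $\bar\calE_g=\bar\calE_1\cap\calE_2$ for \Cref{alg:hard} are defined exactly as before, with the same constants $\beta_p,\beta_Q,\beta_\phi$ and the same value class $\calV(\beta_p,\sqrt d+K\beta_Q,\beta_Q,K^8)$.

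\Cref{lem:proxy good event} never uses the policy or dual update. It only uses the covering number bound (\Cref{lem:covering number}) for a fixed class together with a martingale concentration for $W_\tau$. That concentration needs $\pi^\tau$ to be $\calF_{\tau-1,H}$-measurable, which still holds because $\bar Y_\tau$ depends only on quantities computed by the end of episode $\tau$. So $\Pr(\bar\calE_g)\ge 1-\delta$ carries over unchanged. Likewise, \Cref{lem:proxy good event 2} only uses the $Q$-estimation step and the hypothesis $\pi_h^k\in\Pi(K\beta_p,K\sqrt d+K^2\beta_Q,K\beta_{Q,h},K^8)$. Its backward induction therefore holds verbatim for \Cref{alg:hard}.

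It remains to show that, on $\bar\calE_g$, every policy of \Cref{alg:hard} lies in $\Pi(K\beta_p,K\sqrt d+K^2\beta_Q,K\beta_{Q,h},K^8)$. I would induct over episodes within each epoch, as before. The base case is the uniform reset. For the inductive step, the closed form \eqref{eq:closed form} gives
\[
\pi_h^k(\cdot|s)\propto\exp\bigl(\hat Q(s,\cdot;\beta_h^k,\bfw_h^k,(\bLambda_h^{k(m)})^{-1})\bigr),
\]
with
\[
\beta_h^k=-\alpha_{k-1}^{-1}\sum_{\tau}(1+\bar Y_\tau)\beta_p,\qquad \bfw_h^k=-\alpha_{k-1}^{-1}\sum_\tau(\bfw_{f,h}^\tau+\bar Y_\tau\bfw_{g,h}^\tau).
\]
The only new point is controlling $\bar Y_\tau$. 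Since $\bar Y_\tau=Y_\tau\Ind\{\cdot\}$, we have $0\le\bar Y_\tau\le Y_\tau\le K^8$, where the last inequality comes from the clipping in the $Y$ update, which is unchanged in \Cref{alg:hard}. Also
\[
\alpha_{k-1}=\tfrac{\beta_Q}{4}\sqrt{\sum_\tau(1+\bar Y_\tau)^2}\ \ge\ \tfrac{H}{2}\ \ge 1
\]
because $H\ge2$ and $\bar Y_\tau\ge0$. With these two facts, the bounds $|\beta_h^k|\le(1+K^8)K\beta_p$, $\|\bfw_h^k\|\le(1+K^8)(K\sqrt d+K^2\beta_Q)$ and $\|\hat Q(\cdot)\|_\infty\le K(1+K^8)\beta_{Q,h}$ follow exactly as in the proof of \Cref{lem:good event}. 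The $\bfw^\tau$ and $\hat Q^\tau$ bounds used there come from \Cref{lem:proxy good event 2} applied under the induction hypothesis.

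Finally, once all policies are in the class, \Cref{lem:proxy good event 2} gives $\|\hat Q_{\ell,h}^k\|_\infty\le\beta_{Q,h}\le\beta_Q$. Since the estimated $\hat V_{\ell,h+1}^k$ lie in $\calV(\cdot,K^8)$, the event $\bar\calE_1$ yields the $\beta_p$ concentration. Hence $\bar\calE_g\subseteq\calE_g$, so $\Pr(\calE_g)\ge1-\delta$.

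The only step requiring care is confirming that replacing $Y_\tau$ by $\bar Y_\tau$ inside both the cumulative loss and $\alpha_k$ preserves the two inequalities $\bar Y_\tau\le K^8$ and $\alpha_{k-1}\ge1$. Both follow immediately from $0\le\bar Y_\tau\le Y_\tau$, so I do not expect a real obstacle. No no-clip argument is needed here, because the covering number uses only the hard cap $K^8$.
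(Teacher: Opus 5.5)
Your proposal is correct and matches the paper's proof. The paper likewise notes that the $Q$- and value-estimation updates are unchanged and that $0\le \bar Y_\tau\le Y_\tau\le K^8$, so the same policy and value classes apply and \Cref{lem:proxy good event}, \Cref{lem:proxy good event 2}, and the argument of \Cref{lem:good event} carry over verbatim. Your extra checks ($\alpha_{k-1}\ge 1$ still holds, and $\pi^\tau$ remains $\calF_{\tau-1,H}$-measurable for $\calE_2$) just make explicit details the paper leaves implicit.
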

\begin{proof}
    The value function updates retain their original form. In the FTRL update, \(Y_\tau\) is replaced by \(\bar Y_\tau\) both in the linear term and in the normalization \(\alpha_k\). Here, we note that $\bar Y_k$ is bounded, i.e., $\bar Y_k \leq Y_k \leq K^8$, which implies that the same value function class and policy class also apply for \Cref{alg:hard}. Therefore, Lemmas~\ref{lem:proxy good event} and~\ref{lem:proxy good event 2} apply. Then, we can prove that $\Pr(\calE_g) \geq 1-\delta$ by applying the same proof of \Cref{lem:good event}.
\end{proof}

\begin{lemma}[Formal statement of \eqref{eq:ingredient 1 hard}]\label{lem:surrogate regret hard}
Assume that the good event $\calE_g$ holds. For all $k \in [K]$, we have 
\begin{align*}
    &\sum_{\tau=1}^k \left(V_{f^\tau,1}^{\pi^\tau}(s_1) - V_{f^\tau,1}^{\pi^\star}(s_1)\right) + \sum_{\tau=1}^k Y_\tau \left[\hat V_{g,1}^{\tau}(s_1) - b\right]_+ \\
    &\leq H^2\left(\log|\calA|+20\right)\sqrt{dH\log(2K)}\sqrt{k + \sum_{\tau=1}^k Y_\tau^2} \\
    &\quad+\cbias d^{3/2}H^2\sqrt{K}\log(2HK/\delta)+\cbias d^3H^4 \log^4(2HK/\delta)
\end{align*}
where $\cbias$ is the absolute constant given by \Cref{lem:bias}.
\end{lemma}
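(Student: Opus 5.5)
We follow the proof of \Cref{lem:surrogate regret} essentially verbatim, with $\bar Y_\tau = Y_\tau\cdot\Ind\{\hat V_{g,1}^\tau(s_1)\geq b\}$ playing the role of $Y_\tau$. The first step is the identity \eqref{eq:hard observe}:
\[
Y_\tau[\hat V_{g,1}^\tau(s_1)-b]_+ = \bar Y_\tau(\hat V_{g,1}^\tau(s_1)-b).
\]
This turns the positive-part term into a linear one. Since $\bar Y_\tau\geq 0$ and $V_{g^\tau,1}^{\pi^\star}(s_1)\leq b$ for every $\tau$ (this is where adversarial per-episode feasibility of $\pi^\star$ is used), we get
\[
\bar Y_\tau(\hat V_{g,1}^\tau(s_1)-b)\leq \bar Y_\tau(\hat V_{g,1}^\tau(s_1)-V_{g^\tau,1}^{\pi^\star}(s_1)).
\]
We also add and subtract $\hat V_{f,1}^\tau(s_1)$ in the regret term, as before.

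Next, we pass to the contracted MDPs via \Cref{lem:cassel 2}, replacing $V^{\pi^\star}$ by $\bar V^{\pi^\star}$. This step needs nonnegativity of the weights, which holds. We then apply the value difference lemma \Cref{lem:extended value diff} epoch by epoch. This decomposes the left-hand side into four pieces:
\begin{itemize}
\item[(I)] the FTRL term with losses $\hat Q_{f,h}^\tau+\bar Y_\tau\hat Q_{g,h}^\tau$;
\item[(II)] and (III) the optimism terms, with (III) weighted by $\bar Y_\tau$;
\item[(IV)] the bias term $\sum_\tau(V_{f^\tau,1}^{\pi^\tau}(s_1)-\hat V_{f,1}^\tau(s_1))$.
\end{itemize}
Terms (II) and (III) are nonpositive by \Cref{lem:optimism}, since $\bar Y_\tau\geq0$. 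Term (IV) is controlled by \Cref{lem:bias}, whose proof depends only on the value-estimation recursion and the good event, so it is unchanged. The good event itself holds by \Cref{lem:good event hard}.

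The main point is term (I). The policy update of \Cref{alg:hard} is exactly adaptive FTRL with gradients $\nabla_\tau=\hat Q_{f,h}^\tau(s,\cdot)+\bar Y_\tau\hat Q_{g,h}^\tau(s,\cdot)$ and coefficients $\alpha_k=(\beta_Q/4)\sqrt{\sum_{\tau=k(m)}^{k}(1+\bar Y_\tau)^2}$. So \Cref{lem: FTRL regret} applies with $L_\tau=\beta_Q(1+\bar Y_\tau)>0$, using $\|\hat Q_{\ell,h}^\tau\|_\infty\leq\beta_Q$ under $\calE_g$. Repeating the epoch summation in the proof of \Cref{lem:FTRL lemma policy appendix}, which uses Cauchy--Schwarz and $|E|\leq(3/2)dH\log(2K)$ from \Cref{lem:num epoch}, bounds (I) by
\[
H^2(\log|\calA|+20)\sqrt{dH\log(2K)}\sqrt{k+\sum_{\tau\leq k}\bar Y_\tau^2}.
\]
Finally, $\bar Y_\tau\leq Y_\tau$ gives $\sum_{\tau\leq k}\bar Y_\tau^2\leq\sum_{\tau\leq k}Y_\tau^2$, which yields the stated bound.

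I expect no real obstacle. The only delicate point is checking that every step of \Cref{lem:surrogate regret} uses only $Y_\tau\geq0$ and the FTRL structure, so that replacing $Y_\tau$ by $\bar Y_\tau$ is valid throughout. In particular, we must confirm that the indicator is determined at the end of episode $\tau$, before $\pi^{\tau+1}$ is computed, so the FTRL lemma, which allows arbitrary adaptively chosen gradients, still applies.
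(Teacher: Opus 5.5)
Your proposal is correct and follows essentially the same route as the paper: rerun the argument of \Cref{lem:surrogate regret} with $\bar Y_\tau$ in place of $Y_\tau$ (adaptive FTRL with $L_\tau=\beta_Q(1+\bar Y_\tau)$, optimism, bias), then use $\bar Y_\tau(\hat V_{g,1}^\tau(s_1)-b)=Y_\tau[\hat V_{g,1}^\tau(s_1)-b]_+$ and $\bar Y_\tau\le Y_\tau$ to reach the stated bound. The only difference is that you apply this identity at the start rather than at the end, which does not matter.
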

\begin{proof}
    Since \Cref{alg:hard} applies the FTRL update for $\pi_h^{k+1}$ along with $\bar Y_k$, applying the same argument as in the proof of \Cref{lem:surrogate regret}, we have
    \begin{align*}
    &\sum_{\tau=1}^k \left(V_{f^\tau,1}^{\pi^\tau}(s_1) - V_{f^\tau,1}^{\pi^\star}(s_1)\right) + \sum_{\tau=1}^k \bar Y_\tau \left(\hat V_{g,1}^{\tau}(s_1) - b\right) \\
    &\leq H^2\left(\log|\calA|+20\right)\sqrt{dH\log(2K)}\sqrt{k + \sum_{\tau=1}^k \bar Y_\tau^2} \\
    &\quad+\cbias d^{3/2}H^2\sqrt{K}\log(2HK/\delta)+\cbias d^3H^4 \log^4(2HK/\delta).
    \end{align*}
    As shown in \eqref{eq:hard observe}, we have $\bar Y_\tau(\hat V_{g,1}^\tau(s_1) -b) = Y_\tau [\hat V_{g,1}^\tau(s_1) -b]_+$. Moreover, it is trivial that $\bar Y_\tau \leq Y_\tau$. Then, it can be rewritten as
    \begin{align*}
        &\sum_{\tau=1}^k \left(V_{f^\tau,1}^{\pi^\tau}(s_1) - V_{f^\tau,1}^{\pi^\star}(s_1)\right) + \sum_{\tau=1}^k Y_\tau \left[\hat V_{g,1}^{\tau}(s_1) - b\right]_+ \\
        &\leq H^2\left(\log|\calA|+20\right)\sqrt{dH\log(2K)}\sqrt{k + \sum_{\tau=1}^k Y_\tau^2} \\
        &\quad+\cbias d^{3/2}H^2\sqrt{K}\log(2HK/\delta)+\cbias d^3H^4 \log^4(2HK/\delta).
    \end{align*}
    This concludes the proof.
\end{proof}

\begin{lemma}\label{lem:drift appendix hard}
Assume that the good event $\calE_g$ holds. Then we have
    \begin{align*}
        \Phi(Z_{k+1}) - \Phi(Z_k) \leq \Phi'(Z_k)\left(\left[\hat V_{g,1}^k(s_1)-b\right]_+-\frac{1}{2}R_k\right).
    \end{align*}
\end{lemma}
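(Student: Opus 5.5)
The plan is to repeat the proof of \Cref{lem:drift appendix}, with the increment $x_k := [\hat V_{g,1}^k(s_1)-b]_+ - R_k$ in place of $\hat V_{g,1}^k(s_1)-b-R_k$. Under \Cref{alg:hard}, $Z_{k+1}-Z_k = x_k$. Hence
\[
\Phi(Z_{k+1})-\Phi(Z_k) = e^{\gamma Z_k}\bigl(e^{\gamma x_k}-1\bigr).
\]

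First, I check that $\gamma x_k \le 1$, so that $e^z\le 1+z+z^2$ applies. Since $R_k\ge 0$, we have $\gamma x_k \le \gamma[\hat V_{g,1}^k(s_1)-b]_+$. Under $\calE_g$ (which holds for \Cref{alg:hard} by \Cref{lem:good event hard}), $\hat V_{g,1}^k(s_1)\le \beta_Q = 2H$, and $b\ge0$, so this is at most $2\gamma H\le 1$. Using $\gamma e^{\gamma Z_k} = \Phi'(Z_k)$, this yields
\[
\Phi(Z_{k+1})-\Phi(Z_k)\le \Phi'(Z_k)\bigl(x_k + \gamma x_k^2\bigr).
\]

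Second, I bound $\gamma x_k^2\le \tfrac12 R_k$ exactly as in \eqref{eq:drift 1}. Expanding $R_k$ and using $(u+v+w)^2\le 3(u^2+v^2+w^2)$ gives
\[
\gamma x_k^2 \le 3\gamma [\hat V_{g,1}^k(s_1)-b]_+^2 + 3\cdot 108^2\gamma^3H^4 + 3\gamma\beta_R^2\frac{Y_k^2}{\sum_{\tau\le k}Y_\tau^2}.
\]
The first term needs a bound on $[\hat V_{g,1}^k(s_1)-b]_+^2$. Since $0\le[\hat V_{g,1}^k(s_1)-b]_+\le 2H$, it is at most $4H^2\le 9H^2$, so the first term is at most $27\gamma H^2$, as before. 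The second term is at most $27\gamma H^2$ because $\gamma\le 1/(36H)$. The third is at most $\tfrac12\beta_R Y_k/\sqrt{\sum_{\tau\le k}Y_\tau^2}$, using $\gamma\le 1/(6\beta_R)$ and the fact that the ratio is at most $1$. Summing, $\gamma x_k^2\le 54\gamma H^2+\tfrac12\beta_R Y_k/\sqrt{\sum_{\tau\le k} Y_\tau^2}=\tfrac12 R_k$.

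Finally, since $\Phi'(Z_k)\ge 0$, I multiply through and get
\[
\Phi(Z_{k+1})-\Phi(Z_k)\le \Phi'(Z_k)\bigl([\hat V_{g,1}^k(s_1)-b]_+ - \tfrac12 R_k\bigr).
\]
There is no real obstacle here. The only points to verify are that $R_k$ and the parameters $\gamma$ and $\beta_R$ are identical to those in \Cref{alg:main}, and that $\calE_g$ still gives $\|\hat Q_{g,h}^k\|_\infty\le 2H$, hence $|\hat V_{g,1}^k(s_1)|\le 2H$, for \Cref{alg:hard}. Both hold by \Cref{lem:good event hard}. The positive part only shrinks the magnitude of the increment relative to the original proof, so every bound used there remains valid.
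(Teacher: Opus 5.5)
Your proposal is correct and takes the same route as the paper, whose proof simply reruns the argument of \Cref{lem:drift appendix} with the increment $Z_{k+1}-Z_k=[\hat V_{g,1}^k(s_1)-b]_+-R_k$. Your two checks fill in exactly the details that argument needs: first, $\gamma x_k\le 2\gamma H\le 1$, which licenses $e^z\le 1+z+z^2$; second, $\gamma x_k^2\le \tfrac12 R_k$, where $0\le[\hat V_{g,1}^k(s_1)-b]_+\le 2H$ replaces the two-sided bound $-3H\le \hat V_{g,1}^k(s_1)-b\le 2H$.
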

\begin{proof}
    By applying the same argument as in the proof of \Cref{lem:drift appendix} together with $Z_{k+1} = Z_k + [\hat V_{g,1}^k(s_1) - b]_+ - R_k$, we prove the statement.
\end{proof}

\begin{lemma}\label{lem:no clip appendix hard}
    Assume that the good event $\calE_g$ holds. Let $H\geq 2$, and let $K \geq \cbias dH(\log|\calA|+20)\log(2H/\delta)$, where $\cbias$ is the absolute constant given by \Cref{lem:bias}. Then we have 
    \[
        \Phi'(Z_k) = Y_k,\quad \forall k\in [K+1].
    \]
\end{lemma}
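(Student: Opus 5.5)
We mirror the contradiction argument of \Cref{lem:no clip appendix}, substituting the hard-violation ingredients \Cref{lem:surrogate regret hard} and \Cref{lem:drift appendix hard} for \Cref{lem:surrogate regret} and \Cref{lem:drift appendix}. Suppose the set $\{k\in[K+1]:\Phi'(Z_k)\neq Y_k\}$ is nonempty, and let $\khit$ be its minimum. Since $Z_1=0$ and $Y_1=\Phi'(Z_1)$, we have $\khit\geq 2$, and $\Phi'(Z_\tau)=Y_\tau$ for every $\tau\leq\khit-1$. It then suffices to show $\Phi'(Z_{\khit})<K^8$. In that case the clipping in the definition of $Y_{\khit}$ is inactive, so $Y_{\khit}=\Phi'(Z_{\khit})$, which is a contradiction.

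We apply \Cref{lem:surrogate regret hard} with $k=\khit-1$ and replace $Y_\tau$ by $\Phi'(Z_\tau)$ throughout. This bounds the regret up to $\khit-1$ plus $\sum_{\tau}\Phi'(Z_\tau)[\hat V_{g,1}^\tau(s_1)-b]_+$ by $\beta_R/2\cdot\sqrt{\khit-1+\sum_\tau Y_\tau^2}$ plus the bias terms. Summing \Cref{lem:drift appendix hard} over $\tau=1,\ldots,\khit-1$ and telescoping with $\Phi(Z_1)=0$ gives
\[
\Phi(Z_{\khit})\leq\sum_{\tau=1}^{\khit-1}\Phi'(Z_\tau)\left([\hat V_{g,1}^\tau(s_1)-b]_+-R_\tau/2\right).
\]
Adding the two inequalities cancels the positive-part terms. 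We then split $\sqrt{x+y}\leq\sqrt x+\sqrt y$. Because $Y_\tau=\Phi'(Z_\tau)$ for these $\tau$, the computation in \eqref{eq:no clip 4} carries over verbatim and gives $\sum_\tau\Phi'(Z_\tau)R_\tau/2\geq(\beta_R/2)\sqrt{\sum_\tau\Phi'(Z_\tau)^2}$, which absorbs the $\sqrt{\sum Y_\tau^2}$ term. The outcome is the analogue of \eqref{eq:no clip 5}: the regret up to $\khit-1$ plus $\Phi(Z_{\khit})$ is $\tbigO(\sqrt{K})$ plus lower-order terms.

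Next we use that the regret is at least $-H(\khit-1)\geq -HK$ and apply the same algebraic simplifications as before, obtaining $\Phi(Z_{\khit})+1\leq 4\cbias d^3H^4K^4(\log|\calA|+20)\log^4(2H/\delta)$. Then $\Phi'(Z_{\khit})=\gamma(\Phi(Z_{\khit})+1)$ together with $\gamma\leq 1/(36H\sqrt K)$ gives $\Phi'(Z_{\khit})<K^8$ under the assumed lower bound on $K$, which is the desired contradiction.

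The main point needing care is the validity of the two ingredients in this modified setting. For \Cref{lem:drift appendix hard}, the Taylor step needs $\gamma([\hat V_{g,1}^k(s_1)-b]_+-R_k)\leq 1$ and the bound $([\hat V_{g,1}^k(s_1)-b]_+)^2\leq 9H^2$; both hold under $\calE_g$, since $[x]_+\in[0,2H]$. For \Cref{lem:surrogate regret hard}, the policy update uses $\bar Y_\tau$ while the right-hand side is stated in terms of $Y_\tau$, and we use $\bar Y_\tau\leq Y_\tau$ to pass between them. These checks are already recorded in the stated lemmas, so the remaining argument is bookkeeping identical to \Cref{lem:no clip appendix}.
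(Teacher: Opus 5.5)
Your proposal is correct and matches the paper's proof, which is just the minimal-counterexample contradiction of \Cref{lem:no clip appendix} with \Cref{lem:surrogate regret hard} and \Cref{lem:drift appendix hard} substituted. The positive-part terms cancel exactly as the linear terms did, and since $R_k$ in \Cref{alg:hard} is still defined through $Y_k$ rather than $\bar Y_k$, the absorption step \eqref{eq:no clip 4} carries over unchanged, as you noted.
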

\begin{proof}
    By applying the same argument as in the proof of \Cref{lem:no clip appendix} together with Lemmas~\ref{lem:surrogate regret hard} and~\ref{lem:drift appendix hard}, we prove the statement.
\end{proof}

\begin{lemma}\label{lem:hat V < V}
    Assume that the good event $\calE_g$ holds. Let $H\geq 2$, and let $K \geq \cbias dH(\log|\calA|+20)\log(2H/\delta)$, where $\cbias$ is the absolute constant given by \Cref{lem:bias}. Then we have for all $k \in [K]$,
    \begin{align*}
        \hat V_{g,1}^k(s_1) \leq V_{g^k,1}^{\pi^k}(s_1).
    \end{align*}
\end{lemma}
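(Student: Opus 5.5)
The plan is to prove the stronger pointwise statement by backward induction on $h$: for every $h\in[H+1]$ and $s\in\calS$,
\[
\hat V_{g,h}^k(s)\le V_{g^k,h}^{\pi^k}(s)\quad\text{and}\quad \hat Q_{g,h}^k(s,a)\le Q_{g^k,h}^{\pi^k}(s,a).
\]
The claim then follows by taking $h=1$ and $s=s_1$. Throughout, write $\rho(s,a,h)=\sigma(-\beta_\phi\|\bphi(s,a)\|_{(\bLambda_h^{k(m)})^{-1}}+\log K)\in[0,1]$ for the contraction factor of the current epoch $m$, so that $\bar\bphi_h^m(s,a)=\rho(s,a,h)\bphi(s,a)$.

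The base case $h=H+1$ is immediate, since both sides vanish by the initialization in \Cref{alg:hard}. For the inductive step, fix $h$ and assume $\hat V_{g,h+1}^k\le V_{g^k,h+1}^{\pi^k}$ pointwise. Under $\calE_g$, \Cref{lem:optimism} applies to \Cref{alg:hard}, because the value-estimation steps are unchanged and \Cref{lem:good event hard} guarantees the same good event. It gives
\[
\hat Q_{g,h}^k(s,a)\le \bar\bphi_h^m(s,a)^\top(\btheta_{g,h}^k+\bmu_h\hat V_{g,h+1}^k)=\rho(s,a,h)\bigl(g_h^k(s,a)+\bbP_h\hat V_{g,h+1}^k(s,a)\bigr),
\]
where we used the linear structure $g_h^k=\bphi^\top\btheta_{g,h}^k$ and $\bbP_h=\bphi^\top\bmu_h$.

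Next, since $\rho\ge 0$ and $\bbP_h(\cdot|s,a)$ is a nonnegative measure, the induction hypothesis yields $\bbP_h\hat V_{g,h+1}^k\le \bbP_h V_{g^k,h+1}^{\pi^k}$. Hence
\[
\hat Q_{g,h}^k(s,a)\le\rho(s,a,h)\,Q_{g^k,h}^{\pi^k}(s,a).
\]
Because $g_h^k\in[0,1]$, the true value $Q_{g^k,h}^{\pi^k}(s,a)$ is nonnegative, and together with $\rho\le 1$ this gives $\hat Q_{g,h}^k(s,a)\le Q_{g^k,h}^{\pi^k}(s,a)$. Averaging over $a\sim\pi_h^k(\cdot|s)$ then gives $\hat V_{g,h}^k(s)\le V_{g^k,h}^{\pi^k}(s)$, since both quantities use the same policy $\pi_h^k$. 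This completes the induction.

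The main subtlety, and the reason I avoid a one-shot application of \Cref{lem:extended value diff}, is that the estimates $\hat Q_{g,h}^k$ and $\hat V_{g,h+1}^k$ may be negative. A bound of the form $\rho\cdot x\le x$ is therefore only valid once $x$ is replaced by a quantity known to be nonnegative. The induction handles this by comparing with the true value at step $h+1$ before discarding $\rho$.

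The assumption on $K$ and $H\ge2$ plays no essential role here beyond ensuring that $\calE_g$ is the event for which \Cref{lem:optimism} was established. I would remark that the statement holds under $\calE_g$ alone.
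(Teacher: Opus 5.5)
Your proof is correct and follows the paper's route: backward induction on $h$, with the same step order. First the optimism bound $\hat Q_{g,h}^k(s,a)\le \bar\bphi_h^m(s,a)^\top(\btheta_{g,h}^k+\bmu_h\hat V_{g,h+1}^k)$, which the paper re-derives inline via Cauchy--Schwarz and $\bLambda_h^{k(m)}\preceq\bLambda_h^k$ rather than citing \Cref{lem:optimism}; then the induction hypothesis pushed through the nonnegative kernel $\bbP_h$; then $\sigma(z)\le 1$ together with nonnegativity of $Q_{g^k,h}^{\pi^k}$. Your remark that the conditions on $K$ and $H$ are not needed is also accurate, since the paper's proof uses only $\calE_g$.
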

\begin{proof}
    We prove the statement by induction on $h$. Fix $k \in [K]$. For the base case $h= H+1$, it is trivial that $\hat V_{g,H+1}^k(s) = V_{g^k,H+1}^{\pi^k}(s) = 0$ for all $s \in \calS$. We then assume that $\hat V_{g,h+1}^k(s) \leq V_{g^k,h+1}^{\pi^k}(s)$ for all $s\in \calS$. 

    Next, to complete the induction, we show that $\hat V_{g,h}^k(s) \leq V_{g^k,h}^{\pi^k}(s)$ for all $s\in\calS$. 
    Letting $z = -\beta_\phi ||\bphi(s,a)||_{(\bLambda_h^{k(m)})^{-1}} + \log K$,  recall that   
    \begin{align*}
        \hat Q_{g,h}^k(s,a) 
        &= \bar\bphi_h^m(s,a)^\top \left[ \btheta_{g,h}^k + \hat\bmu_h^k \hat V_{g,h+1}^k\right] - \beta_p ||\bar\bphi_h^m(s,a)||_{(\bLambda_h^{k(m)})^{-1}}\\
        &= \bar\bphi_h^m(s,a)^\top \left[ \btheta_{g,h}^k + \bmu_h \hat V_{g,h+1}^k\right] \\
        &\quad+ \bar\bphi_h^m(s,a)^\top (\hat\bmu_h^k - \bmu_h)\hat V_{g,h+1}^k - \beta_p ||\bar\bphi_h^m(s,a)||_{(\bLambda_h^{k(m)})^{-1}}\\
        &\leq \bar\bphi_h^m(s,a)^\top \left[ \btheta_{g,h}^k + \bmu_h \hat V_{g,h+1}^k\right] \\
        &\quad+ ||\bar\bphi_h^m(s,a)||_{(\bLambda_h^{k})^{-1}} ||(\hat\bmu_h^k - \bmu_h)\hat V_{g,h+1}^k||_{\bLambda_h^{k}} - \beta_p ||\bar\bphi_h^m(s,a)||_{(\bLambda_h^{k(m)})^{-1}}\\
        &\leq \bar\bphi_h^m(s,a)^\top \left[ \btheta_{g,h}^k + \bmu_h \hat V_{g,h+1}^k\right] \\
        &\quad+ \beta_p ||\bar\bphi_h^m(s,a)||_{(\bLambda_h^{k})^{-1}} - \beta_p ||\bar\bphi_h^m(s,a)||_{(\bLambda_h^{k(m)})^{-1}}\\
        &\leq \bar\bphi_h^m(s,a)^\top \left[ \btheta_{g,h}^k + \bmu_h \hat V_{g,h+1}^k\right]\\
        &= \sigma(z)\left(g_h^k(s,a) + \bbP_h\hat V_{g,h+1}^k(s,a)\right)\\
        &\leq \sigma(z)\left(g_h^k(s,a) + \bbP_h V_{g^k,h+1}^{\pi^k}(s,a)\right) \\
        &\leq g_h^k(s,a) + \bbP_h V_{g^k,h+1}^{\pi^k}(s,a) \\
        &= Q_{g^k,h}^{\pi^k}(s,a)
    \end{align*}
    where the first inequality follows from the Cauchy-Schwarz inequality, the second inequality follows from $\calE_g$, the third inequality follows from $\bLambda_h^{k(m)} \preceq \bLambda_h^k$, the fourth inequality follows from the induction hypothesis, and the last inequality follows from $\sigma(z) \leq 1$ and that $g_h^k(s,a) + \bbP_h V_{g^k,h+1}^{\pi^k}(s,a)$ is nonnegative. 
    Thus, we have $\hat Q_{g,h}^k(s,a) \leq Q_{g^k,h}^{\pi^k}(s,a)$ for all $(s,a)$. Finally, it follows that
    \begin{align*}
       \hat V_{g,h}^k(s) = \sum_{a\in\calA} \pi_h^k(a|s)\hat Q_{g,h}^k(s,a) \leq  \sum_{a\in\calA} \pi_h^k(a|s)Q_{g^k,h}^{\pi^k}(s,a) = V_{g^k,h}^{\pi^k}(s).
    \end{align*}
    This completes the induction. Therefore, the statement holds.
\end{proof}

\begin{lemma}\label{lem:Reg Y Z hard}
    Assume that the good event $\calE_g$ holds. Let $H\geq 2$, and let $K \geq \cbias dH(\log|\calA|+20)\log(2H/\delta)$, where $\cbias$ is the absolute constant given by \Cref{lem:bias}. Then the following statements hold.
    \begin{enumerate}
        \item We have
        \begin{align*}
            \sum_{k=1}^K (V_{f^k,1}^{\pi^k}(s_1) - V_{f^k,1}^{\pi^\star}(s_1)) 
            &\leq H^2\left(\log|\calA|+20\right)\sqrt{dH\log(2K)}\sqrt{K} + 1\\
            &\quad+\cbias d^{3/2}H^2\sqrt{K}\log(2HK/\delta)+\cbias d^3H^4 \log^4(2HK/\delta).
        \end{align*}
        \item For any $k\in[K]$, 
        \begin{align*}
            Y_{k+1} \leq \frac{1}{9}\cbias d^3 H^3K^{7/2}(\log|\calA|+20)\log^4(2H/\delta).
        \end{align*}
        \item For any $k\in[K]$, 
        \begin{align*}
            Z_{k+1} \leq 12H^2(\log|\calA|+20)\sqrt{dHK}\log\left(4\cbias d^3H^4K^4(\log|A|+20)\log^4(2H/\delta)\right).
        \end{align*}
    \end{enumerate}
\end{lemma}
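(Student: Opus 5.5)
The plan is to mirror the proof of \Cref{lem:Reg Y Z} for \Cref{alg:main}, replacing each ingredient by its hard-constraint counterpart.

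First, by \Cref{lem:no clip appendix hard} we have $\Phi'(Z_\tau) = Y_\tau$ for all $\tau\in[K+1]$. Fix $k\in[K]$. Summing \Cref{lem:drift appendix hard} over $\tau=1,\ldots,k$ and using $\Phi(Z_1)=0$ gives
\begin{align*}
\Phi(Z_{k+1}) \leq \sum_{\tau=1}^k Y_\tau\left[\hat V_{g,1}^\tau(s_1)-b\right]_+ - \frac12\sum_{\tau=1}^k Y_\tau R_\tau .
\end{align*}
Adding this to \Cref{lem:surrogate regret hard}, the positive-part terms cancel. The lower bound on $\frac12\sum_\tau Y_\tau R_\tau$ is obtained exactly as in \eqref{eq:no clip 4}: drop the $108\gamma H^2$ part, then telescope $Y_\tau^2/\sqrt{\sum_{\tau'\le\tau}Y_{\tau'}^2}\ge \sqrt{\sum_{\tau'\le\tau}Y_{\tau'}^2}-\sqrt{\sum_{\tau'<\tau}Y_{\tau'}^2}$. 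This gives at least $\frac{\beta_R}{2}\sqrt{\sum_{\tau\le k}Y_\tau^2}$. By the choice of $\beta_R$, this cancels the $\sqrt{\sum Y_\tau^2}$ part of the FTRL bound after splitting $\sqrt{k+\sum Y_\tau^2}\le\sqrt k+\sqrt{\sum Y_\tau^2}$. We are left with the analogue of \eqref{eq:no clip 5}:
\begin{align*}
\sum_{\tau=1}^{k}\left(V_{f^\tau,1}^{\pi^\tau}(s_1)-V_{f^\tau,1}^{\pi^\star}(s_1)\right)+\Phi(Z_{k+1}) \le H^2(\log|\calA|+20)\sqrt{dH\log(2K)}\sqrt{K}+\cbias d^{3/2}H^2\sqrt{K}\log(2HK/\delta)+\cbias d^3H^4\log^4(2HK/\delta).
\end{align*}

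The three statements then follow as before. For statement 1, take $k=K$ and use $\Phi(Z_{K+1})\ge -1$. For statement 2, use that the regret sum is at least $-HK$ and apply the same crude simplifications: $k\le K$, $\log(2K)\le K$, and $\log(2HK/\delta)\le K\log(2H/\delta)$. Together with the condition on $K$, these give $\Phi(Z_{k+1})+1\le 4\cbias d^3H^4K^4(\log|\calA|+20)\log^4(2H/\delta)$. Then $Y_{k+1}=\Phi'(Z_{k+1})=\gamma(\Phi(Z_{k+1})+1)$ with $\gamma\le 1/(36H\sqrt K)$ yields the stated bound. For statement 3, invert the exponential: $Z_{k+1}\le \gamma^{-1}\log(\Phi(Z_{k+1})+1)$. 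Then use $\gamma^{-1}\le 12H^2(\log|\calA|+20)\sqrt{dHK}$, as computed in the proof of \Cref{lem:R sum appendix}.

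The only point that needs care is that \Cref{lem:surrogate regret hard} is stated with $Y_\tau$ rather than $\bar Y_\tau$ on the right-hand side. This is exactly the form needed for the cancellation, because $R_\tau$ is defined with $Y_\tau$, not $\bar Y_\tau$. So no new obstacle arises beyond invoking \Cref{lem:no clip appendix hard} to replace $\Phi'(Z_\tau)$ by $Y_\tau$ throughout.
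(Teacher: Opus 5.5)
Your proposal is correct and matches the paper's proof, which simply reruns the argument of \Cref{lem:Reg Y Z}. That means deriving the analogue of \eqref{eq:no clip 5} from \Cref{lem:surrogate regret hard}, \Cref{lem:drift appendix hard} and \Cref{lem:no clip appendix hard}, and then reading off the three bounds via $\Phi\ge -1$, $Y_{k+1}=\gamma(\Phi(Z_{k+1})+1)$ and $Z_{k+1}=\gamma^{-1}\log(\Phi(Z_{k+1})+1)$, exactly as you do.
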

\begin{proof}
    By applying the same argument as in the proof of \Cref{lem:Reg Y Z}, we prove the statements.
\end{proof}

\begin{proof}[\textbf{Proof of Theorem~\ref{thm:hard}}]
    To prove the theorem, we consider the following two cases: (1) $K<\beta_\phi$ or $K < \cbias dH(\log|\calA|+20)\log(2H/\delta)$ and (2) $\beta_\phi\leq K$ and $\cbias dH(\log|\calA|+20)\log(2H/\delta)\leq K$.

    \textbf{Case 1:} Suppose that $K<\beta_\phi$ or $K < \cbias dH(\log|\calA|+20)\log(2H/\delta)$. In this case, we have
    \begin{align*}
        K &< \max\{\beta_\phi, \cbias dH(\log|\calA|+20)\log(2H/\delta)\}\\
        &\leq \beta_\phi+ \cbias dH(\log|\calA|+20)\log(2H/\delta)\\
        &\leq 748dH\sqrt{\log(2HK/\delta)}\log K + \cbias dH(\log|\calA|+20)\log(2H/\delta).
    \end{align*}
    Then the trivial bounds for $\Regret(K)$ and $\VioHard(K)$, i.e., $HK$, satisfy
    \begin{align*}
        \Regret(K),\VioHard(K) &\leq HK=\tbigO\left(dH^3\right).
    \end{align*}

    \textbf{Case 2:} 
    Suppose that $\beta_\phi\leq K$ and $\cbias dH(\log|\calA|+20)\log(2H/\delta)\leq K$. It is clear that $1\leq \beta_\phi$. In this case, we can assume $\calE_g$, which occurs with probability at least $1-\delta$ by \Cref{lem:good event hard}. Moreover, Lemmas~\ref{lem:no clip appendix hard},~\ref{lem:Reg Y Z hard}, and~\ref{lem:R sum appendix} hold.

    To bound $\Regret(K)$, by the first statement of \Cref{lem:Reg Y Z hard}, we have
    \begin{align*}
        \Regret(K) &= \sum_{k=1}^K (V_{f^k,1}^{\pi^k}(s_1) - V_{f^k,1}^{\pi^\star}(s_1)) \\
        &\leq H^2\left(\log|\calA|+20\right)\sqrt{dH\log(2K)}\sqrt{K} + 1\\
        &\quad+\cbias d^{3/2}H^2\sqrt{K}\log(2HK/\delta)+\cbias d^3H^4 \log^4(2HK/\delta)\\
        &=\tbigO\left(\sqrt{d^3H^4K} +\sqrt{dH^5 K} + d^3H^4\right).
    \end{align*}

    To bound $\VioHard(K)$, we observe that
    \begin{align*}
        \VioHard(K) 
        &= \sum_{k=1}^K \left[V_{g^k,1}^{\pi^k}(s_1)-b\right]_+\\
        &\leq \sum_{k=1}^K \left[V_{g^k,1}^{\pi^k}(s_1)-\hat V_{g,1}^k(s_1)\right]_+ + \sum_{k=1}^K \left[\hat V_{g,1}^{k}(s_1)-b\right]_+\\
        &= \sum_{k=1}^K \left(V_{g^k,1}^{\pi^k}(s_1)-\hat V_{g,1}^k(s_1)\right) + \sum_{k=1}^K \left[\hat V_{g,1}^{k}(s_1)-b\right]_+\\
        &= \sum_{k=1}^K \left(V_{g^k,1}^{\pi^k}(s_1)-\hat V_{g,1}^k(s_1)\right) + Z_{K+1} + \sum_{k=1}^K R_k
    \end{align*}
    where the inequality follows from the fact that $[a+b]_+ \leq [a]_+ + [b]_+$ for any $a,b\in\bbR$, the second equality is due to $V_{g^k,1}^{\pi^k}(s_1) - \hat V_{g,1}^k(s_1) \geq 0$ by \Cref{lem:hat V < V}, and     the last equality can be obtained by summing $[\hat V_{g,1}^k(s_1) -b]_+ = Z_{k+1}-Z_k + R_k$ over $k$, and the fact that $Z_1 = 0$. Then by \Cref{lem:bias}, the third statement of \Cref{lem:Reg Y Z hard}, and \Cref{lem:R sum appendix}, we have
    \begin{align*}
        &\VioHard(K) \\
        &\leq\cbias\left(d^{3/2}H^2\sqrt{K}\log(2HK/\delta)+d^3H^4 \log^4(2HK/\delta)\right) \\
        &\quad+12H^2(\log|\calA|+20)\sqrt{dHK}\log\left(4\cbias d^3H^4K^4(\log|A|+20)\log^4(2H/\delta)\right)\\
        &\quad+ 12\sqrt{dH^5}(\log|\calA|+20)\sqrt{K}\log(3dH(\log|\calA|+20)K) \\
        &=\tbigO\left(\sqrt{d^3H^4K}+ \sqrt{dH^5K} +d^3H^4\right).
    \end{align*}
\end{proof}

}

\section{Numerical Experiments}
\label{app:numerical-experiments}

We evaluate \Cref{alg:main} on the chain CMDP of \citet{yu2026near}, with periodically alternating losses and a fixed constraint, and compare it with the algorithm of \citet{yu2026primaldual}. We use $H=10$, state space $\mathcal S=\{0,\ldots,H+1\}$, action space $\mathcal A=\{-1,1\}^4$, and an exact linear-MDP representation with feature dimension $d=22$. Each episode starts at $s_1=0$. For every step $h$ and state $s<H$, we set $P_h(s+1\mid s,a)=0.95-0.01\sum_{i=1}^{4}a_i$ and $P_h(H+1\mid s,a)=1-P_h(s+1\mid s,a)$, with all other transition probabilities equal to zero. States $H$ and $H+1$ are absorbing: $P_h(s\mid s,a)=1$ for $s\in\{H,H+1\}$. Each episode lasts $H$ steps, including steps after reaching an absorbing state. Both methods receive full-information loss feedback and the same known constraint function, while transitions remain unknown. We conduct five simulations with paired random seeds, each running for $K=5{,}000$ episodes.

Let $q(a)=\frac{1}{8}\sum_{i=1}^{4}(a_i+1)$. For $s<H$, we set $f_h^k(s,a)=1-0.4q(a)$ when $\lfloor k/10\rfloor$ is even and $f_h^k(s,a)=1-0.4(1-q(a))$ otherwise. At the absorbing states, we set $f_h^k(H,a)=1$ and $f_h^k(H+1,a)=0$. The constraint function is fixed: $g_h^k(s,a)=1-q(a)$ for $s<H$ and $g_h^k(s,a)=1$ otherwise, with budget $b=4$. These definitions apply to every step $h$ and episode $k$, with episode indices starting at $k=1$.

To promote learning, we reduce the optimistic bonus coefficients while preserving their different dependence on $K$: we set $\beta_p=0.01$ and $\beta_b=0.01K^{1/4}$, with contraction coefficients $\beta_\phi=4\beta_p\log K$ and $\beta_w=4\beta_b\log K$. For our dual update, we set $\beta_R=0.1$ and $\gamma=\min\{0.1/\sqrt K,1/(6\beta_R)\}$, and replace the coefficient 108 in $R_k$ with 0.1. These practical parameter settings differ from those covered by \Cref{thm:main}. All remaining algorithmic parameters follow the respective papers, with $\delta=0.05$.

\begin{figure}[t]
    \centering
    \includegraphics[width=\linewidth]{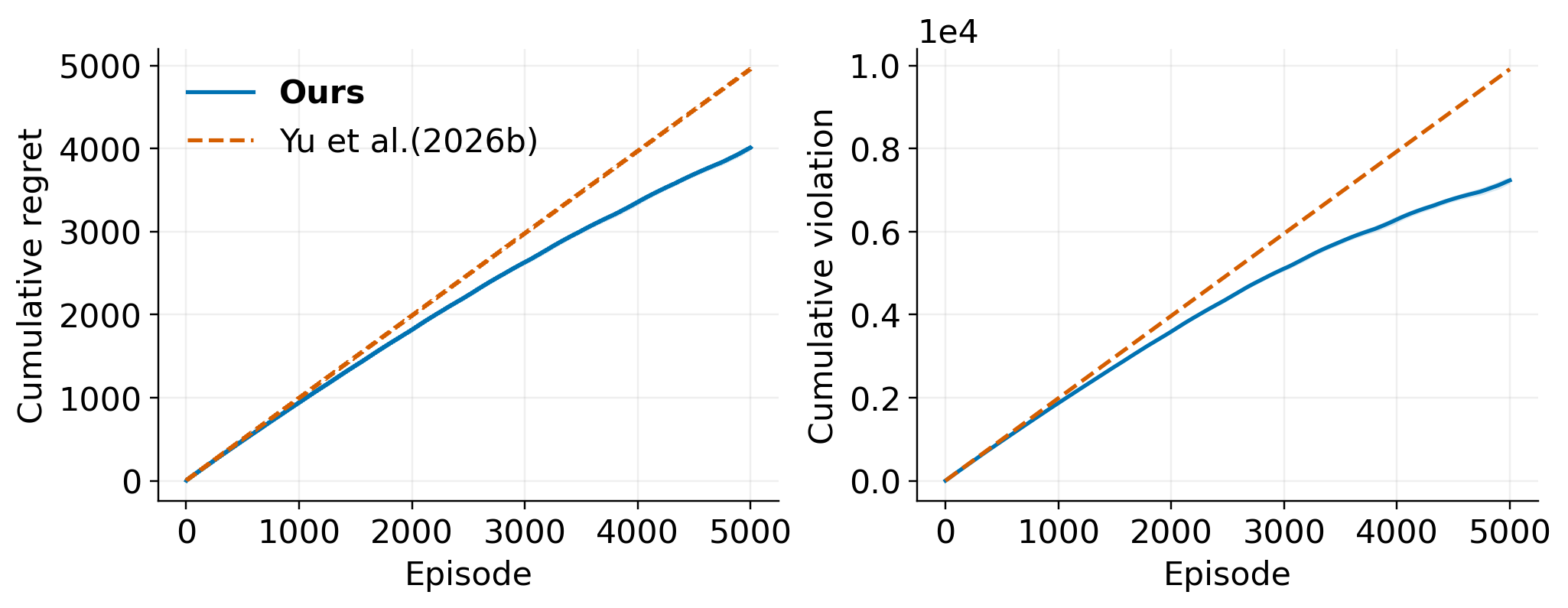}
    \caption{Cumulative regret (left) and the positive part of cumulative
    constraint violation (right) for $K=5{,}000$ episodes. Curves show averages
    over five seeds; shaded regions indicate 95\% confidence intervals.}
    \label{fig:numerical-comparison}
\end{figure}

\Cref{fig:numerical-comparison} summarizes the results. Our method achieves lower cumulative regret and constraint violation than the baseline over the later episodes. 
The baseline of \citet{yu2026primaldual} exhibits approximately linear growth in both quantities. These results illustrate an empirical advantage of our method on this instance under the tested parameterizations.

\section{Auxiliary Lemmas}

\begin{lemma}[Lemma 22 of \citet{cassel2024warm}] 
\label{lem:cassel 22}
Let $\hat\bmu_h^k:\bbR^{|\calS|}\to\bbR^d$ be the linear operator defined in Line~\ref{eq:muV estimate} of \Cref{alg:main}. For all $h\in [H]$, let $\calV_h \subseteq \bbR^{|\calS|}$ be a set of mappings $\hat V:\calS \to \bbR$ such that $||\hat V||_\infty \leq \beta_Q$ and $\beta_Q\geq 1$. With probability at least $1-\delta$, for all $h\in[H]$, $\hat V \in \calV_{h+1}$, and $k\geq 1$,
\begin{align*}
    ||(\bmu_h-\hat\bmu_h^k)\hat V||_{\bLambda_h^k} \leq 4\beta_Q\sqrt{d\log(K+1)+2\log(H\calN_\epsilon/\delta)}
\end{align*}
where $\epsilon\leq \beta_Q\sqrt{d}/(2K)$, $\calN_\epsilon = \sum_{h=1}^H \calN_\epsilon(\calV_h)$, and $\calN_\epsilon(\calV_h)$ is the $\epsilon$-covering number of $\calV_h$ with respect to the $\ell_\infty$-norm.
\end{lemma}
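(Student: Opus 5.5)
This lemma is the cited Lemma 22 of \citet{cassel2024warm}, a standard self-normalized concentration bound made uniform through a covering argument. I would prove it in three steps, following \citet{jin2020provably}.

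\textbf{Step 1: a fixed function.} Fix $h$ and a fixed $\hat V$ with $\|\hat V\|_\infty\le\beta_Q$. Write
\[
\hat\bmu_h^k\hat V=(\bLambda_h^k)^{-1}\sum_{\tau<k}\bphi_\tau \hat V(s_{h+1}^\tau),
\qquad \bmu_h\hat V=(\bLambda_h^k)^{-1}\Big(\sum_{\tau<k}\bphi_\tau\bphi_\tau^\top+\Id\Big)\bmu_h\hat V .
\]
This gives the decomposition
\[
(\bmu_h-\hat\bmu_h^k)\hat V=(\bLambda_h^k)^{-1}\bmu_h\hat V-(\bLambda_h^k)^{-1}\sum_{\tau<k}\bphi_\tau\eta_\tau,
\qquad \eta_\tau=\hat V(s_{h+1}^\tau)-\bbP_h\hat V(s_h^\tau,a_h^\tau).
\]
Taking the $\bLambda_h^k$-norm, the first term is at most $\|\bmu_h\hat V\|_{(\bLambda_h^k)^{-1}}\le\|\bmu_h\hat V\|\le\beta_Q\sqrt d$ by the regularity assumption. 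The second term is $\|\sum_\tau\bphi_\tau\eta_\tau\|_{(\bLambda_h^k)^{-1}}$. Here $\eta_\tau$ is a martingale difference with respect to $\calF_{\tau,h}$, bounded by $2\beta_Q$, and hence $2\beta_Q$-sub-Gaussian.

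\textbf{Step 2: uniformity in $k$.} Apply the self-normalized inequality of Abbasi-Yadkori et al. It gives, uniformly in $k$ and with probability $1-\delta'$,
\[
\|\textstyle\sum_\tau\bphi_\tau\eta_\tau\|_{(\bLambda_h^k)^{-1}}^2\le 8\beta_Q^2\big(\tfrac d2\log(K+1)+\log(1/\delta')\big),
\]
using $\det\bLambda_h^k\le (1+K)^d$.

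\textbf{Step 3: uniformity over the data-dependent class.} This is the main obstacle, since $\hat V$ depends on the data and so the martingale argument cannot be applied to it directly. I would fix an $\epsilon$-net $\calN_\epsilon(\calV_{h+1})$ in the $\ell_\infty$-norm and apply Steps 1–2 to every net element and every $h$. A union bound with $\delta'=\delta/(H\calN_\epsilon)$ then bounds every net element.

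For an arbitrary $\hat V$, pick a net point $\hat V'$ with $\|\hat V-\hat V'\|_\infty\le\epsilon$. The discrepancy is
\[
\|(\bmu_h-\hat\bmu_h^k)(\hat V-\hat V')\|_{\bLambda_h^k}\le\|\bmu_h(\hat V-\hat V')\|_{(\bLambda_h^k)^{-1}}\cdot(\cdots)+\Big\|\textstyle\sum_\tau\bphi_\tau(\cdots)\Big\|_{(\bLambda_h^k)^{-1}}.
\]
Both pieces are controlled by $\epsilon\sqrt d+2\epsilon k\le 2\epsilon K\cdot$const. For the second piece, each summand is bounded by $2\epsilon$, and I would use $\|\sum_\tau\bphi_\tau x_\tau\|_{(\bLambda)^{-1}}\le\sqrt{k}\max|x_\tau|$ via Jensen together with $\sum\|\bphi_\tau\|^2_{\bLambda^{-1}}\le d$; the crude bound $2\epsilon k$ also suffices. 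With $\epsilon\le\beta_Q\sqrt d/(2K)$, this discrepancy is $O(\beta_Q\sqrt d)$.

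\textbf{Combining.} Adding the three contributions and using $\sqrt a+\sqrt b\le\sqrt{2(a+b)}$ and $\beta_Q\ge1$ gives
\[
\|(\bmu_h-\hat\bmu_h^k)\hat V\|_{\bLambda_h^k}\le 4\beta_Q\sqrt{d\log(K+1)+2\log(H\calN_\epsilon/\delta)} .
\]
The step requiring the most care is the constant bookkeeping so that the bias term $\beta_Q\sqrt d$ and the discretization error are absorbed into the factor $4$; the $d\log(K+1)$ term dominates $d$ once $K\ge2$.
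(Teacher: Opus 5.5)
The paper does not prove this lemma; it imports it as Lemma 22 of \citet{cassel2024warm}. The argument behind that lemma is the one you reconstruct: the split into the bias $(\bLambda_h^k)^{-1}\bmu_h\hat V$ and a martingale term, the self-normalized bound of \citet{abbasi2011improved} (which gives uniformity in $k$), and a union bound over an $\ell_\infty$-net as in Lemma D.4 of \citet{jin2020provably}. The route is right, but your Step 3 does not close as written.

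You discretize the entire error $(\bmu_h-\hat\bmu_h^k)(\hat V-\hat V')$, including its bias part $\|\bmu_h(\hat V-\hat V')\|_{(\bLambda_h^k)^{-1}}\le\epsilon\sqrt d$. With the only constraint available, $\epsilon\le\beta_Q\sqrt d/(2K)$, this part is at most $\beta_Q d/(2K)$. That is not $O(\beta_Q\sqrt d)$ when $\sqrt d\gg K$. Your inequality $\epsilon\sqrt d+2\epsilon k\le 2\epsilon K\cdot\mathrm{const}$ silently assumes $\sqrt d\lesssim K$. In the complementary regime the term cannot be absorbed into $4\beta_Q\sqrt{d\log(K+1)+2\log(H\calN_\epsilon/\delta)}$.

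The remedy is not to discretize the bias at all. The bound $\|\bmu_h\hat V\|_{(\bLambda_h^k)^{-1}}\le\beta_Q\sqrt d$ is deterministic and holds for every $\hat V$ with $\|\hat V\|_\infty\le\beta_Q$, so apply it to $\hat V$ itself. Use the net only on $\sum_{\tau<k}\bphi_\tau\eta_\tau(\hat V)$, whose discretization error is at most $2\epsilon(k-1)\le\beta_Q\sqrt d$.

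Two smaller bookkeeping points:
\begin{itemize}
\item Conditionally on the past, $\eta_\tau$ lies in an interval of length $2\beta_Q$. By Hoeffding's lemma it is therefore $\beta_Q$-sub-Gaussian, not merely $2\beta_Q$-sub-Gaussian. The martingale term is then at most $\beta_Q\sqrt{d\log(K+1)+2\log(H\calN_\epsilon/\delta)}$, and the total is $2\beta_Q\sqrt d$ plus this. The factor $4$ then closes for every $K\ge1$ because $3\sqrt{\log 2}>2$, so your $K\ge 2$ caveat is unnecessary. With $2\beta_Q$ it closes only when $d\le d\log(K+1)+2\log(H\calN_\epsilon/\delta)$.
\item Your justification of $\|\sum_\tau\bphi_\tau x_\tau\|_{(\bLambda_h^k)^{-1}}\le\sqrt k\max_\tau|x_\tau|$ is off. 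Cauchy--Schwarz with $\sum_\tau\|\bphi_\tau\|^2_{(\bLambda_h^k)^{-1}}\le d$ only yields $\sqrt{dk}$. The $\sqrt k$ bound comes from $\mathbf{X}(\Id+\mathbf{X}^\top\mathbf{X})^{-1}\mathbf{X}^\top\preceq\Id$, where $\mathbf{X}$ stacks the rows $\bphi_\tau^\top$. As you say, the crude bound $2\epsilon k$ is enough anyway.
\end{itemize}
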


\begin{lemma}[Lemma 24 of \citet{cassel2024warm}]\label{lem:cassel 24}
Let \( \mathcal{V} = \{ V(\cdot; \btheta) : ||\btheta|| \leq W \} \) denote a class of functions \( V : \mathcal{S} \to \mathbb{R} \). Suppose that any \( V \in \mathcal{V} \) is \( L \)-Lipschitz with respect to \( \btheta \) and the supremum distance, i.e.,
\[
|| V(\cdot; \btheta_1) - V(\cdot; \btheta_2) ||_{\infty} \leq L || \btheta_1 - \btheta_2 ||, \quad ||\btheta_1||, ||\btheta_2|| \leq W.
\]
Let \( \mathcal{N}_\epsilon(\calV) \) be the \( \epsilon \)-covering number of \( \mathcal{V} \) with respect to the $\ell_\infty$-norm. Then
\[
\log \mathcal{N}_\epsilon(\calV) \leq d \log(1 + 2WL/\epsilon).
\]
\end{lemma}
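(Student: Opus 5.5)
The plan is the standard volumetric argument: build an $\ell_2$-net of the parameter ball, then push it forward through the Lipschitz map $\btheta\mapsto V(\cdot;\btheta)$.

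\textbf{Step 1 (net in parameter space).} Let $B_W=\{\btheta\in\bbR^d:||\btheta||\leq W\}$ and set $\epsilon'=\epsilon/L$. I will take a maximal $\epsilon'$-separated subset $\calN'\subseteq B_W$, i.e.\ a maximal set of points with pairwise distances larger than $\epsilon'$. By maximality, every $\btheta\in B_W$ lies within distance $\epsilon'$ of some point of $\calN'$.

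\textbf{Step 2 (counting by volume).} The balls of radius $\epsilon'/2$ centered at the points of $\calN'$ are pairwise disjoint, and all of them lie inside the ball of radius $W+\epsilon'/2$. Comparing volumes gives
\begin{equation*}
|\calN'|\leq\left(\frac{W+\epsilon'/2}{\epsilon'/2}\right)^d=\left(1+\frac{2W}{\epsilon'}\right)^d=\left(1+\frac{2WL}{\epsilon}\right)^d .
\end{equation*}

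\textbf{Step 3 (transfer to function space).} Define $\calC=\{V(\cdot;\btheta'):\btheta'\in\calN'\}$. Take any $V(\cdot;\btheta)\in\calV$, and choose $\btheta'\in\calN'$ with $||\btheta-\btheta'||\leq\epsilon'$. By the Lipschitz hypothesis, $||V(\cdot;\btheta)-V(\cdot;\btheta')||_\infty\leq L\epsilon'=\epsilon$. Hence $\calC$ is an $\epsilon$-cover of $\calV$ in $\ell_\infty$. Taking logarithms gives $\log\calN_\epsilon(\calV)\leq d\log(1+2WL/\epsilon)$.

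\textbf{Main obstacle.} The only delicate point is in Step 2: the separation radius and the enlarged ball must be matched exactly so that the constant comes out as $2WL/\epsilon$ rather than $3WL/\epsilon$. Using $\epsilon'/2$-balls inside the $(W+\epsilon'/2)$-ball achieves this.

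One further check is that the cover elements lie in $\calV$ itself. They do, because $\calN'\subseteq B_W$, so the cover is internal.
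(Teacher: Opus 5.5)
Your proof is correct and takes the standard route; the paper gives no proof of its own and imports this lemma from \citet{cassel2024warm}, where it rests on exactly this argument: take an $(\epsilon/L)$-net of the radius-$W$ Euclidean ball with at most $(1+2WL/\epsilon)^d$ points, then push it through the $L$-Lipschitz map $\btheta\mapsto V(\cdot;\btheta)$. (The degenerate case $L=0$, where $\epsilon/L$ is undefined, is trivial because $\calV$ is then a single function.)
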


\begin{lemma}[Lemma 8 in \citet{cassel2024warm}]\label{lem:num epoch}
    The number of epochs $|E|$ is bounded by $(3/2)dH\log(2K)$.
\end{lemma}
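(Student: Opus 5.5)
The statement is the epoch bound taken from \citet{cassel2024warm}: the number of epochs $|E|$ is at most $(3/2)dH\log(2K)$. I will prove it with a determinant-doubling argument applied separately to each step $h$.

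\textbf{Charging epochs to steps.} A new epoch starts at episode $k>1$ only if some $h'\in[H]$ satisfies $\det(\bLambda_{h'}^k)\geq 2\det(\bLambda_{h'}^{k(m)})$. I charge each such epoch to the smallest such $h'$, so each epoch other than the first receives exactly one charge. Fix a step $h$ and let $k_1<k_2<\cdots<k_{N_h}$ be the start episodes of the epochs charged to $h$.

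\textbf{Doubling along a chain.} Between two consecutive charged epochs $k_{i}$ and $k_{i+1}$, the epoch immediately preceding $k_{i+1}$ began at some episode $k(m)\geq k_i$. The Gram matrices grow monotonically, since each update adds the positive semidefinite matrix $\bphi\bphi^\top$. Hence $\det(\bLambda_h^{k(m)})\geq\det(\bLambda_h^{k_i})$. The trigger condition then gives
\[
\det(\bLambda_h^{k_{i+1}})\geq 2\det(\bLambda_h^{k(m)})\geq 2\det(\bLambda_h^{k_i}).
\]
Iterating from the initial value $\det(\bLambda_h^1)=1$ yields $\det(\bLambda_h^{k_{N_h}})\geq 2^{N_h}$.

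\textbf{Upper bound on the determinant.} Since $\|\bphi\|\leq 1$, we have $\mathrm{tr}(\bLambda_h^{k})\leq d+K$ for every $k\leq K+1$. By AM--GM on the eigenvalues,
\[
\det(\bLambda_h^k)\leq\Bigl(\frac{d+K}{d}\Bigr)^d\leq(1+K)^d\leq(2K)^d.
\]
Therefore $N_h\leq d\log_2(2K)=d\log(2K)/\log 2$.

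\textbf{Summing and fixing the constant.} Summing over $h$ and adding the first epoch gives
\[
|E|\leq 1+\frac{dH\log(2K)}{\log 2}.
\]
This is the main obstacle: $1/\log 2\approx 1.443$, which is just below $3/2$. The leftover slack is $(3/2-1/\log 2)\,dH\log(2K)\approx 0.057\,dH\log(2K)$, and I must show it covers the extra $+1$ from the first epoch.

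I will not simply assume the slack is at least $1$ for all $d,H$. Instead I will absorb the $+1$ more carefully, in one of two ways:
\begin{itemize}
\item use the sharper determinant bound $(1+K/d)^d$ in place of $(2K)^d$, so the per-step count is $d\log_2(1+K/d)$ rather than $d\log_2(2K)$;
\item or observe that the first epoch contributes no doubling, so a careful count of the doublings absorbs it.
\end{itemize}
If neither route closes the gap in the smallest cases, I fall back on the fact that $d,H\geq 1$ and $K\geq 1$ make any slack of at least $1$ suffice, and check the remaining cases directly (trivial for $K\leq 1$). Short of that, the argument gives $|E|=\bigO(dH\log K)$, which is all the main analysis actually uses.
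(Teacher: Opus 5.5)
Your charging and doubling argument and the AM--GM bound on the determinant are correct, and together they form the standard proof. The paper itself gives no proof; it imports the lemma from \citet{cassel2024warm}. The gap is in the last step, exactly where you hedge. What you actually prove is $|E|\le 1+dH\log(2K)/\log 2$. This implies the stated bound only when $(\frac{3}{2}-\frac{1}{\log 2})\,dH\log(2K)\ge 1$, i.e.\ roughly $dH\log(2K)\ge 17.45$. For $d=H=1$ that excludes every $K$ below about $1.9\times 10^{7}$, so ``checking the remaining cases directly'' is not a finite check.

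Your first repair does not close the gap either. For $d=1$ it only replaces $2K$ by $1+K$, and, e.g., $d=H=1$, $K=3$ still gives $N_h\le\log_2 4=2$, hence $|E|\le 3>\frac{3}{2}\log 6\approx 2.69$. The second repair is not made precise. The fallback $|E|=\bigO(dH\log K)$ is also not enough for this paper: the proof of \Cref{lem:FTRL lemma policy appendix} uses the explicit $3/2$ (through $\frac{\sqrt{2}}{2}\sqrt{3/2}\le 1$ and $16\sqrt{3/2}\le 20$). That is what produces the coefficient $H^2(\log|\calA|+20)\sqrt{dH\log(2K)}=\beta_R/2$, and the cancellation in \Cref{lem:no clip appendix} relies on it.

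The missing observation is that the loss sits in the determinant bound, not in the doubling count. The last epoch charged to step $h$ starts at some $k_{N_h}\le K$. The matrix $\bLambda_h^{k_{N_h}}$ therefore contains only $k_{N_h}-1\le K-1$ rank-one updates, so $\mathrm{tr}(\bLambda_h^{k_{N_h}})\le d+K-1$. By AM--GM, $\det(\bLambda_h^{k_{N_h}})\le(1+(K-1)/d)^d\le K^d$. Hence $N_h\le d\log_2 K$, and
\[
|E|\le 1+\frac{dH\log K}{\log 2}\le \frac{3}{2}dH\log 2+\frac{3}{2}dH\log K=\frac{3}{2}dH\log(2K),
\]
using $dH\ge 1$, $\frac{3}{2}\log 2\approx 1.04\ge 1$, and $\frac{1}{\log 2}\approx 1.44\le\frac{3}{2}$. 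In other words, the $+1$ for the first epoch is paid for by the $\log 2$ inside $\log(2K)$. Your bound $(2K)^d$ had spent that $\log 2$ on $d$ nonexistent doublings per step. With this one-line change your proof is complete for all $d,H,K\ge 1$.
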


\begin{lemma}[Lemma D.4 of \citet{rosenberg2020near}]\label{lem:rosen d4}
Let \( \{X_t\}_{t \geq 1} \) be a sequence of random variables with expectation adapted to a filtration \( \mathcal{F}_t \). Suppose that \( 0 \leq X_t \leq C\) almost surely. Then with probability at least \( 1 - \delta \),
\[
\sum_{t=1}^T \mathbb{E}[X_t \mid \mathcal{F}_{t-1}] \leq 2 \sum_{t=1}^T X_t + 4 C\log \frac{2T}{\delta}.
\]
\end{lemma}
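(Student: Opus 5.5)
We prove \Cref{lem:rosen d4} with an exponential supermartingale argument, the standard route for a multiplicative Freedman-type bound. After rescaling $X_t \leftarrow X_t/C$, we may assume $0\le X_t\le 1$ and must show $\sum_t \bbE_{t-1}[X_t]\le 2\sum_t X_t + 4\log(2T/\delta)$, where $\bbE_{t-1}[\cdot]=\bbE[\cdot\mid\calF_{t-1}]$.

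The key inequality is elementary. For $x\in[0,1]$ and $\lambda>0$, convexity of $x\mapsto e^{-\lambda x}$ gives $e^{-\lambda x}\le 1-(1-e^{-\lambda})x$. Taking conditional expectations and using $1-u\le e^{-u}$,
\begin{align*}
\bbE_{t-1}\left[e^{-\lambda X_t}\right]\le \exp\left(-(1-e^{-\lambda})\,\bbE_{t-1}[X_t]\right).
\end{align*}
Define $M_t=\exp\left(\sum_{s\le t}\left[(1-e^{-\lambda})\bbE_{s-1}[X_s]-\lambda X_s\right]\right)$. The inequality above says $M_t$ is a nonnegative supermartingale with $M_0=1$, so $\bbE[M_T]\le 1$.

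By Markov's inequality, with probability at least $1-\delta'$ we have $M_T\le 1/\delta'$, that is,
\begin{align*}
(1-e^{-\lambda})\sum_t \bbE_{t-1}[X_t]\le \lambda\sum_t X_t+\log(1/\delta').
\end{align*}
Choose $\lambda$ so that $\lambda/(1-e^{-\lambda})\le 2$; $\lambda=1$ works, since $1/(1-e^{-1})\approx 1.582$. This yields
\begin{align*}
\sum_t\bbE_{t-1}[X_t]\le 1.6\sum_t X_t+1.6\log(1/\delta')\le 2\sum_t X_t+4\log(2T/\delta)
\end{align*}
after taking $\delta'=\delta$, since $\log(1/\delta)\le\log(2T/\delta)$. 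The stated constant $4\log(2T/\delta)$ is looser than needed, so a union bound over $t\le T$ (spending $\delta/(2T)$ per step) is also affordable if an anytime version is wanted. Undoing the scaling multiplies the log term by $C$.

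The only delicate point is the supermartingale step: it requires $X_t\ge 0$ and $X_t\le C$ to apply the convexity chord bound, and it requires $X_t$ to be $\calF_t$-measurable so that the conditional expectations telescope in the tower property. Everything else is arithmetic on constants.
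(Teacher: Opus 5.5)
Your argument is correct. The paper gives no proof of this lemma and simply imports it from \citet{rosenberg2020near}, where it is derived from a Freedman-type inequality. That derivation proceeds in three steps:
\begin{itemize}
\item It applies $\sum_t Y_t\le \eta\sum_t\bbE[Y_t^2\mid\calF_{t-1}]+\eta^{-1}\log(\cdot/\delta)$, valid for $\eta\in(0,1/C]$, to the martingale differences $Y_t=\bbE[X_t\mid\calF_{t-1}]-X_t\le C$.
\item It bounds the conditional variance by the self-bounding estimate $\bbE[Y_t^2\mid\calF_{t-1}]\le\bbE[X_t^2\mid\calF_{t-1}]\le C\,\bbE[X_t\mid\calF_{t-1}]$.
\item It takes $\eta=1/(2C)$, which after rearranging produces the constants $2$ and $4C$.
\end{itemize}

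Your chord bound $e^{-\lambda x}\le 1-(1-e^{-\lambda})x$ on $[0,1]$ builds that self-bounding property directly into the exponential moment. You therefore skip both the variance step and the general-purpose martingale inequality. The result is self-contained and sharper, namely $\sum_t\bbE[X_t\mid\calF_{t-1}]\le\frac{e}{e-1}\bigl(\sum_t X_t+C\log(1/\delta)\bigr)$ with no $\log(2T)$ term.

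Your argument also gives the anytime version for free. Since $M_t$ is a nonnegative supermartingale with $M_0=1$, Ville's maximal inequality $\Pr(\sup_{t\ge 1}M_t\ge 1/\delta)\le\delta$ yields the bound for all $t$ simultaneously. The union-bound remark at the end of your proposal is therefore unnecessary, and a uniform $\delta/(2T)$ split would only cover $t\le T$ anyway.

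The Freedman route buys modularity, since the same inequality handles increments that are not sign-constrained. For this nonnegative one-sided statement, however, your proof is the cleaner one. The fixed-$T$ form you prove is also all the paper uses: it invokes the lemma at level $\delta/(2K)$ and then union-bounds over $k$.
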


\begin{lemma}[Lemma 18 in \citet{cassel2024warm}]\label{lem:cassel 18}
    For any $K\geq 2$, $z\geq 0$, we have $\max_{z\geq 0} [z\cdot\sigma(-\beta z +\log K)] \leq 2(\log K)/\beta$.
\end{lemma}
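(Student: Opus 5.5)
The plan is a direct two-case argument, assuming $\beta>0$ (the regime in which the lemma is used, with $\beta=\beta_\phi\geq 1$). First rewrite the sigmoid as
\[
\sigma(-\beta z+\log K)=\frac{1}{1+e^{\beta z-\log K}}=\frac{1}{1+e^{\beta z}/K}.
\]
This immediately gives the two elementary bounds $\sigma(-\beta z+\log K)\leq 1$ and $\sigma(-\beta z+\log K)\leq K e^{-\beta z}$. I will then split according to whether $z$ lies below or above the threshold $z_0=2(\log K)/\beta$.

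\textbf{Case $0\leq z\leq z_0$.} Since $\sigma\leq 1$, we get $z\,\sigma(-\beta z+\log K)\leq z\leq 2(\log K)/\beta$ directly.

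\textbf{Case $z>z_0$.} Use the second bound to get $z\,\sigma(-\beta z+\log K)\leq zKe^{-\beta z}$. Substituting $u=\beta z$, so that $u>2\log K$, this equals $\frac{K}{\beta}\,u e^{-u}$.

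The function $u\mapsto ue^{-u}$ is decreasing for $u\geq 1$. Since $K\geq 2$, we have $2\log K\geq 2\log 2>1$, so on the whole range $u>2\log K$ we obtain
\[
ue^{-u}\leq 2\log K\cdot e^{-2\log K}=\frac{2\log K}{K^2}.
\]
Plugging this in gives $z\,\sigma(-\beta z+\log K)\leq \frac{2\log K}{\beta K}\leq \frac{2\log K}{\beta}$.

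Combining the two cases bounds $z\,\sigma(-\beta z+\log K)$ uniformly over $z\geq 0$, so the maximum is at most $2(\log K)/\beta$.

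There is no real obstacle in this argument. The only points needing care are the monotonicity of $ue^{-u}$ on the relevant range, which is exactly where $K\geq 2$ is used, and noting that $\beta>0$ is implicit in the statement.
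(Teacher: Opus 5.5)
Your proof is correct and complete. The paper gives no proof of this lemma; it imports it from Lemma~18 of \citet{cassel2024warm}, so there is no in-paper argument to compare against. Your two-case split at $z_0=2(\log K)/\beta$ works as a self-contained verification: below the threshold you use $\sigma\le 1$, and above it you use $\sigma(-\beta z+\log K)\le Ke^{-\beta z}$ together with the monotonicity of $ue^{-u}$ on $u\ge 1$, which is exactly where $K\ge 2$ enters. In the second regime this even gives the sharper bound $2(\log K)/(\beta K)$. You are also right to make $\beta>0$ explicit, since the claim fails for $\beta\le 0$. In the paper the lemma is only applied with $\beta=\beta_\phi\ge 1$.
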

\begin{lemma}[Lemma 19 in \citet{cassel2024warm}]\label{lem:cassel 19}
    For any $K\geq 1$, $z\geq 0$, we have $\sigma(z-\log K) \leq 2(z^2 + K^{-1})$.
\end{lemma}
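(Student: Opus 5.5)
The plan is a direct two-case argument based on the closed form of the sigmoid. First I rewrite the left-hand side as
\[
\sigma(z-\log K) = \frac{1}{1+e^{-(z-\log K)}} = \frac{1}{1+Ke^{-z}}.
\]
This gives two elementary upper bounds. The trivial one is $\sigma(z-\log K)\le 1$. The second comes from dropping the $1$ in the denominator: $\sigma(z-\log K)\le \frac{1}{Ke^{-z}} = e^{z}/K$. I then split on the size of $z$ and use whichever bound is better in each regime.

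\textbf{Case $z\ge 1/\sqrt{2}$.} Here $2z^2\ge 1$, so $2(z^2+K^{-1})\ge 1\ge \sigma(z-\log K)$, and the claim holds.

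\textbf{Case $0\le z< 1/\sqrt{2}$.} I use $\sigma(z-\log K)\le e^z/K$ and compare it with $2z^2+2/K$. Multiplying by $K\ge 1$, it suffices to show $e^z\le 2+2Kz^2$. Since $2Kz^2\ge 2z^2$, it is enough to show $e^z\le 2+2z^2$ for $z\in[0,1]$. This follows from two facts.
\begin{itemize}
\item The elementary inequality $e^z\le 1+z+z^2$ for $z\le 1$, which is already used in the proof of \Cref{lem:drift appendix}.
\item The inequality $z\le 1+z^2$, which holds since $1+z^2-z=(z-\tfrac12)^2+\tfrac34>0$.
\end{itemize}
Together these give $e^z\le 1+z+z^2\le 2+2z^2$.

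The only delicate point is choosing the threshold between the two cases. Using only $e^z/K\le 2/K$ would require $z\le\log 2\approx 0.693$. Using only $2z^2\ge 1$ requires $z\ge 1/\sqrt{2}\approx 0.707$. These leave a small gap, so on the lower range I keep the $2z^2$ term together with the $2/K$ term instead of bounding them separately. With that adjustment both cases close and the lemma follows for all $K\ge 1$ and $z\ge 0$.
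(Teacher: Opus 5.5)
Your proof is correct and complete. The identity $\sigma(z-\log K)=1/(1+Ke^{-z})$ gives $\sigma(z-\log K)\le\min\{1,e^{z}/K\}$, and the regime $z\ge 1/\sqrt{2}$ follows from $2z^2\ge 1$. On $[0,1/\sqrt{2})$, the chain $e^{z}\le 1+z+z^2\le 2+2z^2\le 2+2Kz^2$ finishes the argument. The first inequality in that chain holds on $[0,1]$ because $\sum_{n\ge 2}z^n/n!\le (e-2)z^2$.

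The paper does not prove this statement itself. It imports it without argument as an auxiliary lemma from \citet{cassel2024warm}, so there is no in-paper proof to compare against, and your case split serves as a self-contained verification. Your remark that the $2z^2$ term must be kept on the lower range, to bridge the gap between $\log 2$ and $1/\sqrt{2}$, correctly identifies why the constant $2$ appears in the bound.
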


\begin{lemma}[Lemma 3 in \citet{cassel2024warm}]\label{lem:cassel 3}
    For any $m\in E$ and $(s,a,h,k) \in \calS\times\calA\times[H]\times K(m)$, and $v \in \bbR^d$, we have
    \begin{align*}
        \left(\bphi(s,a) - \bar\bphi_h^m(s,a)\right)^\top \bfv \leq \left(4\beta_\phi^2||\bphi(s,a)||_{(\bLambda_h^k)^{-1}}^2 + 2K^{-1}\right)|\bphi(s,a)^\top\bfv|
    \end{align*}
\end{lemma}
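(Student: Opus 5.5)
The plan is to reduce the statement to a scalar inequality about the sigmoid, and then move the Mahalanobis norm from the epoch-start matrix $\bLambda_h^{k(m)}$ to the current matrix $\bLambda_h^k$.

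\textbf{Step 1: reduce to a scalar.} Fix $m\in E$, $k\in K(m)$, $(s,a,h)$ and $\bfv$. Write $x = \|\bphi(s,a)\|_{(\bLambda_h^{k(m)})^{-1}}$. By the definition of the contracted feature in Line~\ref{line:feature contraction}, $\bar\bphi_h^m(s,a) = \bphi(s,a)\,\sigma(-\beta_\phi x + \log K)$. Using the identity $1-\sigma(y)=\sigma(-y)$, this gives
\[
(\bphi(s,a)-\bar\bphi_h^m(s,a))^\top \bfv = \sigma(\beta_\phi x - \log K)\,\bphi(s,a)^\top\bfv \le \sigma(\beta_\phi x - \log K)\,|\bphi(s,a)^\top\bfv|,
\]
where the inequality uses $\sigma\ge 0$.

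\textbf{Step 2: bound the sigmoid.} Apply \Cref{lem:cassel 19} with $z=\beta_\phi x\ge 0$. This yields $\sigma(\beta_\phi x-\log K)\le 2\beta_\phi^2 x^2 + 2K^{-1}$.

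\textbf{Step 3: change the reference matrix.} This is the only step that uses the epoch structure. For $k\in K(m)$ the restart condition in Line~\ref{line:epoch start} has not fired for any $h'$ between $k(m)$ and $k$, so $\det(\bLambda_h^k)\le 2\det(\bLambda_h^{k(m)})$. This holds trivially at $k=k(m)$; for later $k$ it holds because a new epoch would otherwise have started at $k$. We also have $\bLambda_h^{k(m)}\preceq\bLambda_h^k$. The determinant-ratio inequality already invoked in \eqref{eq:bias 3} (\Cref{lem:abbasi 12}) then gives
\[
x^2=\|\bphi(s,a)\|^2_{(\bLambda_h^{k(m)})^{-1}}\le \frac{\det(\bLambda_h^k)}{\det(\bLambda_h^{k(m)})}\,\|\bphi(s,a)\|^2_{(\bLambda_h^k)^{-1}}\le 2\|\bphi(s,a)\|^2_{(\bLambda_h^k)^{-1}}.
\]

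\textbf{Combining.} Substituting Step 3 into Step 2 gives $\sigma(\beta_\phi x-\log K)\le 4\beta_\phi^2\|\bphi(s,a)\|^2_{(\bLambda_h^k)^{-1}}+2K^{-1}$. Together with Step 1 this is exactly the claimed bound.

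The only delicate point is Step 3. One must check that the epoch condition indeed guarantees a determinant ratio of at most $2$ at every $h$ for all $k$ in the epoch, including the boundary case. The rest is direct substitution.
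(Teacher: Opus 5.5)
Your proof is correct and follows the standard argument for this lemma; the paper gives no proof of its own and simply imports it from \citet{cassel2024warm}. You rewrite $\bphi-\bar\bphi_h^m$ via $1-\sigma(y)=\sigma(-y)$, apply the sigmoid bound of \Cref{lem:cassel 19}, and pay a factor $2$ to switch from $\bLambda_h^{k(m)}$ to $\bLambda_h^k$ using the epoch's determinant-doubling condition, which is exactly where the constants $4\beta_\phi^2$ and $2K^{-1}$ come from. One step is left implicit: \Cref{lem:abbasi 12} literally gives $\bLambda_h^k\preceq 2\bLambda_h^{k(m)}$, and you then need operator monotonicity of the inverse to get $(\bLambda_h^{k(m)})^{-1}\preceq 2(\bLambda_h^k)^{-1}$, the same implicit step the paper takes in \eqref{eq:bias 3}.
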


\begin{lemma}[Lemma 12 in \citet{abbasi2011improved}]\label{lem:abbasi 12}
    Let $\bfA, \bfB$ be positive definite matrices, and let $\bfC$ be a positive semidefinite matrix such that $\bfA = \bfB + \bfC$. Then, we have 
    \begin{align*}
        \sup_{\bfx \neq \bm0} \frac{\bfx^\top\bfA \bfx}{\bfx^\top\bfB\bfx} \leq \frac{\det(\bfA)}{\det(\bfB)}.
    \end{align*}
\end{lemma}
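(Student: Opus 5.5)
The plan is to reduce the statement to an eigenvalue comparison for a single symmetric matrix, by whitening with respect to $\bfB$. Since $\bfB$ is positive definite, it has a symmetric positive definite square root $\bfB^{1/2}$ with inverse $\bfB^{-1/2}$. I would substitute $\bfy = \bfB^{1/2}\bfx$, which is a bijection of $\bbR^d\setminus\{\bm 0\}$. Under this substitution, $\bfx^\top\bfB\bfx = \|\bfy\|^2$ and $\bfx^\top\bfA\bfx = \bfy^\top\bfM\bfy$, where
\begin{align*}
\bfM = \bfB^{-1/2}\bfA\bfB^{-1/2} = \Id + \bfB^{-1/2}\bfC\bfB^{-1/2}.
\end{align*}
Hence the left-hand side equals the Rayleigh-quotient supremum $\sup_{\bfy\neq\bm 0}\bfy^\top\bfM\bfy/\|\bfy\|^2$. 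Because $\bfM$ is symmetric, this supremum is $\lambda_{\max}(\bfM)$.

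Next I would control the spectrum of $\bfM$. The matrix $\bfB^{-1/2}\bfC\bfB^{-1/2}$ is positive semidefinite, since it is a congruence of the positive semidefinite matrix $\bfC$. Therefore every eigenvalue $\lambda_1,\ldots,\lambda_d$ of $\bfM$ satisfies $\lambda_i\geq 1$.

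On the determinant side, multiplicativity of the determinant gives
\begin{align*}
\det(\bfM) = \det(\bfB^{-1/2})\det(\bfA)\det(\bfB^{-1/2}) = \frac{\det(\bfA)}{\det(\bfB)}.
\end{align*}
Combining the two sides, $\det(\bfA)/\det(\bfB) = \prod_{i=1}^d\lambda_i \geq \lambda_{\max}(\bfM)$, because each of the remaining factors is at least $1$. This is exactly the claimed inequality.

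No step is a real obstacle. The only point needing care is noting that all eigenvalues of $\bfM$ are at least $1$, rather than merely nonnegative. This is what lets the product of eigenvalues dominate the largest one, and it is where the positive semidefiniteness of $\bfC$ is used.
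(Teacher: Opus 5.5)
Your proof is correct and complete. The paper does not prove this lemma itself. It imports it from \citet{abbasi2011improved}, and the argument there, as I recall it, uses rank-one updates rather than a single spectral step. For $\bfA=\bfB+\bfv\bfv^\top$, Cauchy--Schwarz gives $(\bfx^\top\bfv)^2\le\|\bfx\|_{\bfB}^2\|\bfv\|_{\bfB^{-1}}^2$, so the ratio is at most $1+\|\bfv\|_{\bfB^{-1}}^2$. This equals $\det(\bfA)/\det(\bfB)$ by the matrix determinant lemma. For general $\bfC$, they write it as a sum of rank-one positive semidefinite terms via its eigendecomposition. They then telescope the product of successive Rayleigh ratios against the product of successive determinant ratios.

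Your whitening argument replaces the decomposition and telescoping with one identification. The left-hand side is exactly $\lambda_{\max}(\Id+\bfB^{-1/2}\bfC\bfB^{-1/2})$, and the right-hand side is the product of all eigenvalues of that matrix, each at least $1$. This is shorter and shows exactly where the slack is: equality holds iff $\bfC$ has rank at most one. It also gives at once the inverse-norm form the paper actually uses in \eqref{eq:bias 3}, namely $\|\bphi\|_{\bfB^{-1}}^2\le(\det\bfA/\det\bfB)\|\bphi\|_{\bfA^{-1}}^2$. The paper passes over this conversion, but it follows because $\bfA^{1/2}\bfB^{-1}\bfA^{1/2}$ and $\bfB^{-1/2}\bfA\bfB^{-1/2}$ are both similar to $\bfB^{-1}\bfA$. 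The rank-one route, in turn, needs only Cauchy--Schwarz and the determinant lemma at each step. It also mirrors the incremental structure $\bLambda_h^{k+1}=\bLambda_h^k+\bphi\bphi^\top$ under which the lemma is applied.
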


\begin{lemma}[Lemma D.2 of \citet{jin2020provably}]\label{lem: jin d2}
Let \( \{\bphi_t\}_{t \geq 0} \) be a bounded sequence in \( \mathbb{R}^d \) satisfying \( \sup_{t \geq 0} ||\bphi_t|| \leq 1 \). Let \( \bLambda_0 \in \mathbb{R}^{d \times d} \) be a positive definite matrix. For any \( t \geq 0 \), define
\[
\bLambda_t = \bLambda_0 + \sum_{j=1}^t \bphi_j \bphi_j^\top.
\]
Then, if the smallest eigenvalue of \( \bLambda_0 \) satisfies \( \lambda_{\min}(\bLambda_0) \geq 1 \), we have
\[
\log \left[ \frac{\det(\bLambda_t)}{\det(\bLambda_0)} \right]
\leq \sum_{j=1}^t \bphi_j^\top \bLambda_{j-1}^{-1} \bphi_j
\leq 2 \log \left[ \frac{\det(\bLambda_t)}{\det(\bLambda_0)} \right].
\]
\end{lemma}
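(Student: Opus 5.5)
The plan is to reduce both inequalities to a scalar statement about each rank-one update, using the matrix determinant lemma, and then sum over $j$. Write $x_j = \bphi_j^\top \bLambda_{j-1}^{-1}\bphi_j$ for $j\geq 1$.

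\emph{Step 1.} Since $\bLambda_j = \bLambda_{j-1} + \bphi_j\bphi_j^\top$ and $\bLambda_{j-1}$ is positive definite, we can factor
\begin{align*}
\bLambda_j = \bLambda_{j-1}^{1/2}\left(\Id + \bLambda_{j-1}^{-1/2}\bphi_j\bphi_j^\top\bLambda_{j-1}^{-1/2}\right)\bLambda_{j-1}^{1/2}.
\end{align*}
The matrix in the middle is the identity plus a rank-one term. Its only nontrivial eigenvalue is $1 + x_j$, so
\begin{align*}
\det(\bLambda_j) = \det(\bLambda_{j-1})(1+x_j).
\end{align*}
Telescoping over $j=1,\ldots,t$ then gives
\begin{align*}
\log\left[\frac{\det(\bLambda_t)}{\det(\bLambda_0)}\right] = \sum_{j=1}^t \log(1+x_j).
\end{align*}

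\emph{Step 2.} Next I bound the range of $x_j$. Each $\bLambda_{j-1}$ satisfies $\bLambda_{j-1}\succeq \bLambda_0$, so its smallest eigenvalue is at least $\lambda_{\min}(\bLambda_0)\geq 1$. Hence $\bLambda_{j-1}^{-1}\preceq \Id$, and with $\|\bphi_j\|\leq 1$ this gives $0\leq x_j\leq \|\bphi_j\|^2\leq 1$.

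\emph{Step 3.} Two elementary scalar inequalities finish the argument. The first is $\log(1+x)\leq x$ for all $x\geq 0$; applied to each term, it yields the left inequality of the lemma.

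The second is $x\leq 2\log(1+x)$ for $x\in[0,1]$. To verify it, set $u(x)=2\log(1+x)-x$. Then $u(0)=0$ and $u'(x)=2/(1+x)-1\geq 0$ on $[0,1]$, so $u\geq 0$ there. Applying this termwise and summing gives the right inequality.

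The only place requiring care is Step 2. The right-hand inequality fails for large $x$, so the hypothesis $\lambda_{\min}(\bLambda_0)\geq 1$ is exactly what keeps each $x_j\leq 1$. Everything else in the argument is routine.
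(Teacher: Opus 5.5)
Your proof is correct and follows essentially the same route as the standard proof of this lemma in \citet{jin2020provably}; the paper itself only cites the lemma and gives no proof. That standard proof uses the same three ingredients you do: the factorization $\bLambda_j=\bLambda_{j-1}^{1/2}(\Id+\bLambda_{j-1}^{-1/2}\bphi_j\bphi_j^\top\bLambda_{j-1}^{-1/2})\bLambda_{j-1}^{1/2}$ to telescope the determinants, the bound $\bphi_j^\top\bLambda_{j-1}^{-1}\bphi_j\le 1$ from $\lambda_{\min}(\bLambda_0)\ge 1$, and the scalar inequalities $\log(1+x)\le x\le 2\log(1+x)$ on $[0,1]$.
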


\subsection*{AI use statement}
In this work, we used generative AI tools to polish paragraphs, check the correctness of mathematical arguments, assist with the implementation of numerical experiments, {\color{alert}and suggest the key idea underlying the extension to hard constraint violation, specifically the introduction of $\bar Y_k$.} We have not used generative AI tools to write complete mathematical proofs or complete drafts of the manuscript, and other tasks requiring disclosure are not applicable to this work. We have reviewed all AI-assisted work, including the experimental code. We take responsibility for the final content of this work, including all text, claims, proofs, and artifacts produced with the aid of generative AI.

\end{document}